\DeclareMathAlphabet{\mathsf}{OT1}{cmss}{m}{n}
\SetMathAlphabet{\mathsf}{bold}{OT1}{cmss}{bx}{n}
\newcommand{\Fr}{\text{F}}
\newcommand{\removed}[1]{}
\newcommand{\sgn}{{\rm sign}}
\newcommand{\HT}{{\c{T}}}
\newcommand{\gradvec}{\b{g}_i}
\newcommand{\gradmat}{\b{g}}
\newcommand{\supp}{{\rm supp}}
\newcolumntype{P}[1]{>{\centering\arraybackslash}p{#1}}
\newcolumntype{Y}{>{\centering\arraybackslash}m{0.9cm}}
\newcolumntype{G}{>{\bfseries\centering\arraybackslash}m{1.8cm}}
\def\[#1\]{$#1$}
\def\b#1{{\mathbf{#1}}}
\def\c#1{{\mathcal{#1}}}
\def\##1\#{\begin{align}#1\end{align}}
\def\$#1\${\begin{align*}#1\end{align*}}
\begin{document}
\date{}
\title{\bf Provable Online CP/PARAFAC Decomposition of a Structured Tensor via Dictionary Learning}

\author{\normalsize Sirisha Rambhatla, Xingguo Li, and Jarvis Haupt \thanks{Sirisha Rambhatla is affiliated with the Computer Science Department, University of Southern California, Los Angeles, CA, USA; Email: {\tt{sirishar@usc.edu}}. Xingguo Li is affiliated with the Computer Science Department, Princeton University, Princeton, NJ, USA;  Email: {\tt{xingguol@cs.princeton.edu}}. Jarvis Haupt is affiliated with Department of Electrical and Computer Engineering, University of Minnesota, Minneapolis, MN; Email: \texttt{jdhaupt@umn.edu}.  This article is a preprint.} }
\maketitle

\begin{abstract}
\vspace{-5pt}
We consider the problem of factorizing a structured $3$-way tensor into its constituent Canonical Polyadic (CP) factors. This decomposition, which can be viewed as a generalization of singular value decomposition (SVD) for tensors, reveals how the tensor dimensions (features) interact with each other.  However, since the factors are \textit{a priori} unknown, the corresponding optimization problems are inherently non-convex. The existing guaranteed algorithms which handle this non-convexity incur an irreducible error (bias), and only apply to cases where all factors have the same structure. To this end, we develop a provable algorithm for online structured tensor factorization, wherein one of the factors obeys some incoherence conditions, and the others are sparse. Specifically we show that, under some relatively mild conditions on initialization, rank, and sparsity, our algorithm recovers the factors \emph{exactly} (up to scaling and permutation) at a linear rate. Complementary to our theoretical results, our synthetic and real-world data evaluations showcase superior performance compared to related techniques. Moreover, its scalability and ability to learn on-the-fly makes it suitable for real-world tasks.
\end{abstract}
\vspace{-14pt}
\section{Introduction} 
\vspace{-3pt}
Canonical Polyadic (CP) /PARAFAC decomposition aims to express a tensor as a sum of rank-$1$ tensors, each of which is formed by the outer-product (denoted by ``\[\circ\]'') of constituent factors columns. Specifically, the task is to factorize a given $3$-way tensor \[\underline{\b{Z}} \in \mathbb{R}^{n \times J \times K}\] as
	\vspace{-4pt}
\begin{align}\label{CPD}
\underline{\b{Z}} = \textstyle\sum_{i=1}^{m} \b{A}_i^* \circ \b{B}_i^* \circ \b{C}_i^* = [\![ \b{A}^*, \b{B}^*, \b{C}^*]\!], 
\end{align}
where  \[\b{A}_i^*\], \[\b{B}_i^*\] and \[\b{C}_i^*\] are columns of factors \[\b{A}^*\], \[\b{B}^*\], and \[\b{C}^*\], respectively, and are \emph{a priori} unknown. A popular choice for the factorization task shown in \eqref{CPD} is via the alternating least squares (ALS) algorithm; see \cite{Kolda09} and references therein. Here, one can add appropriate regularization terms (such as \[\ell_1\] loss for sparsity) to the least-square objective to steer the algorithm towards specific solutions \citep{Martinez2008, Allen2012, Papalexakis2013}. However, these approaches suffer from three major issues -- a) the non-convexity of associated formulations makes it challenging to establish recovery and convergence guarantees, b) one may need to solve an implicit model selection problem (e.g., choose the \emph{a priori} unknown \emph{tensor rank} \[m\]), and c) regularization may be computationally expensive, and may not scale well in practice.

Recent works for guaranteed tensor factorization  -- based on tensor power method \citep{Anandkumar15}, convex relaxations \citep{Tang15}, sum-of-squares formulations \citep{Barak2015, Ma2016, Schramm2017}, and variants of ALS algorithm \citep{Sharan2017}  --  have focused on recovery of tensor factors wherein all factors have a common structure, based on some notion of incoherence of individual factor matrices such as sparsity, incoherence, or both \citep{sun2017}. %As a result, these are not suitable for cases where the constituent factors do not have a shared structure. 
Furthermore, these algorithms a) incur \textit{bias} in estimation, b) are computationally expensive in practice,  and c) are not amenable for online (streaming) tensor factorization; See Table~\ref{tab:compare_tensor}. Consequently, there is a need to develop fast, scalable provable algorithms for exact (unbiased) factorization of structured tensors  arriving (or processed) in a streaming fashion (online), generated by heterogeneously  structured factors. To this end, we develop a provable algorithm to recover the unknown factors of tensor(s) \[\underline{\b{Z}}^{(t)} \] in Fig.\ref{fig:Tensor image}
 (arriving, or made available for sequential processing, at an instance \[t\]), assumed to be generated as \eqref{CPD}, wherein the factor \[\b{A}^*\] is \emph{incoherent} and fixed  (deterministic), and the factors \[\b{B}^{*(t)}\] and \[\b{C}^{*(t)}\] are sparse and vary with \[t\] (obey some randomness assumptions).

	\begin{figure}[!t]
		\centering
		\resizebox{0.43\textwidth}{!}{
			\centering
			\begin{tikzpicture}
			\node[anchor=south west,inner sep=0] (image) at (0.2,0) {\includegraphics[width=0.33\textwidth]{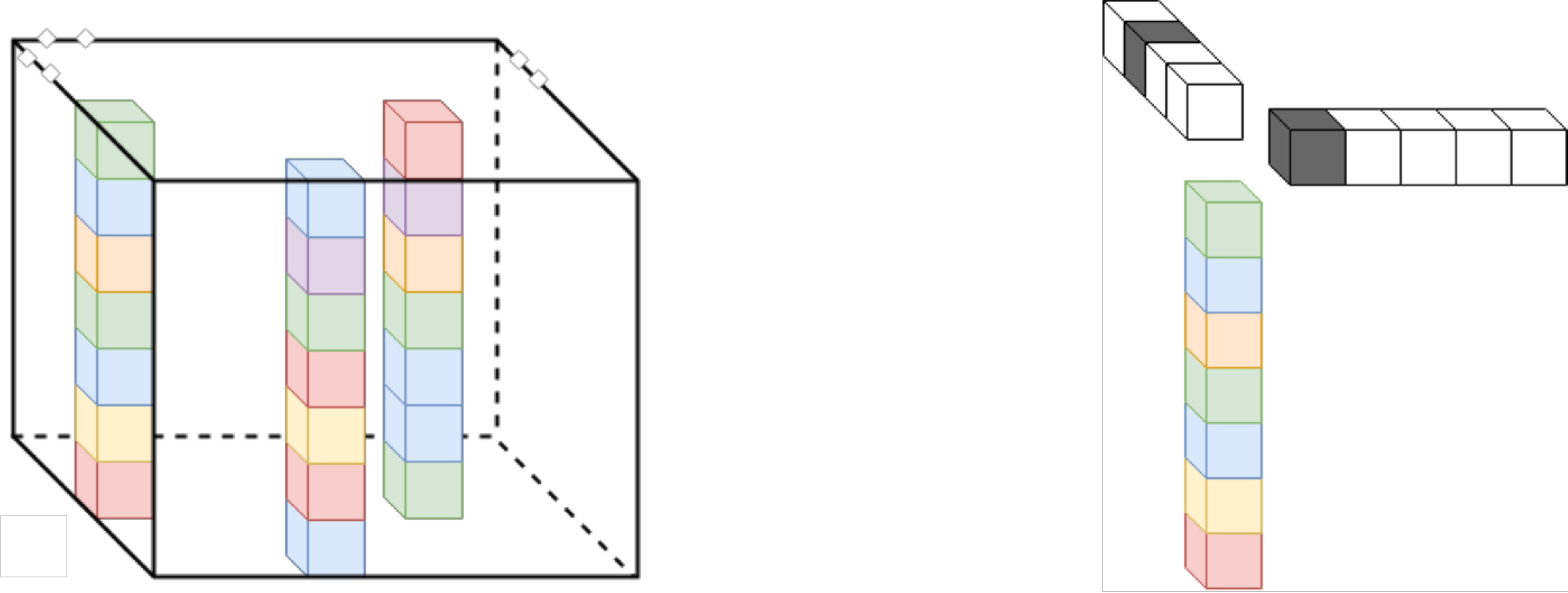}};
			\node[align=center]  at (0,1.2) {$n$};
			\node[align=center]  at (0.2,0.2) {$K$};
			\node[align=center]  at (1.6,-0.2) {$J$};
			\node[align=center] at (1.6,0.9) {\large$\underline{\b{Z}}^{(t)}$};
			\node[align=center] at (3.1,0.9) {\large$=$};
			\node[align=center] at (3.9,0.9) {\Large$\sum\limits_{i=1}^{m}$};
			\node[align=center] at (5.1,0) {$\b{A}_i^* $};
			\node[align=center] at (5.75,1.1) {$\b{B}_i^{*(t)} $};
			\node[align=center] at (4.9,2.2) {$\b{C}_i^{*(t)} $};
			\end{tikzpicture}}
		\vspace{-8pt}
		\caption{Tensor \protect{\[\underline{\b{Z}}^{(t)} \in \mathbb{R}^{n\times J \times K}\]} of interest, a few mode-1 fibers are dense.}\label{fig:Tensor image} 
			\vspace{-10pt}
	\end{figure}

   \paragraph{Model Justification.} The tensor factorization task of interest arises in streaming applications where users interact only with a few items at each time \[t\], i.e. the user-item interactions are \emph{sparse}. Here, the fixed incoherent factor \[\b{A}^*\] columns model the underlying fixed interactions patterns (\emph{signatures}). At time \[t\], a fresh observation tensor \[\underline{\b{Z}}^{(t)} \] arrives, and the task is to estimate sparse factors (users and items), and the incoherent factor (patterns). This estimation procedure reveals users \[\b{B}^*_i\] and items \[\b{C}^*_i\] sharing the same pattern \[\b{A}^*_i\], i.e. the the underlying clustering, and finds applications in scrolling pattern analysis in web analytics \citep{Mueller2001}, sports analytics (section \ref{sec:simnba}),  patient response to probes \citep{Deburchgraeve2009, Becker15}, electro-dermal response to audio-visual stimuli \citep{Grundlehner2009,Silveira2013}, and organizational behavior via email activity \cite{Fu2015, Kolda09}.
\vspace{-5pt}
\subsection{Overview of the results}
\vspace{-3pt}
We take a matrix factorization view of the tensor factorization task to develop a provable tensor factorization algorithm for exact recovery of the constituent factors. Leveraging the structure of the tensor, we formulate the non-zero fibers as being generated by a dictionary learning model, where the data samples \[\b{y}_{(j)} \in \RR^{n}\] are assumed to be generated as follows from an \textit{a priori} unknown dictionary \[\b{A}^* \in \RR^{n \times m}\] and sparse coefficients \[\b{x}_{(j)}^* \in \RR^{m}\].
\addtolength{\abovedisplayskip}{-4pt}
\addtolength{\belowdisplayskip}{-4pt}
\begin{align}\label{eq:model_dl}
\b{y}_{(j)} = \b{A}^*\b{x}_{(j)}^*,~\|\b{x}_{(j)}^*\|_0 \leq s~~\text{for all}~~j = 1, 2, \dots \vspace{-20pt}%+ \b{w}_{(i)},
\end{align}
This modeling procedure includes a matricization or \emph{flattening} of the tensor, which leads to a Kronecker (Khatri-Rao) dependence structure among the elements of the resulting coefficient matrix; see section~\ref{sec:main_res}. As a result, the main challenges here are to a) analyze the Khatri Rao product (KRP) structure to identify and quantify data samples (non-zero fibers) available for learning, b) establish guarantees on the resulting sparsity structure, and c) develop a SVD-based guaranteed algorithm to successfully untangle the sparse factors using corresponding coefficient matrix estimate and the underlying KRP structure, to develop recovery guarantees. This matricization-based analysis can be of independent interest.

\vspace{-5pt}
\subsection{Contributions}
\vspace{-3pt}
We develop an algorithm to recover the CP factors of tensor(s) \[\underline{\b{Z}}^{(t)}\hspace{-2pt}\in\hspace{-2pt}\mathbb{R}^{n \times J \times K}\], arriving (or made available) at time \[t\], generated as per \eqref{CPD} from constituent factors \[\b{A}^*\in\mathbb{R}^{n \times m}\], \[\b{B}^{*(t)}\in\mathbb{R}^{J \times m}\], and \[\b{C}^{*(t)}\in\mathbb{R}^{K \times m}\], where the unit-norm columns of \[\b{A}^*\] obey some incoherence assumptions, and \[\b{B}^{*{(t)}} \] and \[\b{C}^{*(t)}\] are sparse. 
Our specific contributions are:
\vspace{-4pt}
\begin{itemize}[leftmargin=*]
\setlength{\itemsep}{-0.1pt}
\item \textbf{Exact recovery and linear convergence}: Our algorithm \texttt{TensorNOODL}, to the best of our knowledge, is the first to accomplish recovery of the true CP factors of this structured tensor(s) \[\underline{\b{Z}}^{(t)} \] \emph{exactly} (up to scaling and permutations) at a linear rate. Specifically, starting with an appropriate initialization \[\b{A}^{(0)}\] of \[\b{A}^*\] , we have \[\b{A}^{(t)}_i{\hspace{-2pt}\rightarrow}\b{A}^*_i\], \[\hat{\b{B}}^{(t)}_i{\hspace{-2pt}\rightarrow}\pi_{B_i}\b{B}^{*(t)}_i\], and  \[\hat{\b{C}}^{(t)}_i{\hspace{-2pt}\rightarrow}\pi_{C_i}\b{C}^{*(t)}_i\], as iterations \[\hspace{-1pt}t{\rightarrow}\hspace{-0pt}\infty\],  for constants \[\pi_{B_i}\] and \[\pi_{C_i}\]. 
\item \textbf{Provable algorithm for heterogeneously-structured tensor factorization}: We consider the \textit{exact} tensor factorization, an inherently non-convex task, when the factors do not obey same structural assumptions.  That is, our algorithmic procedure overcomes the non-convexity bottleneck suffered by related optimization-based ALS formulations. 
\item \textbf{Online, fast, and scalable}: The online nature of our algorithm, separability of updates, and specific guidelines on choosing the parameters, make it suitable for large-scale distributed implementations. Furthermore, our numerical simulations (both synthetic and real-world) demonstrate superior performance in terms of accuracy, number of iterations, and demonstrate its applicability to real-world factorization tasks.
\end{itemize}
\vspace{-5pt}
Furthermore, although estimating the rank of a given tensor is NP hard, the incoherence assumption on \[\b{A}^*\], and distributional assumptions on \[\b{B}^{*(t)}\] and \[\b{C}^{*(t)}\], ensure that our matrix factorization view is \emph{rank revealing} \citep{Sidiropoulos2017}. In other words, our assumptions ensure that the dictionary initialization algorithms (such as \cite{Arora15}) can recover the rank of the tensor. Following this, \texttt{TensorNOODL} recovers the true factors (up to scaling and permutation) whp.

\protect{\begin{table*}[!t]
		\vspace{-5pt}
		\caption{\protect{Comparison of provable algorithms for tensor factorization and dictionary learning.  As shown here, the existing provable tensor factorization techniques do not apply to the case where \[\b{A}\]: incoherent, \[(\b{B}, \b{C})\]: sparse. }}\label{tab:compare_tensor}
		\vspace{-0pt}
		\renewcommand{\arraystretch}{1.2}
		\centering
		\resizebox{0.98\textwidth}{!}{
			\begin{threeparttable}[h]
				\begin{tabular}{c|P{3.8cm}|c|c|P{4cm}|c}
					\Xhline{1pt}
					\multirow{2}{*}{\textbf{Method}}&\multicolumn{3}{c|}{\textbf{Conditions}}& \multicolumn{2}{c}{\textbf{Recovery Guarantees} }\\ \cline{2-6}
					& {\small\textbf{Model} }&  {\small\textbf{Rank} }& {\small\textbf{Initialization}}& \multirow{2}{*}{\small\textbf{Estimation Bias}} & \multirow{2}{*}{\small\textbf{Convergence}} \vspace{-5pt}\\ 
					& {\small \bf Considered} & &{\small\bf Constraints} & & \\ \hline
					TensorNOODL (this work) & \[\b{A}\]: incoherent, \[(\b{B}, \b{C})\]: sparse &
					\[m= \c{O}(n)\]& \[\mathcal{O}^*\rbr{\tfrac{1}{\log(n)}} \] & No Bias & Linear\\ \hline
					\citet{sun2017}\[^\ddagger\] &  \[(\b{A}, \b{B}, \b{C})\]: all incoherent and sparse  &
					\[m= o(n^{1.5})\]& \[o(1)\] & \[\|\b{A}_{ij} - \hat{\b{A}}_{ij}\|_{\infty} =\c{O}(\tfrac{1}{n^{0.25}})^\dagger\]  &  Not established\\ \hline
					\citet{Sharan2017}\[^\ddagger\] &  \[(\b{A}, \b{B}, \b{C})\]: all incoherent  &
					\[m= o(n^{0.25})\]& Random & \[\|\b{A}_i - \hat{\b{A}}_i\|_2 =\c{O}(\sqrt{\tfrac{m}{n}})^\dagger\]  &  Quadratic\\ \hline
					\multirow{2}{*}{\citet{Anandkumar15}\[^\ddagger\]} &\multirow{2}{*}{\[(\b{A}, \b{B}, \b{C})\]: all incoherent}& \[m= \c{O}(n)\]  &\[\mathcal{O}^*\rbr{\tfrac{1}{\sqrt{n}}} \]\[^\P\]&  \[\|\b{A}_i - \hat{\b{A}}_i\|_2 =\tilde{\c{O}}(\tfrac{1}{\sqrt{n}})^\dagger\] & Linear\[^\S\]\\\cline{3-6}
					&& \[m= o(n^{1.5})\]  &\[\c{O}(1)\]& \[\|\b{A}_i - \hat{\b{A}}_i\|_2 =\tilde{\c{O}}(\tfrac{\sqrt{m}}{n})^\dagger\] &Linear \\\hline
					\multirow{2}{*}{\citet{Arora15}} & \multirow{2}{*}{Dictionary Learning \eqref{eq:model_dl}}   &  \[m= \c{O}(n)\]  & \[\mathcal{O}^*\rbr{\tfrac{1}{\log(n)}} \]&  \[\mathcal{O}(\sqrt{{s}/{n}})\] & Linear\\\cline{3-6}
					& & \[m= \c{O}(n)\]   & \[\mathcal{O}^*\rbr{\tfrac{1}{\log(n)}} \]&  \textit{Negligible} bias \[^\S\]& Linear\\ \hline
					\citet{Mairal09}&  Dictionary Learning \eqref{eq:model_dl} & \multicolumn{4}{c}{Convergence to stationary point; similar guarantees by \citet{Huang2016}.}\\ \hline
					%\cite{Tang15}& \[(\b{A}, \b{B}, \b{C})\]: incoherent &\[N\leq m\]&\[\mathcal{O}^*\rbr{\tfrac{1}{\log(N)}} \] & Global, no bias & Linear\\ \hline
					%\cite{Agarwal14}{$^\ddagger$} & \[\mathcal{O}^*\rbr{{1}/{\rm poly(m)}}\] &  \[\mathcal{O}\rbr{{\sqrt[6]{n}}/{\mu}} \] & \[\Omega(m^2)\]&  No bias & N/A \\\hline
					%\cite{Spielman2012} \[(\text{for}~n\leq m)\]& N/A&  \[\c{O}(\sqrt{n})\] &\[\tilde{\Omega}(n^2)\] & No bias&N/A   \\
					\Xhline{1pt}
				\end{tabular}
				\begin{tablenotes}[wide=0pt]
					%\item[$\ast$] \xl{There should be no bias for coefficient, o.w., we need to write $\epsilon_0$ for dict.} At the iterate \[t\] of the online procedure and for \[0<\omega<1/2\]. For an initial estimate \[\b{A}^{(0)}\]. \[\|\b{A}_i^{(0)} -\b{A}_i^*\| \leq \epsilon_0 \].
					\item[\[\ddagger\]] \protect{This procedure is not \textit{online}.  $^\dagger$ Result applies for each \[ i\in [1,m]\]. \[^\P\] Polynomial number of initializations \[m^{\beta^2}\] are required, for \[\beta\geq m/n\]. \[^\S\] The procedure has an \textit{almost} Quadratic rate initially.} %\[^\natural\] Requires \[{\rm poly}(m)\] samples; not \emph{neurally plausible}.% and employs a quadratic convex program for the coefficients.  
				\end{tablenotes}
		\end{threeparttable}}%
	\vspace{-10	pt}
\end{table*}}

\vspace{-5pt}
\subsection{Related works} \label{sec:relatedworks}
\vspace{-3pt}
\paragraph{Tensor Factorization.} Canonical polyadic (CP)/PARAFAC decomposition % of a tensor can be viewed as a sum of rank-1 tensors, as 
\eqref{CPD} captures relationships between the latent factors, where the number of rank-1 tensors define the rank for a tensor. Unlike matrices decompositions, tensor factorizations can be unique under relatively mild conditions \citep{Kruskal1977,Sidiropoulos2000}. However, determining tensor rank is NP-hard  \citep{Hastad90}, and so are tasks like tensor decompositions \citep{Hillar13}. Nevertheless, regularized ALS-based approaches emerged as a popular choice to impose structure on the factors, however establishing convergence to even a stationary point is difficult \citep{Mohlenkamp2013}; see also \citep{Cohen17}.  %which considers a case where one of the factors admits a sparse-in-a-known-dictionary structure with local convergence guarantees. 
The variants of ALS with some convergence guarantees do so at the expense of complexity \citep{Li2015tens, Razaviyayn2013}, and convergence rate \citep{Uschmajew2012}; See also \citep{Kolda09} and \citep{Sidiropoulos2017}. 
On the other hand, guaranteed methods initially relied on a computationally expensive orthogonalizing step (\emph{whitening}), and therefore, did not extend to the overcomplete setting (\[m>n\]) \citep{Comon1994, Kolda2011,Zhang2001,Le2011,Huang2015, Anandkumar14a, Anandkumar2016}. As a result, works such as \citep{Tang15, Anandkumar15, Sun16}, relaxed orthogonality to an \emph{incoherence} condition to handle the overcomplete setting. To counter the complexity, \cite{Sharan2017} developed a orthogonalization-based provable ALS variant, however, this precludes its use in overcomplete settings. 

%based on convex relaxations and tensor power iterations, respectively, by relaxing the orthogonality requirement to an \emph{incoherence} condition to extend results to overcomplete settings. Extending \citep{Anandkumar15}, \citep{Sun16} considered a case wherein the factors are both \textit{sparse} and incoherent. 

%To overcome the complexity of these techniques and to leverage the simplicity of the ALS algorithm,  \cite{Sharan2017} developed a provable variant -- orth-ALS, by including an orthogonalization step. 

\paragraph{Dictionary Learning.} We now provide a brief overview of the dictionary learning literature. Popularized by the rich sparse inference literature, \emph{overcomplete} \[(m\geq n)\] representations lead to sparse(r) representations which are robust to noise; see \cite{Mallat1993, Chen1998, Donoho2006}. Learning such sparsifying overcomplete representations is known as \emph{dictionary learning} \citep{Olshausen97, Lewicki2000,Mairal09, Remi2010}. Analogous to the ALS algorithm, the alternating minimization-based techniques became widely popular in practice, however theoretical guarantees were still limited.
Provable algorithms for under- and over-complete settings were developed, however their computational complexity and initialization requirements limited their use \cite{Spielman2012, Agarwal14, Arora14, Barak2015}. Tensor factorization algorithms have also been used to learn orthogonal (\cite{Barak2015} and \cite{Ma2016}), and convolutional \citep{Huang2015} dictionaries. More recently, \citep{Rambhatla2019NOODL} proposed \texttt{NOODL}: a simple, scalable gradient descent-based algorithm for joint estimation of the dictionary and the coefficients,  for \textit{exact} recovery of both factors at a linear rate. Although this serves as a great starting point, tensor factorization task cannot be handled by a mere ``lifting'' due to the induced dependence structure.

Overall, the existing provable techniques (Table~\ref{tab:compare_tensor}) in addition to being computationally expensive, incur an irreducible error (bias) in estimation and apply to cases where all factors obey the same conditions. Consequently, there is a need for fast and scalable provable tensor factorization techniques which can recover structured factors with no estimation bias.

\paragraph{Notation.} Bold, lower-case (\[\b{v}\]) and upper-case (\[\b{M}\]) letters, denote vectors and matrices, respectively. We use \[\b{M}_i\], \[\b{M}_{(i,:)}\], \[\b{M}_{ij}\] (also \[\b{M}(i,j)\]), and \[\b{v}_i\] (also \[\b{v} (i)\]) to denote the \[i\]-th column, \[i\]-th row, \[(i,j)\] element, respectively. We use \[``\odot"\] and \[``\otimes"\] to denote the Khatri-Rao (column-wise Kronecker product) and Kronecker product, respectively. Next, we use \[(\cdot)^{(n)}\] to denote the \[n\]-th iterate, and \[(\cdot)_{(n)}\] for the \[n\]-th data sample.  We also use standard Landau notations \[\cO(\cdot), \Omega(\cdot)\] (\[\tilde{\cO}(\cdot), \tilde{\Omega}(\cdot)\]) to denote the asymptotic behavior (ignoring log factors). Also, for a constant \[L\] (independent of \[n\]), we use \[g(n) = \mathcal{O}^*(f(n))\] to indicate that \[g(n) \leq L f(n)\]. We use \[c(\cdot)\] for constants determined by the quantities in \[(\cdot)\]. Also, we define \[\HT_{\tau}(z) := z\cdot\mathbbm{1}_{|z|\geq \tau}\] as the hard-thresholding operator, where ``\[\mathbbm{1}\]'' is the indicator function, and \[\supp(\cdot)\] for the support (set of non-zero elements) and \[\sgn(\cdot)\] for element-wise sign. Also, \[(.)^{(r)}\] denotes potential iteration dependent parameters. See Appendix~\ref{app:summary_notation}.
        
\vspace{-6pt}
\section{Problem Formulation} \label{sec:probform}
\vspace{-4pt}
%\begin{minipage}{0.4\textwidth}
Our formulation is shown in Fig.~\ref{fig:Tensor_fac}. Here, our aim is to recover the CP factors of tensors \[\{\underline{\b{Z}}^{(t)}\}_{t=0}^{T-1}\] assumed to be generated at each iteration as per \eqref{CPD}. Without loss of generality, let the factor \[\b{A}^*\] follow some incoherence assumptions, while the factors \[\b{B}^{*(t)}\] and \[\b{C}^{*(t)}\] be sparse. Now, the \emph{mode-$1$ unfolding} or matricization \[\b{Z}_1^{(t)} \in  \mathbb{R}^{JK \times n} \] of \[\underline{\b{Z}}^{(t)}\] is given by
%\end{minipage}\hspace{4pt}
%\begin{minipage}{0.57\textwidth}
%	\vspace{-45pt}

%\end{minipage}

\vspace{-7pt}

\vspace{-0pt}
\begin{align}
\label{mode-1-T}
\b{Z}_1^{(t)\top}= \b{A}^*(\b{C}^{*(t)}\odot \b{B}^{*(t)})^\top = \b{A}^*\b{S}^{*(t)},
\end{align}
where \[\b{S}^{*(t)} \in \mathbb{R}^{m \times JK}\] is \[\b{S}^{*(t)} := (\b{C}^{*(t)}\odot \b{B^{*(t)}})^\top\]. As a result, matrix \[\b{S}^{*(t)}\] has a \emph{transposed Khatri-Rao} structure, i.e. the \[i\]-th row of \[\b{S}^{*(t)}\] is given by \[(\b{C}^{*(t)}_i\otimes \b{B}_i^{*(t)})^\top\]. Further, since \[\b{B}^{*(t)}\] and \[\b{C}^{*(t)}\] are sparse, only a few  \[\b{S}^{*(t)}\] columns (say \[p\]) have non-zero elements. Now, let \[\b{Y}^{(t)} \in \mathbb{R}^{n \times p}\] be a matrix formed by collecting the non-zero \[\b{Z}_1^{(t)\top}\] columns, we have
\begin{align}
\label{eq:dl}
\b{Y}^{(t)} = \b{A}^*\b{X}^{*(t)}, 
\end{align}
where \[\b{X}^{*(t)} \in \mathbb{R}^{m \times p}\] denotes the sparse matrix corresponding to the non-zero columns of  \[\b{S}^{*(t)}\]. Since recovering \[\b{A}^*\] and \[\b{X}^{*(t)}\] given \[\b{Y}^{(t)}\] is a dictionary learning task~\eqref{eq:model_dl}, we can now employ a dictionary learning algorithm (such as \citetalias{Rambhatla2019NOODL}) which \emph{exactly} recovers  \[\b{A}^*\] (the \emph{dictionary}) and \[\b{X}^{*(t)}\] (the \emph{sparse coefficients}) at each time step \[t\] of the (online) algorithm. The exact recovery of \[\b{X}^{*(t)}\] enables recovery of \[\b{B}^{*(t)}\] and \[\b{C}^{*(t)}\] using our untangling procedure.

	\begin{figure}[t]
	\vspace{-5pt}
		\centering
		\resizebox{0.72\textwidth}{!}{\protect{
				\begin{tikzpicture}
				\node[anchor=south west,inner sep=0] (image) at (0.2,0.8) {\includegraphics[width=0.845\textwidth]{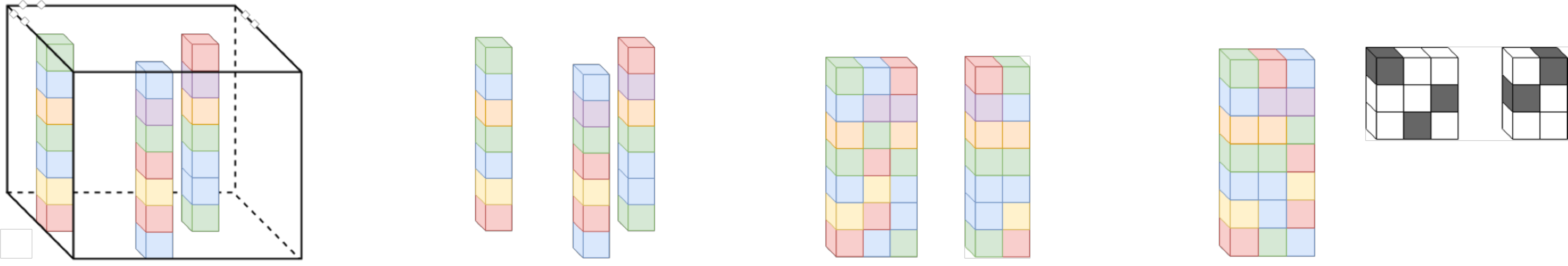}};
				\node[align=center]  at (0,2.2) {\large $n$};
				\node[align=center]  at (0.2,1) {\large $K$};
				\node[align=center]  at (1.8,0.6) {\large $J$};
				\node[align=center] at (1.8,1.8) {\Large$\underline{\b{Z}}^{(t)} $};
				\node[align=center] at (3.7,1.8) {\huge$\rightarrow$};
				\node[align=center] at (6.8,1.8) {\huge$\rightarrow$};
				\node[align=center] at (11.6,1.8) {\Large $\b{A}^* $};
				\node[align=center] at (11.5,3.2) {\large Dictionary};
				\node[align=center] at (13.4,1.6) {\Large$\b{X}^{*(t)} $};
				\node[align=center] at (13.4,1.1) {\large Sparse};
				\node[align=center] at (13.4,0.7) {\large Coefficients};
				\node[align=center] at (5.1,3.3) {\large Dense Columns};
				\node[align=center] at (8.4,3.3) {\large Collected to};
				\node[align=center] at (8.4,2.9) {\large form a Matrix};
				\node[align=center] at (8.6,2.4) {\large \[...\]};
				\node[align=center] at (13.4,2.6) {\large \[...\]};
				\node[align=center] at (10.3,1.8) {\huge$=$};
				\node[align=center] at (8.5,1.8) {\Large$\b{Y}^{(t)} $};
				\end{tikzpicture}}}
		\vspace{-5pt}
		\caption{\protect{Problem Formulation: The dense columns of \[{\b{Z}}^{(t)} \in \mathbb{R}^{n\times J \times K}\] are collected in a matrix \[\b{Y}^{(t)}\]. Then \[\b{Y}^{(t)}\] is viewed as arising from a dictionary learning model. }}\label{fig:Tensor_fac}
	%	\vspace{-14pt}
	\end{figure}

\vspace{-6pt}
\section{Algorithm} \label{sec:algorithm}
\vspace{-4pt}
\setlength{\textfloatsep}{3pt}

\vspace{-3pt}
We begin by presenting the algorithmic details referring to relevant assumptions, we then analyze the model assumptions and the main result in section~\ref{sec:main_res}. \texttt{TensorNOODL} (Alg.~\ref{alg:main_alg_tens}) operates by casting the tensor decomposition problem as a dictionary learning task. Initially, Alg.~\ref{alg:main_alg_tens} is given a \[(\epsilon_0, 2)\]-close (defined below) estimate \[\b{A}^{(0)}\] of \[\b{A}^*\] for \[\epsilon_0 = \mathcal{O}^*(1/\log(n))\]. This initialization, which can be achieved by algorithms such as \citet{Arora15}, ensures that the estimate \[\b{A}^{(0)}\] is both, column-wise and in spectral norm sense, close to \[\b{A}^*\].
\vspace{-5pt} 
\begin{definition}[\[(\epsilon, \kappa)\]-closeness]\label{def:del_kappa}
	\textit{Matrix \[\b{A}\] is \[(\epsilon, \kappa)\]-close to \[\b{A}^*\] if \[\|\b{A} - \b{A}^*\| \leq \kappa\|\b{A}^*\|\], and if there is a permutation \[\pi: [m] \rightarrow [m]\] and collection of signs \[\sigma: [m] \rightarrow \{\pm 1\}\] s.t. \[\|\sigma(i)\b{A}_{\pi(i)} - \b{A}^*_i\|\leq \epsilon, ~\forall~ i\in [m]\].}\vspace{-5pt}
\end{definition}
%\vspace{-10pt}

Next, we sequentially provide the tensors to be factorized, \[\{\underline{\b{Z}}^{(t)}\}_{t=0}^{T-1}\] (generated independently as per \eqref{CPD}) at each iteration \[t\]. The algorithm proceeds in the following stages. 

\vspace{3pt}
\noindent\textbf{I. Estimate Sparse Matrix \[\b{X}^{*(t)}\]}: We use  \[R\] iterative hard thresholding (IHT) steps \eqref{alg:coeff_iht} -- with step-size \[\eta_x^{(r)}\] and threshold \[\tau^{(r)}\]  chosen according to \ref{assumption:step coeff} -- to arrive at an estimate 	\[\hat{\b{X}}^{(t)}\] (or \[\b{X}^{(R)(t)}\]). Iterations \[R\] are determined by the target tolerance (\[\delta_R\]) of the desired coefficient estimate, i.e. we choose \[R = \Omega(\log(1/\delta_R))\], where \[(1 - \eta_x^{(r)})^R \leq \delta_R\].

\vspace{3pt}

\addtolength{\floatsep}{-16pt}
\addtolength{\abovecaptionskip}{-10pt}
\addtolength{\belowcaptionskip}{-10pt}
%\vspace{-30pt}

\begin{algorithm}[t]
\setstretch{1}
\caption{\textsc{TensorNOODL}: Neurally plausible alternating Optimization-based Online Dictionary Learning for Tensor decompositions.}\label{alg:main_alg_tens}
\SetAlgoLined
\KwInput{Structured tensor \[\underline{\b{Z}}^{(t)} \in \mathbb{R}^{n\times J \times K} \] at each \[t\] generated as per \eqref {CPD}. %; See Note~\ref{note:fresh}. 
Parameters \[\eta_A\], \[\eta_x\], \[\tau\], \[T\], \[C\], and \[R\] as per \ref{assumption:dist}, \ref{assumption:step dict}, and \ref{assumption:step coeff}.}
\KwOutput{Dictionary \[\b{A}^{(t)}\] and the factor estimates \[\b{B}^{(t)}\] and \[\b{C}^{(t)}\] (corresponding to \[\underline{\b{Z}}^{(t)}\]) at \[t\]. }
\KwInit{Estimate \[\b{A}^{(0)}\], which is \[(\epsilon_0, 2)\]-near to \[\b{A}^*\] for \[\epsilon_0 = \mathcal{O}^*(1/\log(n))\]; see Def.~\ref{def:del_kappa}.}
\For{\[t=0\] \textbf{to} \[T-1\]}{
\vspace*{2pt}
\hspace{-3pt}\textbf{I. Estimate Sparse Matrix \[\b{X}^{*(t)}\]:} \vspace{-4pt}
\begin{flalign}\label{alg:coeff_init}
&\textbf{Initialize:}\hspace{2pt}\b{X}^{(0)(t)} =\HT_{C/2}(\b{A}^{(t)^\top}\b{Y}^{(t)})~ &\hspace{-21pt}~\text{See Def.}\ref{def:dist_bc}  &
\end{flalign}\\[-4pt]
	\For{\[r = 0\]  \textbf{to} \[R-1\]}{\vspace{-6pt}
	{\begin{flalign}\label{alg:coeff_iht}
	&\b{X}^{(r+1)(t)} = \HT_{\tau^{(r)}}(\b{X}^{(r)(t)} - \eta_{x}^{(r)}\b{A}^{(t)^\top}(\b{A}^{(t)}\b{X}^{(r)} -\b{Y}^{(t)}))&
	\end{flalign}\vspace{-12pt}}}

%	Update: \[ \b{x}^{(r+1)}_{(j)} = \HT_{\tau^{(r)}}(\b{x}^{(r)}_{(j)} - \eta_{x}^{(r)}~\b{A}^{(t)^\top}(\b{A}^{(t)}\b{x}^{(r)}_{(j)} - \b{y}_{(j)}))\]
\[\hat{\b{X}}^{(t)} :=\b{X}^{(R)(t)} \]. \\
%(We drop \[(.)^{(t)}\] in our discussion for simplicity.)
\vspace*{3pt}

\hspace{-3pt}\textbf{II. Recover Sparse Factors  \[\b{B}^*\] and \[\b{C}^*\]}: \\
\vspace{6pt}
Form \[\hat{\b{S}}^{(t)}\] by putting back columns of \[\hat{\b{X}}^{(t)}\] at the non-zero column locations of \[\b{Z}_1^{(t)\top}\].\\
\vspace{6pt}
\hspace* {4pt}\[[\hat{\b{B}}^{(t)}, ~\hat{\b{C}}^{(t)}] = \texttt{UNTANGLE-KRP}(\hat{\b{S}}^{(t)})\] \\
%Normalize: \[\b{A}^{(t+1)}_i = \b{A}^{(t+1)}_i/\|\b{A}^{(t+1)}_i\|\] for \[i \in [m]\]. % took out "optional" this step is not optional
\vspace*{3pt}

\hspace{-3pt}\textbf{III. Update Dictionary Factor \[\b{A}^{(t)}\]:}\vspace{-14pt}

\begin{flalign}\label{eq:emp_grad_est}
&\hspace{0.14in}\hat{\b{g}}^{(t)} = \tfrac{1}{p}(\b{A}^{(t)}\hat{\b{X}}^{(t)}_{\rm{indep}}  - \b{Y}^{*(t)})\sgn(\hat{\b{X}}^{(t)}_{\rm{indep}}  )^\top\hspace{-0.05in}&
\end{flalign}\vspace{-22pt}
%Form empirical gradient estimate : \[\hat{\b{g}}^{(t)} = \tfrac{1}{p}\sum_{j = 1}^{p}(\b{A}^{(t)}\hat{\b{x}}_{(j)}  - \b{y}_{(j)})\sgn(\hat{\b{x}}_{(j)} )^\top\] \\ %gr = (Y_m - M.*(A*XS))*(XS)';
\begin{flalign}\label{eq:apx_grad}
&\hspace{0.14in}\b{A}^{(t+1)} = \b{A}^{(t)} - \eta_A~\hat{\b{g}}^{(t)}&
\end{flalign}\vspace{-22pt}
%Take a gradient descent step: \[\b{A}^{(t+1)} = \b{A}^{(t)} - \eta_A~\hat{\b{g}}^{(t)}\]\\
\begin{flalign*}
&\hspace{0.14in}\b{A}^{(t+1)}_i = \b{A}^{(t+1)}_i/\|\b{A}^{(t+1)}_i\| ~\forall~i \in [m]&
\end{flalign*}\vspace{-22pt}\\

%\vspace*{-8pt}
} 
\end{algorithm}
%\addtolength{\abovecaptionskip}{-10pt}
%\addtolength{\belowcaptionskip}{-10pt}
%\addtolength{\abovecaptionskip}{-10pt}
%\addtolength{\belowcaptionskip}{-10pt}
%\vspace{10pt}
%\addtolength{\floatsep}{-16pt}
%\addtolength{\abovecaptionskip}{-10pt}
%\addtolength{\belowcaptionskip}{-10pt}
\begin{algorithm}[t]
	\SetAlgoSkip{-20pt}
	\KwInput{Estimate \[\hat{\b{S}}^{(t)}\] of the KRP \[\b{S}^{*(t)}\]}
	\KwOutput{Estimates \[\hat{\b{B}}^{(t)}\] and \[\hat{\b{C}}^{(t)}\] of \[\b{B}^{*(t)}\] and \[\b{C}^{*(t)}\].}
	\For{\[i = 1 \dots m\]}{
		\textbf{Reshape:} $i$-th row of \[\hat{\b{S}}^{(t)}\]  into \[\b{M}^{(i)} \in \mathbb{R}^{J \times K}\].\\
		\hspace{-2pt}	\textbf{Set:} \[\hat{\b{B}}_i^{(t)} \leftarrow \sqrt{\sigma_1} \b{u}_1\], and \[\hat{\b{C}}_i^{(t)} \leftarrow  \sqrt{\sigma_1}\b{v}_1\], where \[\sigma_1\], \[\b{u}_1\], and \[\b{v}_1\] are the principal left and right singular vectors of \[\b{M}^{(i)} \], respectively.
	}
	\caption{\textsc{Untangle Khatri-Rao Product (KRP)}: Recovering the Sparse factors}
	\label{algo:KRP_algo}
\end{algorithm}
%\vspace*{-5pt}
%\end{minipage}}
%\vspace{-2pt}

%Therefore to recreate the Khatri-Rao structure, we form the estimate $\hat{\b{S}}^{(t)}$ of $\b{S}^{*(t)}$ by placing columns of \[\hat{\b{X}}^{(t)}\] at their corresponding locations of \[\b{Z}_1^{(t)\top}\]. Since our main result guarantees that \[\hat{\b{S}}^{(t)}\] has the same sign and  and support as \[\hat{\b{S}}^{*(t)}\], we provably recover the original Khatri-Rao product structure.  
%Now, we can use the SVD-based algorithm (Alg.~\ref{algo:KRP_algo}) to recover the sparse factors using the element-wise \[\zeta\]-close estimate of \[\b{S}^{*(t)}\], i.e., {$|\hat{\b{S}}_{ij}^{(t)} - \b{S}^{*(t)}_{ij}| \leq \zeta$} by Alg.~\ref{alg:main_alg_tens}. 

%\vspace{-10pt}
%\setlength{\intextsep}{1\baselineskip}

%\hspace{-18pt}\begin{minipage}{0.5\textwidth}

 \noindent\textbf{II. Estimate \[\b{B}^*\] and \[\b{C}^*\]}: As discussed in section~\ref{sec:probform}, the tensor matricization leads to a Khatri-Rao dependence structure between the factors \[\b{B}^{*(t)}\] and \[\b{C}^{*(t)}\]. To recover these, we develop a SVD-based algorithm (Alg.~\ref{algo:KRP_algo}) to estimate sparse factors (\[\b{B}^{*(t)}\] and \[\b{C}^{*(t)}\]) using an element-wise \[\zeta\]-close estimate of \[\b{S}^{*(t)}\], i.e., {$|\hat{\b{S}}_{ij}^{(t)} - \b{S}^{*(t)}_{ij}| \leq \zeta$}.  Here, we form the estimate $\hat{\b{S}}^{(t)}$ of $\b{S}^{*(t)}$ by placing columns of \[\hat{\b{X}}^{(t)}\] at their corresponding locations of \[\b{Z}_1^{(t)\top}\] to the Khatri-Rao structure (\texttt{TensorNOODL} is agnostic to the tensor structure of the data since it only operates on the non-zero fibers \[\b{Y}^{(t)}\] of \[\b{Z}_1^{(t)\top}\] see \eqref{eq:dl} and Fig.~\ref{fig:Tensor_fac}). Our recovery result for \[\hat{\b{X}}^{(t)}\] guarantees that \[\hat{\b{S}}^{(t)}\] has the same sign and  and support as \[\hat{\b{S}}^{*(t)}\], we therefore provably recover the original Khatri-Rao product structure.    
	
	\vspace{5pt}
	
	\noindent\textbf{III. Update \[\b{A}^*\] estimate }: We use \[\b{X}^{*(t)}\] estimate to update \[\b{A}^{(t)}\] by an approximate gradient descent strategy \eqref{eq:apx_grad} with step size \[\eta_A\]  (\ref{assumption:step dict}). The algorithm requires \[T= \max( \Omega(\log(1/\epsilon_T)), \Omega(\log(\sqrt{s}/\delta_T)))\] for {\small$\|\b{A}^{(T)}_i \hspace{-1pt}-\hspace{-1pt} \b{A}^*_i\| \hspace{-2pt}\leq\hspace{-2pt} \epsilon_T, \forall i\in\hspace{-2pt} [m]$} and {\small$|\hat{\b{X}}_{ij}^{(T)} - \b{X}_{ij}^{*(t)}| \leq \delta_T$}.
	
	\vspace{6pt}
	\noindent\textbf{Runtime}: The runtime of \texttt{TensorNOODL} is \[\c{O}(mnp\log(\tfrac{1}{\delta_R}) \max( \log(\tfrac{1}{ \epsilon_T}),\log(\tfrac{\sqrt{s}}{\delta_T}))\] for \[p \hspace{-2pt}= \hspace{-2pt}\Omega(ms^2)\].  Furthermore, since \[\b{X}^*\] columns can be estimated independently in parallel, \texttt{TensorNOODL} is scalable and can be implemented in highly distributed settings. 
	
		\vspace{-6pt}
		\section{Main Result}\label{sec:main_res}
		\vspace{-4pt}
		We now formalize our model assumptions and state our main result; details in Appendix~\ref{sec:theory}. 
		
	\setlength{\textfloatsep}{20pt}

%\end{minipage}\scalebox{0.81}{
%\begin{minipage}{.62\textwidth}

\vspace{3pt}
\noindent\textbf{Model Assumptions}: First, we require that \[\b{A}^*\] is \[\mu\]-incoherent
		(defined below), which defines the notion of incoherence for \[\b{A}^*\] columns \hspace{-3pt}(refered to as \emph{dictionary}).

	\vspace{-0pt}
	\begin{definition} \label{def:mu}
		\textit{ A matrix \[\b{A} \hspace{-3pt}\in \hspace{-3pt}\mathbb{R}^{n \times m}\] with unit-norm columns is \[\mu\]-incoherent if for all \[{i \neq j}\] the inner-product between the columns of the matrix follow \[ |\langle \b{A}_i, \b{A}_j\rangle| \leq \mu/\sqrt{n}\].}
	\end{definition}%
	\vspace{-0pt} 
This ensures that dictionary columns are distinguishable, akin to relaxing the orthogonality constraint. Next, we assume that sparse factors \[\b{B}^{*(t)}\] and \[\b{C}^{*(t)}\] are drawn from distribution classes $\Gamma_{\alpha, C}^{\rm sG}$ and \[\Gamma_{\beta}^{\rm Rad}\], respectively, here \[\Gamma_{\gamma, C}^{\rm sG}\] and  $\Gamma_{\gamma}^{\rm Rad}$ are defined as follows.

%\vspace{-7pt}
\vspace{-0pt}
\begin{definition} [Distribution Class \[\Gamma_{\gamma, C}^{\rm sG}\] and  \[\Gamma_{\gamma}^{\rm Rad}\]]
	\label{def:dist_bc} A matrix \[\b{M}\] belongs to class %a Distribution class 
	\vspace{-7pt}
		\begin{itemize}[leftmargin={25pt}]
		\item \[\Gamma_{\gamma}^{\rm Rad}\]: if each entry of \[\b{M}\] is independently non-zero with probability \[\gamma\], and the values at the non-zero locations are drawn from the Rademacher distribution.
		\item \[\Gamma_{\gamma, C}^{\rm sG}\]:  if each entry of \[\b{M}\] is independently non-zero with probability \[\gamma\], and the values at the non-zero locations are sub-Gaussian, zero-mean with unit variance and bounded away from \[C\] for some positive constant \[C\leq1\], i.e., \[|\b{M}_{ij}| \geq C\] for \[(i,j) \in \supp(\b{M})\].
			\vspace{-6pt}
		
		\end{itemize}
\end{definition} 
\vspace{-0pt}
In essence, we assume that elements of \[\b{B}^{*(t)}\] (\[\b{C}^{*(t)}\]) are non-zero with probability \[\alpha\] (\[\beta\]), and that for \[\b{B}^{*(t)}\] the values at the non-zero locations are drawn from a zero-mean unit-variance sub-Gaussian distribution, bounded away from zero, and the non-zero values of \[\b{C}^{*(t)}\] are drawn from the Rademacher distribution
\footnote{The non-zero entries of \[\b{C}^{*(t)}\] can also be assumed to be drawn from a sub-Gaussian distribution (like \[\b{B}^{*(t)}\]) at the expense of sparsity, incoherence, dimension(s), and sample complexity. Specifically when non-zero entries of \[\b{B}^{*(t)}\] and \[\b{C}^{*(t)}\] are drawn from sub-Gaussian distribution (as per \[\Gamma_{\gamma, C}^{\rm sG}\]), we will need the dictionary learning algorithm to work with the coefficient matrix \[\b{X}^{*(t)}\] (formed by product of entries of \[\b{B}^{*(t)}\] and \[\b{C}^{*(t)}\]) which now has sub-Exponential non-zero entries.}.% Since sub-Exponential tails decay slower than sub-Gaussians, we will need additional restrictions on other parameters.}.

%\vspace{5pt}
%\begin{minipage}{0.4\textwidth}
\paragraph{Analyzing the Khatri-Rao Dependence:} We now turn our attention to  the KR dependence structure of \[\b{S}^{*(t)}\]. Fig.~\ref{fig:krp_row} shows a row of the matrix \[\b{S}^{*(t)}\], each entry of which is
 formed by multiplication of an element of \[\b{C}_i^{*(t)}\] with each element of columns of \[\b{B}_i^{*(t)}\]. Consequently, each row of the resulting matrix \[\b{S}^{*(t)}\] has \[K\] \emph{blocks} (of size \[J\]), where the \[k\]-th block is controlled by \[\b{C}_{k,i}^{*(t)}\], and therefore the \[(i,j)\]-th entry of \[\b{S}^{*(t)}\] can be written as 
\vspace{-2pt}
\begin{align}\label{eq:dep_str}
\b{S}_{ij}^{*(t)} = \b{C}^{*(t)}(\left\lfloor\tfrac{j}{J}\right\rfloor+1, i)~\b{B}^{*(t)}(j - J \left\lfloor\tfrac{j}{J}\right\rfloor,i).\vspace{-2pt}
\end{align}

	\begin{figure}[t]
 	%	\vspace{-5pt}
 		\centering
 		\resizebox{0.75\textwidth}{!}{
 			\begin{tikzpicture}
 			\node[anchor=south west,inner sep=0] (image) at (1.3,0) {\includegraphics[width=0.84\textwidth]{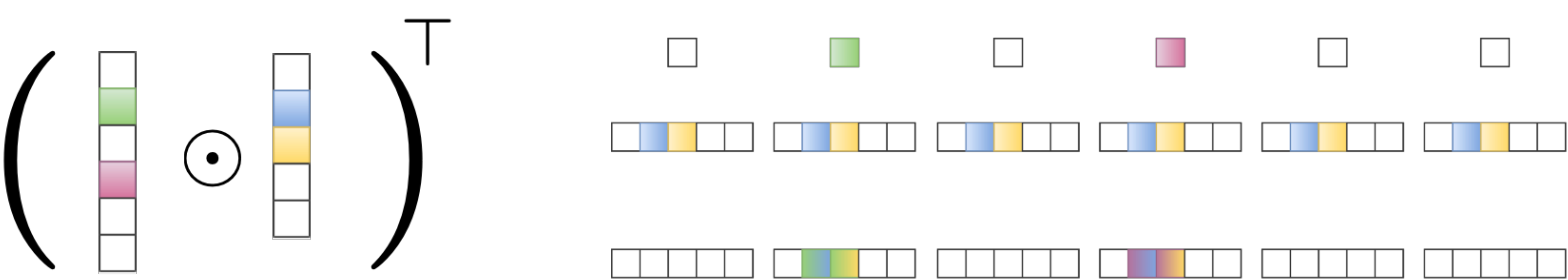}};
 			\node[align=center]  at (2.3,-0.2) {\large$\b{C}_i \in \mathbb{R}^K$};
 			\node[align=center]  at (4.1,-0.2) {\large$\b{B}_i \in \mathbb{R}^J$};
 			\node[align=center] at (5.8,0.9) {\Large$=$};
 			\node[align=center] at (7.4,1.6) {\large $\times$};
 			\node[align=center] at (7.4,2.4) {\large$1$};
 			\node[align=center] at (7.45,0.6) {\large\rotatebox{90}{$\,=$}};
 			\node[align=center] at (8.8,1.6) {\large $\times$};
 			\node[align=center] at (8.8,2.4) {\large$2$};
 			\node[align=center] at (8.85,0.6) {\large\rotatebox{90}{$\,=$}};
 			\node[align=center] at (11.7,1.6) {\large $\times$};
 			\node[align=center] at (11.75,0.6) {\large\rotatebox{90}{$\,=$}};
 			\node[align=center] at (10.3,1.6) {\large $\times$};
 			\node[align=center] at (10.35,0.6) {\large\rotatebox{90}{$\,=$}};
 			\node[align=center] at (13.2,1.6) {\large $\times$};
 			\node[align=center] at (13.2,0.6) {\large\rotatebox{90}{$\,=$}};
 			\node[align=center] at (14.6,1.6) {\large $\times$};
 			\node[align=center] at (14.6,2.4) {\large$K$};
 			\node[align=center] at (14.65,0.6) {\large\rotatebox{90}{$\,=$}};
 			\node[align=center] at (11,2.4) {\large$\dots$};
 			\end{tikzpicture}}
 	%	\vspace{-2pt}
 		\caption{Transposed Khatri-Rao dependence. }\label{fig:krp_row}
 	%	\vspace{-18pt}
 	\end{figure}

As a result, depending upon \[\alpha(\beta)\], \[\b{S}^{*(t)}\] (consequently \[\b{Z}_1^{(t)\top}\]) may have all-zero (degenerate) columns. Therefore, we only use the non-zero columns \[\b{Y}^{(t)}\] of \[\b{Z}_1^{(t)\top}\]. Next, although elements in a column of \[\b{S}^{*(t)}\] are independent, the KR structure induces a dependence structure across elements in a row when the elements depend on the same \[\b{B}^{*(t)}\] or \[\b{C}^{*(t)}\] element; see \eqref{eq:dep_str}. In practice, we can use all non-zero columns of \[\b{Z}_1^{(t)\top}\], however for our probabilistic analysis, we require an independent set of samples. We form one such set by selecting the first column from the first block, second column from the second block and so on; see Fig.~\ref{fig:krp_row}. This results in a \[L=\min(J,K)\] independent samples set for a given \[\b{Z}_1^{(t)\top}\]. With this, and our assumptions on sparse factors ensure that the \[L\] independent columns of \[\b{X}^{*(t)}\] (\[\b{X}_{\rm{indep}}^{*(t)}\]) belong to the distribution class \[\c{D}\] defined as follows.

\vspace{-2pt}
\begin{definition}[Distribution class \[\c{D}\]]\label{dist_x} %Made imp change j to i in expectation
\textit{The coefficient vector \[\b{x}^*\] belongs to an unknown distribution \[\c{D}\], where the support \[S = \supp(\b{x}^*)\] is at most of size \[s\], \[\b{Pr}[i \in S] = \Theta(s/m)\] and \[\b{Pr}[i, j \in S] = \Theta(s^2/m^2)\]. Moreover, the distribution is normalized such that \[\b{E}[\b{x}_i^*|i \in S] = 0\] and \[\b{E}[\b{x}_i^{*^2}|i \in S] = 1\], and when \[i \in S\],  \[|\b{x}^*_i| \geq C\] for some constant \[C \leq 1\]. In addition, the non-zero entries are sub-Gaussian and pairwise independent conditioned on the support.}
\end{definition}
\vspace{-2pt}  
Further, the  \[(\epsilon_0,2)\]-closeness (Def.~\ref{def:del_kappa})  ensures that the signed-support (defined below) of the coefficients are recovered correctly (with high probability).
\vspace{-2pt}
\begin{definition}\label{def:signed-support}
\textit{The signed-support of a vector \[\b{x}\] is defined as \[\sgn(\b{x})\cdot \supp(\b{x})\].}
\end{definition}
%\vspace{-2pt}
\textbf{Scaling and Permutation Indeterminacy}: The unit-norm constraint on \[\b{A}^*\] implies that the scaling (including the sign) ambiguity only exists in the recovery of \[\b{B}^{*(t)}\] and \[\b{C}^{*(t)}\]. To this end, we will regard our algorithm to be successful in the following sense.
\vspace{-2pt}
\begin{definition}[Equivalence]\label{assumption:scale}\textit{Factorizations \[[\![ \b{A},\b{B},\b{C}]\!]\] are considered equivalent up to scaling, i.e, \[	[\![ \b{A}, \b{B},\] \[\b{C}]\!]= [\![ \b{A}^*, \b{B}^*\b{D}_{\sigma_b}, \b{C}^*\b{D}_{\sigma_c}]\!]\]
 where \[\sigma_b\](\[\sigma_c\]) is a vector of scalings (including signs) corresponding to columns of the factors \[\b{B}\] and \[\b{C}\], respectively.}
\end{definition}
\vspace{-2pt}
\textbf{Dictionary Factor Update Strategy}: We use an \emph{approximate} (we use an estimate of \[\b{X}^{*(t)}\]) gradient descent-based strategy \eqref{eq:emp_grad_est} to update \[\b{A}^{(t)}\] by finding a direction \[\b{g}_i^{(t)}\] to ensure descent. Here, the \[(\Omega(s/m), \Omega(m/s),0 )\]-correlatedness (defined below) of the expected gradient vector is sufficient to make progress (\[``0"\] indicates no bias); see \cite{Candes2015, Chen2015, Arora15, Rambhatla2019NOODL}.

\vspace{-2pt}
\begin{definition}\label{def:grad_alpha_beta}
	\textit{A vector \[\b{g}^{(t)}_i\] is  \[(\rho_{-}, \rho_{_+}, \zeta_t)\]-correlated with a vector \[\b{z}^*\] if for any vector \[\b{z}^{(t)}\]
	\begin{align*}
	\langle \b{g}^{(t)}_i, \b{z}^{(t)} - \b{z}^{*}\rangle \geq \rho_{-}\|\b{z}^{(t)} - \b{z}^{*}\|^2 + \rho_{+} \|\b{g}^{(t)}_i\|^2 - \zeta_{t}.
	\end{align*}}
\end{definition}
\vspace{-2pt}

Our model assumptions can be formalized as follows, with which we state our main result.% formalizing the specifics of the model, shown in \eqref{eq:model}. 
\vspace{-5pt}
\begin{enumerate}[label=\textbf{A.\arabic*}, ref=\textbf{A.\arabic*}, leftmargin={*}]
\setlength{\itemsep}{-0.05cm}
\item \label{assumption:mu} \[\b{A}^*\] is \[\mu\]-incoherent (Def.~\ref{def:mu}), where \[\mu = \mathcal{O}(\log(n))\], \[\|\b{A}^*\| = \mathcal{O}(\sqrt{m/n})\] and \[m = \mathcal{O}(n)\];
\item \label{assumption:close} \[\b{A}^{(0)}\] is \[(\epsilon_0, 2)\]-close to \[\b{A}^*\]  as per Def.~\ref{def:del_kappa}, and \[\epsilon_0 = \mathcal{O}^*(1/\log(n))\];
\item \label{assumption:dist} Factors \hspace{0pt}\[\b{B}^{*(t)}\]\hspace{-1pt} and \[\b{C}^{*(t)}\] \hspace{-2pt}are respectively drawn from distributions \[\Gamma_{\alpha, C}^{\rm sG}\]  and  \[\Gamma_{\beta}^{\rm Rad}\] (Def.\ref{def:dist_bc});
\item \label{assumption:k}Sparsity controlling parameters \[\alpha\] and \[\beta\] obey \[\alpha\beta = \mathcal{O}({\sqrt{n}}/{m\mu~\log(n)})\] for \[m = \Omega(\log({\min(J,K)})/\] \[\alpha \beta)\], resulting column sparsity \[s\] of \[\b{S}^{*(t)}\] is \[s = \c{O}(\alpha \beta m)\];
\item \label{assumption:step dict}The dictionary update step-size satisfies \[\eta_A = \Theta(m/s)\];
\item \label{assumption:step coeff}The coefficient update step-size and threshold satisfy \[\eta_x^{(r)} \hspace{-2pt}< \hspace{-2pt}c_1(\epsilon_t, \mu, n, s) \hspace{-2pt}= \hspace{-2pt}\tilde{\Omega}({s}/{\sqrt{n}})\hspace{-2pt}<\hspace{-2pt}1\] and \[\tau^{(r)} = c_2(\epsilon_t, \mu, s, n) = \tilde{\Omega}({s^2}/{n})\] for small constants \[c_1\] and \[c_2\].
\end{enumerate}
%\clearpage
\vspace{-8pt}
\begin{theorem}[Main Result]\label{main_result}\textit{	Suppose a tensor \[\underline{\b{Z}}^{(t)} \in \mathbb{R}^{n \times J \times K}\] provided to Alg.~\ref{alg:main_alg_tens} at each iteration \[t\] admits a decomposition of the form \eqref{CPD}  with factors \[\b{A}^* \in \mathbb{R}^{n \times m}\], \[\b{B}^{*(t)}\in \mathbb{R}^{J \times m}\] and \[\b{C}^{*(t)}\in \mathbb{R}^{K \times m}\] and \[\min(J,K) = \Omega(ms^2)\]. Further, suppose that the assumptions \ref{assumption:mu}-\ref{assumption:step coeff}  hold. %then for \[\eta_A = \Theta(m/k)\] 
Then, given \[R = \Omega({\rm log}(n))\], with probability at least \[(1 - \delta_{\text{alg}})\] for some small constant \[\delta_{\text{alg}}\], the estimate \[\hat{\b{X}}^{(t)}\] at \[t\]-th iteration has the correct signed-support and satisfies
\addtolength{\abovedisplayskip}{2pt}
\addtolength{\belowdisplayskip}{2pt}
\begin{align*}
\vspace{10pt}
(\hat{\b{X}}_{i,j}^{(t)} - \b{X}_{i,j}^{*(t)})^2 \leq \zeta^2 %C_{i_1}^{(R)} 
&:= \mathcal{O}(s(1 - \omega)^{t/2}\|\b{A}_i^{(0)} - \b{A}_i^*\|), \forall (i,j) \in \supp({\b{X}^{*(t)}}).
\vspace{10pt}
\end{align*} 
Furthermore, for some \[0 < \omega < 1/2\], the estimate \[\b{A}^{(t)}\] at \[t\]-th iteration satisfies 
\begin{align*}
\vspace{10pt}
\|\b{A}_i^{(t)} - \b{A}_i^*\|^2 \leq (1 - \omega)^t\|\b{A}_i^{(0)} - \b{A}_i^*\|^2,~\forall~t = 1,2,\dots 
\vspace{10pt}
\end{align*}
Consequently,  Alg.~\ref{algo:KRP_algo} recovers the supports of the sparse factors \[\b{B}^{*(t)}\] and \[\b{C}^{*(t)}\] correctly, and \[\|\hat{\b{B}}_i^{(t)} - \b{B}_i^{*(t)}\|_2 \leq \epsilon_B\] and \[\|\hat{\b{C}}_i^{(t)}- \b{C}_i^{*(t)}\|_2 \leq \epsilon_C\], where \[\epsilon_B = \epsilon_C = \mathcal{O}(\tfrac{\zeta^2}{\alpha\beta})\].
}
\vspace{-3pt}
\end{theorem}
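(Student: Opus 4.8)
The plan is to recast the task as online dictionary learning on the non-zero fibers $\b{Y}^{(t)}=\b{A}^*\b{X}^{*(t)}$ \eqref{eq:dl} and run an induction over $t$ whose hypothesis is that $\b{A}^{(t)}$ is $(\epsilon_t,2)$-close to $\b{A}^*$ with $\epsilon_t^2\le(1-\omega)^t\epsilon_0^2$; the base case is Assumption \ref{assumption:close}. Within each step of the induction we establish the three conclusions in order: (i) signed-support recovery and the $\zeta^2$-bound for $\hat{\b{X}}^{(t)}$, (ii) the linear contraction of $\|\b{A}_i^{(t)}-\b{A}_i^*\|$, and (iii) the factor errors $\epsilon_B,\epsilon_C$. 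Parts (i)--(ii) mirror the \citetalias{Rambhatla2019NOODL} analysis, but two ingredients are new and forced by the Khatri--Rao structure: identifying how many columns of $\b{X}^{*(t)}$ are genuinely usable as i.i.d.-like samples, and the SVD untangling of Alg.~\ref{algo:KRP_algo}.

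\textbf{Step 1: the Khatri--Rao sampling/distribution lemma.} By \eqref{eq:dep_str}, row $i$ of $\b{S}^{*(t)}$ is the vectorization of the rank-one matrix $\b{B}_i^{*(t)}(\b{C}_i^{*(t)})^\top$, so an entry of $\b{S}^{*(t)}$ is non-zero iff the corresponding $\b{B}$- and $\b{C}$-entries are, which occurs independently with probability $\alpha\beta$; a Chernoff bound plus a union over the $JK$ columns gives column-sparsity $s=\c{O}(\alpha\beta m)$ whp (Assumption~\ref{assumption:k}), and the all-zero (degenerate) columns are discarded to form $\b{Y}^{(t)}$. For the probabilistic analysis I extract an independent sample set by taking the $k$-th column of the $k$-th of the $K$ blocks of size $J$ (Fig.~\ref{fig:krp_row}), so the $k$-th selected column uses row $k$ of $\b{B}^{*(t)}$ and row $k$ of $\b{C}^{*(t)}$; distinct selected columns then involve disjoint entries of $\b{B}^{*(t)}$ and $\b{C}^{*(t)}$, and within a column the entries are products of distinct $\b{B}$- and $\b{C}$-entries. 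Hence the resulting $\b{X}^{*(t)}_{\rm indep}$ has columns that are independent and, conditioned on support, mean zero, unit variance (product of a unit-variance $\b{B}$-entry and a $\pm1$), magnitude $\ge C$, and sub-Gaussian (bounded variable times sub-Gaussian), with $\b{Pr}[i\in S]=\Theta(s/m)$ and $\b{Pr}[i,j\in S]=\Theta(s^2/m^2)$ — i.e. $\b{X}^{*(t)}_{\rm indep}$ lies in class $\c{D}$ of Def.~\ref{dist_x}. The condition $\min(J,K)=\Omega(ms^2)$ then supplies the $\Omega(ms^2)$ independent samples needed downstream, and $m=\Omega(\log(\min(J,K))/\alpha\beta)$ makes every selected column non-degenerate whp.

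\textbf{Step 2: the dictionary-learning iteration.} With $\b{X}^{*(t)}_{\rm indep}\sim\c{D}$ and $\b{A}^*$ $\mu$-incoherent, Stage I of Alg.~\ref{alg:main_alg_tens} behaves as in \citetalias{Rambhatla2019NOODL}: using $(\epsilon_t,2)$-closeness, the hard-threshold initialization $\b{X}^{(0)(t)}=\HT_{C/2}(\b{A}^{(t)\top}\b{Y}^{(t)})$ recovers the correct signed-support whp (signal entries exceed $C/2$ while the cross-talk is $\c{O}(\mu s/\sqrt n)+\c{O}(\epsilon_t)\ll C/2$ by Assumptions~\ref{assumption:mu}, \ref{assumption:close}, \ref{assumption:k}), and each IHT update \eqref{alg:coeff_iht} with $\eta_x^{(r)},\tau^{(r)}$ from Assumption~\ref{assumption:step coeff} contracts the on-support error by $(1-\eta_x^{(r)})$ while preserving the signed-support; after $R=\Omega(\log n)$ steps the residual is dominated by the $\b{A}^{(t)}$-induced bias, which, after inserting the inductive bound on $\|\b{A}_i^{(t)}-\b{A}_i^*\|$, yields $(\hat{\b{X}}_{ij}^{(t)}-\b{X}_{ij}^{*(t)})^2\le\zeta^2=\c{O}(s(1-\omega)^{t/2}\|\b{A}_i^{(0)}-\b{A}_i^*\|)$ on the support. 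For Stage III one shows $\b{E}[\hat{\b{g}}^{(t)}]$ in \eqref{eq:emp_grad_est} is $(\Omega(s/m),\Omega(m/s),0)$-correlated with $\b{A}_i^*$ (the zero bias is precisely the payoff of using $\hat{\b{X}}^{(t)}$, which has the exact signed-support of $\b{X}^{*(t)}$, in place of a one-shot estimate), concentrates $\hat{\b{g}}^{(t)}$ about its mean over the $\Omega(ms^2)$ independent columns via a matrix-Bernstein argument, and then the gradient step \eqref{eq:apx_grad} with $\eta_A=\Theta(m/s)$ (Assumption~\ref{assumption:step dict}) together with column renormalization gives $\|\b{A}_i^{(t+1)}-\b{A}_i^*\|^2\le(1-\omega)\|\b{A}_i^{(t)}-\b{A}_i^*\|^2$ for some $\omega\in(0,1/2)$, closing the induction and delivering the linear rate; the failure probabilities are tracked so their total is $\delta_{\rm alg}$.

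\textbf{Step 3: untangling (Alg.~\ref{algo:KRP_algo}).} Since $\hat{\b{X}}^{(t)}$ has the exact signed-support of $\b{X}^{*(t)}$, the reshaped matrix satisfies $\b{M}^{(i)}=\b{B}_i^{*(t)}(\b{C}_i^{*(t)})^\top+\b{E}_i$ where $\b{E}_i$ is supported on at most $\|\b{B}_i^{*(t)}\|_0\|\b{C}_i^{*(t)}\|_0=\c{O}(\alpha\beta JK)$ entries, each of magnitude $\le\zeta$, so $\|\b{E}_i\|_2\le\|\b{E}_i\|_{\Fr}=\c{O}(\zeta\sqrt{\alpha\beta JK})$; the noiseless matrix is rank one with $\sigma_1=\|\b{B}_i^{*(t)}\|\|\b{C}_i^{*(t)}\|=\Theta(\sqrt{\alpha\beta JK})$ and spectral gap $\sigma_1$ (all other singular values vanish). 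Weyl's inequality bounds $|\hat\sigma_1-\sigma_1|$ and Wedin's theorem bounds $\|\hat{\b{u}}_1-\b{u}_1\|,\|\hat{\b{v}}_1-\b{v}_1\|$ by $\c{O}(\|\b{E}_i\|_2/\sigma_1)$; substituting $\hat{\b{B}}_i^{(t)}=\sqrt{\hat\sigma_1}\,\hat{\b{u}}_1$, $\hat{\b{C}}_i^{(t)}=\sqrt{\hat\sigma_1}\,\hat{\b{v}}_1$ and matching the scale split against the equivalence of Def.~\ref{assumption:scale} gives $\|\hat{\b{B}}_i^{(t)}-\pi_{B_i}\b{B}_i^{*(t)}\|,\|\hat{\b{C}}_i^{(t)}-\pi_{C_i}\b{C}_i^{*(t)}\|=\c{O}(\zeta^2/(\alpha\beta))$, and support recovery is immediate from the support consistency of $\hat{\b{S}}^{(t)}$. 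I expect the main obstacle to be not any single one of these steps in isolation but the Khatri--Rao bookkeeping of Step~1: carving out a maximal genuinely-independent sample set from the dependent columns and verifying it matches class $\c{D}$ (in particular pairwise independence on the support and the $\Theta(s/m),\Theta(s^2/m^2)$ incidence probabilities) while simultaneously respecting the $m=\Omega(\log(\min(J,K))/\alpha\beta)$ and $\min(J,K)=\Omega(ms^2)$ budgets, since everything in Step~2 is inherited from the dictionary-learning machinery only once this reduction is made airtight.
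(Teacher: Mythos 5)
Your proposal follows essentially the same route as the paper's proof: a Chernoff-based sparsity/sample-counting analysis of the Khatri--Rao structure with one independent column drawn per block to land in class $\c{D}$, the \citetalias{Rambhatla2019NOODL}-style signed-support recovery, IHT contraction, and correlated-gradient descent for the dictionary, and a rank-one SVD perturbation bound (the paper invokes the Yu--Wang--Samworth variant of Wedin's theorem, which is the same tool you use) for the untangling step. The decomposition into lemmas, the inductive closeness argument, and the way the failure probabilities are aggregated all match the paper's own proof.
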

	\begin{figure}[t]
			\centering
			\resizebox{0.95\textwidth}{!}{
				\begin{tabular}{ccc}
					\hspace*{25pt}{\Huge$(J, K) = 100$} 
					&\hspace{-5pt}{\Huge $(J, K) = 300$} 
					&\hspace{-5pt}{\Huge $(J, K) = 500$} \\
					{\rotatebox{90}{\hspace{55pt}\parbox{9cm}{\centering \Huge \textbf{Dictionary Recovery Across Techniques}}}}\hspace{4pt}
					\includegraphics[width=1\textwidth]{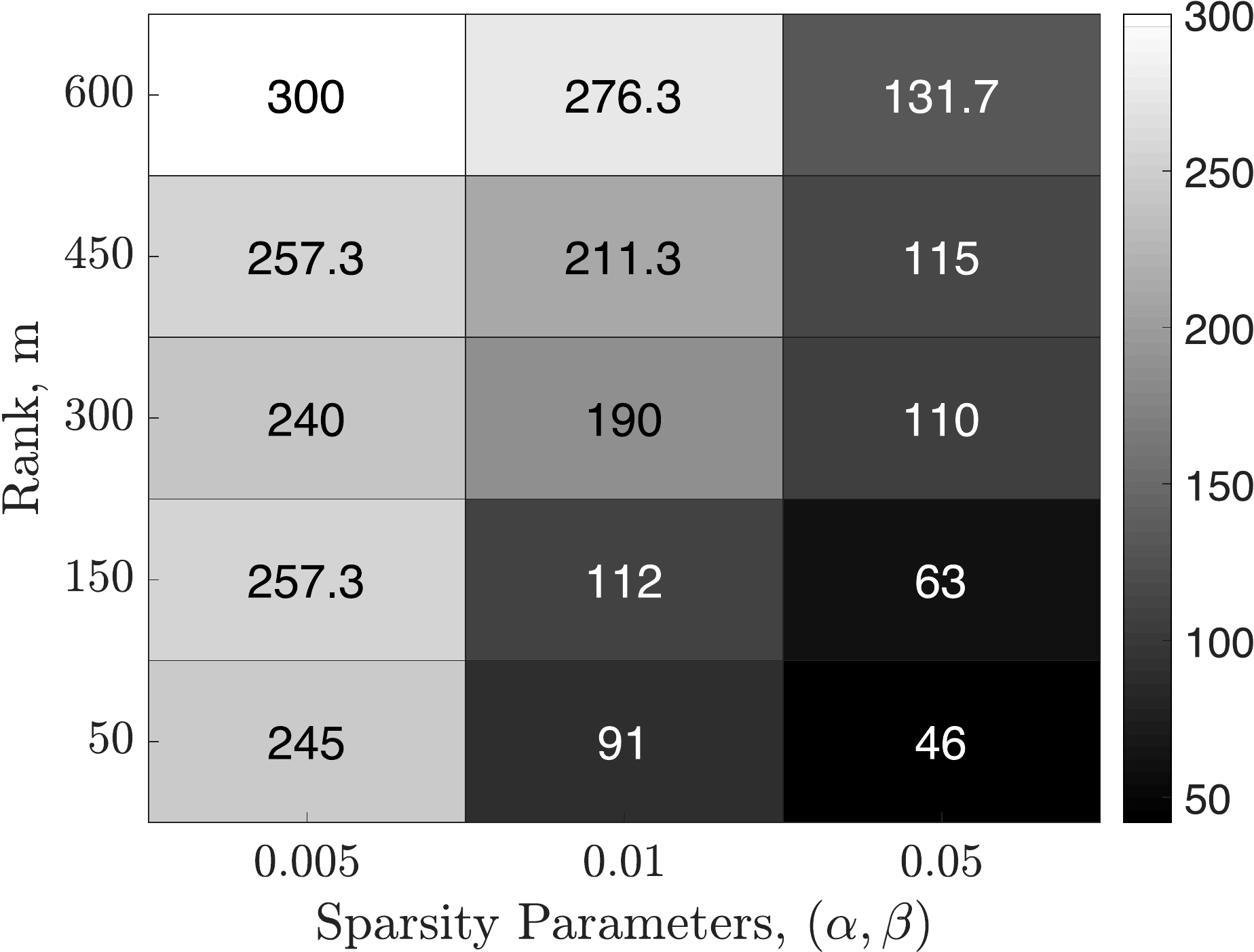}&\hspace*{-6pt}
					\includegraphics[width=1\textwidth]{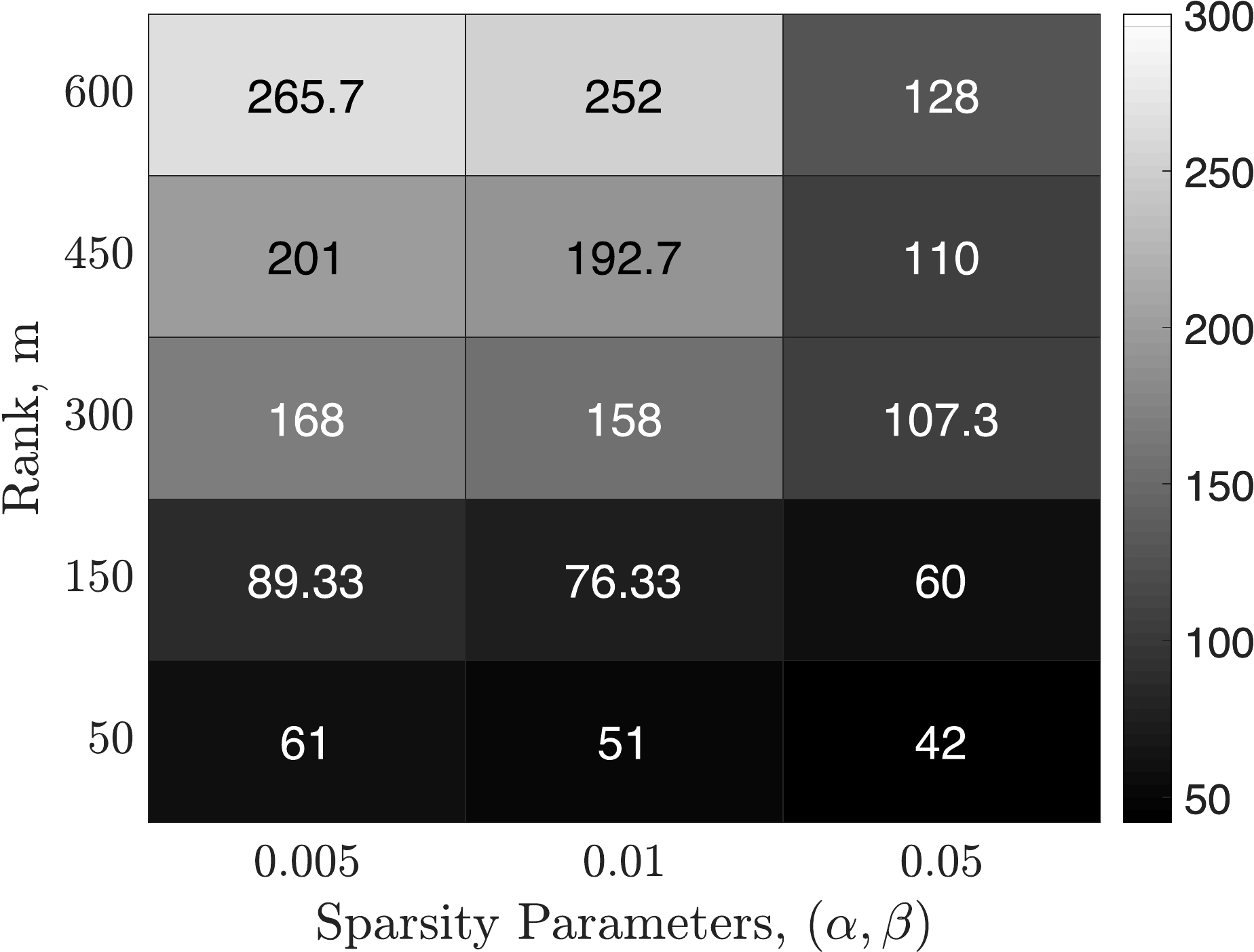}&\hspace*{-6pt}
					\includegraphics[width=1\textwidth]{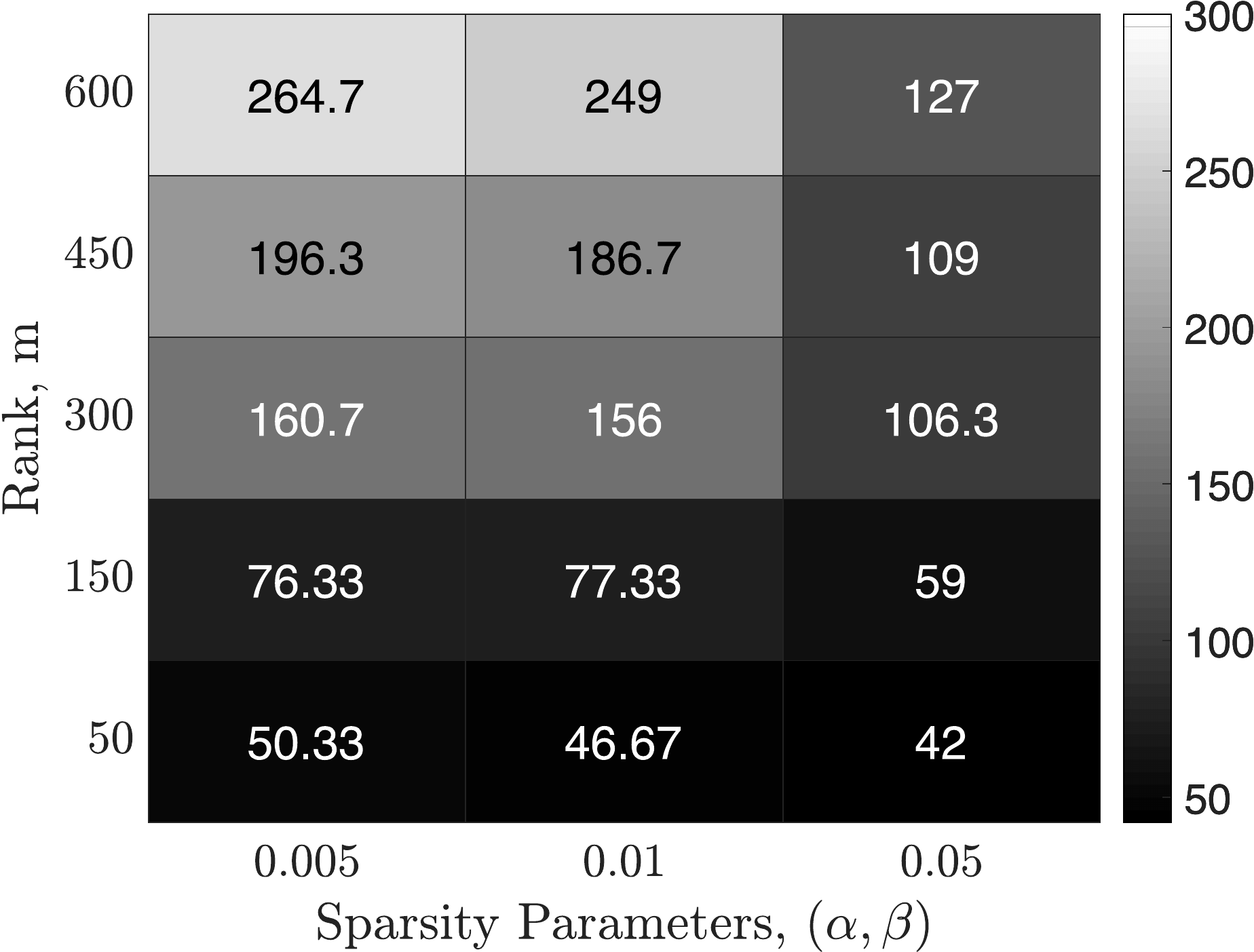} \vspace{10pt}\\
					\vspace{10pt}
					~~~~~~~~{\Huge (a)} 
					&{\Huge(b)} 
					&{\Huge(c)} %\vspace{-15pt}\\
				%	\vspace{-14pt}\\
			\end{tabular}}
			\caption{ Number of iterations for convergence as a surrogate for data samples requirement~\protect\footnotemark. Panels (a), (b), and (c) show the iterations taken by \texttt{TensorNOODL} to achieve a tolerance of \[10^{-10}\] for \[\b{A}\] for \[J\hspace{-2pt}=\hspace{-2pt}K=\hspace{-2pt}\{100, 300,500\}\], respectively across ranks \[m\hspace{-2pt}=\hspace{-2pt}\{50, 150,300, 450, 600\}\] and \[\alpha\hspace{-2pt}=\hspace{-2pt}\beta=\hspace{-2pt}\{0.005, 0.01, 0.05\}\], avg. across $3$ Monte Carlo runs.} % The corresponding recovery results for \[\b{A}\] and \[\b{X}\] (across all techniques) are shown in  Appendix~\ref{app:exp}. }
			\label{fig:num_T}
		%	\vspace{-16pt}
	\end{figure} 
\textbf{Discussion}: Theorem~\ref{main_result} states the sufficient conditions under which, for an appropriate dictionary factor initialization (\ref{assumption:close}), if the incoherent factor \[\b{A}^*\] columns are sufficiently spread out ensuring identifiability (\ref{assumption:mu}), the sparse factors \[\b{B}^{*(t)}\] and \[\b{C}^{*(t)}\] are appropriately sparse (\ref{assumption:dist} and \ref{assumption:k}), and for appropriately chosen learning parameters (step sizes and threshold \ref{assumption:step dict}\[\sim\]\ref{assumption:step coeff}), then Alg.~\ref{alg:main_alg_tens} succeeds whp. Such initializations can be achieved by existing algorithms and can also be used for model selection, i.e., determining \[m\] i.e. \emph{revealing rank}; see \cite{Arora15}. Also, from \ref{assumption:k}, we observe that the sparsity \[s\] (number of non-zeros) in a column of \[\b{S}^{*(t)})\] are critical for the success of the algorithm. Specifically, the upper-bound on \[s\] keeps \[s\] small for the success of dictionary learning, while the lower-bound on \[m\] for given sparsity controlling probabilities\[(\alpha, \beta)\] ensures that there are enough \emph{independent} non-zero columns in \[\b{S}^{*(t)})\] for learning. In other words, this condition ensures that sparsity is neither too low (to avoid degeneracy) nor too high (for dictionary learning), requiring that the independent samples \[L = \min(J,K) = \Omega(ms^2)\], wherein \[s = \c{O}(\alpha\beta m)\] whp.

%\begin{minipage}{0.4\textwidth}
	\vspace{-5pt}
\section{Numerical Simulations}
\label{sec:sims}
\vspace{-4pt}

We evaluate \texttt{TensorNOODL} on synthetic and real-world data; more results in Appendix~\ref{app:exp}.  
%\vspace{-6pt}
\subsection{Synthetic data evaluation}
\vspace{-2pt}
\noindent\textbf{Experimental set-up}: We compare \texttt{TensorNOODL} with online dictionary learning algorithms presented in \cite{Arora15} (\texttt{Arora(b)} (incurs bias) and {\texttt{Arora(u)} (claim no bias)}), and \cite{Mairal09}, which can be viewed as a variant of ALS (matricized) \footnotemark.
%\end{minipage}\hspace{2pt}
\footnotetext{As discussed, the provable tensor factorization algorithms shown in Table~\ref{tab:compare_tensor}, are suitable only for cases wherein all the factors obey same structural assumptions, and also are not online.}
%\begin{minipage}{0.59\textwidth}
%	\vspace{-25pt}
	%{\footnotesize
	%}
%\end{minipage}
\footnotetext{\label{foot:sam}  Our algorithm takes a fresh tensor at each \[t\], we use \[T\] as a surrogate for sample requirement.}
\vspace{-2pt}
  We analyze the recovery performance of the algorithms across different choices of tensor dimensions \[J=K =\] \[\{100,~300, ~500\}\]  for a fixed \[n = 300\], rank \[m = \{50, 150, 300, 450, 600\}\], and the sparsity parameters \[\alpha = \beta = \{0.005,\] \[0.01, 0.05\}\] of factors \[\b{B}^{*(t)}\] and \[\b{C}^{*(t)}\], across $3$ Monte-Carlo runs~\footnote{We fix \[(J, K)\] \& \[(\alpha,\beta)\], but \texttt{TensorNOODL} can also be used with iteration-dependent parameters.}. %The simulation results corresponding to  \[\alpha = \beta = \{0.005, 0.01, 0.05\}\] are shown in Table~\ref{alpha_0_005}, \ref{alpha_0_01}, and \ref{alpha_0_05}, respectively.
We draw entries of \[\b{A}^* \in \mathbb{R}^{n \times m}\] from \[\c{N}(0,1)\], and normalize its columns to be unit-norm. To form \[\b{A}^{(0)}\], we perturb \[\b{A}^*\] with random Gaussian noise and normalized its columns, such that it is column-wise \[2/\log(n)\] away from \[\b{A}^*\] (\ref{assumption:close}). To form \[\b{B}^{*(t)}\] (and \[\b{C}^{*(t)}\]), we independently pick the non-zero locations with probability \[\alpha\] (and \[\beta\]), and draw the values on the support from the Rademacher distribution\footnote{The corresponding code is available at \url{https://github.com/srambhatla/TensorNOODL}.}; see Appendix~\ref{app:syn_res} for details.
	{\footnotesize
		\begin{figure}[!t]
			\centering
			\resizebox{0.85\textwidth}{!}{
				\begin{tabular}{cccc}
					&\hspace*{25pt}{ \large $(\alpha, \beta) = 0.005$} 
					&\hspace{25pt}{\large $(\alpha, \beta) = 0.01$} 
					&\hspace{25pt}{\large $(\alpha, \beta) = 0.05$} \\
					{\rotatebox{90}{\parbox{4cm}{\hspace{9pt}\large\centering\textbf{Recovery of \[\b{A}^*\] across techniques}}}}\hspace{4pt}&
					\includegraphics[width=0.3\textwidth]{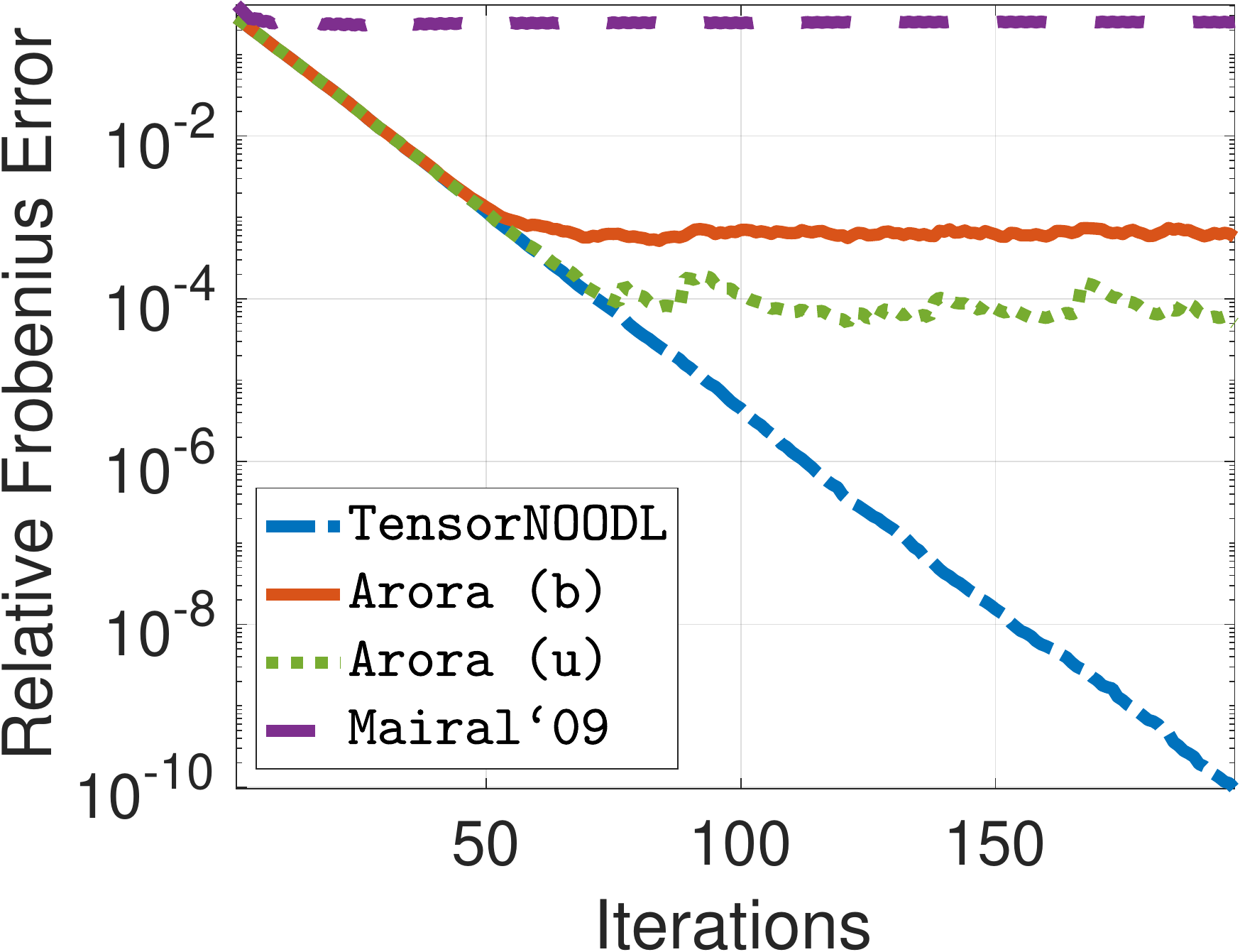}&\hspace*{-6pt}
					\includegraphics[width=0.3\textwidth]{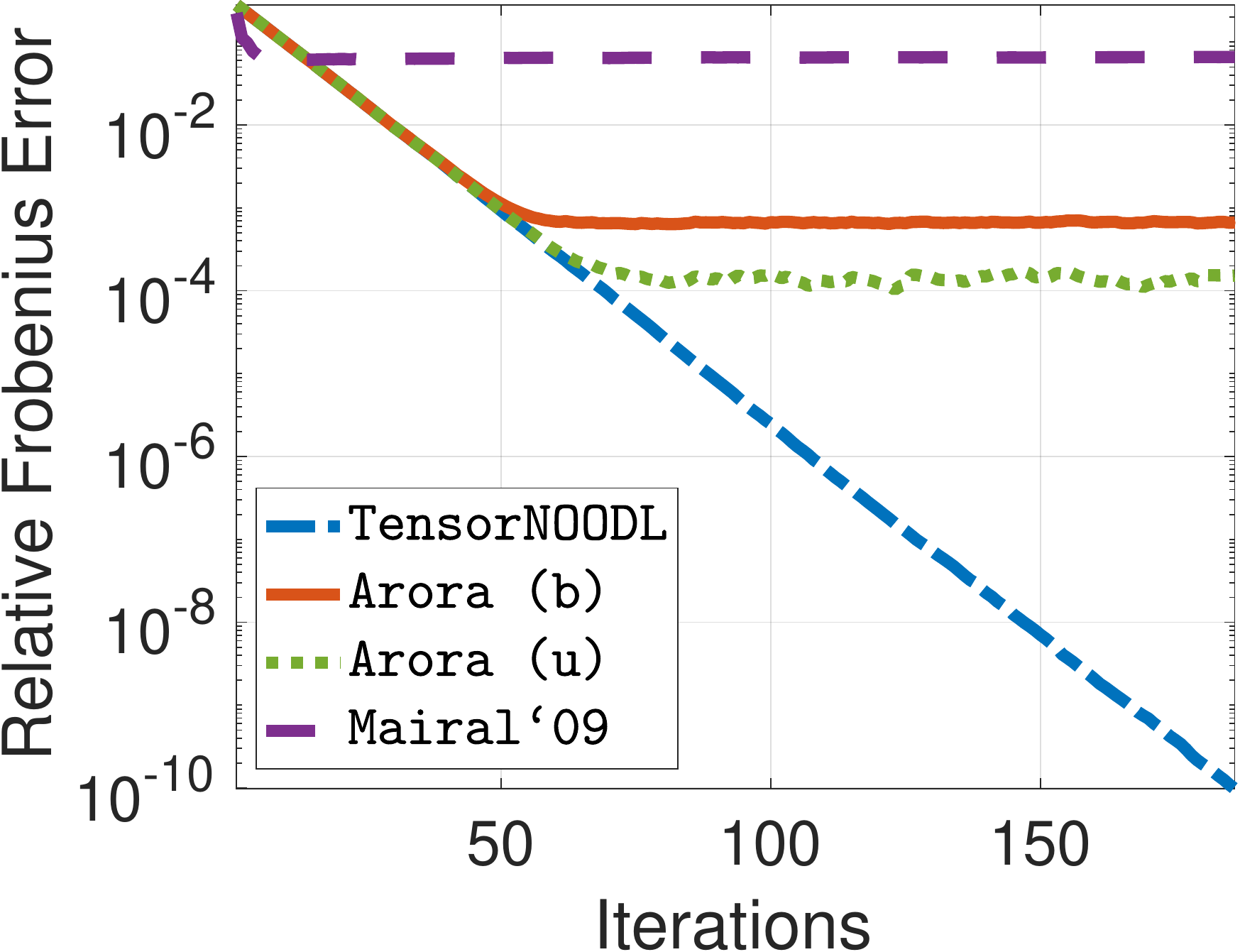}&\hspace*{-6pt}
					\includegraphics[width=0.3\textwidth]{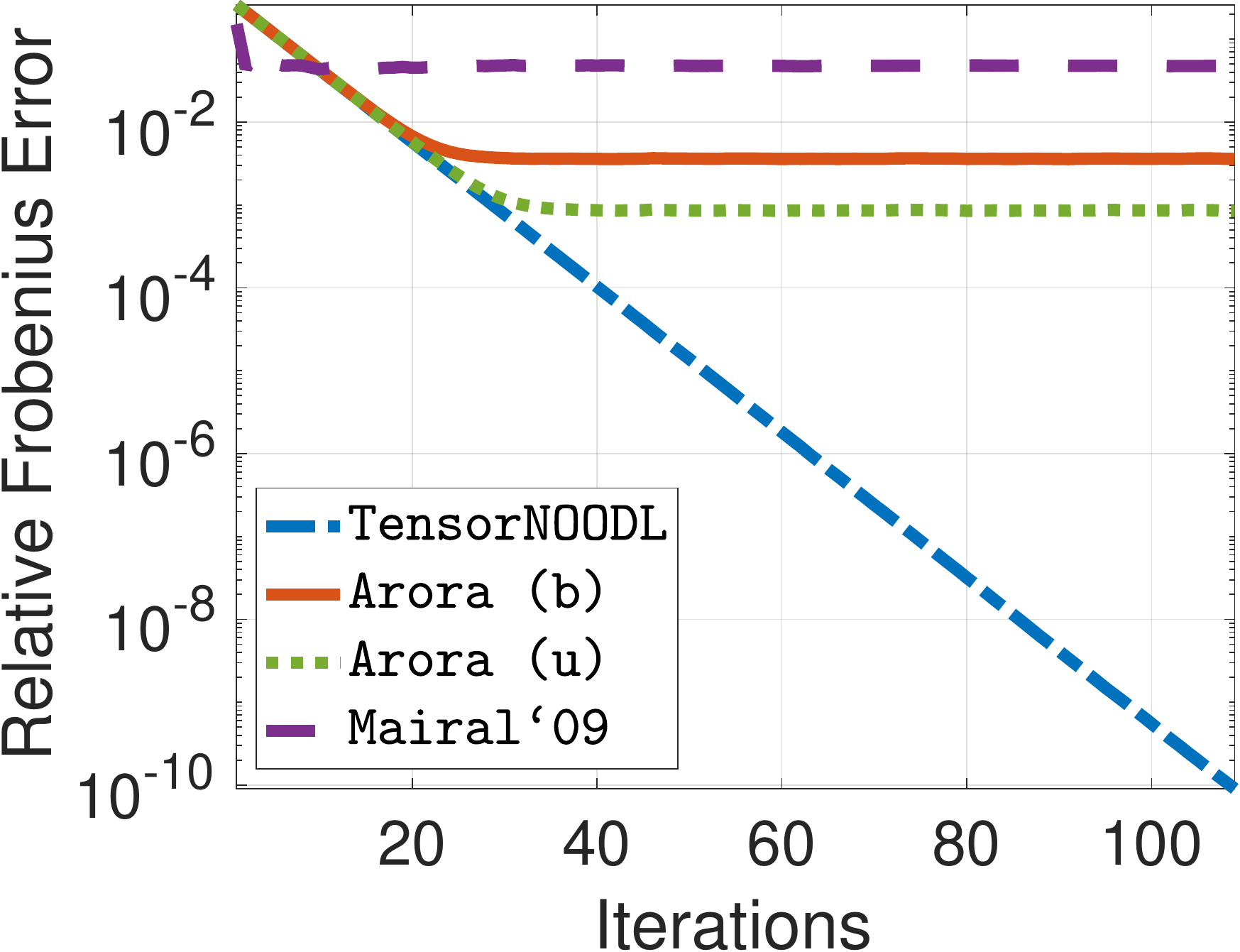}\\
					\vspace{0.05in}
					&{\hspace{25pt}(a)} 
					&{\hspace{25pt}(b)} 
					&{\hspace{25pt}(c)} \vspace{-6pt}\\
					{\rotatebox{90}{\parbox{4cm}{\hspace{0pt}\large\centering\textbf{Recovery of \[\b{A}^*\] and \[\b{X}^{*(t)}\] by \texttt{TensorNOODL} }}}}\hspace{4pt}&
					\includegraphics[width=0.3\textwidth]{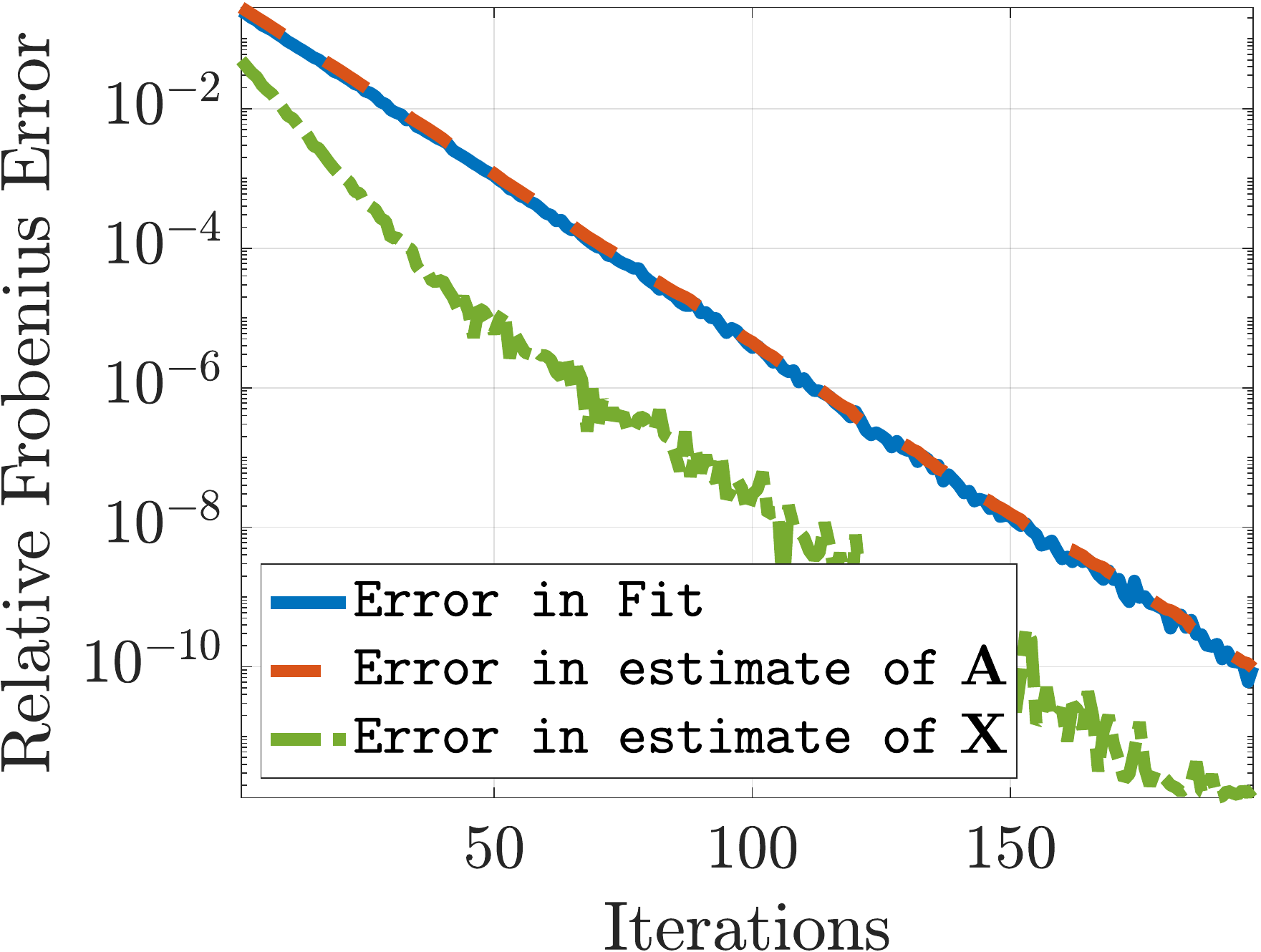}&\hspace*{-4pt}
					\includegraphics[width=0.3\textwidth]{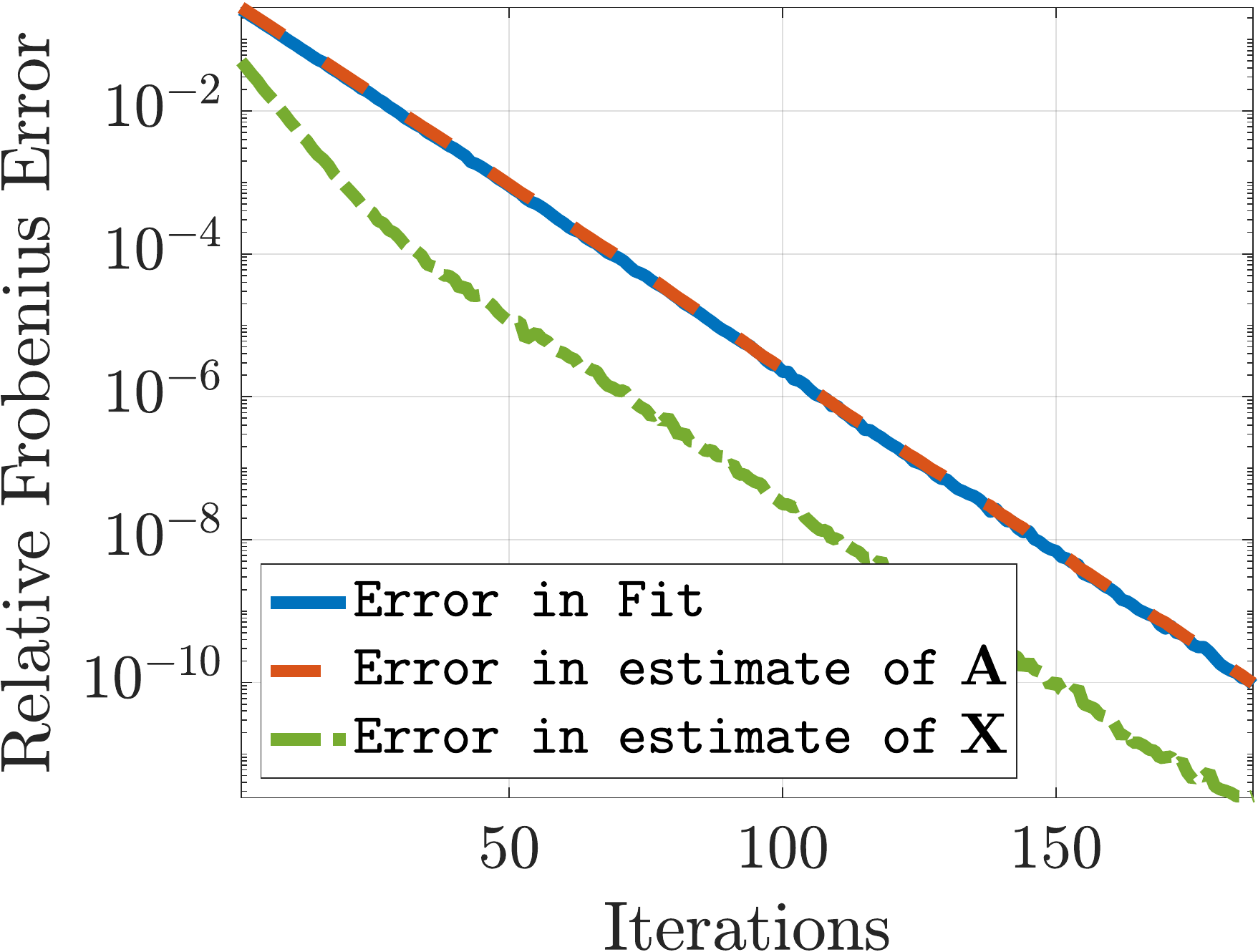}&\hspace*{-6pt}
					\includegraphics[width=0.3\textwidth]{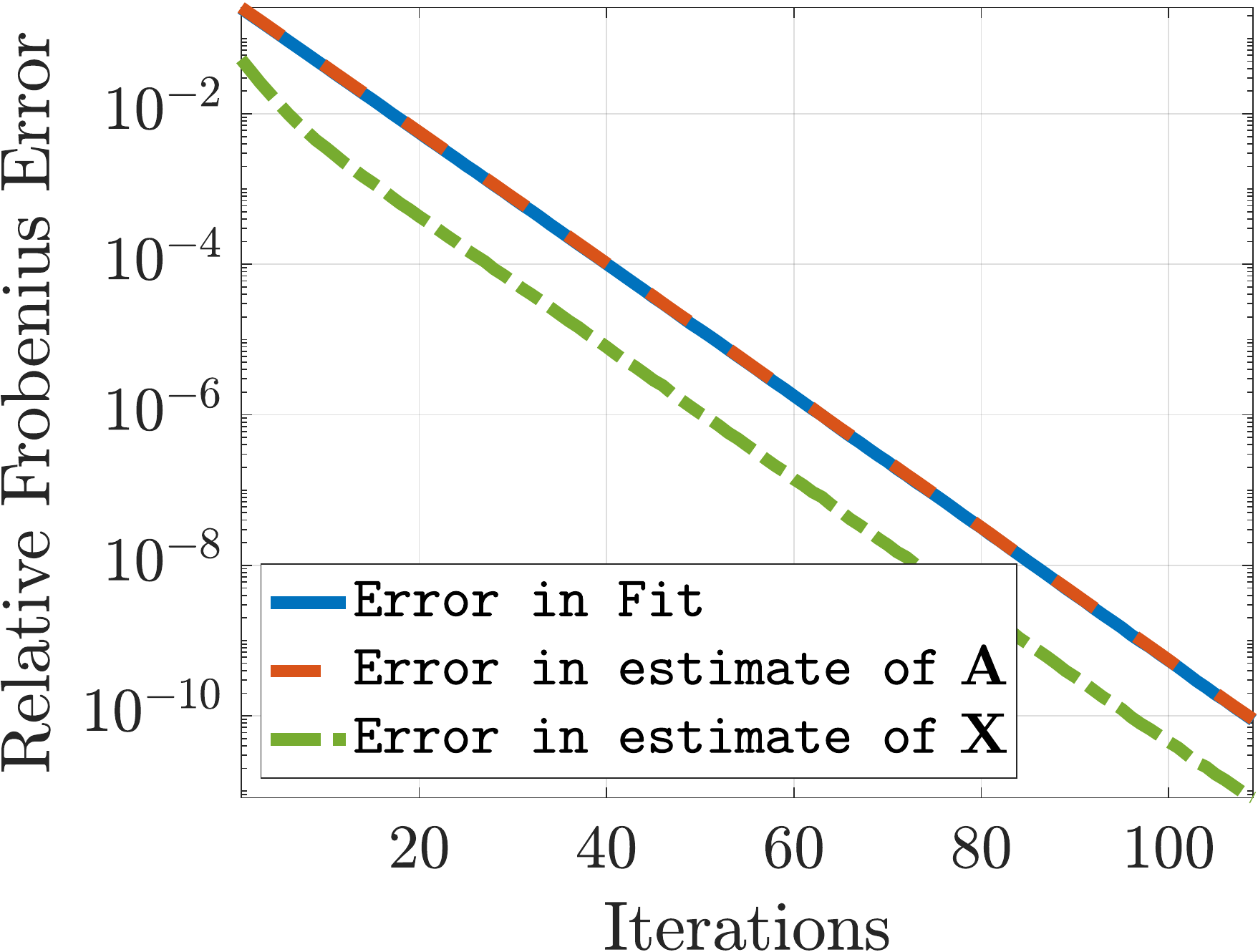}\\
					\vspace{0.05in}
					&{\hspace{25pt} (d)} 
					&{\hspace{25pt}(e)} 
					&{\hspace{25pt}(f)} \vspace{0pt}
				%	\vspace{-6pt}\\
			\end{tabular}}
			\caption{Linear convergence of \texttt{TensorNOODL}. Panels (a), (b), and (c) show the convergence properties of \texttt{TensorNOODL}, \texttt{Arora (b)},~\texttt{Arora (u)} and \texttt{Mairal`09} for the incoherent factor \[\b{A}\] recovery for \[(\alpha, \beta) = 0.005, 0.01\] and \[0.05\] respectively for \[m=450\], \[(J, K) = 500\] and seed\[=26\]. Panels (cd, (e) and (f), show the recovery of \[\b{X}^{*(t)}\] (i.e. \[\b{B}^{*(t)}\] and \[\b{C}^{*(t)}\]) \[\b{A}^{*}\], and the data fit (i.e., \[\|\b{Y}^{(t)} - \b{A}^{(t)}\hat{\b{X}}^{(t)}\|_{\rm F}/\|\b{Y}^{(t)}\|_{\rm F}\]) for \texttt{TensorNOODL} corresponding to (a), (b), and (c), respectively.  }
			\label{fig:lin_T}
		%	\vspace{-16pt}
	\end{figure}}

\vspace{3pt}
\noindent\textbf{Discussion}: We focus on the recovery of \[\b{X}^{*(t)}\] (including support recovery) since the performance of Alg.~\ref{algo:KRP_algo} solely depends on exact recovery of \[\b{X}^{*(t)}\]. In Fig.~\ref{fig:num_T}, we analyze the samples requirement across different choices of the dimension \[(J,K)\], rank \[(m)\] and sparsity parameters \[(\alpha, \beta)\] averaged across Monte Carlo runs using the total iterations \[T\]$^\text{\ref{foot:sam}}$. In line with theory, we observe a) in each panel the total iterations (to achieve tolerance \[\epsilon_T\]) decreases with increasing \[(\alpha, \beta)\], and b) for a fixed rank and sparsity parameters the \[T\] decreases with increasing \[(J,K)\], these are both due to the increase in available data samples; also sample requirement increases with rank \[m\].  Furthermore, only \texttt{TensorNOODL} recovers the correct support of \[\b{X}^{*(t)}\], crucial for sparse factor recovery. Corroborating our theoretical results, \texttt{TensorNOODL} achieves orders of magnitude superior recovery at linear rate (Fig.~\ref{fig:lin_T}) as compared to competing techniques both for the recovery of \[\b{A}^*\], and \[\b{X}^{*(t)}\]. Moreover,  since \[\b{X}^*\] columns can be estimated independently, \texttt{TensorNOODL} is scalable and can be implemented in highly distributed settings.
\vspace{0pt}

\subsection{Real-world data evaluation}
\vspace{0pt}
We consider a real data application in sports analytics. Additional real-data experiments for an email activity-based organizational behavior application are presented in Appendix~\ref{app:enron}.

 \begin{figure}[t]
    		\begin{center}
    		\begin{tabular}{cc}
    		
    		\includegraphics[width=0.6\textwidth]{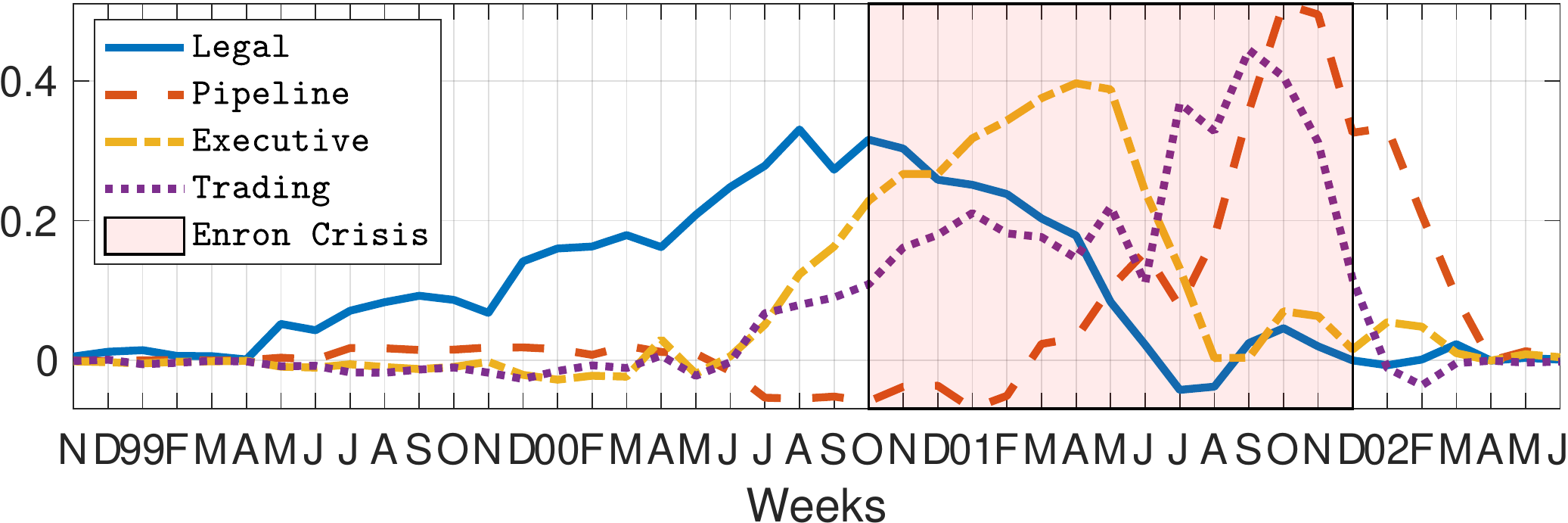}
    			&
    				\resizebox{0.22\textwidth}{!}{ 
    						\centering
    						\begin{tikzpicture} 
    						\node[anchor=south west,inner sep=0] (image) at (0.2,0) {\includegraphics[width=0.18\textwidth]{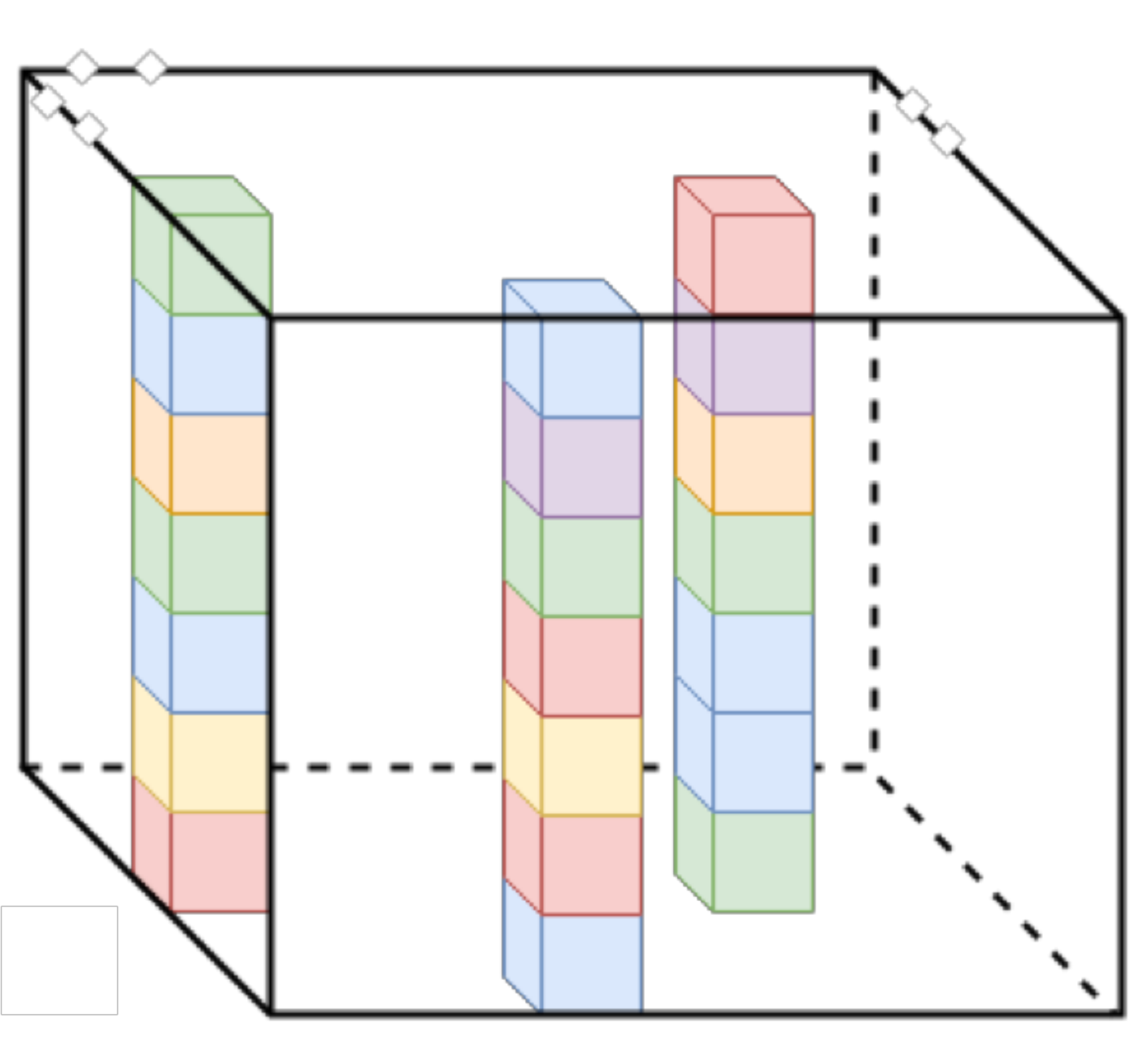}};
    						\node[align=center]  at (-0.25,1.4) {\rotatebox[]{90}{\scriptsize$n = 44$}};
    						\node[align=center]  at (0,1.4) {\rotatebox[]{90}{\scriptsize Weeks}};
    						\node[align=center]  at (0.2,0.2) {\scriptsize\rotatebox[]{-45}{$K = 184$}};
    						\node[align=center]  at (-0.0,-0.0) {\rotatebox[]{-45}{\scriptsize Employees}};
    						\node[align=center]  at (1.7,-0.2) {\scriptsize$J= 184$};
    						\node[align=center]  at (1.7,-0.5) {\scriptsize Employees};
    						\node[align=center] at (1.8,1) {\large$\underline{\b{Z}}$};
    						\end{tikzpicture}}\vspace{-5pt}\\
    						
    						{\small(a)}&{\small(b)}
    						\end{tabular}
    			\resizebox{0.34\linewidth}{!}{\hspace{-120pt}\begin{tabular}{c|c|c|c|c}
    					\multicolumn{5}{c}{(c) Cluster Quality: False Positives/ Cluster Size }\\
    					\hline
    					\textbf{Method}&\textbf{Legal} & \textbf{Pipeline} & \textbf{Executive} & \textbf{Trading} \\ \hline
    					\texttt{TensorNOODL}& 2/13 & 4/11& 1/14 & 10/24 \\
    					\hline
    					\texttt{Mairal `09}& 1/10 & Not Found& 8/17 & 3/7 \\ \hline
    					\cite{Fu2015} & 4/16 & 3/15& 3/30$^\dagger$ & 5/12\\ \hline
    				\end{tabular}
    	
    		}\end{center}
    		
    		\vspace*{-5pt}
    		\setstretch{0}
    		
    		\vspace{3pt}
    		\caption{Enron Email Analysis. The plot and the table show the recovered group email activity patterns over time, and the cluster quality analysis, respectively. Note the increased legal team activity before the crisis broke out internally (Oct. `00), to public (Oct '01), till lay-offs. $^\dagger$The authors set the number of cluster to $5$, here we combine the two clusters corresponding to ``Executive''.}
    		\label{fig:enron}
    	%	\vspace{-4pt}
    	\end{figure}

\subsubsection{Enron Email Dataset}
\vspace{-3pt}
Sparsity-regularized ALS-based tensor factorization techniques, albeit possessing limited convergence guarantees, have been a popular choice to analyze the Enron Email Dataset ($184\times184\times44$) \cite{Fu2015, Bader2006}. We now use \texttt{TensorNOODL} to analyze the email activity of $184$ Enron employees over $44$ weeks (Nov. `98 --Jan. '02) during the period before and after the financial irregularities were uncovered.

%We employ \texttt{TensorNOODL} to analyze the Enron Email Dataset ($184\times184\times44$) motivated from \cite{Fu2015, Bader2006} to showcase the performance of \texttt{TensorNOODL} on real dataset for a batch setting. The Enron dataset corresponds to email activity of $184$ Enron employees over $44$ weeks (Nov. `98 --Jan. '02) during the period before and after the financial irregularities were uncovered.

 \vspace{5pt}
\noindent\textbf{Methodology:} For \texttt{TensorNOODL} and \texttt{Mairal `09}, we use the initialization algorithm of \cite{Arora15}, which yielded $4$ dictionary elements. Following this, we use these techniques in batch setting to simultaneously identify email activity patterns and cluster employees. We also compare our results to \citet{Fu2015}, which just aims to cluster the employees by imposing sparsity constraint on one of the factors, and does not learn the patterns. As opposed to \cite{Fu2015}, \texttt{TensorNOODL}  did not require us to guess the number of dictionary elements to be used. We use Alg.~\ref{algo:KRP_algo} to identify the employees corresponding to email activity patterns from the recovered sparse factors. 

 \vspace{5pt}
\noindent\textbf{Discussion} -- Fig.~\ref{fig:enron} shows the $4$ main groups of employees recovered, and their activity over time.  In line with \citet{Diesner2005}, we observe that during the crisis the employees of different divisions indeed exhibited cliquish behavior. Furthermore, \texttt{TensorNOODL} is also superior in terms of cluster purity as inferred from the False Positives to Cluster-size ratio (Fig.~\ref{fig:enron}); see Appendix~\ref{app:enron} for details.

%\vspace{-15pt}
\subsubsection{NBA Shot Pattern Dataset}\label{sec:simnba}
\vspace{1pt}
 We analyze weekly shot patterns of the $100$ high scoring players ($80^{\rm th}$ percentile) against $30$ teams in the $2018-19$ regular season ($27$ weeks) of the National Basketball Association (NBA) league.  The task is to identify specific shot patterns attempted by players against teams and cluster them from the weekly $100\times30\times120$ shot pattern tensor.

 \vspace{5pt}
 \noindent\textbf{Methodology}: We divide half-court into $10\times12$ blocks and sum-up all shots attempted by a player in a game from a particular block, and vectorize to form a shot pattern vector ($\mathbb{R}^{120}$) of a player against a particular opponent team. We use  $2017-18$'s  regular season data to initialize incoherent factor using \cite{Arora15}, recovering $7$ elements. 
  \vspace{5pt}
  
\noindent\textbf{Discussion}: In  Fig.~\ref{fig:nba} we show $3$ recovered shot patterns and corresponding weights (week-$10$). \texttt{TensorNOODL} reveals the similarity in shot selection of James Harden and Devin Booker, in line with the sports reports at the time \citep{Rafferty18,Uggetti18}. The shared elements show their shot preference above the $3$-point line (Fig.~\ref{fig:nba}(a-b)) and at the rim (Fig.~\ref{fig:nba}(c)); See Appendix~\ref{app:nba} for detailed results, and Appendix~\ref{app:enron} for evaluations on Enron data.

\vspace{-6pt}
\section{Discussion}
\label{sec:conclusions}
\vspace{-5pt}
\noindent\textbf{Summary}: Leveraging a matrix view of the tensor factorization task, we propose \texttt{TensorNOODL}, to the best of our knowledge, the first provable algorithm to achieve exact (up to scaling and permutations) \emph{online} structured $3$-way tensor factorization at a linear rate. Our analysis to untangle the Kronecker product dependence structure (induced by the matricized view) can be leveraged by other tensor factorization tasks. %Since \[\b{X}^*\] columns can be estimated independently, \texttt{TensorNOODL} can be implemented in highly distributed settings. 

\begin{figure}[t]
\begin{center}
 \resizebox{0.9\textwidth}{!}{\begin{tabular}{ccccc}
 \includegraphics[width=0.25\textwidth]{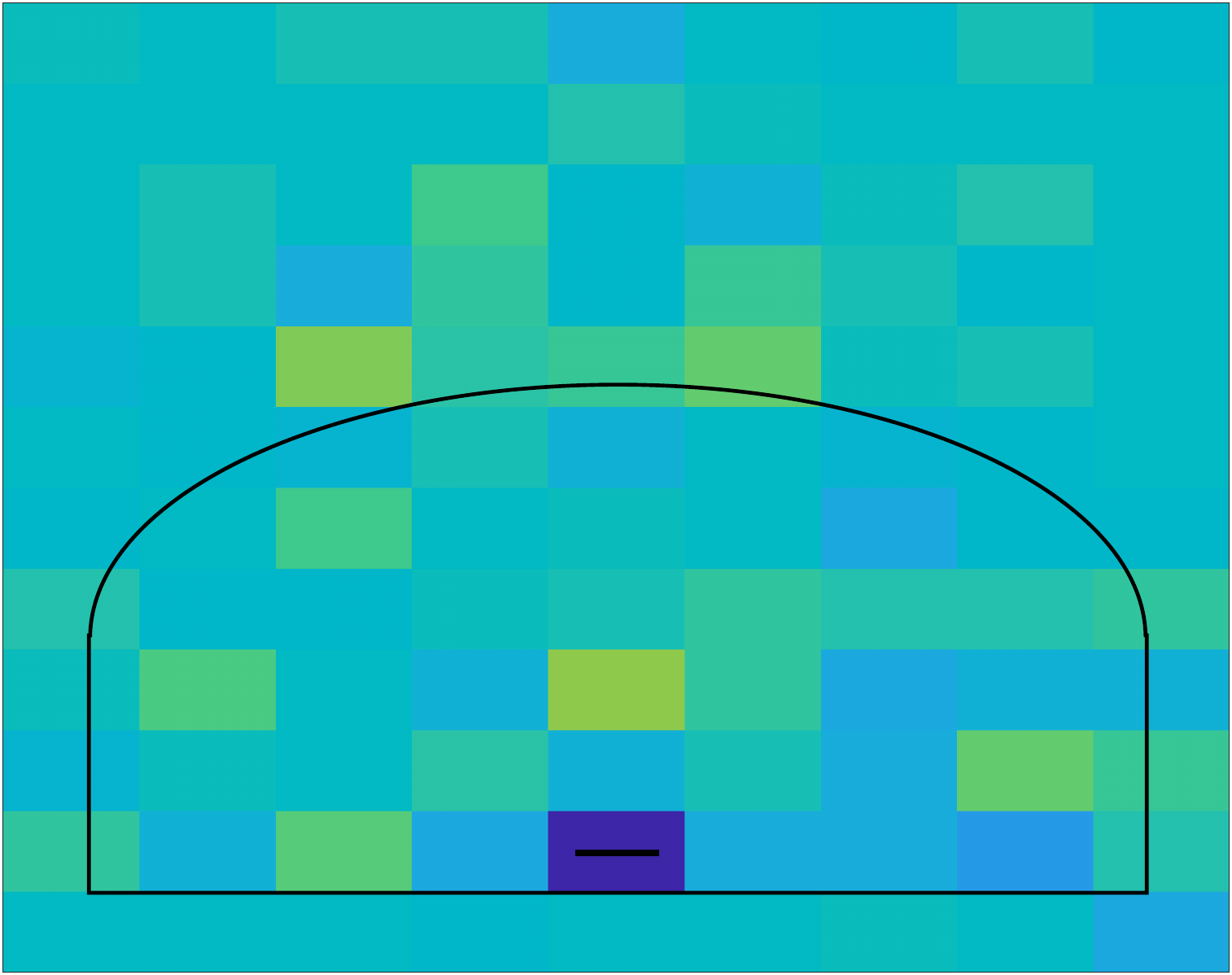}&
  \includegraphics[width=0.25\textwidth]{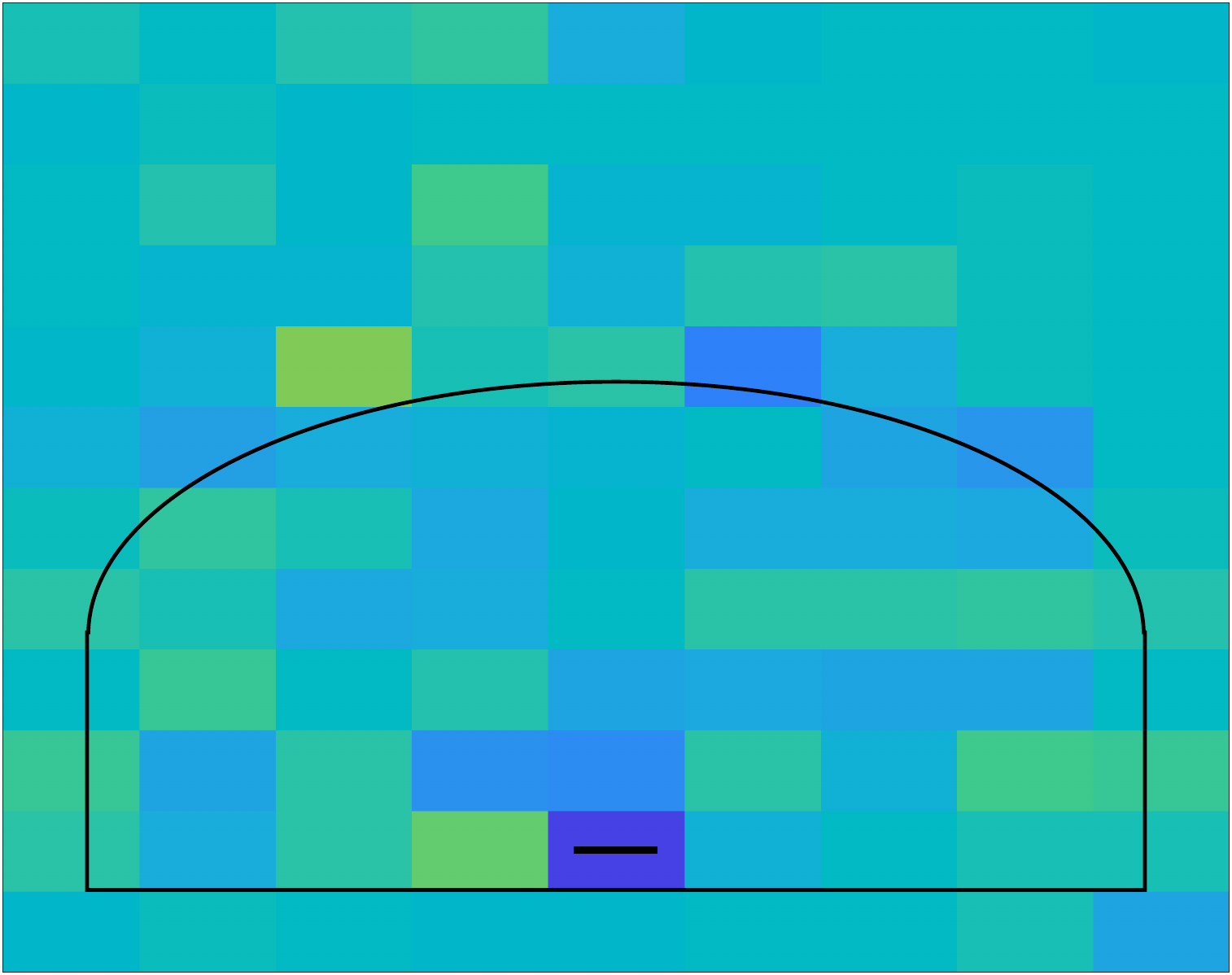}&
   \includegraphics[width=0.25\textwidth]{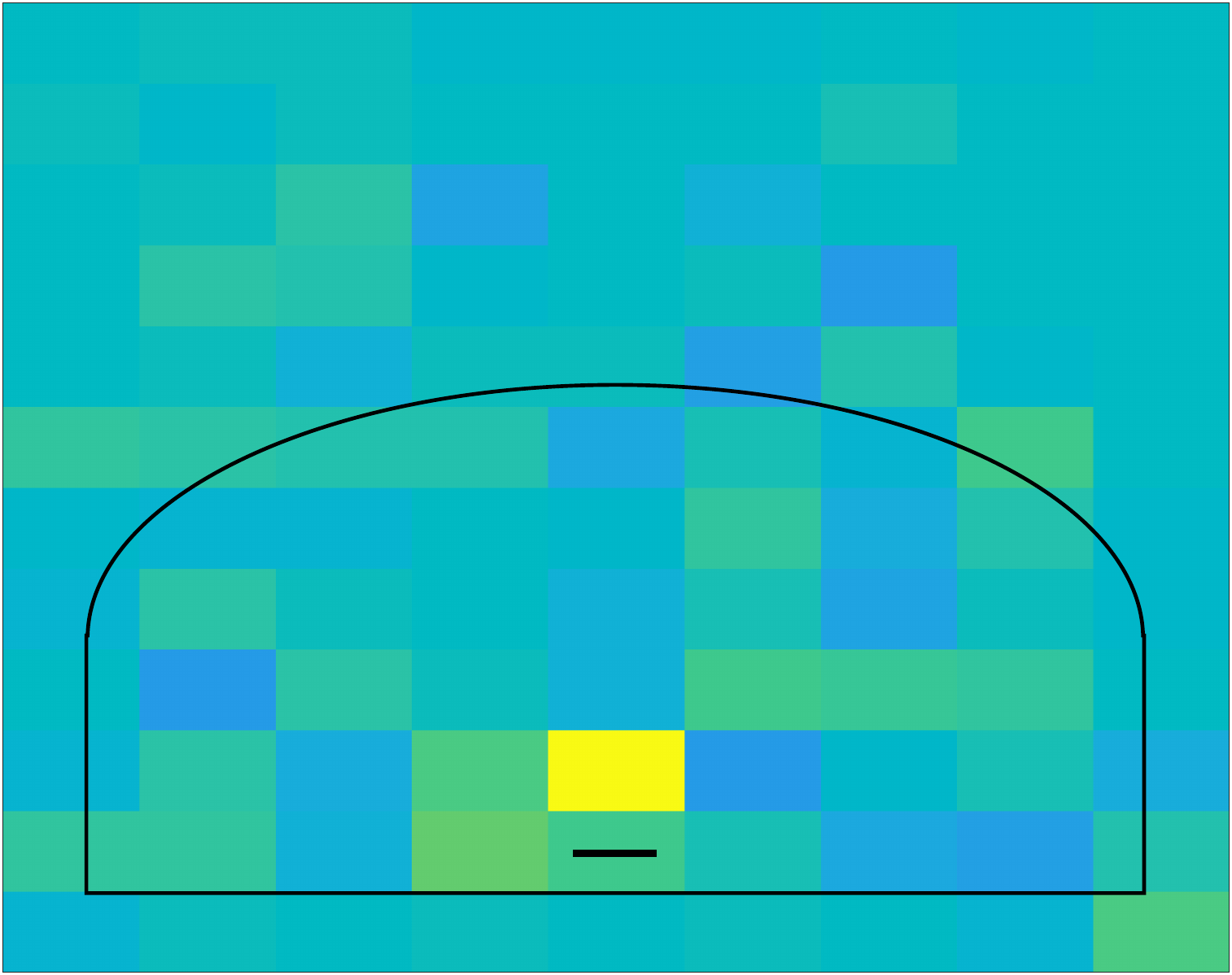}&
     \includegraphics[width=0.035\textwidth]{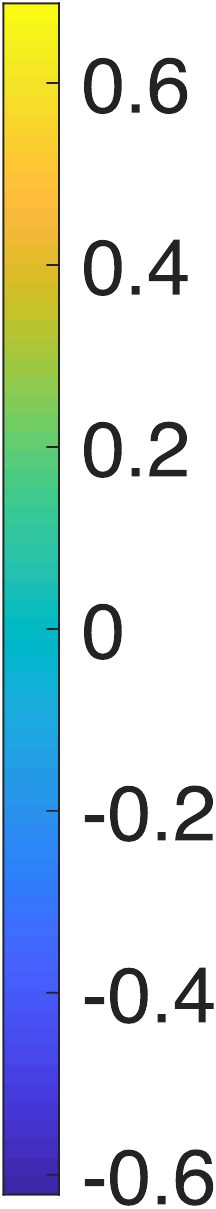}&
     \resizebox{0.23\textwidth}{!}{ 
                   			\begin{tikzpicture} 
                   			\node[anchor=south west,inner sep=0] (image) at (0.2,0) {\includegraphics[width=0.17\textwidth]{Tensor_only.pdf}};
                   			\node[align=center]  at (-0.25,1.4) {\rotatebox[]{90}{\scriptsize$n = 120$}};
                   			\node[align=center]  at (0,1.4) {\rotatebox[]{90}{\scriptsize shot patterns}};
                   			\node[align=center]  at (0.2,0.2) {\scriptsize\rotatebox[]{-45}{$K = 30$}};
                   			\node[align=center]  at (-0.0,-0.0) {\rotatebox[]{-45}{\scriptsize Teams}};
                   			\node[align=center]  at (1.7,-0.2) {\scriptsize$J= 100$};
                   			\node[align=center]  at (1.7,-0.5) {\scriptsize Players};
                   			\node[align=center] at (1.8,1) {\large$\underline{\b{Z}}^{(t)}$};
                   			\end{tikzpicture}}\\
     {\large(a) Element $4$}& {\large(b) Element $5$} & {\large(c) Element $6$}&& (d)\\
 \vspace{-30pt}
   \end{tabular}
 }\end{center}

 \vspace{-5pt}

 \begin{center} \hspace{-100pt}
  \resizebox{0.55\linewidth}{!}{\begin{tabular}{c|c|c|c}
  \multicolumn{4}{c}{\small Corresponding Sparse factor (Players) Coefficients}\\
  \hline
  \textbf{Player} & \textbf{Element $\b{4}$}& \textbf{Element $\b{5}$} & \textbf{Element $\b{6}$} \\ \hline
  James Harden&0.1992 & 0.0678& 0.2834 \\
   \hline
   Devin Booker&0.0114 & 0.0104& 0.4668 \\ \hline
    \end{tabular}
  }\end{center}
   \vspace{-5pt}
  \caption{NBA Regular Season Shot Pattern data analysis. \texttt{TensorNOODL} clusters the players and the teams. We show the three recovered dictionary factor elements (panels (a)-(c)) shared by James Harden and Devin Booker (believed to have similar styles) during week $10$ of the regular season ($2018-19$). Panel (d) shows the weekly shot pattern tensor, the input for \texttt{TensorNOODL} at each iteration \[t\].}
 \label{fig:nba}
 % \vspace{-16pt}
 \end{figure}

  \vspace{5pt}
\noindent\textbf{Limitations and Future Work}: We use probabilistic model assumptions which requires us to carefully identify independent samples. Although not an issue in practice, this leads to somewhat conservative results.  Future work includes improving this sample efficiency. 

  \vspace{5pt}
\noindent\textbf{Conclusions}: We analyze an exciting modality where the tensor decomposition task can be reduced to that of matrix factorization. Such correspondences offer a way to establish strong convergence and recovery guarantees for structured tensor factorization tasks.

\section*{Acknowledgement}
The authors graciously acknowledge the support from the DARPA YFA, Grant N66001-14-1-4047. The authors would also like to thank Prof. Nikos Sidiropoulos and Di Xiao for helpful discussions. 

\bibliographystyle{ims}
\bibliography{referDL_jrnl}

\appendix
\section*{Navigating Supplementary Material}
We summarize the notation used in our work in Appendix~\ref{app:summary_notation}, including with a list of frequently used symbols and their corresponding definitions. Next, in Appendix~\ref{sec:theory}, we present the proof of our main result, and organize the the proofs of intermediate results in Appendix~\ref{app:tensor}; additional results used are listed in Appendix~\ref{app:add} for completeness. Furthermore, we show the detailed synthetic and real-world experimental results, along with how to reproduce them, in Appendix~\ref{app:exp}. Corresponding code with specific recommendation on the parameter setting is available at \url{https://github.com/srambhatla/TensorNOODL}. 
\vspace{-5pt}
\appendix

\section{Summary of Notation}\label{app:summary_notation}
\vspace{-5pt}
In addition to the notation described in the manuscript, we use \[\|\b{M}\|\] and \[\|\b{M}\|_{\Fr}\] for the spectral and Frobenius norm, respectively, and \[\|\b{v}\|\], \[\|\b{v}\|_0\], and \[\|\b{v}\|_1\] to denote the \[\ell_2\], \[\ell_0\] (number of non-zero entries), and \[\ell_1\] norm, respectively.  In addition, we use \[\b{D}_{(\vb)}\] as a diagonal matrix with elements of a vector \[\b{v}\] on the diagonal. Given a matrix $\Mb$, we use \[\b{M}_{-i}\] to denote a resulting matrix without \[i\]-th column. Also note that, since we show that \[\|\b{A}^{(t)}_i - \b{A}^*_i\| \leq \epsilon_t\] contracts in every step, therefore we fix  \[\epsilon_t, \epsilon_0 = \mathcal{O}^*(1/\log(n))\] in our analysis. We summarize the definitions of some frequently used symbols in our analysis in Table~\ref{tab:symbols_1} and \ref{tab:symbols}.

\begin{table}[!t]
  \centering
  	\caption{Frequently used symbols: Probabilities}
  	\vspace{-10pt}
  	\label{tab:symbols_1}
  		 \begin{minipage}{\textwidth}
  		 \centering
  		  \resizebox{\columnwidth}{!}{
  		  \begin{tabular}{c|p{7.5cm}|c|p{7.5cm}}
  		  \multicolumn{4}{l}{}\\ 
 		\multicolumn{4}{l}{\textbf{Probabilities}}\\ \hline
  		 \textbf{Symbol} &\textbf{Definition}&\textbf{Symbol} &\textbf{Definition} \\ \hline
  		  \[\gamma\] & \[\gamma := \alpha\beta\], where \[\alpha(\beta)\] is the probability that an element \[\b{B}_{ij}^{*(t)}\] ( \[\b{C}_{ij}^{*(t)}\]) of \[\b{B}^{*(t)}\] (\[\b{C}^{*(t)}\]) is non-zero. &\[ \delta_{\b{B}_i}^{(t)}\] &  \[\delta_{\b{B}_i}^{(t)} = \exp ({-\frac{\epsilon^2 J\alpha}{2(1 + \epsilon/3)}}) \] for any \[\epsilon>0\].\\ \hline
  		 \[\delta_{\HT}^{(t)}\] &\[\delta_{\HT}^{(t)} = 2m~{\exp}({-\tfrac{C^2}{\mathcal{O}^*(\epsilon_t^2)}})\]&  \[\delta_{\beta}^{(t)}\]& \[2s~{\exp}(-\tfrac{1}{\mathcal{O}(\epsilon_t)})\]\\ \hline
  		 \[\delta_{\rm s}^{(t)} \] & \[\delta_s^{(t)} = \min(J,K)\exp( {-{\epsilon^2 \alpha \beta m}/{2(1 + \epsilon/3)}})\] for any \[\epsilon > 0\]&  \[\delta_{p}^{(t)} \]&\[\delta_{p}^{(t)} = \exp ({-\frac{\epsilon^2}{2}L(1 - (1 - \gamma)^m)})\]\\ \hline
  		 \[\delta_{\rm IHT}^{(t)} \]&  \[\delta_{\rm IHT}^{(t)}   = \delta_{\HT}^{(t)}\] + \[\delta_{\beta}^{(t)}\]& \[\delta_{\text{NOODL}}^{(t)} \] & \[\delta_{\text{NOODL}}^{(t)} =  \delta_{\HT}^{(t)}  + \delta_{\beta}^{(t)} + \delta_{\rm HW} +\delta_{\gradvec}^{(t)} + \delta_{\gradmat}^{(t)}\] \\ \hline
  		\[q_i \] & \[q_i = \b{Pr}[i \in S] = \Theta(\tfrac{s}{m})\] &
  		 \[q_{i,j} \] & \[q_{i,j} = \b{Pr}[i,j \in S] = \Theta(\tfrac{s^2}{m^2})\] \\ \hline
  		 \[p_i\] & \[p_i = \b{E}[\b{X}_{ij}^*\sgn(\b{X}^*_{ij})| \b{X}_{ij}^* \neq 0]\] &
  		 \[	\delta_{\rm HW}^{(t)} \] &  \[\delta_{\rm HW}^{(t)} = \exp(-{1}/{\mathcal{O}(\epsilon_t)})\]\\ \hline
 		 \[	\delta_{\gradvec}^{(t)}\] & \[\delta_{\gradvec}^{(t)} =  \exp(-\Omega(s))\] &
  		 \[\delta_{\gradmat}^{(t)} \] & \[\delta_{\gradmat}^{(t)} = (n+m)\exp(-\Omega(m\sqrt{\log(n)})\]\\ \hline
  		  	\end{tabular}	\vspace*{10pt}}
  		  	\end{minipage}
  \end{table}
  \begin{table}[!t]

  \centering
  	\caption{Frequently used symbols: Notation and Parameters}
  	\vspace{10pt}
  	\label{tab:symbols}
   \centering
  		\resizebox{0.8\textwidth}{!}{
  		\small
  		\begin{tabular}{P{1.2cm}|p{5cm}||P{1.2cm}|p{5cm}}
  		%\multicolumn{2}{|l|}{\textbf{Claims}}\\ \hline
  		\textbf{Symbol} &\textbf{Definition} &\textbf{Symbol} &\textbf{Definition}\\
  		\hline
  		\[(\cdot)^* \] & Used to represent the ground-truth matrices.&
 		\[(\cdot)^{(t)}\], \[\hat{(\cdot)}^{(t)}\], and \[\hat{(\cdot)}\] & Used to represent the estimates formed by the algorithm.\\ \hline	 
  		\[(\cdot)^{(t)}\]& The subscript ``\[t\]'' is used to represent the estimates at \[t\]-iteration of the online algorithm.&
 		\[\b{X}^{(r)(t)}\]& The \[r\]-th IHT iterate at \[t\]-th iterate of the online algorithm. \\ \hline	
 		\[(\cdot)^{(r)}\]&The subscript ``\[r\]'' is used to represent the \[r\]-th IHT iterate. &\[\hat{\b{X}}^{(t)}\]& The final IHT estimate at (\[r=R\]), i.e., \[\b{X}^{(R)(t)}\] at the \[t\]-th iterate of the online algorithm. \\\hline
  		\[\b{A}^{(t)}_i \] & \[i\]-th column of  \[{\b{A}^(t)}\] (estimate of \[\b{A}^*\] at the \[t\]-th iteration of the online algorithm). &
   		\[\hat{\b{B}}^{(t)}\] (\[\hat{\b{C}}^{(t)}\]) & Estimate of \[\b{B}^{*(t)}\] (\[\b{C}^{*(t)}\]) at the \[t\]-th iteration of the online algorithm.\\ \hline	
  	   \[\b{S}^{*(t)}\] &  Transposed Khatri-Rao structured (sparse) matrix,  \[\b{S}^{*(t)} = (\b{C}^{*(t)} \odot \b{B}^{*(t)})^\top\], its \[i\]-th row is given by \[\b{C}^{*(t)}_i \otimes \b{B}^{*(t)}_i\].&
      \[\b{X}^{*(t)}\] &  Sparse matrix formed by collecting non-zero columns of \[\b{S}^{*(t)}\].\\ \hline
    	\[p\] &  Number of columns in \[\b{X}^{*(t)}\], also the number of non-zero columns in  \[\b{S}^{*(t)}\]. & 
      \[\b{Z}_1^{(t)\top}\] &  Mode-$1$ unfolding of \[\underline{\b{Z}}^{(t)}\], \[\b{Z}_1^{(t)\top} = \b{A}^*(\b{C}^{*(t)} \odot \b{B}^{*(t)})^\top\] at the \[t\]-th iteration of the online algorithm.\\ \hline
    	\[\epsilon_t\] & Upper-bound on column-wise error at the \[t\]-th iterate,\[\|\b{A}^{(t)}_i - \b{A}^*_i\| \leq \epsilon_t = \mathcal{O}^*(\tfrac{1}{\log(n)})\].&
    		\[\mu\] & The incoherence between the columns of the factor \[\b{A}^*\]; see Def. \ref{def:mu}.\\ \hline
    	    \[\mu_t\] & Incoherence between the columns of  \[\b{A}^{(t)}\], \[\tfrac{\mu_t}{\sqrt{n}} = \tfrac{\mu}{\sqrt{n}} + 2\epsilon_t\].&
    	 \[\xi\] & The element-wise upper bound on the error between \[\hat{\b{S}}_{ij}^{(t)}\] and \[\b{S}_{ij}^{*(t)}\], i.e., \[|\b{S}_{ij}^{*(t)} - \hat{\b{S}}_{ij}^{(t)}| \leq \xi\]. \\ \hline
    	 \[s\] & The number of non-zeros in a column of \[\b{S}^{*(t)}\], also refered to as the \emph{sparsity}.&
    	 \[\alpha(\beta)\] & The probability that an element \[\b{B}_{ij}^{*(t)}\] ( \[\b{C}_{ij}^{*(t)}\]) of \[\b{B}^{*(t)}\] (\[\b{C}^{*(t)}\]) is non-zero.\\\hline
    	\[\epsilon_B\] & Upper-bound on column-wise \[\ell_2\]-error  in the estimate \[\hat{\b{B}}^{(t)}\]  at \[t\]-th iteration, i.e.,, \[\|\hat{\b{B}}_i^{(t)} - \b{B}^{*(t)}_i\| \leq \epsilon_B = \mathcal{O}(\tfrac{\xi^2}{\alpha\beta})\]. &
	 \[\epsilon_C\] & Upper-bound on column-wise \[\ell_2\]-error in the estimate \[\hat{\b{C}}^{(t)}\] at \[t\]-th iteration, i.e., \[\|\hat{\b{C}}_i^{(t)} - \b{C}^{*(t)}_i\| \leq \epsilon_C = \mathcal{O}(\tfrac{\xi^2}{\alpha\beta})\]. \\ \hline
 
 		\[R\] & The total number of IHT steps at the \[t\]-th iteration of the online algorithm.&
 	 		\[T\] & Total number of online iterations.\\ \hline
 	 	\[\delta_{R}\] & Decay parameter for final IHT step at every \[t\], \[ {\rm ceil} (\dfrac{\log(\tfrac{1}{\delta_R})}{\log(1 - \eta_{x})})\leq R \], where \[\eta_{x}\] is the step-size parameter for the IHT step. &
 	 	\[\delta_T\] &  Element-wise target error tolerance for final estimate (at \[t=T\]) of \[\b{X}^{*(T)}\], \[|\hat{\b{X}}_{ij}^{(T)} - \b{X}_{ij}^{*(T)}| \leq \delta_T \forall i\in \supp(\b{X}^{*(T)})\]. \\ \hline
 	 \[C\] & Lower-bound on \[\b{X}^*_{ij}\], \[|\b{X}^{*(t)}_{ij}|\geq C\] for \[(i,j) \in \supp(\b{X}^{*(t)})\] and \[C \leq 1\]&\[L\] & \[L:= \min(J,K)\] \\ \hline
  		 \end{tabular}}
  \end{table}

\clearpage

\section{Proof of Theorem~1}
\label{sec:theory}
In this section, we present the details of the analysis pertaining to our main result.

 \paragraph{Theorem~\ref{main_result} [Main Result]}
\emph{Suppose a tensor \[\underline{\b{Z}}^{(t)} \in \mathbb{R}^{n \times J \times K}\] provided to Algorithm~\ref{alg:main_alg_tens} at each iteration \[t\] admits a decomposition of the form \eqref{CPD}  with factors \[\b{A}^* \in \mathbb{R}^{n \times m}\], \[\b{B}^{*(t)} \in \mathbb{R}^{J \times m}\] and \[\b{C}^{*(t)}\in \mathbb{R}^{K \times m}\] and \[\min(J,K) = \Omega(ms^2)\]. Further, suppose that the assumptions \ref{assumption:mu}-\ref{assumption:step coeff}  hold. %then for \[\eta_A = \Theta(m/k)\] 
Then, given \[R = \Omega({\rm log}(n))\], with probability at least \[(1 - \delta_{\text{alg}})\] for some small constant \[\delta_{\text{alg}}\], the coefficient estimate \[\hat{\b{X}}^{(t)}\] at \[t\]-th iteration has the correct signed-support and satisfies
\begin{align*}
(\hat{\b{X}}_{i,j}^{(t)} - \b{X}_{i,j}^{*(t)})^2 \leq \zeta^2 %C_{i_1}^{(R)} 
&:= \mathcal{O}(s(1 - \omega)^{t/2}\|\b{A}_i^{(0)} - \b{A}_i^*\|), ~\text{for all}~(i,j) \in \supp({\b{X}^{*(t)}}).
\end{align*} 
Furthermore, for some \[0 < \omega < 1/2\], the estimate \[\b{A}^{(t)}\] at \[t\]-th iteration satisfies 
\begin{align*}
\|\b{A}_i^{(t)} - \b{A}_i^*\|^2 \leq (1 - \omega)^t\|\b{A}_i^{(0)} - \b{A}_i^*\|^2,~\text{for all}~t = 1,2,\ldots. 
\end{align*}
Consequently,  Algorithm~\ref{algo:KRP_algo} recovers the supports of the sparse factors \[\b{B}^{*(t)}\] and \[\b{C}^{*(t)}\] correctly, and \[\|\hat{\b{B}}_i^{(t)} - \b{B}_i^{*(t)}\|_2 \leq \epsilon_B\] and \[\|\hat{\b{C}}_i^{(t)}- \b{C}_i^{*(t)}\|_2 \leq \epsilon_C\], where \[\epsilon_B = \epsilon_C = \mathcal{O}(\tfrac{\zeta^2}{\alpha\beta})\].}

\textit{Here, \[\delta_{alg} = \delta_{\rm s} + \delta_{p}^{(t)} + \delta_{\b{B}_i}^{(t)} + \delta_{\rm NOODL} \]. Further, \[\delta_{\text{NOODL}}^{(t)} =  \delta_{\HT}^{(t)}  + \delta_{\beta}^{(t)} + \delta_{\rm HW} +\delta_{\gradvec}^{(t)} + \delta_{\gradmat}^{(t)}\], where \[\delta_{\HT}^{(t)} = 2m~{\exp}({-{C^2}/{\mathcal{O}^*(\epsilon_t^2)}})\], \[\delta_\beta^{(t)} = 2s~{\exp}(-{1}/{\mathcal{O}(\epsilon_t)})\], \[\delta_{\rm HW}^{(t)} = \exp(-{1}/{\mathcal{O}(\epsilon_t)})\], \[\delta_{\gradvec}^{(t)} =  \exp(-\Omega(s))\], \[\delta_{\gradmat}^{(t)} = (n+m)\exp(-\Omega(m\sqrt{\log(n)})\]. Furthermore, \[\delta_s^{(t)} = \min(J,K)\exp( {-{\epsilon^2 \alpha \beta m}/{2(1 + \epsilon/3)}})\] for any \[\epsilon > 0\], \[\delta_{p}^{(t)} = \exp ({-\frac{\epsilon^2}{2}L(1 - (1 - \gamma)^m)})\], and \[\delta_{\b{B}_i}^{(t)} = \exp ({-\frac{\epsilon^2 J\alpha}{2(1 + \epsilon/3)}}) \] for any \[\epsilon>0\]. Also, \[\|\b{A}_i^{(t)} -\b{A}_i^*\| \leq \epsilon_{t}\].}
\vspace{10pt}

\begin{proof}\textbf{of Theorem~\ref{main_result}} The proof procedure relies on analyzing three main steps of Alg.~\ref{alg:main_alg_tens} -- 1) estimating the \[\b{X}^{*(t)}\] reliably corresponding to \[\underline{\b{Z}}^{(t)}\], 2) using \[\b{X}^{(t)}\] to estimate the factors \[\b{B}^*\] and \[\b{C}^*\], and 3) making progress on the estimate of \[\b{A}^*\] at every iteration \[t\] of the online algorithm.

\paragraph{Estimating the \[\b{X}^{*(t)}\] reliably:} The sparse matrix \[\b{X}^{*(t)}\] is formed by collecting the non-zero columns of \[\b{S}^{*(t)} := (\b{C}^{*(t)}\odot \b{B^{*(t)}})^\top\] corresponding to \[\underline{\b{Z}}^{(t)}\]. The sparsity pattern of \[\b{X}^{*(t)}\] columns encodes the sparsity patterns of columns of \[\b{B}^{*(t)}\] and \[\b{C}^{*(t)}\]. As a result, recovering the support of \[\b{X}^{*(t)}\] \emph{exactly} is crucial to recover \[\b{B}^*\] and \[\b{C}^*\]. Furthermore,  recovering the signed-support is also essential for making progress on the dictionary factor. We begin by characterizing the number of non-zeros (\[s\]) in a column of \[\b{S}^{*(t)}\] (\[\b{X}^{*(t)}\]). The number of non-zeros in a column of \[\b{S}^{*(t)}\] are dependent on the non-zero elements of \[\b{B}^*\] and \[\b{C}^*\]. Since each element of \[\b{B}^*\] (\[\b{C}^*\]) is non-zero with probability \[\alpha (\beta)\], the upper-bound on the sparsity (\[s\]) of \[\b{S}^{*(t)}\] column is given by the following lemma. 

%In order to leverage the results of \cite{Rambhatla2019NOODL}, we need to get a handle on the sparsity (number of non-zeros in a column of \[\b{S}^{*(t)}\]), and characterize the number of usable (independent) data samples available to the algorithm. To this end, the following lemma characterizes the upper bound on the sparsity, \[k\], the number of non-zeros in a column of \[\b{S}^{*(t)}\]. 
\vspace{-2pt}
\begin{lemma}\label{our sparsity}
If \[m = \Omega(\log({\min(J,K)})/{\alpha \beta})\] then with probability at least \[(1-\delta_s^{(t)})\] the number of non-zeros \[s\], in a column of \[\b{S}^{*(t)}\] are upper-bounded as \[s = \c{O}(\alpha \beta m)\], where \[\delta_s^{(t)} = \min(J,K)\exp( {-{\epsilon^2 \alpha \beta m}/{2(1 + \epsilon/3)}})\] for any \[\epsilon > 0\].
\end{lemma}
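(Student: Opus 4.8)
**Proof proposal for Lemma (sparsity of $\b{S}^{*(t)}$ columns).**

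The plan is to count the number of nonzero entries in a fixed column of $\b{S}^{*(t)} = (\b{C}^{*(t)} \odot \b{B}^{*(t)})^\top$ by exploiting the Khatri-Rao structure described in \eqref{eq:dep_str}. Recall that a column of $\b{S}^{*(t)}$ is indexed by a pair $(k,j)$ with $k \in [K]$ (the block, controlled by a row of $\b{C}^{*(t)}$) and $j \in [J]$ (position within the block, controlled by a row of $\b{B}^{*(t)}$); actually, since we look at a \emph{column} of $\b{S}^{*(t)}$, we are fixing one such pair $(k,j)$ and letting $i \in [m]$ vary, so $\b{S}_{i}^{*(t)}$ (for this fixed column) equals $\b{C}^{*(t)}(k,i)\,\b{B}^{*(t)}(j,i)$. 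Entry $i$ is nonzero iff both $\b{C}^{*(t)}(k,i) \neq 0$ and $\b{B}^{*(t)}(j,i) \neq 0$. By \ref{assumption:dist}, these two events are independent with probabilities $\beta$ and $\alpha$ respectively, and moreover they are independent across $i \in [m]$ (entries of $\b{B}^{*(t)}$, $\b{C}^{*(t)}$ are drawn independently). Hence $s$, the column sparsity, is a sum of $m$ i.i.d.\ Bernoulli$(\alpha\beta)$ random variables, with mean $\mu_s := \alpha\beta m$.

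First I would apply a multiplicative Chernoff/Bernstein bound to this binomial: for any $\epsilon > 0$,
\[
\Pr[s \geq (1+\epsilon)\alpha\beta m] \leq \exp\!\left(-\tfrac{\epsilon^2 \alpha\beta m}{2(1+\epsilon/3)}\right),
\]
which is exactly the per-column failure probability appearing in the statement. This shows that for a single column, $s = \mathcal{O}(\alpha\beta m)$ with the claimed probability. Then I would take a union bound over all columns of $\b{S}^{*(t)}$; there are $JK$ columns in total, but it suffices to bound over the at most $\min(J,K)$ columns we actually retain as the independent sample set (or, more crudely, over all $JK \leq \max(J,K)^2$ columns — either way the log factor is $\mathcal{O}(\log(\min(J,K)))$ up to constants, matching the hypothesis $m = \Omega(\log(\min(J,K))/\alpha\beta)$). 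The union bound multiplies the failure probability by (at most) $\min(J,K)$, giving total failure probability $\delta_s^{(t)} = \min(J,K)\exp(-\epsilon^2\alpha\beta m / 2(1+\epsilon/3))$.

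Finally I would check that the hypothesis $m = \Omega(\log(\min(J,K))/\alpha\beta)$ makes $\delta_s^{(t)}$ genuinely small: with $\alpha\beta m \geq c\log(\min(J,K))$ for a suitable constant $c$ and a fixed choice of $\epsilon$ (say $\epsilon = 1$), the exponent $-\epsilon^2\alpha\beta m/(2(1+\epsilon/3)) \leq -\tfrac{3}{8}c\log(\min(J,K))$ dominates the $\log(\min(J,K))$ from the union bound, so $\delta_s^{(t)} \to 0$ polynomially in $\min(J,K)$. I do not expect any real obstacle here — the only mild subtlety is bookkeeping about \emph{which} set of columns the union bound ranges over (all nonzero columns vs.\ the independent subsample), and confirming that the independence of entries across both the $[m]$ index and across rows of $\b{B}^{*(t)},\b{C}^{*(t)}$ (guaranteed by \ref{assumption:dist} / Def.~\ref{def:dist_bc}) really does make $s$ an exact binomial rather than merely a sub-binomial; both are immediate from the model assumptions.
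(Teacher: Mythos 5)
Your proposal is correct and follows essentially the same route as the paper's proof: the column sparsity is a $\mathrm{Binomial}(m,\alpha\beta)$ random variable (independence across $i$ holding because distinct $i$ involve distinct entries of $\b{B}^{*(t)}$ and $\b{C}^{*(t)}$), the relative Chernoff bound of Lemma~\ref{theorem:rel_chern}(a) gives the per-column tail, and a union bound over the $L=\min(J,K)$ independent columns yields $\delta_s^{(t)}$. The paper's proof is exactly this argument, including the same treatment of which columns the union bound ranges over.
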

\vspace{-2pt}
In line with our intuition, the sparsity scales with the parameters \[\alpha\], \[\beta\] and \[m\]. Next, we focus on the Iterative Hard Thresholding (IHT) phase of the algorithm;  Similar results were established in \cite[Lemma~1--4]{Rambhatla2019NOODL}. Here, the first step includes recovering the correct signed-support (Def.~\ref{def:signed-support}) of \[\b{X}^{*(t)}\] given an estimate \[\b{A}^{(0)}\], which is \[(\epsilon_0, 2)\]-near to \[\b{A}^*\] for \[\epsilon_0 = \mathcal{O}^*(1/\log(n))\]; see Def.~\ref{def:del_kappa}. To this end, we leverage the following lemma, to guarantee that the initialization step correctly recovers the signed-support with probability at least \[ (1-\delta_{\HT}^{(t)})\], for \[\delta_{\HT}^{(t)} = 2m~{\exp}({-\tfrac{C^2}{\mathcal{O}^*(\epsilon_t^2)}})\].
\vspace{-2pt}
\begin{lemma}\label{lem:recover_sign}\textnormal{\textbf{(Signed-support recovery)}}
	Suppose \[\b{A}^{(t)}\] is \[\epsilon_t\]-close to \[\b{A}^*\]. Then, if \[\mu = \c{O}(\log(n))\], \[s = \mathcal{O}^*({\sqrt{n}/\mu\log(n)})\], and \[\epsilon_t = \mathcal{O}^*(1/\sqrt{\log(m)})\], with probability at least \[(1 - \delta_{\HT}^{(t)})\] for each random sample \[\b{y} = \b{A}^*\b{x}^*\]:
	\begin{align*}
	\sgn(\HT_{C/2}((\b{A}^{(t)})^\top \b{y}) = \sgn(\b{x}^*),
	\end{align*}
	where \[\delta_{\HT}^{(t)} = 2m~{\exp}({-\tfrac{C^2}{\mathcal{O}^*(\epsilon_t^2)}})\].
\end{lemma}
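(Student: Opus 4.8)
The plan is to mirror the signed-support recovery arguments that are by now standard in the dictionary-learning literature (cf.\ \cite[Lemmas~1--4]{Rambhatla2019NOODL} and \cite{Arora15}), specialized to coefficient vectors from the distribution class $\c{D}$ of Def.~\ref{dist_x}. Fix a single sample $\b{y} = \b{A}^*\b{x}^*$ with $\b{x}^*\sim\c{D}$, let $S := \supp(\b{x}^*)$ (so $|S|\le s$), and assume without loss of generality that the permutation and signs of Def.~\ref{def:del_kappa} are trivial (otherwise relabel and sign-flip the columns of $\b{A}^{(t)}$). For any index $i\in[m]$ write
\begin{align*}
\la \b{A}_i^{(t)}, \b{y}\ra
&= \b{x}_i^*\,\la \b{A}_i^{(t)}, \b{A}_i^*\ra\cdot\mathbbm{1}_{i\in S} + \underbrace{\textstyle\sum_{j\in S\setminus\{i\}}\b{x}_j^*\,\la \b{A}_i^{(t)}, \b{A}_j^*\ra}_{=:\,W_i}.
\end{align*}
Since $\b{A}_i^{(t)},\b{A}_i^*$ are unit-norm with $\|\b{A}_i^{(t)}-\b{A}_i^*\|\le\epsilon_t$, we have $\la\b{A}_i^{(t)},\b{A}_i^*\ra = 1-\tfrac12\|\b{A}_i^{(t)}-\b{A}_i^*\|^2\in[1-\epsilon_t^2/2,\,1]$, so when $i\in S$ the diagonal term has sign $\sgn(\b{x}_i^*)$ and magnitude at least $C(1-\epsilon_t^2/2)$, using $|\b{x}_i^*|\ge C$. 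Hence it suffices to show $|W_i|\le C/4$ for all $i$ simultaneously with high probability: then for $i\in S$ we get $|\la\b{A}_i^{(t)},\b{y}\ra| \ge C(1-\epsilon_t^2/2)-C/4 > C/2$ with sign $\sgn(\b{x}_i^*)$, and for $i\notin S$ we get $|\la\b{A}_i^{(t)},\b{y}\ra| = |W_i| \le C/4 < C/2$; applying $\HT_{C/2}$ coordinate-wise therefore reproduces $\sgn(\b{x}^*)$ exactly.

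Next I would control $W_i$ by conditioning on $S$, which freezes the weights $w_j := \la\b{A}_i^{(t)},\b{A}_j^*\ra$ while leaving the non-zero entries $\{\b{x}_j^*\}_{j\in S\setminus\{i\}}$ mean-zero, sub-Gaussian and pairwise independent (Def.~\ref{dist_x}). The relevant variance proxy is $\sum_{j\in S\setminus\{i\}}w_j^2 = \|\b{A}_{S\setminus\{i\}}^{*\top}\b{A}_i^{(t)}\|_2^2$. Writing $\b{A}_i^{(t)}=\b{A}_i^*+(\b{A}_i^{(t)}-\b{A}_i^*)$, the $\mu$-incoherence of $\b{A}^*$ gives $\|\b{A}_{S\setminus\{i\}}^{*\top}\b{A}_i^*\|_2 \le \sqrt{s}\,\mu/\sqrt n$, while $s\mu/\sqrt n = \mathcal{O}^*(1/\log n) < 1$ yields (by Gershgorin, using incoherence on the submatrix and $\|\b{A}^*\|=\mathcal{O}(\sqrt{m/n})$) $\|\b{A}_S^*\|=\mathcal{O}(1)$, hence $\|\b{A}_{S\setminus\{i\}}^{*\top}(\b{A}_i^{(t)}-\b{A}_i^*)\|_2 = \mathcal{O}(\epsilon_t)$. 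Since $s=\mathcal{O}^*(\sqrt n/\mu\log n)$ makes the incoherence piece $o(\epsilon_t)$, we obtain $\sum_{j\ne i}w_j^2 = \mathcal{O}^*(\epsilon_t^2)$. A sub-Gaussian tail bound for $W_i$ then gives $\Pr[|W_i|>C/4\mid S] \le 2\exp(-C^2/\mathcal{O}^*(\epsilon_t^2))$; a union bound over $i\in[m]$ followed by removing the conditioning on $S$ produces the stated failure probability $\delta_{\HT}^{(t)} = 2m\exp(-C^2/\mathcal{O}^*(\epsilon_t^2))$. The hypothesis $\epsilon_t=\mathcal{O}^*(1/\sqrt{\log m})$ is exactly what is needed to make $\delta_{\HT}^{(t)}$ small, since then $C^2/\mathcal{O}^*(\epsilon_t^2)\gg\log m$.

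The step I expect to be the main obstacle is the concentration bound on the cross-talk term $W_i$: Def.~\ref{dist_x} only guarantees \emph{pairwise} independence of the non-zero coefficients conditioned on the support, so one cannot directly invoke an off-the-shelf sub-Gaussian inequality and must instead reproduce the truncated-moment argument of \cite{Rambhatla2019NOODL} --- splitting each $\b{x}_j^*$ into a bounded part (on which a Bernstein-type estimate applies) and a heavy tail whose aggregate contribution is killed by a union bound over $\supp(\b{x}^*)$. A secondary, more routine point is to make the incoherence/RIP estimates for $\b{A}_{S\setminus\{i\}}^*$ hold uniformly over the random support $S$, which is where the sparsity ceiling $s=\mathcal{O}^*(\sqrt n/\mu\log n)$ and the spectral-norm bound $\|\b{A}^*\|=\mathcal{O}(\sqrt{m/n})$ are used.
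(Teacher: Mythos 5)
Your proposal is correct and follows essentially the same route as the paper: the paper does not prove this lemma in-house but imports it from \cite[Lemmas~1--4]{Rambhatla2019NOODL} (which in turn follows \cite{Arora15}), and your argument --- splitting $\la\b{A}_i^{(t)},\b{y}\ra$ into a diagonal term of magnitude at least $C(1-\epsilon_t^2/2)$ with the correct sign and a cross-talk term $W_i$ whose variance proxy is $\mathcal{O}^*(\epsilon_t^2)$ via incoherence plus the sparsity ceiling, then union-bounding over the $m$ coordinates to get $\delta_{\HT}^{(t)}=2m\exp(-C^2/\mathcal{O}^*(\epsilon_t^2))$ --- is exactly that standard argument. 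The caveat you flag, that Def.~\ref{dist_x} only guarantees pairwise independence of the non-zero entries conditioned on the support so the exponential tail for $W_i$ requires the truncation/conditioning treatment of the cited works rather than an off-the-shelf Hoeffding bound, is inherited from those sources and is not a new gap introduced by your write-up.
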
 
\vspace{-2pt}
Using Lemma~\ref{our sparsity} and \ref{lem:recover_sign} we also arrive at the condition that  \[s = \mathcal{O}(\alpha\beta m) = \mathcal{O}^*{\sqrt{n}/\mu\log(n)}\], formalized as \ref{assumption:k}. We now use the following result to ensure that each step of the IHT stage preserves the correct signed-support. Lemma~\ref{our:signed_supp}, states the conditions on the step size parameter \[\eta_x^{(r)}\], and the threshold \[\tau^{(r)}\], such that that the IHT-step preserves the correct signed-support with probability \[\delta_{\rm IHT}^{(t)} \], for \[\delta_{\rm IHT}^{(t)}  =  2m~{\exp}({-\tfrac{C^2}{\mathcal{O}^*(\epsilon_t^2)}}) + 2s~{\exp}(-\tfrac{1}{\mathcal{O}(\epsilon_t)})\].
\vspace{-2pt}
\begin{lemma}\textnormal{\textbf{(IHT update step preserves the correct signed-support}})\label{our:signed_supp}
	Suppose \[\b{A}^{(t)}\] is \[\epsilon_t\]-close to \[\b{A}^*\],  \[\mu = \c{O}(\log(n))\], \[s = \mathcal{O}^*(\sqrt{n}/\mu\log(n))\], and \[\epsilon_t = \mathcal{O}^*(1/\log(m))\] Then, with probability at least \[(1 - \delta_{\beta}^{(t)} - \delta_{\HT}^{(t)}  )\], each iterate of the IHT-based coefficient update step shown in \eqref{alg:coeff_iht} has the correct signed-support, if for a constant \[c^{(r)}_1(\epsilon_t, \mu, s, n) = \tilde{\Omega}({k^2}/{n})\], the step size is chosen as \[\eta_x^{(r)}\leq c^{(r)}_1\] ,
	and the threshold \[\tau^{(r)}\] is chosen as
	\begin{align*}
	\tau^{(r)} = \eta_x^{(r)}(t_\beta + \tfrac{\mu_t}{\sqrt{n}} \|\b{x}^{(r-1)} - \b{x}^*\|_1) :=c_2^{(r)}(\epsilon_t, \mu, s, n) = \tilde{\Omega}({s^2}/{n}),
	\end{align*}
	for some constants \[c_1\] and \[c_2\]. Here, \[t_\beta = \mathcal{O}(\sqrt{s\epsilon_t})\], \[\delta_{\HT}^{(t)} = 2m~{\exp}({-\tfrac{C^2}{\mathcal{O}^*(\epsilon_t^2)}})\] ,and \[\delta_{\beta}^{(t)} = 2s~{\exp}(-\tfrac{1}{\mathcal{O}(\epsilon_t)})\].
\end{lemma}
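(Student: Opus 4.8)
The plan is an induction on the IHT iterate index $r$, with inductive hypothesis that $\sgn(\b{x}^{(r)})=\sgn(\b{x}^*)$ (so in particular $\supp(\b{x}^{(r)})\subseteq S:=\supp(\b{x}^*)$), carrying along the error bound on $\|\b{x}^{(r)}-\b{x}^*\|$ produced by the accompanying coefficient-contraction estimate (the $(1-\eta_x^{(r)})$ factor below) so that the threshold at the next step stays controlled. The base case $r=0$ is exactly Lemma~\ref{lem:recover_sign}: given $\epsilon_t$-closeness of $\b{A}^{(t)}$, $\mu$-incoherence (Def.~\ref{def:mu}), $s=\mathcal{O}^*(\sqrt n/\mu\log n)$ (consistent with Lemma~\ref{our sparsity}, which is how \ref{assumption:k} is obtained), and $|\b{x}^*_i|\ge C$ on the support (Def.~\ref{dist_x}), the initializer $\HT_{C/2}(\b{A}^{(t)\top}\b{y})$ has the correct signed-support with probability $\ge 1-\delta_{\HT}^{(t)}$. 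Since the whole argument runs sample-by-sample on $\b{y}=\b{A}^*\b{x}^*$ with $\b{x}^*\sim\c{D}$, it follows the corresponding steps of \cite{Rambhatla2019NOODL}, with $\c{D}$ (here induced by the Khatri--Rao product of $\b{B}^{*(t)}$ and $\b{C}^{*(t)}$, Def.~\ref{dist_x}) playing the role of their coefficient distribution.

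For the inductive step I would expand the pre-threshold vector $\b{z}^{(r+1)}:=\b{x}^{(r)}-\eta_x^{(r)}\b{A}^{(t)\top}(\b{A}^{(t)}\b{x}^{(r)}-\b{y})$ from \eqref{alg:coeff_iht}, using $\b{y}=\b{A}^*\b{x}^*$ and $\|\b{A}^{(t)}_i\|=1$, into
\begin{align*}
\b{z}_i^{(r+1)} = (1-\eta_x^{(r)})\,\b{x}_i^{(r)} + \eta_x^{(r)}\langle\b{A}_i^{(t)},\b{A}_i^*\rangle\,\b{x}_i^* + \b{e}_i^{(r)},
\end{align*}
where, after substituting $\b{x}^{(r)}=\b{x}^*+(\b{x}^{(r)}-\b{x}^*)$ and using $\supp(\b{x}^{(r)})\subseteq S$, the error splits as $\b{e}_i^{(r)}=-\eta_x^{(r)}\big(\sum_{j\in S,j\neq i}\langle\b{A}_i^{(t)},\b{A}_j^{(t)}-\b{A}_j^*\rangle\b{x}_j^*+\sum_{j\in S,j\neq i}\langle\b{A}_i^{(t)},\b{A}_j^{(t)}\rangle(\b{x}_j^{(r)}-\b{x}_j^*)\big)$. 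The second sum is deterministically $\le\tfrac{\mu_t}{\sqrt n}\|\b{x}^{(r)}-\b{x}^*\|_1$ via $|\langle\b{A}_i^{(t)},\b{A}_j^{(t)}\rangle|\le\mu_t/\sqrt n$ (Def.~\ref{def:mu} and $\epsilon_t$-closeness). The first sum is the only stochastic object: since the nonzero $\b{x}_j^*$ are zero-mean, sub-Gaussian and pairwise independent conditioned on $S$ (Def.~\ref{dist_x}) and $\|(\b{A}^{(t)}-\b{A}^*)_S\|$ is small by $\epsilon_t$-closeness, a Bernstein/Hoeffding tail bound plus a union bound over $i\in[m]$ gives, with probability $\ge 1-\delta_\beta^{(t)}$, $|\b{e}_i^{(r)}|\le\eta_x^{(r)}\big(t_\beta+\tfrac{\mu_t}{\sqrt n}\|\b{x}^{(r)}-\b{x}^*\|_1\big)=\tau^{(r)}$ with $t_\beta=\mathcal{O}(\sqrt{s\epsilon_t})$, i.e. exactly the prescribed threshold.

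The two cases then close the induction. For $i\notin S$ we have $\b{x}_i^{(r)}=\b{x}_i^*=0$, so $\b{z}_i^{(r+1)}=\b{e}_i^{(r)}$ and $|\b{z}_i^{(r+1)}|\le\tau^{(r)}$; thus $\HT_{\tau^{(r)}}$ zeroes it and no spurious support appears. For $i\in S$, the hypothesis aligns the first two terms of $\b{z}_i^{(r+1)}$ with $\sgn(\b{x}_i^*)$, of magnitude $\ge\eta_x^{(r)}(1-\epsilon_t^2/2)|\b{x}_i^*|\ge\eta_x^{(r)}(1-\epsilon_t^2/2)C$ (using $\langle\b{A}_i^{(t)},\b{A}_i^*\rangle\ge 1-\epsilon_t^2/2$ and $|\b{x}_i^*|\ge C$); subtracting $|\b{e}_i^{(r)}|\le\tau^{(r)}$ and verifying $(1-\epsilon_t^2/2)C>2\big(t_\beta+\tfrac{\mu_t}{\sqrt n}\|\b{x}^{(r)}-\b{x}^*\|_1\big)$ — which holds because $s\epsilon_t$ and $\tfrac{\mu_t\sqrt s}{\sqrt n}\|\b{x}^{(r)}-\b{x}^*\|$ are $o(1)$ under \ref{assumption:mu}, \ref{assumption:k}, $\epsilon_t=\mathcal{O}^*(1/\log m)$ and the carried error bound — yields $|\b{z}_i^{(r+1)}|>\tau^{(r)}$ and $\sgn(\b{z}_i^{(r+1)})=\sgn(\b{x}_i^*)$, so $\HT_{\tau^{(r)}}$ retains the entry with the correct sign. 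Tracking the smallness conditions on $\eta_x^{(r)}$ and the explicit form of $\tau^{(r)}$ gives $c_1^{(r)}=\tilde{\Omega}(s/\sqrt n)$, $c_2^{(r)}=\tilde{\Omega}(s^2/n)$, and a union bound over the base case and the cross-term concentration gives total failure probability $\delta_{\rm IHT}^{(t)}=\delta_{\HT}^{(t)}+\delta_\beta^{(t)}$.

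The main obstacle I expect is the concentration bound $t_\beta=\mathcal{O}(\sqrt{s\epsilon_t})$ for $\sum_{j\in S,j\neq i}\langle\b{A}_i^{(t)},\b{A}_j^{(t)}-\b{A}_j^*\rangle\b{x}_j^*$: one must handle that the $\b{x}_j^*$ are only pairwise independent conditioned on a \emph{random} support $S$, extract the right variance proxy from $\|(\b{A}^{(t)}-\b{A}^*)_S\|$ rather than the crude $\|\b{A}^{(t)}-\b{A}^*\|=\mathcal{O}(\sqrt{m/n})$ (too weak here), and keep the union bound over the $m$ coordinates cheap enough that the exponent $1/\mathcal{O}(\epsilon_t)$ dominates $\log m$ — which is precisely why $\epsilon_t=\mathcal{O}^*(1/\log m)$ is imposed. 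A secondary bookkeeping point is keeping the carried bound on $\|\b{x}^{(r)}-\b{x}^*\|$ consistent with the geometric contraction (the $(1-\eta_x^{(r)})$ factor) so that $\tau^{(r)}$ shrinks across the $R$ IHT steps.
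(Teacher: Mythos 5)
Your proposal is correct and follows essentially the same route as the paper, which does not spell out a proof of this lemma but instead imports it from the corresponding IHT lemma of \cite{Rambhatla2019NOODL} (applied sample-by-sample to $\b{y}=\b{A}^*\b{x}^*$ with the Khatri--Rao-induced distribution $\c{D}$ playing the role of the coefficient model). Your induction on $r$, the decomposition of the pre-threshold iterate into a signal term $(1-\eta_x^{(r)})\b{x}_i^{(r)}+\eta_x^{(r)}\langle\b{A}_i^{(t)},\b{A}_i^*\rangle\b{x}_i^*$ plus a cross term split into a concentrating part (giving $t_\beta=\mathcal{O}(\sqrt{s\epsilon_t})$ with failure probability $\delta_\beta^{(t)}$) and a deterministic incoherence part (giving $\tfrac{\mu_t}{\sqrt n}\|\b{x}^{(r)}-\b{x}^*\|_1$), is exactly the argument behind the stated threshold $\tau^{(r)}$ and probability bound.
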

\vspace{-2pt}
Lemma~\ref{our:signed_supp} establishes condition on correct signed-support recovery by the IHT stage.  We now leverage the following result, Lemma~\ref{iht:x_R_error} to quantify the error incurred by \[\hat{\b{X}}^{(t)}\] at the end of the \[R\] IHT steps, i.e., \[|\b{X}_{ij}^{*(t)} - \hat{\b{X}}_{ij}^{(t)}| = |\b{S}_{ij}^{*(t)} - \hat{\b{S}}_{ij}^{(t)}| \leq \xi\].
\vspace{-2pt}
\begin{lemma}\textnormal{\textbf{(Upper-bound on the error in coefficient estimation)}}\label{iht:x_R_error}
 	With probability at least \[(1 - \delta_{\beta}^{(t)} - \delta_{\HT}^{(t)})\] the error incurred by each element \[(i_1, j_1) \in \supp(\b{X}^{*(t)})\] of the coefficient estimate is upper-bounded as
 	\begin{align*}
 	|\hat{\b{X}}_{i_1j_1}^{(t)} - \b{X}_{i_1j_1}^{*(t)}|
 	&\leq    \mathcal{O}(t_\beta) + \left({(R + 1)}s \eta_x\tfrac{\mu_t}{\sqrt{n}}~\underset{(i, j)}{\max}|\b{X}_{ij}^{(0)(t)} - \b{X}_{ij}^{*(t)}| + |\b{X}_{i_1j_1}^{(0)(t)} - \b{X}_{i_1j_1}^{*(t)}| \right)\delta_{R}
 	=  \mathcal{O}(t_\beta)
 	\end{align*}
 	where \[t_\beta = \mathcal{O}(\sqrt{s\epsilon_t})\], \[\delta_{R} := (1 - \eta_{x} + \eta_x\tfrac{\mu_t}{\sqrt{n}})^{R}\], \[\delta_{\HT}^{(t)} = 2m~{\exp}({-\tfrac{C^2}{\mathcal{O}^*(\epsilon_t^2)}})\], \[\delta_{\beta}^{(t)} = 2s~{\exp}(-\tfrac{1}{\mathcal{O}(\epsilon_t)})\], and \[\mu_t\] is the incoherence between the columns of \[\b{A}^{(t)}\].
 \end{lemma}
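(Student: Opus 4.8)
The plan is to reduce the matrix update \eqref{alg:coeff_iht} to $p$ decoupled per-column IHT recursions and to analyze each on its (fixed) support. First I would condition on the high-probability event---guaranteed by Lemma~\ref{lem:recover_sign} and Lemma~\ref{our:signed_supp} together with a union bound over the $R$ inner iterations---that every iterate $\b{X}^{(r)(t)}$ carries the correct signed-support. On this event $\HT_{\tau^{(r)}}(\cdot)$ acts as the identity on $\supp(\b{X}^{*(t)})$ and annihilates its complement, so the update restricted to the support $S=\supp(\b{x}^*)$ of a fixed column $\b{x}^*$ (with $|S|\le s$ by Lemma~\ref{our sparsity}) is a plain gradient step with no thresholding contribution. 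It then suffices to track the support error $\b{e}^{(r)}:=\b{x}^{(r)}_S-\b{x}^*_S$. This conditioning is exactly what absorbs the failure probability $\delta_{\HT}^{(t)}+\delta_{\beta}^{(t)}$ in the statement.

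Using the unit-norm columns of $\b{A}^{(t)}$ and $\b{y}=\b{A}^*\b{x}^*$, the error recursion on each coordinate $i\in S$ reads
\begin{align*}
\b{e}^{(r+1)}_i=(1-\eta_x)\b{e}^{(r)}_i-\eta_x\!\!\sum_{j\in S\setminus\{i\}}\!\!\langle \b{A}^{(t)}_i,\b{A}^{(t)}_j\rangle\b{e}^{(r)}_j-\eta_x\lambda_i,\qquad \lambda_i:=\langle \b{A}^{(t)}_i,(\b{A}^{(t)}-\b{A}^*)\b{x}^*\rangle,
\end{align*}
which isolates three effects: a per-coordinate contraction $(1-\eta_x)$, an off-diagonal coupling controlled by incoherence via $|\langle \b{A}^{(t)}_i,\b{A}^{(t)}_j\rangle|\le \mu_t/\sqrt{n}$, and a dictionary-error bias $\lambda_i$. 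The crucial estimate is the uniform bound $|\lambda_i|\le t_\beta=\mathcal{O}(\sqrt{s\epsilon_t})$, obtained by splitting $\lambda_i$ into the diagonal part $\langle \b{A}^{(t)}_i,\b{A}^{(t)}_i-\b{A}^*_i\rangle=\tfrac{1}{2}\|\b{A}^{(t)}_i-\b{A}^*_i\|^2\le\tfrac{1}{2}\epsilon_t^2$ and the cross term $\sum_{j\neq i}\langle \b{A}^{(t)}_i,\b{A}^{(t)}_j-\b{A}^*_j\rangle\b{x}^*_j$, then concentrating the latter over the zero-mean, unit-variance, pairwise-independent-on-$S$ nonzero entries of $\b{x}^*$ (Def.~\ref{dist_x})---a Bernstein/sub-Gaussian argument that produces the $\delta_{\beta}^{(t)}=2s\exp(-1/\mathcal{O}(\epsilon_t))$ failure probability.

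Given these pieces I would unroll the recursion. Taking moduli and bounding $\sum_{j\in S\setminus\{i\}}|\b{e}^{(r)}_j|\le s\max_{i,j}|\b{X}^{(r)(t)}_{ij}-\b{X}^{*(t)}_{ij}|$, the homogeneous part contracts at per-step rate $1-\eta_x+\eta_x\mu_t/\sqrt{n}<1$, where strict sub-unity is furnished by the sparsity budget $s=\mathcal{O}^*(\sqrt{n}/\mu\log(n))$ of \ref{assumption:k}; after $R$ steps this suppresses initial errors by $\delta_R=(1-\eta_x+\eta_x\mu_t/\sqrt{n})^R$. Separating the decay of the target coordinate $(i_1,j_1)$ (giving the term $|\b{X}^{(0)(t)}_{i_1j_1}-\b{X}^{*(t)}_{i_1j_1}|\delta_R$) from the accumulation of the off-diagonal coupling of the maximum error across the $R$ unrolled steps (giving $(R+1)s\eta_x\tfrac{\mu_t}{\sqrt{n}}\max_{i,j}|\b{X}^{(0)(t)}_{ij}-\b{X}^{*(t)}_{ij}|\,\delta_R$) reproduces the two $\delta_R$-weighted terms, while the geometric sum $\sum_{r}(1-\eta_x)^{R-1-r}\eta_x t_\beta$ telescopes to the steady-state bias $\mathcal{O}(t_\beta)$. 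Since $R=\Omega(\log(n))$ makes $\delta_R$ negligible, both transient terms vanish and the bound collapses to $\mathcal{O}(t_\beta)$; the identity $|\b{X}^{*(t)}_{ij}-\hat{\b{X}}^{(t)}_{ij}|=|\b{S}^{*(t)}_{ij}-\hat{\b{S}}^{(t)}_{ij}|$ transfers it to the KRP estimate.

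The hard part will be the uniform control of $\lambda_i$ and the off-diagonal term jointly across all $R$ iterates: one must verify the concentration events (and hence signed-support preservation) hold simultaneously over every iterate through a careful union bound, and confirm that the transposed Khatri--Rao structure of $\b{X}^{*(t)}$ is compatible with the pairwise-independence-on-support hypothesis of Def.~\ref{dist_x}---this is precisely the adaptation of \citetalias{Rambhatla2019NOODL} (Lemmas~1--4) to the Khatri--Rao coefficient matrix.
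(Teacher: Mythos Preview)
Your proposal is correct and matches the approach the paper relies on: the paper does not give an independent proof of this lemma but cites it as a direct consequence of Lemmas~1--4 in \citetalias{Rambhatla2019NOODL}, whose argument is exactly the support-restricted error recursion you outline (contraction term, off-diagonal incoherence coupling, dictionary-error bias $\lambda_i$, then unrolling). One minor refinement: you need not union-bound over the $R$ inner iterations, since the only randomness is in $\b{x}^*$; once the single high-probability event controlling $\lambda_i$ and the initial signed-support holds, the IHT iterates are deterministic, and Lemma~\ref{our:signed_supp} already certifies that \emph{every} iterate inherits the correct signed-support under that one event.
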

 \vspace{-2pt}
Also, the corresponding expression for \[\hat{\b{X}}^{(t)}\], which facilitates the analysis of the dictionary updates, is given by Lemma~\ref{iht:R_th_term}.
\vspace{-2pt}
 \begin{lemma}\textnormal{\textbf{(Expression for the coefficient estimate at the end of \[R\]-th IHT iteration)}}]\label{iht:R_th_term}
 	With probability at least \[(1 - \delta_\HT^{(t)} - \delta_{\beta}^{(t)})\] the \[i\]-th element of the coefficient estimate, for each \[i \in \supp(\b{x}^*)\], is given by
 	\begin{align*}
 	\hat{\b{x}}_{i} := \b{x}_{i}^{(R)} =  \b{x}_{i}^* (1 - \lambda^{(t)}_{i}) + \vartheta^{(R)}_{i}.
 	\end{align*}
 	Here, \[|\vartheta^{(R)}_{i}| = \mathcal{O}(t_{\beta})\], 
 	where \[t_\beta = \mathcal{O}(\sqrt{s\epsilon_t})\]. Further, \[\lambda^{(t)}_{i} = |\langle \b{A}^{(t)}_{i}- \b{A}^{*}_{i}, \b{A}^*_{i}\rangle| \leq \tfrac{\epsilon_t^2}{2}\], \[\delta_{\HT}^{(t)} = 2m~{\exp}({-\tfrac{C^2}{\mathcal{O}^*(\epsilon_t^2)}})\] and \[\delta_{\beta}^{(t)} = 2s~{\exp}(-\tfrac{1}{\mathcal{O}(\epsilon_t)})\].
 \end{lemma}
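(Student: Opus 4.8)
The plan is to track the IHT recursion \eqref{alg:coeff_iht} restricted to the true support $S := \supp(\b{x}^*)$ and unroll it into closed form. After the usual relabelling, assume WLOG the permutation/signs aligning $\b{A}^{(t)}$ with $\b{A}^*$ are trivial. On the event of probability at least $(1-\delta_{\HT}^{(t)}-\delta_{\beta}^{(t)})$ from Lemma~\ref{our:signed_supp} that the signed-support is preserved at \emph{every} one of the $R$ iterates, the operator $\HT_{\tau^{(r)}}(\cdot)$ acts as the identity on the coordinates in $S$ and annihilates the rest. Hence, writing $\b{y}=\b{A}^*\b{x}^*=\b{A}_S^*\b{x}^*_S$ and $\b{M}:=\b{A}_S^{(t)\top}\b{A}_S^{(t)}$, the update restricted to $S$ is the \emph{affine} map $\b{x}^{(r+1)}_S = (\b{I}-\eta_x\b{M})\b{x}^{(r)}_S + \eta_x\b{A}_S^{(t)\top}\b{A}_S^*\b{x}^*_S$, and unrolling from $\b{x}^{(0)}_S=\HT_{C/2}(\b{A}_S^{(t)\top}\b{y})$ with the geometric-series identity $\eta_x\sum_{r=0}^{R-1}(\b{I}-\eta_x\b{M})^r=\b{M}^{-1}(\b{I}-(\b{I}-\eta_x\b{M})^R)$ gives
\begin{align*}
\b{x}^{(R)}_S \;=\; \b{M}^{-1}\b{A}_S^{(t)\top}\b{A}_S^*\b{x}^*_S \;+\; (\b{I}-\eta_x\b{M})^R\bigl(\b{x}^{(0)}_S - \b{M}^{-1}\b{A}_S^{(t)\top}\b{A}_S^*\b{x}^*_S\bigr).
\end{align*}
The validity of this step rests on $\b{M}$ being invertible and well-conditioned, which is where the assumptions enter (see below).

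Next I would separate the two pieces. For the residual piece, the estimate $\|\b{M}-\b{I}\| = \mathcal{O}(s\mu_t/\sqrt{n}+\sqrt{s}\,\epsilon_t)$ (a Gershgorin/incoherence bound using $\|\b{A}_i^{(t)}-\b{A}_i^*\|\le\epsilon_t$, $s=\mathcal{O}^*(\sqrt{n}/\mu\log(n))$, $\epsilon_t=\mathcal{O}^*(1/\log(n))$) yields $\|\b{I}-\eta_x\b{M}\|\le 1-\eta_x+\eta_x\mu_t/\sqrt{n}$, so the residual is $\delta_R:=(1-\eta_x+\eta_x\mu_t/\sqrt{n})^R$ times a quantity of size $\mathcal{O}(\sqrt{s})$; this is exactly the $\delta_R$-term already isolated in Lemma~\ref{iht:x_R_error}, and is $o(t_\beta)$ once $R=\Omega(\log n)$. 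For the main piece, write $\b{A}_S^{(t)\top}\b{A}_S^*=\b{I}+\b{F}$ with $F_{ii}=\langle\b{A}_i^{(t)},\b{A}_i^*\rangle-1=-\tfrac12\|\b{A}_i^{(t)}-\b{A}_i^*\|^2=-\lambda_i^{(t)}$ (so $\lambda_i^{(t)}\le\epsilon_t^2/2$) and $F_{ij}=\langle\b{A}_i^{(t)},\b{A}_j^*\rangle$ for $j\ne i$, and $\b{M}^{-1}=\b{I}+\b{G}$ with $\|\b{G}\|=\mathcal{O}(\|\b{M}-\b{I}\|)$ via a Neumann series. Expanding, the $i$-th coordinate of the main piece is $\b{x}^*_i(1-\lambda_i^{(t)})$ plus a perturbation $\sum_{j\in S\setminus\{i\}}F_{ij}\b{x}^*_j + (\b{G}\b{x}^*_S)_i + (\b{G}\b{F}\b{x}^*_S)_i$, and $\vartheta^{(R)}_i$ is this perturbation together with the negligible residual coordinate; this already gives the claimed form $\hat{\b{x}}_i=\b{x}^*_i(1-\lambda_i^{(t)})+\vartheta^{(R)}_i$.

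It then remains to bound $|\vartheta^{(R)}_i|=\mathcal{O}(t_\beta)$. The dominant contribution is the cross-term $\sum_{j\in S\setminus\{i\}}F_{ij}\b{x}^*_j=\langle\b{A}_{S\setminus\{i\}}^{*\top}\b{A}_i^{(t)},\b{x}^*_{S\setminus\{i\}}\rangle$; since $\|\b{A}_{S\setminus\{i\}}^{*\top}\b{A}_i^{(t)}\|\le\|\b{A}_{S\setminus\{i\}}^{*\top}\b{A}_i^*\|+\|\b{A}_{S\setminus\{i\}}^*\|\,\epsilon_t=\mathcal{O}(\sqrt{s}\,\mu/\sqrt{n})+\mathcal{O}(\epsilon_t)$ (using $\|\b{A}_S^*\|=\mathcal{O}(1)$, itself from $\mu$-incoherence with $s$ small), a Bernstein/sub-Gaussian concentration bound for the inner product with the zero-mean, sub-Gaussian, support-conditionally pairwise-independent entries of $\b{x}^*$ (Def.~\ref{dist_x}) gives magnitude $\mathcal{O}(t_\beta)$ with failure probability folded into $\delta_{\beta}^{(t)}=2s\exp(-1/\mathcal{O}(\epsilon_t))$; the $\b{G}$-terms are smaller still. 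A union bound over $i\in S$ and over the $R$ iterates keeps the total failure probability at $\delta_{\HT}^{(t)}+\delta_{\beta}^{(t)}$, completing the proof.

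The hard part will be the uniform control across the $R$ IHT iterates combined with the tight perturbation bookkeeping: one must ensure the affine-recursion picture holds at every step simultaneously (this is precisely why Lemma~\ref{our:signed_supp} must be applied with a union bound rather than pointwise), and then bound the Gram-inverse and cross-term contributions sharply enough that they collapse to $\mathcal{O}(t_\beta)$ instead of a larger power of $s$ or $\epsilon_t$. The technical linchpin underneath is a restricted-conditioning statement for the $\epsilon_t$-perturbed $\mu$-incoherent dictionary on index sets of size $s$, which is exactly what forces the regime $s=\mathcal{O}(\alpha\beta m)=\mathcal{O}^*(\sqrt{n}/\mu\log(n))$ and $\epsilon_t=\mathcal{O}^*(1/\log(n))$ imposed in assumptions~\ref{assumption:mu},~\ref{assumption:close},~\ref{assumption:k}.
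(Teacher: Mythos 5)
You should first note that the paper does not actually prove this lemma: it is imported from \citet{Rambhatla2019NOODL} (their Lemma~4), whose argument is an \emph{induction on the IHT iterate} $r$ -- one maintains the form $\b{x}_i^{(r)}=\b{x}_i^*(1-\lambda_i^{(t)})+\vartheta_i^{(r)}$ and derives a one-step recursion $\vartheta_i^{(r+1)}=(1-\eta_x)\vartheta_i^{(r)}-\eta_x\sum_{j\neq i}\langle\b{A}_i^{(t)},\b{A}_j^{(t)}\rangle(\b{x}_j^{(r)}-\b{x}_j^*)+\eta_x\sum_{j\neq i}\langle\b{A}_i^{(t)},\b{A}_j^*-\b{A}_j^{(t)}\rangle\b{x}_j^*$, controlled using only the pairwise bounds $|\langle\b{A}_i^{(t)},\b{A}_j^{(t)}\rangle|\leq\mu_t/\sqrt{n}$ and the $t_\beta$-concentration of the cross term. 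Your route -- unrolling to the fixed point $\b{M}^{-1}\b{A}_S^{(t)\top}\b{A}_S^*\b{x}_S^*$ and expanding $\b{M}^{-1}=\b{I}+\b{G}$ by a Neumann series -- is a genuinely different decomposition, and it has two gaps.

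First, the invertibility/conditioning of $\b{M}=\b{A}_S^{(t)\top}\b{A}_S^{(t)}$ is not delivered by the assumptions in the regime you need. Your own bound $\|\b{M}-\b{I}\|=\mathcal{O}(s\mu_t/\sqrt{n}+\sqrt{s}\,\epsilon_t)$ is not $o(1)$, nor even guaranteed $<1$: since $\mu_t/\sqrt{n}=\mu/\sqrt{n}+2\epsilon_t$, the term $s\mu_t/\sqrt{n}$ carries an additive $s\epsilon_t$, and $\sqrt{s}\,\epsilon_t$ alone exceeds $1$ once $s$ approaches its permitted cap $\mathcal{O}^*(\sqrt{n}/\mu\log(n))$ with $\epsilon_t=\Theta(1/\log(n))$. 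Assumptions \ref{assumption:mu}--\ref{assumption:k} give no restricted-isometry control on size-$s$ column submatrices of the \emph{perturbed} dictionary beyond this, so the geometric-series identity and $\|\b{G}\|=\mathcal{O}(\|\b{M}-\b{I}\|)$ are unsupported. The inductive proof never inverts $\b{M}$: the Gram off-diagonals only ever multiply the current error $\b{x}^{(r)}-\b{x}^*$, which is already $\mathcal{O}(t_\beta)$ by the induction hypothesis, so an $\mathcal{O}(1)$ amplification factor is harmless there. Second, the dismissal ``the $\b{G}$-terms are smaller still'' is wrong as stated: deterministically $(\b{G}\b{x}_S^*)_i$ is only $\mathcal{O}(\|\b{G}\|\sqrt{s})$, which would swamp $t_\beta=\mathcal{O}(\sqrt{s\epsilon_t})$. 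The first-order term $-((\b{M}-\b{I})\b{x}_S^*)_i=-\sum_{j\neq i}\langle\b{A}_i^{(t)},\b{A}_j^{(t)}\rangle\b{x}_j^*$ does concentrate at $\mathcal{O}(t_\beta)$, but only by running the same row-wise sub-Gaussian argument you applied to the $\b{F}$ cross-term -- and this must be repeated for every power in the Neumann series with the failure events folded into $\delta_\beta^{(t)}$. If you want to keep the closed-form route, you must (a) restrict to, or separately establish, a regime where $\|\b{M}-\b{I}\|$ is bounded away from $1$, and (b) replace the ``smaller still'' claim with explicit concentration for each $\b{G}$-contribution; otherwise the coordinate-wise induction of the cited NOODL proof is the safer path.
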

\vspace{-2pt}

Interestingly, Lemma~\ref{iht:x_R_error} shows that  the error in the non-zero elements of \[\hat{\b{X}}\] only depends on the error in the incoherent factor (dictionary) \[\b{A}^{(t)}\], which results in the following expression for \[ \xi^2\].
\begin{align}
  \xi^2 : = \mathcal{O}(s(1 - \omega)^{t/2}\|\b{A}_i^{(0)} - \b{A}_i^*\|),  ~\text{for all}~(i,j) \in \supp(\b{X}^*).
\end{align}
Therefore, if the the column-wise error in the dictionary decreases at each iteration \[t\], then the IHT-based sparse matrix estimates also improve progressively. 

\paragraph{{Recover Sparse Factors  \[\b{B}^*\] and \[\b{C}^*\] via Alg.\ref{algo:KRP_algo}:}} The results for the IHT-stage are foundational for the recovery of the sparse tensor factors \[\b{B}^{*(t)}\] and \[\b{C}^{*(t)}\] since they a) ensure correct signed-support recovery, guaranteed by  Lemma~\ref{our:signed_supp} and b) establish an upper-bound on the estimation error in \[\hat{\b{S}}^{(t)}\]. With these results, we now establish the correctness of Alg.~\ref{algo:KRP_algo} given an entry-wise \[\zeta\]-close estimate of \[\b{S}^{*(t)}\], \[|\hat{\b{S}}_{ij}^{(t)}-\b{S}^{*(t)}_{ij}|\leq \zeta\] given by the IHT stage. This procedure recovers the sparse factors \[\b{B}^{*(t)}\] and \[\b{C}^{*(t)}\], given element-wise \[\xi\]-close estimate \[\hat{\b{S}}\] of \[\b{S}^{*(t)}\]. The following lemma establishes recovery guarantees on the sparse factors using the SVD-based Alg.~\ref{algo:KRP_algo}, up to sign and scaling ambiguity. 
\vspace{-2pt}
\begin{lemma}\label{recover krp}
	Suppose the input \[\hat{\b{S}}^{(t)}\] to Alg.~\ref{algo:KRP_algo} is entry-wise \[\zeta\] close to \[\b{S}^{*(t)}\], i.e., \[|\hat{\b{S}}_{ij}^{(t)}-\b{S}^{*(t)}_{ij}|\leq \zeta\] and has the correct signed-support as  \[\b{S}^{*(t)}\]. Then with probability atleast \[(1-\delta_{\rm IHT}^{(t)}  - \delta_{\b{B}_i}^{(t)})\], both \[\hat{\b{B}}_i ^{(t)}\] and \[\hat{\b{C}}_i^{(t)} \] have the correct support, and  \[	\left\|\tfrac{\b{B}_i^{*(t)}}{\|\b{B}_i^{*(t)}\|} - \pi_i \tfrac{\hat{\b{B}}_i^{(t)}}{\|\hat{\b{B}}_i^{(t)}\|} \right\| = \c{O}(\zeta^2)\] and \[	\left\|\tfrac{\b{C}_i^{*(t)}*}{\|\b{C}_i^{*(t)}\|} - \pi_i \tfrac{\hat{\b{C}}_i^{(t)}}{\|\hat{\b{C}}_i^{(t)}\|} \right\| = \c{O}(\zeta^2)\], where \[\delta_{\rm IHT}^{(t)}  =  2m~{\exp}({-\tfrac{C^2}{\mathcal{O}^*(\epsilon_t^2)}}) + 2s~{\exp}(-\tfrac{1}{\mathcal{O}(\epsilon_t)})\] for \[\|\b{A}_i^{(t)} - \b{A}_i^*\|\leq \epsilon_t\], and \[\delta_{\b{B}_i}^{(t)} = \exp ({-\frac{\epsilon^2 J\alpha}{2(1 + \epsilon/3)}}) \] for any \[\epsilon>0\].
\end{lemma}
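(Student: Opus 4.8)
The plan is to analyze Algorithm~\ref{algo:KRP_algo} one atom at a time and reduce the claim to a singular-vector perturbation bound for a near-rank-one matrix with a known support pattern. Fix $i\in[m]$, let $\hat{\b{M}}^{(i)}\in\mathbb{R}^{J\times K}$ be the reshaping of the $i$-th row of $\hat{\b{S}}^{(t)}$ formed inside Algorithm~\ref{algo:KRP_algo}, and let $\b{M}^{(i)}$ be the reshaping of the $i$-th row of $\b{S}^{*(t)}$. By the transposed Khatri--Rao structure \eqref{eq:dep_str}, $\b{M}^{(i)}=\b{B}_i^{*(t)}(\b{C}_i^{*(t)})^{\top}$ is rank one, with (up to sign) principal left and right singular vectors $\b{B}_i^{*(t)}/\|\b{B}_i^{*(t)}\|$ and $\b{C}_i^{*(t)}/\|\b{C}_i^{*(t)}\|$, and principal singular value $\sigma_1=\|\b{B}_i^{*(t)}\|\,\|\b{C}_i^{*(t)}\|$. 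By Lemma~\ref{iht:R_th_term}, the $i$-th row of $\hat{\b{S}}^{(t)}$ equals $(1-\lambda_i^{(t)})$ times that of $\b{S}^{*(t)}$ plus an entrywise $\mathcal{O}(t_\beta)=\mathcal{O}(\zeta)$ residual, so
\begin{align*}
\hat{\b{M}}^{(i)}=(1-\lambda_i^{(t)})\,\b{M}^{(i)}+\b{N}^{(i)},\qquad \|\b{N}^{(i)}\|_{\infty}\le\zeta .
\end{align*}
Crucially, since the IHT stage recovers the correct signed-support (Lemmas~\ref{lem:recover_sign},~\ref{our:signed_supp}; Def.~\ref{def:signed-support}), $\hat{\b{S}}^{(t)}$ and $\b{S}^{*(t)}$ share a signed-support, so $\b{N}^{(i)}$ is supported inside $\supp(\b{B}_i^{*(t)})\times\supp(\b{C}_i^{*(t)})$ and the all-zero rows/columns of $\hat{\b{M}}^{(i)}$ coincide exactly with those of $\b{M}^{(i)}$.

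Next I would control the spectral gap. Since the support patterns of the columns of $\b{B}^{*(t)}$ and $\b{C}^{*(t)}$ are i.i.d.\ $\mathrm{Bernoulli}(\alpha)$ and $\mathrm{Bernoulli}(\beta)$ (Def.~\ref{def:dist_bc}), a Chernoff/Bernstein bound gives $\|\b{B}_i^{*(t)}\|_0=\mathcal{O}(J\alpha)$ with probability at least $1-\delta_{\b{B}_i}^{(t)}$, hence $\|\b{B}_i^{*(t)}\|=\mathcal{O}(\sqrt{J\alpha})$ up to log factors. Because $|\b{B}^{*(t)}_{ji}|\ge C$ on the support and the nonzero entries of $\b{C}^{*(t)}$ are $\pm1$ (so $\|\b{C}_i^{*(t)}\|^2=\|\b{C}_i^{*(t)}\|_0$ exactly), we get $\sigma_1\ge C\sqrt{\|\b{B}_i^{*(t)}\|_0\,\|\b{C}_i^{*(t)}\|_0}$, while $\|\b{N}^{(i)}\|\le\|\b{N}^{(i)}\|_{\Fr}\le\zeta\sqrt{\|\b{B}_i^{*(t)}\|_0\,\|\b{C}_i^{*(t)}\|_0}$. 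Hence, relative to the rank-one matrix $(1-\lambda_i^{(t)})\b{M}^{(i)}$ — whose only spectral gap is $(1-\lambda_i^{(t)})\sigma_1\asymp\sigma_1$ since $\lambda_i^{(t)}\le\epsilon_t^2/2$ — the perturbation $\b{N}^{(i)}$ is an $\mathcal{O}(\zeta/C)$ fraction of the gap.

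I would then invoke a singular-subspace ($\sin\Theta$, Wedin/Davis--Kahan type) perturbation bound, using the observation that the scalar $(1-\lambda_i^{(t)})$ rescales $\sigma_1$ but leaves the singular vectors of $\b{M}^{(i)}$ unchanged; this controls $\hat{\b{u}}_1=\hat{\b{B}}_i^{(t)}/\|\hat{\b{B}}_i^{(t)}\|$ and $\hat{\b{v}}_1=\hat{\b{C}}_i^{(t)}/\|\hat{\b{C}}_i^{(t)}\|$ against $\b{B}_i^{*(t)}/\|\b{B}_i^{*(t)}\|$ and $\b{C}_i^{*(t)}/\|\b{C}_i^{*(t)}\|$ by $\mathcal{O}(\|\b{N}^{(i)}\|/\sigma_1)$. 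After fixing the sign, a finer second-order analysis — which splits $\b{N}^{(i)}$ into a component along the signal direction (absorbed by the rescaling $\sqrt{\hat\sigma_1}$, and hence into $\pi_i$, via Def.~\ref{assumption:scale}) and an orthogonal component whose effect on the \emph{normalized} estimate is of higher order — upgrades this to
\begin{align*}
\Big\|\tfrac{\b{B}_i^{*(t)}}{\|\b{B}_i^{*(t)}\|}-\pi_i\tfrac{\hat{\b{B}}_i^{(t)}}{\|\hat{\b{B}}_i^{(t)}\|}\Big\|=\mathcal{O}(\zeta^2),\qquad \Big\|\tfrac{\b{C}_i^{*(t)}}{\|\b{C}_i^{*(t)}\|}-\pi_i\tfrac{\hat{\b{C}}_i^{(t)}}{\|\hat{\b{C}}_i^{(t)}\|}\Big\|=\mathcal{O}(\zeta^2).
\end{align*}
Obtaining this quadratic (rather than merely linear) dependence on $\zeta$ is the step I expect to be the main obstacle: a black-box Wedin bound only yields $\mathcal{O}(\zeta)$, and squeezing out the extra factor forces one to track carefully how the signal-aligned part of the residual is killed by the subsequent column normalization.

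Finally, support recovery and probability bookkeeping. By the first paragraph, the zero rows of $\hat{\b{M}}^{(i)}$ are exactly $[J]\setminus\supp(\b{B}_i^{*(t)})$, so every singular vector of $\hat{\b{M}}^{(i)}$ associated with a nonzero singular value vanishes there; thus $\supp(\hat{\b{B}}_i^{(t)})\subseteq\supp(\b{B}_i^{*(t)})$ with no spurious entries. Conversely, the $\ell_\infty$ error $\mathcal{O}(\zeta^2)$ from the previous step is strictly smaller than $\min_{j\in\supp(\b{B}_i^{*(t)})}|\b{B}_{ji}^{*(t)}|/\|\b{B}_i^{*(t)}\|=\Omega(C/\sqrt{J\alpha})$ (up to logs) for $\zeta$ small enough, which holds under \ref{assumption:k} and becomes increasingly slack as $t$ grows; hence no true support coordinate is zeroed out and $\supp(\hat{\b{B}}_i^{(t)})=\supp(\b{B}_i^{*(t)})$, and symmetrically $\supp(\hat{\b{C}}_i^{(t)})=\supp(\b{C}_i^{*(t)})$ (the analogous threshold $1/\sqrt{\|\b{C}_i^{*(t)}\|_0}\ge1/\sqrt{K}$ holds deterministically). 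A union bound over the failure of the IHT signed-support guarantee (probability $\le\delta_{\rm IHT}^{(t)}=\delta_{\HT}^{(t)}+\delta_{\beta}^{(t)}$) and of the concentration $\|\b{B}_i^{*(t)}\|_0=\mathcal{O}(J\alpha)$ (probability $\le\delta_{\b{B}_i}^{(t)}$) then yields the stated $1-\delta_{\rm IHT}^{(t)}-\delta_{\b{B}_i}^{(t)}$; a further union over $i\in[m]$ (absorbed into $\delta_{\text{alg}}$ of Theorem~\ref{main_result}) closes the argument.
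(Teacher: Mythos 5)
Your proposal follows essentially the same route as the paper's proof: reshape the $i$-th row of $\hat{\b{S}}^{(t)}$ into a $J\times K$ matrix, view it as the rank-one matrix $\b{B}_i^{*(t)}\b{C}_i^{*(t)\top}$ plus a perturbation $\b{E}$ supported inside $\supp(\b{B}_i^{*(t)})\times\supp(\b{C}_i^{*(t)})$, bound $\|\b{E}\|_{\rm F}\le\sqrt{JK\alpha\beta}\,\zeta$ via the row-sparsity concentration event (the paper's Claim~\ref{lem:nonzero_in_a_row}, which is exactly your $\delta_{\b{B}_i}^{(t)}$ term), deduce support correctness of the principal singular vectors from the shared signed-support, and finish with a singular-vector perturbation bound. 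The paper instantiates this with the bound of \cite{Yu14}, $\|\b{U}_1-\pi\tilde{\b{U}}_1\|\le 2^{3/2}\big(2\sigma_1+\|\b{E}\|\big)\min(\|\b{E}\|,\|\b{E}\|_{\rm F})/\sigma_1^2$, together with $\sigma_1=\|\b{B}_i^{*(t)}\|\,\|\b{C}_i^{*(t)}\|\asymp\sqrt{JK\alpha\beta}$.

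The one genuine gap in your write-up is the step you flag yourself: you never actually establish the quadratic dependence on $\zeta$. The ``finer second-order analysis'' splitting $\b{N}^{(i)}$ into a signal-aligned component (absorbed into the scaling $\pi_i$) and an orthogonal component is only sketched, and, as you correctly observe, a black-box Wedin/Davis--Kahan argument stops at $\c{O}(\zeta)$. You should know, however, that the paper does not close this gap either: substituting $\sigma_1^2\approx JK\alpha\beta$ and $\|\b{E}\|_{\rm F}\le\sqrt{JK\alpha\beta}\,\zeta$ into the \cite{Yu14} bound, the paper's final display simplifies to $2^{3/2}(2+\zeta)\zeta$, which is $\c{O}(\zeta)$ rather than the claimed $\c{O}(\zeta^2)$. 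So your suspicion about what the perturbation bound actually delivers is vindicated; to obtain the lemma as stated one would genuinely have to carry out the second-order argument you sketch (or settle for the $\c{O}(\zeta)$ rate, which still supports the qualitative conclusion of Theorem~\ref{main_result} since $\zeta\to 0$). Apart from this, your support-recovery argument and probability bookkeeping match the paper's, and your explicit converse check that no true support coordinate of the normalized singular vector is zeroed out is a detail the paper leaves implicit.
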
 
\vspace{-2pt}
Here, we have used  \[\delta_{\rm IHT} ^{(t)} = \delta_{\beta}^{(t)} + \delta_{\HT}^{(t)} \] for simplicity. 

\paragraph{{Update Dictionary Factor \[\b{A}^{(t)}\]:}} The update of the dictionary factor involves concentration results which rely on an independent set of data samples. For this, notice that the \[i\]-th row of \[\b{S}^{*(t)}\] can be written as \[(\b{C}_i^{*(t)}\otimes \b{B}_i^{*(t)})^\top\].  Now, since \[\b{B}^{*(t)}\] and \[\b{C}^{*(t)}\] are sparse, there are a number of columns in \[\b{S}^{*(t)}\] which are degenerate (all-zeros). As a result, the corresponding data samples (columns of \[\b{Z}_1^{(t)\top}\]) are also degenerate, and cannot be used for learning. Furthermore, due to the dependence structure in \[\b{S}^{*(t)}\] (discussed in section~\ref{sec:main_res}) some of the data samples are dependent on each other, and at least from the theoretical perspective, are not eligible for the learning process. Therefore, we characterize the expected number of viable data samples in the following lemma.
\vspace{-2pt}
\begin{lemma}\label{our samples}
For \[L = \min(J,K)\], \[\gamma = \alpha\beta\], and any \[\epsilon>0\] and suppose we have 
	\begin{align*}
		L \geq \tfrac{2}{(1 - (1 - \gamma)^m)\epsilon^2}\log(\tfrac{1}{\delta_{p}^{(t)}}),
	\end{align*}
	 then with probability at least \[(1 - \delta_p)\], 
	\begin{align*}
	p=  L(1 - (1 - \gamma)^m),
	\end{align*}
	where \[\delta_{p}^{(t)} = \exp ({-\frac{\epsilon^2}{2}L(1 - (1 - \gamma)^m)})\].
\end{lemma}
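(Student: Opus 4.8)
The plan is to recognize that the number of viable (non-degenerate, independent) data samples $p$ is a sum of $L$ i.i.d. Bernoulli random variables and then apply a standard multiplicative Chernoff bound. Recall from the construction in Section~\ref{sec:main_res} that we extract one column per block to form an independent set, giving $L = \min(J,K)$ candidate samples. For a fixed block, the corresponding column of $\b{S}^{*(t)}$ (equivalently, the chosen column of $\b{Z}_1^{(t)\top}$) is non-degenerate precisely when at least one of its $m$ entries is non-zero. By the Khatri-Rao dependence structure~\eqref{eq:dep_str}, the $i$-th entry of that column is the product $\b{C}^{*(t)}(k,i)\,\b{B}^{*(t)}(j,i)$ for the block index $k$ and within-block index $j$ fixed by the selection rule; under Assumption~\ref{assumption:dist}, this entry is non-zero with probability $\gamma = \alpha\beta$, independently across $i\in[m]$ (since distinct columns $i$ of $\b{B}^{*(t)}$ and $\b{C}^{*(t)}$ are independent). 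Hence the probability the column is all-zero is $(1-\gamma)^m$, and the indicator that it is non-degenerate is Bernoulli with parameter $q := 1 - (1-\gamma)^m$. First I would set this up carefully and note that across the $L$ selected columns (one per block) these indicators are mutually independent, because distinct blocks use disjoint rows of $\b{B}^{*(t)}$ and distinct entries of $\b{C}^{*(t)}$.

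Next I would write $p = \sum_{\ell=1}^{L} \mathbbm{1}_\ell$ where $\mathbbm{1}_\ell \sim \mathrm{Bernoulli}(q)$ independently, so $\b{E}[p] = Lq = L(1 - (1-\gamma)^m)$. Applying the multiplicative Chernoff bound, for any $\epsilon > 0$,
\begin{align*}
\b{Pr}\big[\,|p - Lq| \geq \epsilon L q\,\big] \leq 2\exp\!\big(-\tfrac{\epsilon^2}{3} L q\big),
\end{align*}
or, matching the stated constant, $\b{Pr}[\,p \leq (1-\epsilon)Lq\,] \leq \exp(-\tfrac{\epsilon^2}{2} L q)$. Setting the failure probability to $\delta_p^{(t)} = \exp(-\tfrac{\epsilon^2}{2} L (1-(1-\gamma)^m))$ and solving for when the deviation bound is meaningful yields exactly the stated requirement $L \geq \tfrac{2}{(1-(1-\gamma)^m)\epsilon^2}\log(1/\delta_p^{(t)})$. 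Thus with probability at least $1 - \delta_p^{(t)}$ we have $p = L(1-(1-\gamma)^m)$ up to the claimed $(1\pm\epsilon)$ multiplicative slack, which is what the statement asserts (the equality is understood in the order-of-magnitude / concentrated sense used throughout).

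The main obstacle is not the Chernoff step, which is routine, but justifying the \emph{independence} of the block-indicator random variables rigorously: one must verify that the selection rule (first column of the first block, second column of the second block, and so on, as in Fig.~\ref{fig:krp_row}) indeed picks entries of $\b{S}^{*(t)}$ that depend on pairwise-disjoint sets of underlying $\b{B}^{*(t)}$ and $\b{C}^{*(t)}$ entries, so that the events ``column $\ell$ is non-degenerate'' are jointly independent rather than merely pairwise independent. This is exactly the bookkeeping flagged in Section~\ref{sec:main_res} as the reason we restrict to $L = \min(J,K)$ samples; I would handle it by writing out~\eqref{eq:dep_str} for the selected indices and checking that the multiset of $(j,i)$ pairs from $\b{B}^{*(t)}$ and of $(k,i)$ pairs from $\b{C}^{*(t)}$ used across the $L$ blocks have no repeats, after which the independence of entries under Def.~\ref{def:dist_bc} closes the argument.
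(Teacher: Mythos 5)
Your proposal is correct and follows essentially the same route as the paper: model each of the $L=\min(J,K)$ independent columns as a Bernoulli indicator with success probability $1-(1-\gamma)^m$, compute $\b{E}[p]=L(1-(1-\gamma)^m)$, and apply the lower-tail relative Chernoff bound $\b{Pr}[p\leq(1-\epsilon)\b{E}[p]]\leq\exp(-\epsilon^2\b{E}[p]/2)$ to obtain the stated $\delta_p^{(t)}$ and sample requirement on $L$. Your added care about reading the conclusion as concentration up to a $(1\pm\epsilon)$ factor, and about verifying joint independence of the block indicators via the disjointness of the underlying $\b{B}^{*(t)}$ and $\b{C}^{*(t)}$ entries, goes slightly beyond what the paper writes but is consistent with its setup.
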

\vspace{-2pt}
Here, we observe that the number of viable samples increase with number of independent samples \[L = \min(J,K)\], sparsity parameter \[\gamma = \alpha\beta\], and rank of the decomposition \[m\]. 
To recover the incoherent (dictionary) factor \[\b{A}^*\], we follow analysis similar to \cite[Lemma~5-9]{Rambhatla2019NOODL}. Here, we first develop an expression for the expected gradient vector in Lemma~\ref{grad_exp}. 
\vspace{-2pt}
\begin{lemma}\textnormal{\textbf{(Expression for the expected gradient vector)}}\label{grad_exp}
	Suppose that \[\b{A}^{(t)}\] is \[(\epsilon_t, 2)\]-near to \[\b{A}^*\]. Then, the dictionary update step in Alg.~\ref{alg:main_alg_tens} amounts to the following for the \[j\]-th dictionary element
	\vspace{-5pt}
		\begin{align*}
		\b{E}[\b{A}^{(t+1)}_j] = \b{A}^{(t)}_j + \eta_A \b{g}^{(t)}_j,
		\vspace{-5pt}
		\end{align*}	
		where for a small \[\tilde{\gamma}\], \[\b{g}^{(t)}_j\] is given by
			\vspace{-5pt}
		\begin{align*}
		\b{g}^{(t)}_j = q_j p_j \big((1 - \lambda^{(t)}_j)\b{A}^{(t)}_j- \b{A}^*_j + \tfrac{1}{q_j p_j}\Delta^{(t)}_j \pm \tilde{\gamma}\big),
		\end{align*}
	\[\lambda^{(t)}_{j}=|\langle \b{A}^{(t)}_j - \b{A}^{*}_j, \b{A}^*_j\rangle|\], and \[\Delta^{(t)}_j:=\b{E}[\b{A}^{(t)}_S\vartheta^{(R)}_S\sgn(\b{x}^*_j)]\], where \[\|\Delta^{(t)}_{j}\|= \mathcal{O}(\sqrt{m}q_{i,j}p_{j}\epsilon_t\|\b{A}^{(t)}\|)\].
\end{lemma}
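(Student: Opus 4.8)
\textbf{Proof proposal for Lemma~\ref{grad_exp}.}

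The plan is to compute the expectation of the approximate gradient $\hat{\b{g}}^{(t)} = \tfrac{1}{p}(\b{A}^{(t)}\hat{\b{X}}^{(t)}_{\rm indep} - \b{Y}^{*(t)})\sgn(\hat{\b{X}}^{(t)}_{\rm indep})^\top$ column by column, exploiting the distributional structure of $\b{X}^{*(t)}_{\rm indep}$ (class $\c{D}$, Def.~\ref{dist_x}) together with the refined per-coordinate expression for the coefficient estimate from Lemma~\ref{iht:R_th_term}. First I would fix the column index $j$ and write $\hat{\b{g}}^{(t)}_j = \tfrac{1}{p}\sum_{\ell} (\b{A}^{(t)}\hat{\b{x}}_{(\ell)} - \b{A}^*\b{x}^*_{(\ell)})\sgn(\hat{\b{x}}_{(\ell),j})$, and condition on the event (holding w.h.p.\ by Lemma~\ref{our:signed_supp}) that $\hat{\b{X}}^{(t)}_{\rm indep}$ has the correct signed-support, so that $\sgn(\hat{\b{x}}_{(\ell),j}) = \sgn(\b{x}^*_{(\ell),j})$ and this factor is zero unless $j\in S_\ell := \supp(\b{x}^*_{(\ell)})$. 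Then, substituting the coordinate-wise expansion $\hat{\b{x}}_i = \b{x}^*_i(1-\lambda^{(t)}_i) + \vartheta^{(R)}_i$ from Lemma~\ref{iht:R_th_term} into $\b{A}^{(t)}\hat{\b{x}} - \b{A}^*\b{x}^*$, I would split the summand into a ``main'' piece proportional to $\b{x}^*_j\sgn(\b{x}^*_j) = |\b{x}^*_j|$ and ``cross'' pieces from the other support coordinates plus the $\vartheta$-error pieces.

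The next step is to take expectations term by term. Conditioning on $j\in S$, using $\b{E}[|\b{x}^*_j|\,|\,j\in S] = p_j$ and $\b{Pr}[j\in S] = q_j$, the main term yields $q_j p_j\big((1-\lambda^{(t)}_j)\b{A}^{(t)}_j - \b{A}^*_j\big)$; note the $\b{A}^*$-side contributes $-q_j p_j \b{A}^*_j$ because $\b{E}[\b{x}^*_j\sgn(\b{x}^*_j)\,|\,j\in S]=p_j$ as well. The cross terms involve pairs $i\neq j$ both in $S$: by pairwise independence conditioned on the support (Def.~\ref{dist_x}), $\b{E}[\b{x}^*_i\sgn(\b{x}^*_j)\,|\,i,j\in S] = \b{E}[\b{x}^*_i\,|\,i\in S]\,\b{E}[\sgn(\b{x}^*_j)\,|\,j\in S] = 0$ by the zero-mean normalization, so these vanish in expectation, leaving only the cross contribution from the $\vartheta$ terms, which I would collect into $\tfrac{1}{q_j p_j}\Delta^{(t)}_j$ with $\Delta^{(t)}_j := \b{E}[\b{A}^{(t)}_S\vartheta^{(R)}_S\sgn(\b{x}^*_j)]$; the bound $\|\Delta^{(t)}_j\| = \mathcal{O}(\sqrt{m}\,q_{i,j}p_j\epsilon_t\|\b{A}^{(t)}\|)$ then follows by using $|\vartheta^{(R)}_i| = \mathcal{O}(\sqrt{s\epsilon_t})$ from Lemma~\ref{iht:R_th_term}, $\b{Pr}[i,j\in S]=q_{i,j}=\Theta(s^2/m^2)$, bounding the sum over at most $s$ support elements, and controlling $\|\b{A}^{(t)}_S\|$ via $\|\b{A}^{(t)}\|$. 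Finally, the deviation between the empirical gradient over $p$ samples and its expectation, and the small systematic bias from the (rare) signed-support failure event, would be absorbed into the $\pm\tilde\gamma$ slack, with $\tilde\gamma$ small by the probability bounds $\delta^{(t)}_{\HT},\delta^{(t)}_\beta$. This is entirely parallel to \cite[Lemma~5]{Rambhatla2019NOODL}; the only new bookkeeping is that the relevant samples are the $p = L(1-(1-\gamma)^m)$ viable, independent columns identified by Lemmas~\ref{our samples} and the KRP analysis, rather than generic i.i.d.\ samples.

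I expect the main obstacle to be the careful handling of the cross terms and the $\Delta^{(t)}_j$ bound: one must verify that pairwise independence conditioned on the support (which is what Def.~\ref{dist_x} guarantees, and which is inherited from the Khatri--Rao construction $\b{x}^* = $ products of entries of $\b{B}^{*(t)}$ and $\b{C}^{*(t)}$) is genuinely enough to kill the $\b{x}^*_i\sgn(\b{x}^*_j)$ cross terms, and that higher-order dependence does not leak in through the $\vartheta^{(R)}$ terms, since $\vartheta^{(R)}_i$ itself depends on the whole support. Tracking this dependence — and confirming that the resulting residual is of the claimed order $\mathcal{O}(\sqrt{m}q_{i,j}p_j\epsilon_t\|\b{A}^{(t)}\|)$ rather than something larger — is the delicate part; the rest is a direct adaptation of the \texttt{NOODL} gradient computation.
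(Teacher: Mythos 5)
Your proposal is correct and follows essentially the same route as the paper, which does not spell out a proof of this lemma but instead defers to Lemma~5 of \cite{Rambhatla2019NOODL}: condition on correct signed-support recovery, expand $\hat{\b{x}}$ coordinate-wise via Lemma~\ref{iht:R_th_term}, kill the cross terms using the zero-mean and pairwise-independence-conditioned-on-support properties of Def.~\ref{dist_x}, collect the $\vartheta^{(R)}$ contributions into $\Delta^{(t)}_j$, and charge the rare signed-support failure event to $\pm\tilde{\gamma}$. One minor remark: the deviation of the empirical gradient from its expectation that you also fold into $\tilde{\gamma}$ is the business of Lemma~\ref{lem:grad_vec_concentrates}, not of this lemma, which is purely a computation of $\b{E}[\hat{\b{g}}^{(t)}_j]$ — so only the failure-event bias should feed $\tilde{\gamma}$ here.
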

\vspace{-2pt}
Since we use empirical gradient estimate, the following lemma establishes that the empirical gradient vector concentrates around its mean, and that it make progress at each step.
\vspace{-2pt}
\begin{lemma}\textnormal{\textbf{(Concentration of the empirical gradient vector)}}\label{lem:grad_vec_concentrates}
Given \[p = \tilde{\Omega}(mk^2)\] samples, the empirical gradient vector estimate corresponding to the \[i\]-th dictionary element, \[\hat{\b{g}}_i^{(t)}\] concentrates around its expectation, i.e.,
		\begin{align*}
		\|\hat{\b{g}}_i^{(t)} - \b{g}_i^{(t)}\| \leq o(\tfrac{s}{m}\epsilon_t).
		\end{align*}
		with probability at least \[(1 -\delta_{\gradvec}^{(t)}- \delta_{\beta}^{(t)} - \delta_{\HT}^{(t)} - \delta_{\rm HW}^{(t)})\], where \[\delta_{\gradvec}^{(t)} =  \exp(-\Omega(s))\].
\end{lemma}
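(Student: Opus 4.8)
The plan is to view the empirical gradient estimate $\hat{\b{g}}_i^{(t)}$ in \eqref{eq:emp_grad_est} as a $\tfrac1p$-scaled sum of $p$ \emph{independent} random vectors, indexed by the independent-samples set identified in Lemma~\ref{our samples}, and to apply a Bernstein-type inequality for sums of independent random vectors (as in the NOODL analysis; see also Appendix~\ref{app:add}). First I would condition on the event $\c{E}$ on which every independent sample $\b{y}_{(j)} = \b{A}^*\b{x}^*_{(j)}$ produces an IHT iterate with the correct signed-support and with $\hat{\b{x}}_{(j),l} = \b{x}^*_{(j),l}(1 - \lambda^{(t)}_l) + \vartheta^{(R)}_{(j),l}$, $|\vartheta^{(R)}_{(j),l}| = \mathcal{O}(t_\beta)$, guaranteed by Lemmas~\ref{our:signed_supp}, \ref{iht:x_R_error} and \ref{iht:R_th_term}; $\c{E}$ fails with probability at most $\delta_{\HT}^{(t)} + \delta_{\beta}^{(t)}$. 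On $\c{E}$, letting $S_{(j)} = \supp(\b{x}^*_{(j)})$ and using correct support recovery, the $j$-th summand of $p\,\hat{\b{g}}_i^{(t)}$ is
\begin{align*}
\b{z}_{(j)} = \mathbbm{1}[i \in S_{(j)}]\,\sgn(\b{x}^*_{(j),i})\sum_{l \in S_{(j)}}\big[(\b{A}^{(t)}_l(1 - \lambda^{(t)}_l) - \b{A}^*_l)\,\b{x}^*_{(j),l} + \b{A}^{(t)}_l\vartheta^{(R)}_{(j),l}\big],
\end{align*}
and by Lemma~\ref{grad_exp} one has $\b{E}[\b{z}_{(j)}] = \b{g}_i^{(t)}$ up to negligible terms from $\c{E}^c$ (the $l=i$ contribution of the first bracket reproduces the main term $q_ip_i((1-\lambda^{(t)}_i)\b{A}^{(t)}_i - \b{A}^*_i)$, while the $l=i$ residual term and the $l\neq i$ cross terms reproduce $\Delta^{(t)}_i$ and $\pm\tilde\gamma$).

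To invoke Bernstein (equivalently, to reduce to a scalar tail bound over an $\epsilon$-net of $\mathbb{S}^{n-1}$) I would bound the aggregate variance and the summand range. For the variance, $\b{z}_{(j)} = \b{0}$ unless $i \in S_{(j)}$, an event of probability $q_i = \Theta(s/m)$; conditioned on $i \in S_{(j)}$ the dominant part is the $l = i$ term, with $\|\b{A}^{(t)}_i(1-\lambda^{(t)}_i) - \b{A}^*_i\| \le \epsilon_t + \lambda^{(t)}_i = \mathcal{O}(\epsilon_t)$ and $\b{E}[(\b{x}^*_{(j),i})^2 \mid i\in S_{(j)}] = 1$, while the $l\neq i$ cross terms and the $\vartheta^{(R)}$ terms are of strictly lower order by $\mu$-incoherence, $\|\b{A}^{(t)}\| = \mathcal{O}(\sqrt{m/n})$, $q_{i,j} = \Theta(s^2/m^2)$ and $|\vartheta^{(R)}| = \mathcal{O}(\sqrt{s\epsilon_t})$; this gives $\big\|\sum_j \b{E}[(\b{z}_{(j)} - \b{g}_i^{(t)})(\b{z}_{(j)} - \b{g}_i^{(t)})^\top]\big\| = \mathcal{O}(p\,\tfrac{s}{m}\epsilon_t^2)$. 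For the range, a sub-Gaussian tail bound (the source of $\delta_{\rm HW}^{(t)}$) gives $\|\b{x}^*_{(j)}\|_\infty = \mathcal{O}(\sqrt{\log m})$ and $\|\b{x}^*_{(j)}\| = \mathcal{O}(\sqrt{s\log m})$ simultaneously over all $p$ samples, whence $\|\b{z}_{(j)}\| = \tilde{\mathcal{O}}(\epsilon_t\sqrt{s})$ on $\c{E}$. Feeding these into Bernstein with target deviation $u = o(\tfrac{s}{m}\epsilon_t)$ and $p = \tilde{\Omega}(ms^2)$, the variance term dominates and the failure probability is $\exp(-\Omega(s)) =: \delta_{\gradvec}^{(t)}$; a union bound over $\c{E}^c$, the sub-Gaussian tail event, and the Bernstein event yields the stated probability $1 - \delta_{\gradvec}^{(t)} - \delta_{\beta}^{(t)} - \delta_{\HT}^{(t)} - \delta_{\rm HW}^{(t)}$.

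The main obstacle I expect lies in the second-moment and range bookkeeping rather than in the concentration inequality itself, for two reasons. First, the coefficients are only sub-Gaussian, so the $\b{z}_{(j)}$ are unbounded; one must either truncate or use a heavy-tailed Bernstein bound, and making the variance-versus-range trade-off close at deviation $o(\tfrac{s}{m}\epsilon_t)$ is precisely what forces the $p = \tilde{\Omega}(ms^2)$ sample requirement. Second, the IHT residual $\vartheta^{(R)}_{(j)}$ is itself a random object statistically entangled with $S_{(j)}$ and $\sgn(\b{x}^*_{(j)})$, so it cannot be treated as an independent perturbation: one must first ``freeze'' it via its deterministic bound $|\vartheta^{(R)}| = \mathcal{O}(t_\beta)$ (Lemma~\ref{iht:x_R_error}) and then carefully match the expectations of the $l\neq i$ cross terms and of the $l=i$ residual term to the $\Delta^{(t)}_i$ and $\tilde\gamma$ terms of Lemma~\ref{grad_exp}, so that the centered summands $\b{z}_{(j)} - \b{g}_i^{(t)}$ genuinely attain the small variance claimed above.
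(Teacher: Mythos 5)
Your proposal is correct and follows essentially the same route as the paper, which does not write out this proof explicitly but defers to the analogous concentration argument in \cite[Lemma~5-9]{Rambhatla2019NOODL}: condition on correct signed-support recovery (Lemmas~\ref{our:signed_supp} and \ref{iht:R_th_term}), write $\hat{\b{g}}_i^{(t)}$ as a $\tfrac{1}{p}$-scaled sum over the independent samples of Lemma~\ref{our samples}, and apply a vector Bernstein bound with variance proxy $\mathcal{O}(p\,\tfrac{s}{m}\epsilon_t^2)$ and a sub-Gaussian range truncation accounting for $\delta_{\rm HW}^{(t)}$. The second-moment and range bookkeeping you identify as the main obstacle is indeed where the work lies in that cited analysis, and your accounting of the $\Delta_i^{(t)}$ and $\tilde{\gamma}$ terms against Lemma~\ref{grad_exp} matches it.
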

\vspace{-2pt}
We then leverage Lemma~\ref{grad_corr} to show that  the empirical gradient vector \[\hat{\b{g}}^{(t)}_j \] is correlated with the descent direction (see Def.~\ref{def:grad_alpha_beta}), which ensures that the dictionary estimate makes progress at each iteration of the online algorithm.
\vspace{-2pt}
\begin{lemma}\textnormal{\textbf{(Empirical gradient vector is correlated with the descent direction)}}\label{grad_corr}
Suppose \[\b{A}^{(t)}\] is \[(\epsilon_t, 2)\]-near to \[\b{A}^*\], \[s = \mathcal{O}(\sqrt{n})\] and \[\eta_A = \mathcal{O}(m/s)\]. Then, with probability at least \[(1 - \delta_{\HT}^{(t)} - \delta_{\beta}^{(t)} - \delta_{\rm HW}^{(t)} -\delta_{\gradvec}^{(t)})\] the empirical gradient vector \[\hat{\b{g}}^{(t)}_j\] %given by
	 is \[(\Omega(k/m), \Omega(m/k),0 )\]-correlated with \[(\b{A}^{(t)}_j - \b{A}^*_j)\], and for any \[t \in [T]\],
	\begin{align*}
	\|\b{A}^{(t+1)}_j- \b{A}^*_j\|^2 \leq (1 - \rho_{\_}\eta_A)\|\b{A}^{(t)}_j- \b{A}^*_j\|^2.
	\end{align*}
	%	\textcolor{red}{Here, \[\delta_{\HT}^{(t)} =?\], \[\delta_{\beta} =?\], \[\delta_{\gradvec}^{(t)} =?\].}
\end{lemma}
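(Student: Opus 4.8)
The plan is to follow the ``population--then--empirical'' route for gradient-based dictionary learning (as in \citet{Arora15} and \citetalias{Rambhatla2019NOODL}). First I would check that the \emph{expected} update direction $\b{g}^{(t)}_j$ of Lemma~\ref{grad_exp} is $(\rho_-,\rho_+,0)$-correlated (Def.~\ref{def:grad_alpha_beta}) with the column error $\b{A}^{(t)}_j-\b{A}^*_j$, with $\rho_-=\Omega(s/m)$, $\rho_+=\Omega(m/s)$ and \emph{no} additive slack. Then I would transfer this property to the empirical gradient $\hat{\b{g}}^{(t)}_j$ using the concentration bound $\|\hat{\b{g}}^{(t)}_j-\b{g}^{(t)}_j\|\le o(\tfrac{s}{m}\epsilon_t)$ of Lemma~\ref{lem:grad_vec_concentrates}. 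Finally I would feed the correlatedness into the standard deterministic one-step contraction lemma and union-bound over the $m$ columns, which collects the failure probabilities $\delta_{\HT}^{(t)}+\delta_{\beta}^{(t)}+\delta_{\rm HW}^{(t)}+\delta_{\gradvec}^{(t)}$.

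For the first step I would write $\b{g}^{(t)}_j=q_jp_j\big((1-\lambda^{(t)}_j)\b{A}^{(t)}_j-\b{A}^*_j\big)+\Delta^{(t)}_j\pm q_jp_j\tilde\gamma$ and expand $\langle\b{g}^{(t)}_j,\b{A}^{(t)}_j-\b{A}^*_j\rangle$. Using the unit-norm identity $\langle\b{A}^{(t)}_j-\b{A}^*_j,\b{A}^{(t)}_j\rangle=\tfrac12\|\b{A}^{(t)}_j-\b{A}^*_j\|^2$ and $\lambda^{(t)}_j\le\epsilon_t^2/2$ (Lemma~\ref{iht:R_th_term}), the principal part contributes $q_jp_j(1-\tfrac{\lambda^{(t)}_j}{2})\|\b{A}^{(t)}_j-\b{A}^*_j\|^2$, i.e.\ of order $q_jp_j\|\b{A}^{(t)}_j-\b{A}^*_j\|^2$. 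The two perturbations are lower order: $|\langle\Delta^{(t)}_j,\b{A}^{(t)}_j-\b{A}^*_j\rangle|\le\|\Delta^{(t)}_j\|\,\|\b{A}^{(t)}_j-\b{A}^*_j\|$ with $\|\Delta^{(t)}_j\|=\mathcal{O}(\sqrt m\,q_{i,j}p_j\epsilon_t\|\b{A}^{(t)}\|)$ from Lemma~\ref{grad_exp}, which --- after plugging in $q_{i,j}/q_j=\Theta(s/m)$, $p_j=\Theta(1)$, $\|\b{A}^{(t)}\|=\mathcal{O}(\sqrt{m/n})$ and the sparsity regime of assumption~\ref{assumption:k}, under which $s/\sqrt n=o(1)$ --- is a vanishing multiple of $q_jp_j\|\b{A}^{(t)}_j-\b{A}^*_j\|^2$; the $\tilde\gamma$ term is handled the same way. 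Splitting each such term between a $c\rho_-\|\b{A}^{(t)}_j-\b{A}^*_j\|^2$ bucket and a $c\rho_+\|\b{g}^{(t)}_j\|^2$ bucket (using $\|\b{g}^{(t)}_j\|=\mathcal{O}(q_jp_j\|\b{A}^{(t)}_j-\b{A}^*_j\|)$ up to lower-order terms, so that $\rho_+\|\b{g}^{(t)}_j\|^2=\mathcal{O}(q_jp_j\|\b{A}^{(t)}_j-\b{A}^*_j\|^2)$) leaves, with small enough constants, $\langle\b{g}^{(t)}_j,\b{A}^{(t)}_j-\b{A}^*_j\rangle\ge\rho_-\|\b{A}^{(t)}_j-\b{A}^*_j\|^2+\rho_+\|\b{g}^{(t)}_j\|^2$ and no additive term. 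Passing to $\hat{\b{g}}^{(t)}_j$: by Cauchy--Schwarz and Lemma~\ref{lem:grad_vec_concentrates}, the extra contribution $|\langle\hat{\b{g}}^{(t)}_j-\b{g}^{(t)}_j,\b{A}^{(t)}_j-\b{A}^*_j\rangle|$ and the change in $\|\hat{\b{g}}^{(t)}_j\|^2$ are $o(\tfrac{s}{m}\epsilon_t)$-small and are again absorbed by halving the constants in $\rho_-,\rho_+$, still with $\zeta_t=0$.

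For the last step I would invoke the standard fact (cf.\ \citet{Arora15}, \citetalias{Rambhatla2019NOODL}): if $\b{g}$ is $(\rho_-,\rho_+,0)$-correlated with $\b{z}^*$ (viewed as a function of $\b{z}$) and $0<\eta\le 2\rho_+$, then expanding $\|\b{z}-\eta\b{g}-\b{z}^*\|^2$ and applying Def.~\ref{def:grad_alpha_beta} gives $\|\b{z}-\eta\b{g}-\b{z}^*\|^2\le(1-2\rho_-\eta)\|\b{z}-\b{z}^*\|^2$. Taking $\b{z}=\b{A}^{(t)}_j$, $\b{g}=\hat{\b{g}}^{(t)}_j$, $\b{z}^*=\b{A}^*_j$, $\rho_-=\Omega(s/m)$, $\rho_+=\Omega(m/s)$, and $\eta_A=\Theta(m/s)$ (with a small enough constant so that $\eta_A\le 2\rho_+$ and $2\rho_-\eta_A<1$) yields $\|\b{A}^{(t+1)}_j-\b{A}^*_j\|^2\le(1-\rho_-\eta_A)\|\b{A}^{(t)}_j-\b{A}^*_j\|^2$ for the pre-normalization iterate; the renormalization $\b{A}^{(t+1)}_j\mapsto\b{A}^{(t+1)}_j/\|\b{A}^{(t+1)}_j\|$ only decreases the distance to the unit vector $\b{A}^*_j$ (a short computation, valid because the un-normalized iterate stays within a small constant of the unit sphere and is positively aligned with $\b{A}^*_j$). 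Chaining over $t\in[T]$ needs the $(\epsilon_{t+1},2)$-nearness hypothesis re-established at each step, which follows from the column-wise contraction just shown together with $m=\mathcal{O}(n)$ and incoherence keeping the spectral deviation $\le 2\|\b{A}^*\|$ --- a companion ``nearness-maintained'' argument.

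I expect the main obstacle to be the $\zeta_t=0$ (no-bias) claim: one must verify that \emph{every} error term entering $\langle\hat{\b{g}}^{(t)}_j,\b{A}^{(t)}_j-\b{A}^*_j\rangle$ --- the shrinkage $\lambda^{(t)}_j$, the bias $\Delta^{(t)}_j$ propagated from the residual coefficient error $\vartheta^{(R)}$, the $\tilde\gamma$ perturbation of Lemma~\ref{grad_exp}, and the empirical fluctuation of Lemma~\ref{lem:grad_vec_concentrates} --- can be charged \emph{entirely} to the $\rho_-\|\cdot\|^2$ and $\rho_+\|\cdot\|^2$ buckets with nothing left over. This rests on the fact, inherited from the IHT analysis (Lemmas~\ref{iht:x_R_error}--\ref{iht:R_th_term}), that $\vartheta^{(R)}$, and hence $\Delta^{(t)}_j$, scales with the \emph{current} dictionary error $\epsilon_t$ and not with a fixed floor --- the self-correcting feature distinguishing \texttt{TensorNOODL} from the biased alternatives in Table~\ref{tab:compare_tensor}. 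A secondary technical point is careful handling of the expectations over the support distribution $\c{D}$ (Def.~\ref{dist_x}): one must use pairwise independence conditioned on the support (so that cross terms between distinct coordinates enter only at the $q_{i,j}$ rate) for the bucketing above to close.
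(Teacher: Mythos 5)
Your plan is correct and is essentially the route the paper itself takes: the paper does not prove Lemma~\ref{grad_corr} from scratch but defers to the analysis of \cite[Lemmas~5--9]{Rambhatla2019NOODL}, which is exactly the population-correlatedness $\to$ concentration (Lemma~\ref{lem:grad_vec_concentrates}) $\to$ standard descent-lemma chain you describe, including the absorption of the $\lambda^{(t)}_j$, $\Delta^{(t)}_j$, and $\tilde\gamma$ terms into the $\rho_-$/$\rho_+$ buckets to get $\zeta_t=0$. The obstacle you flag (every error term scaling with the current $\epsilon_t$ rather than a fixed floor) is indeed the crux of the unbiasedness claim and is precisely what the cited NOODL lemmas establish.
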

\vspace{-2pt}
This step also requires closeness that the estimate \[\b{A}^{(t)}\] and \[\b{A}^*\] are close, both column-wise and in the spectral norm-sense, as per Def~\ref{def:del_kappa}. To this end, we show that the updated dictionary matrix maintain the closeness property. For this, we first show that the gradient matrix concentrates around its mean in Lemma~\ref{lem:grad_mat}. 
\vspace{-2pt}
\begin{lemma}\textnormal{\textbf{(Concentration of the empirical gradient matrix)}}\label{lem:grad_mat}
With probability at least \[(1 - \delta_{\beta}^{(t)} - \delta_{\HT}^{(t)} - \delta_{\rm HW}^{(t)} - \delta_\gradmat^{(t)})\], 
	\[\|\hat{\b{g}}^{(t)} -  \b{g}^{(t)}\|\] is upper-bounded by \[ \mathcal{O}^*(\tfrac{s}{ m} \|\b{A}^*\| )\], 
		where \[\delta_{\gradmat}^{(t)} = (n+m)\exp(-\Omega(m\sqrt{\log(n)})\].
\end{lemma}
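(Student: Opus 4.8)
The plan is to adapt the vector concentration argument behind Lemma~\ref{lem:grad_vec_concentrates} to the spectral norm of the full $n\times m$ update, following \cite[Lemma~5--9]{Rambhatla2019NOODL} and the matrix concentration machinery of \cite{Arora15}. First I would condition on the high-probability event --- guaranteed by Lemma~\ref{lem:recover_sign} and Lemma~\ref{our:signed_supp} at a cost of $\delta_{\HT}^{(t)}+\delta_{\beta}^{(t)}$ --- on which every viable independent sample has the correct signed support, so that $\sgn(\hat{\b{X}}^{(t)}_{\rm indep})=\sgn(\b{X}^{*(t)}_{\rm indep})$. On this event, writing the update \eqref{eq:emp_grad_est} as $\hat{\b{g}}^{(t)}=\tfrac1p\sum_{j=1}^{p}\b{W}_j$ with $\b{W}_j=(\b{A}^{(t)}\hat{\b{x}}^{(j)}-\b{y}^{(j)})\,\sgn(\b{x}^{*(j)})^\top$, the summands over the $p$ viable samples (whose number is pinned down by Lemma~\ref{our samples}) are mutually independent: the independent-column selection of Section~\ref{sec:main_res} together with \ref{assumption:dist} makes the $\b{x}^{*(j)}$ independent, and on the good event $\hat{\b{x}}^{(j)}$ is a fixed function of $(\b{A}^{(t)},\b{x}^{*(j)})$. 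Since $\b{g}^{(t)}=\b{E}[\hat{\b{g}}^{(t)}]$ (its columns are exactly the expected gradients of Lemma~\ref{grad_exp}, whose $\Delta^{(t)}_j$ and $\pm\tilde\gamma$ terms already absorb the IHT bias), it remains only to control the purely stochastic deviation $\hat{\b{g}}^{(t)}-\b{g}^{(t)}=\tfrac1p\sum_j(\b{W}_j-\b{E}\b{W}_j)$.

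For this I would invoke a (truncated) matrix Bernstein-type inequality, which needs a per-sample magnitude bound and a matrix variance proxy. For the former, $\|\b{W}_j\|\le\|\b{A}^{(t)}\hat{\b{x}}^{(j)}-\b{y}^{(j)}\|\,\|\sgn(\b{x}^{*(j)})\|$, where $\|\sgn(\b{x}^{*(j)})\|=\sqrt{|S_j|}=\mathcal{O}(\sqrt s)$ by Lemma~\ref{our sparsity} and, using $\hat{\b{x}}^{(j)}_i=\b{x}^{*(j)}_i(1-\lambda^{(t)}_i)+\vartheta^{(R)}_i$ from Lemma~\ref{iht:R_th_term} (with $\vartheta^{(R)}$ bounded via Lemma~\ref{iht:x_R_error}), $\|\b{A}^{(t)}\hat{\b{x}}^{(j)}-\b{y}^{(j)}\| = \mathcal{O}^*(\epsilon_t\sqrt{s}\,\|\b{A}^*\|)$ up to logarithmic factors; subgaussianity of the nonzero coefficients supplies the truncation level. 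For the variance proxy I would bound $\max\{\|\sum_j\b{E}[\b{W}_j\b{W}_j^\top]\|,\|\sum_j\b{E}[\b{W}_j^\top\b{W}_j]\|\}$ using pairwise independence on the support, $\b{Pr}[i\in S]=\Theta(s/m)$, $\b{Pr}[i,j\in S]=\Theta(s^2/m^2)$, and $\|\b{A}^{(t)}\|=\mathcal{O}(\|\b{A}^*\|)=\mathcal{O}(\sqrt{m/n})$. Plugging these into matrix Bernstein with $p=\tilde{\Omega}(ms^2)$ samples (available since $\min(J,K)=\Omega(ms^2)$) yields $\|\hat{\b{g}}^{(t)}-\b{g}^{(t)}\|=\mathcal{O}^*(\tfrac{s}{m}\|\b{A}^*\|)$ with failure probability $(n+m)\exp(-\Omega(m\sqrt{\log n}))=\delta_\gradmat^{(t)}$; the extra $\delta_{\rm HW}^{(t)}$ covers the rare events discarded by the truncation, controlled via a Hanson--Wright estimate as in Lemma~\ref{lem:grad_vec_concentrates}.

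The main obstacle is exactly this unboundedness: since the nonzero entries of $\b{x}^{*(j)}$ are only subgaussian, $\|\b{W}_j\|$ is heavy-tailed and a naive matrix Bernstein bound does not apply. I would handle it by truncating each $\b{W}_j$ at a $\text{poly}(\log n)$ multiple of its typical size, showing via subgaussian/Hanson--Wright tail bounds that the truncated part coincides with $\b{W}_j$ for every $j$ with probability $1-\delta_{\rm HW}^{(t)}$, and applying matrix concentration only to the bounded truncated summands (whose means differ from $\b{E}\b{W}_j$ by an amount absorbed into $\mathcal{O}^*(\cdot)$). A secondary subtlety is the data-dependence of $\hat{\b{x}}^{(j)}$ through the $R$ IHT iterations; conditioning on correct signed support collapses this to a smooth function on a fixed support, so both the independence across $j$ and the per-term bounds follow from Lemmas~\ref{iht:x_R_error}--\ref{iht:R_th_term}. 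Collecting the failure probabilities from the signed-support events, the truncation/Hanson--Wright step, and the matrix Bernstein step gives the stated $(1-\delta_{\beta}^{(t)}-\delta_{\HT}^{(t)}-\delta_{\rm HW}^{(t)}-\delta_\gradmat^{(t)})$ guarantee.
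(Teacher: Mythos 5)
Your proposal is correct and follows essentially the same route the paper relies on: the paper does not prove this lemma in the appendix but imports it from \cite[Lemma~5--9]{Rambhatla2019NOODL}, where the argument is precisely a truncated matrix-Bernstein bound applied to the summands $(\b{A}^{(t)}\hat{\b{x}}^{(j)}-\b{y}^{(j)})\sgn(\b{x}^{*(j)})^\top$ after conditioning on correct signed-support recovery, with the variance proxy controlled via $\b{Pr}[i\in S]=\Theta(s/m)$, $\b{Pr}[i,j\in S]=\Theta(s^2/m^2)$ and the subgaussian tails handled by truncation plus a Hanson--Wright correction. The per-sample bound, the $p=\tilde{\Omega}(ms^2)$ sample count, the $(n+m)\exp(-\Omega(m\sqrt{\log n}))$ failure probability, and the final $\mathcal{O}^*(\tfrac{s}{m}\|\b{A}^*\|)$ bound you obtain all match the cited argument.
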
 
\vspace{-2pt}
 Further, the closeness property is maintained, as shown below. 
 \vspace{-2pt}
\begin{lemma}\textnormal{\textbf{(\[\b{A}^{(t+1)}\] maintains closeness)}}\label{lem:closeness}
	Suppose \[\b{A}^{(t)}\] is \[(\epsilon_t, 2)\] near to \[\b{A}^*\] with \[\epsilon_t = \mathcal{O}^*(1/\log(n))\], and number of samples used in step \[t\] is \[p = \tilde{\Omega}(ms^2)\], then with probability at least \[(1 - \delta_{\HT}^{(t)} - \delta_{\beta}^{(t)} 
	- \delta_{\rm HW}^{(t)}- \delta_{\gradmat}^{(t)})\], \[\b{A}^{(t+1)}\] satisfies \[\|\b{A}^{(t+1)} - \b{A}^*\| \leq 2\|\b{A}^*\|\].
\end{lemma}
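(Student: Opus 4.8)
The plan is to first bound the pre-normalization iterate $\widetilde{\b{A}}^{(t+1)} := \b{A}^{(t)} - \eta_A\hat{\b{g}}^{(t)}$ in the spectral norm, showing that the update acts on the ball $\{\b{A}:\|\b{A}-\b{A}^*\|\le 2\|\b{A}^*\|\}$ as a constant‑factor contraction toward $\b{A}^*$ plus a small additive perturbation, and then to check that the column‑normalization step in Alg.~\ref{alg:main_alg_tens} moves $\widetilde{\b{A}}^{(t+1)}$ by only $\mathcal{O}(\epsilon_t\|\b{A}^*\|)$, which the slack absorbs. This is the operator‑norm analogue of the ``maintains nearness'' step for dictionary learning; the new ingredient is that we must use the \emph{matrix} (not column‑wise) versions of Lemmas~\ref{grad_exp}--\ref{lem:grad_mat}.

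For the first part I would split $\hat{\b{g}}^{(t)} = \b{g}^{(t)} + (\hat{\b{g}}^{(t)} - \b{g}^{(t)})$ with $\b{g}^{(t)} = \b{E}[\hat{\b{g}}^{(t)}]$, and assemble the column‑wise identity of Lemma~\ref{grad_exp}, $\b{g}^{(t)}_j = q_jp_j\big((1-\lambda^{(t)}_j)\b{A}^{(t)}_j - \b{A}^*_j\big) + \Delta^{(t)}_j \pm q_jp_j\tilde{\gamma}$, into the matrix form $\b{g}^{(t)} = (\b{A}^{(t)}-\b{A}^*)\b{D}_{(q_jp_j)} - \b{A}^{(t)}\b{D}_{(q_jp_j\lambda^{(t)}_j)} + \Delta^{(t)} + \b{N}$, where $\b{D}_{(\cdot)}$ is the diagonal matrix of the indicated entries and $\b{N}$ is entrywise of magnitude $\mathcal{O}(\tfrac{s}{m}\tilde{\gamma})$. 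Substituting yields
\begin{align*}\widetilde{\b{A}}^{(t+1)} - \b{A}^* &= (\b{A}^{(t)} - \b{A}^*)\big(\b{I} - \eta_A\b{D}_{(q_jp_j)}\big) + \eta_A\b{A}^{(t)}\b{D}_{(q_jp_j\lambda^{(t)}_j)} \\ &\quad - \eta_A\Delta^{(t)} - \eta_A\b{N} - \eta_A(\hat{\b{g}}^{(t)} - \b{g}^{(t)}).\end{align*}
Then I would bound each piece in spectral norm: (i) $\|\b{I}-\eta_A\b{D}_{(q_jp_j)}\| = \max_j|1-\eta_A q_jp_j| \le 1-\Omega(1)$ since $\eta_A=\Theta(m/s)$, $q_j=\Theta(s/m)$, $p_j=\Theta(1)$ make $\eta_A q_jp_j$ a constant in $(0,1)$ by~\ref{assumption:step dict} (consistent with the contraction in Lemma~\ref{grad_corr}); (ii) using $\lambda^{(t)}_j\le\epsilon_t^2/2$ and $\|\b{A}^{(t)}\|\le 3\|\b{A}^*\|$ (from the $(\epsilon_t,2)$‑closeness hypothesis), $\eta_A\|\b{A}^{(t)}\|\,\|\b{D}_{(q_jp_j\lambda^{(t)}_j)}\| = \mathcal{O}(\epsilon_t^2\|\b{A}^*\|)$; (iii) from $\|\Delta^{(t)}_j\| = \mathcal{O}(\sqrt{m}\,q_{i,j}p_j\epsilon_t\|\b{A}^{(t)}\|)$ in Lemma~\ref{grad_exp}, $\eta_A\|\Delta^{(t)}\| = \mathcal{O}\big(\tfrac{m}{s}\sqrt m\tfrac{s^2}{m^2}\epsilon_t\|\b{A}^*\|\big) = \mathcal{O}\big(\tfrac{s}{\sqrt m}\epsilon_t\|\b{A}^*\|\big) = \mathcal{O}(\epsilon_t\|\b{A}^*\|)$ using $s=\mathcal{O}(\sqrt n)$, $m=\Theta(n)$; (iv) $\eta_A\|\b{N}\|$ is super‑polynomially small; and (v) by Lemma~\ref{lem:grad_mat}, $\eta_A\|\hat{\b{g}}^{(t)}-\b{g}^{(t)}\| \le \tfrac{m}{s}\cdot\mathcal{O}^*\big(\tfrac{s}{m}\|\b{A}^*\|\big) = \mathcal{O}^*(\|\b{A}^*\|)$, with the hidden constant made as small as needed by taking the implied constant in $p=\tilde{\Omega}(ms^2)$ large.

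Combining via the triangle inequality with $\|\b{A}^{(t)}-\b{A}^*\|\le 2\|\b{A}^*\|$ gives $\|\widetilde{\b{A}}^{(t+1)}-\b{A}^*\| \le (1-\Omega(1))\cdot2\|\b{A}^*\| + \mathcal{O}(\epsilon_t\|\b{A}^*\|) + \mathcal{O}^*(\|\b{A}^*\|) \le (2-\Omega(1))\|\b{A}^*\|$, since the $\Omega(1)\cdot2\|\b{A}^*\|$ headroom from the contraction dominates once $\epsilon_t=\mathcal{O}^*(1/\log n)$ is small and the concentration constant is small. For the normalization step, write $\b{A}^{(t+1)} = \widetilde{\b{A}}^{(t+1)}\b{D}_{\rm norm}$ with $\b{D}_{\rm norm}$ diagonal, $i$‑th entry $1/\|\widetilde{\b{A}}^{(t+1)}_i\|$; the column‑wise bounds of Lemmas~\ref{grad_exp} and~\ref{lem:grad_vec_concentrates} give $\|\widetilde{\b{A}}^{(t+1)}_i-\b{A}^*_i\| = \mathcal{O}(\epsilon_t)$, hence (using $\|\b{A}^*_i\|=1$) $\|\b{D}_{\rm norm}-\b{I}\| = \mathcal{O}(\epsilon_t)$, so $\|\b{A}^{(t+1)}-\b{A}^*\| \le \|\widetilde{\b{A}}^{(t+1)}-\b{A}^*\| + \|\widetilde{\b{A}}^{(t+1)}\|\,\|\b{D}_{\rm norm}-\b{I}\| \le (2-\Omega(1))\|\b{A}^*\| + \mathcal{O}(\epsilon_t\|\b{A}^*\|) \le 2\|\b{A}^*\|$. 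The failure probability is the union bound over the events of Lemmas~\ref{grad_exp}, \ref{lem:grad_mat}, and the IHT‑stage support guarantees, i.e.\ $\delta_{\HT}^{(t)}+\delta_{\beta}^{(t)}+\delta_{\rm HW}^{(t)}+\delta_{\gradmat}^{(t)}$.

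The hard part is the quantitative budgeting in (iii) and (v): the step‑size $\eta_A=\Theta(m/s)$ is deliberately large---it has to be, to get linear convergence in the companion Lemma~\ref{grad_corr}---so multiplying it against the bias/second‑order term $\Delta^{(t)}$ and against the sampling fluctuation of $\hat{\b{g}}^{(t)}$ threatens to push the spectral norm past $2\|\b{A}^*\|$. Showing these products are $o(\|\b{A}^*\|)$ and (respectively) a controllably small constant times $\|\b{A}^*\|$ is precisely what pins down $m=\Theta(n)$, $s=\mathcal{O}(\sqrt n)$, and $p=\tilde{\Omega}(ms^2)$, and it requires upgrading the earlier column‑wise estimates to genuine operator‑norm statements---the operator‑norm control of $\Delta^{(t)}$ and of $\hat{\b{g}}^{(t)}-\b{g}^{(t)}$ being the technically delicate steps.
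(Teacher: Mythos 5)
Your decomposition is essentially the one the paper relies on: the paper does not prove this lemma in the appendix but defers it to the NOODL analysis (``we follow analysis similar to [Rambhatla2019NOODL, Lemma 5--9]''), and that argument is exactly your split $\b{A}^{(t)}-\b{A}^*-\eta_A\b{g}^{(t)}-\eta_A(\hat{\b{g}}^{(t)}-\b{g}^{(t)})$, with the diagonal contraction from $\b{I}-\eta_A\b{D}_{(q_jp_j)}$ absorbing the $\mathcal{O}^*(\|\b{A}^*\|)$ concentration term from Lemma~\ref{lem:grad_mat} and the small bias terms, followed by the observation that column renormalization perturbs the iterate by only $\mathcal{O}(\epsilon_t\|\b{A}^*\|)$. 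Your items (i), (ii), (v), the union bound over failure events, and the treatment of $\b{D}_{\rm norm}$ all match.

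The one step that does not go through as written is (iii). You pass from the column bound $\|\Delta^{(t)}_j\|=\mathcal{O}(\sqrt{m}\,q_{i,j}p_j\epsilon_t\|\b{A}^{(t)}\|)$ to $\|\Delta^{(t)}\|=\mathcal{O}(\sqrt{m}\,q_{i,j}p_j\epsilon_t\|\b{A}^{(t)}\|)$, i.e.\ you bound the spectral norm of the matrix by the maximum column norm. That inequality is false in general; the generic route is $\|\Delta^{(t)}\|\le\|\Delta^{(t)}\|_{\rm F}\le\sqrt{m}\max_j\|\Delta^{(t)}_j\|$, which costs an extra $\sqrt{m}$ and turns your estimate into $\eta_A\|\Delta^{(t)}\|=\mathcal{O}(s\,\epsilon_t\|\b{A}^*\|)$ --- with $s=\mathcal{O}(\sqrt{n}/\mu\log n)$ and $\epsilon_t=\mathcal{O}^*(1/\log n)$ this is \emph{not} $o(\|\b{A}^*\|)$ and would blow the $2\|\b{A}^*\|$ budget. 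The fix is not a norm inequality but the structural argument from the NOODL/Arora-style analysis: the columns $\Delta^{(t)}_j=\b{E}[\b{A}^{(t)}_S\vartheta^{(R)}_S\sgn(\b{x}^*_j)]$ all factor through the same matrix $\b{A}^{(t)}$ acting on sparse supports, so the stacked matrix can be written as $\b{A}^{(t)}\b{Q}$ for a matrix $\b{Q}$ whose entries carry the $q_{i,j}=\Theta(s^2/m^2)$ co-occurrence probabilities, and bounding $\|\b{Q}\|$ directly (rather than column-by-column) recovers the needed $\eta_A\|\Delta^{(t)}\|=o(\|\b{A}^*\|)$. You correctly flag the operator-norm upgrade as the delicate step, but the arithmetic you give for it is the part that actually needs the work.
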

\vspace{-2pt}
Therefore, the recovery of factor \[\b{A}^*\], and the sparse-structured matrix \[\b{X}^*\] suceeds with probability \[\delta_{\text{NOODL}}^{(t)} =  \delta_{\HT}^{(t)}  + \delta_{\beta}^{(t)} + \delta_{\rm HW} +\delta_{\gradvec}^{(t)} + \delta_{\gradmat}^{(t)}\], where \[\delta_{\HT}^{(t)} = 2m~{\exp}({-{C^2}/{\mathcal{O}^*(\epsilon_t^2)}})\], \[\delta_\beta^{(t)} = 2s~{\exp}(-{1}/{\mathcal{O}(\epsilon_t)})\], \[\delta_{\rm HW}^{(t)} = \exp(-{1}/{\mathcal{O}(\epsilon_t)})\], \[\delta_{\gradvec}^{(t)} =  \exp(-\Omega(s))\], \[\delta_{\gradmat}^{(t)} = (n+m)\exp(-\Omega(m\sqrt{\log(n)})\]. 

Further, from Lemma~\ref{our sparsity}, we have that the columns of \[\b{S}^{*(t)}\] are \[s = \c{O}(\alpha \beta m)\] sparse with probability \[(1 - \delta_s^{(t)})\], where \[\delta_s^{(t)} = \min(J,K)\exp( {-{\epsilon^2 \alpha \beta m}/{2(1 + \epsilon/3)}})\] for any \[\epsilon > 0\], and that with probability at least \[(1 - \delta_p)\], the number of data samples \[p =  L(1 - (1 - \gamma)^m)\], where \[\delta_{p}^{(t)} = \exp ({-\frac{\epsilon^2}{2}L(1 - (1 - \gamma)^m)})\] using Lemma~\ref{our sparsity}. Furthermore, from Lemma~\ref{recover krp}, we know that Alg.~\ref{algo:KRP_algo} (which only relies on recovery of \[\b{X}^{*(t)}\]) succeeds in recovering \[\b{B}^{*(t)}\] and \[\b{C}^{*(t)}\] (upto permutation and scaling) with probability \[(1 - \delta_{\b{B}_i}^{(t)})\], where \[\delta_{\b{B}_i}^{(t)} = \exp ({-\frac{\epsilon^2 J\alpha}{2(1 + \epsilon/3)}}) \] for any \[\epsilon>0\]. Combining all these results we have that, Alg.~\ref{alg:main_alg_tens} succeeds with probability \[(1 - \delta_{alg})\], where \[\delta_{alg} = \delta_{\rm s} + \delta_{p}^{(t)} + \delta_{\b{B}_i}^{(t)} + \delta_{\rm NOODL} \]. Also, the total run time of the algorithm is \[\c{O}(mnp\log(1/\delta_R) \max(\log(1/\epsilon_T),\log(\sqrt(s)/\delta_T))\] for \[p = \Omega(ms^2)\]. Hence, our main result. 

\noindent\textbf{A note on independent sample requirement:} Since the IHT-based coefficient operates independently on each column of \[\b{Y}^{(t)}\] (the non-zero columns of \[\b{Z}_1^{(t)}\top\]), the dependence structure of \[\b{S}^{*(t)}\] does not affect this stage. For the dictionary update (in theory) we only use the independent columns of \[\b{Y}^{(t)}\], these can be inferred using \[J\] and \[K\], and corresponding induced transposed Khatri-Rao structure. In practice, we don't need to throw away any samples, this is purely to ensure that the independence assumption holds for our finite sample analysis of the algorithm.
\end{proof}

%\vspace{-10pt}

\section{Proof of Intermediate Results}\label{app:tensor}
   \paragraph{Lemma~\ref{our sparsity}}
   \textit{If \[m = \Omega(\log({\min(J,K)})/{\alpha \beta})\] then with probability at least \[(1-\delta_s^{(t)})\] the number of non-zeros, \[s\], in a column of \[\b{S}^{*(t)}\] are upper-bounded as \[s = \c{O}(\alpha \beta m)\], where \[\delta_s^{(t)} = \min(J,K)\exp( -{\epsilon^2 \alpha \beta m}/2(1+ \epsilon/3))\] for any \[\epsilon > 0\].}
\vspace{10pt}

\begin{proof}\textbf{of Lemma~\ref{our sparsity}}
Consider a column of the transposed Khatri-Rao structured matrix \[\b{S}^{*(t)}\] defined as \[\b{S}^{*(t)} =(\b{C}^{*(t)}\odot \b{B}^{*(t)})^\top\]. Here, since the entries of factors \[\b{B}^{*(t)}\] and \[\b{C^{*(t)}}\] are independently non-zero with probability \[\alpha\] and \[\beta\], respectively, each entry of a column of \[\b{S}^{*(t)}\] is independently non-zero with probability \[\gamma = \alpha \beta\], i.e., \[\mathbbm{1}_{|\b{S}^{*(t)}_{ij}|>0}\sim\rm{Bernoulli}(\gamma)\]. As a result, the number of non-zero elements in a column of \[\b{S}^{*(t)}\] are \[\rm{Binomial}(m, \gamma)\].

\noindent Now, let \[\b{s}_{ij}\] be the indicator for the \[(i,j)\] element of \[\b{S}^{*(t)}\] being non-zero, defined as
	\begin{align*}
	\b{s}_{ij} = \mathbbm{1}_{|\b{S}^{*(t)}_{ij}|>0}.
	\end{align*}
Then, the expected number of non-zeros (sparsity) in the \[j\]-th column of \[\b{S}^{*(t)}\] are given by
	\begin{align*}
	\mathbf{E}[ {\textstyle\sum_{ i = 1}^{m}} \b{s}_{ij} ]  = \gamma m.
	\end{align*}
	Since, \[\gamma\] can be small, we use Lemma~\ref{theorem:rel_chern}(a) \citep{Mcdiarmid98} to derive an upper bound on the sparsity for each each column as 
	\begin{align*}
	\mathbf{Pr}[\textstyle \sum_{ i = 1}^{m} \b{s}_{ij} \geq (1 + \epsilon)\gamma m ] \leq \exp ({-\frac{\epsilon^2 \gamma m}{2(1 + \epsilon/3)}}).
	\end{align*}
	for any \[\epsilon>0\].
	%Therefore, with high probability the number of non-zero elements in a column are upper bounded by\[(1+\epsilon)\gamma m\]. 
	 Union bounding over \[L =\min(J,K)\] independent columns of \[\b{S}^{*(t)}\].
	\begin{align*}
	\mathbf{Pr}[~\textstyle\bigcup_{ j = 1}^{L} (\textstyle \sum_{ i = 1}^{m} \b{s}_{ij} \leq (1 + \epsilon)\gamma m) ] \geq 1 -  L\exp({-\tfrac{\epsilon^2 \gamma m}{2(1 + \epsilon/3)}}).
	\end{align*}%
	%We would like the quantity $Le^{-\frac{\epsilon^2 \gamma m}{2(1 + \epsilon/3)}}$ to be smaller than some $\delta$,
	Therefore, we conclude that if  \[m = \Omega({\log(L)}/{\gamma})\] then with probability \[(1-\delta_{s})\] the expected number of non-zeros in a column of \[\b{S}^{*(t)}\] are \[\c{O}(\gamma m)\], where \[\delta_{s} = L\exp( {-{\epsilon^2 \gamma m}/{2(1 + \epsilon/3)}})\].
	
\end{proof}
\vspace{-20pt}
\paragraph{Lemma~\ref{our samples}}
	For any \[\epsilon>0\] suppose we have 
	\begin{align*}
		L \geq \tfrac{2}{(1 - (1 - \gamma)^m)\epsilon^2}\log(\tfrac{1}{\delta_{p}^{(t)}}),
	\end{align*}
	for \[L = \min(J,K)\] and \[\gamma = \alpha\beta\], then with probability at least \[(1 - \delta_p)\], 
	\begin{align*}
	p =  L(1 - (1 - \gamma)^m),
	\end{align*}
	where \[\delta_{p}^{(t)} = \exp ({-\frac{\epsilon^2}{2}L(1 - (1 - \gamma)^m)})\].
	
	\vspace{10pt}
	
\begin{proof}\textbf{of Lemma~\ref{our samples}}
	%here, we estimate the number of non-zero columns of \[\b{S}\].
	We begin by evaluating the probability that a column of \[\b{S}^{*(t)}\] has a non-zero element. Let \[\b{s}_{ij}\] be the indicator for the \[(i,j)\] element of \[\b{S}^{*(t)}\] being non-zero, defined as
		\begin{align*}
		\b{s}_{ij} = \mathbbm{1}_{|\b{S}^*_{ij}|>0}.
		\end{align*}
	Further, let \[w_j\] denote the number of non-zeros in the \[j\]-th column of \[\b{S}^{*(t)}\], defined as
	\begin{align*}
	w_j = \textstyle\sum_{ i = 1}^{m} \b{s}_{ij}.
	\end{align*}
	Since each element of a column of \[\b{S}^{*(t)}\] is non-zero with probability \[\gamma\], the probability that the \[j\]-th column of \[\b{S}^{*(t)}\] is an all zero vector is,
	\begin{align*}
	\mathbf{Pr}[w_j  = 0]  = (1 - \gamma)^m. 
	\end{align*}
	Therefore, the probability that the \[j\]-th column of \[\b{S}^{*(t)}\] has at least one non-zero element is given by
	\begin{align}\label{eq:pr_non_zero}
	\mathbf{Pr}[w_j  > 0]  = 1 - (1 - \gamma)^m. 
	\end{align}
	Now, we are interested in the number of columns with at least one non-zero element among the \[L = \min(J, K)\] independent columns of \[\b{S}^{*(t)}\], which we denote by \[p\]. Specifically, we analyze the following sum %The quantity of our interest is the number of samples, $p$, which can be written as
	\begin{align*}
	p = \textstyle \sum_{ j = 1}^{L} \mathbbm{1}_{w_j>0}.
	\end{align*}
	Next, using \eqref{eq:pr_non_zero} \[\mathbf{E}[p] =  L(1 - (1 - \gamma)^m)\].
	Applying the result stated Lemma~\ref{theorem:rel_chern} (b),
	\begin{align*} 
	\mathbf{Pr}\left[\textstyle \sum\limits_{ j = 1}^{L} \mathbbm{1}_{w_j} \leq  (1 - \epsilon) \mathbf{E}[p]\right] \leq \exp ({-\tfrac{\epsilon^2 E[p]}{2}}) := \delta_{p}^{(t)}.
	\end{align*}
	Therefore, if for any \[\epsilon>0\] we have 
	\vspace{-0pt}
	\begin{align*} 
		L \geq \tfrac{2}{(1 - (1 - \gamma)^m)\epsilon^2}\log\left(\tfrac{1}{\delta_{p}^{(t)}}\right)
	\end{align*}
		\vspace{-0pt}
	then with probability at least \[(1 - \delta_p)\], \[p =  L(1 - (1 - \gamma)^m)\], where \[\delta_{p}^{(t)} = \exp ({-\frac{\epsilon^2}{2}L(1 - (1 - \gamma)^m)})\].
\end{proof}
\vspace{-10pt}
\paragraph{Lemma~\ref{recover krp}}
	Suppose the input \[\hat{\b{S}}^{(t)}\] to Alg.~\ref{algo:KRP_algo} is entry-wise \[\zeta\] close to \[\b{S}^{*(t)}\], i.e., \[|\hat{\b{S}}^{(t)}_{ij}-\b{S}^{*(t)}_{ij}|\leq \zeta\] and has the correct signed-support as  \[\b{S}^{*(t)}\]. Then with probability atleast \[(1-\delta_{\rm IHT}^{(t)}  - \delta_{\b{B}_i}^{(t)})\], both \[\hat{\b{B}}_i^{(t)} \] and \[\hat{\b{C}}_i^{(t)} \] have the correct support, and  \[	\left\|\tfrac{\b{B}_i^{*(t)}}{\|\b{B}_i^{*(t)}\|} - \pi_i \tfrac{\hat{\b{B}}_i^{(t)}}{\|\hat{\b{B}}_i^{(t)}\|} \right\| = \c{O}(\zeta^2)\] and \[	\left\|\tfrac{\b{C}_i^{*(t)}}{\|\b{C}_i^{*(t)}\|} - \pi_i \tfrac{\hat{\b{C}}_i^{(t)}}{\|\hat{\b{C}}_i^{(t)}\|} \right\| = \c{O}(\zeta^2)\], where \[\delta_{\rm IHT}^{(t)}  =  2m~{\exp}({-\tfrac{C^2}{\mathcal{O}^*(\epsilon_t^2)}}) + 2s~{\exp}(-\tfrac{1}{\mathcal{O}(\epsilon_t)})\] for \[\|\b{A}_i^{(t)} - \b{A}_i^*\|\leq \epsilon_t\], and \[\delta_{\b{B}_i}^{(t)} = \exp ({-\frac{\epsilon^2 J\alpha}{2(1 + \epsilon/3)}}) \] for any \[\epsilon>0\].
	
	\vspace{10pt}
\begin{proof}\textbf{of Lemma~\ref{recover krp}}
The Iterative Hard Thresholding (IHT) results in an estimate of \[\b{X}^{*(t)}\] which has the correct signed support \cite{Rambhatla2019NOODL}. As a result, putting back the columns of \[\hat{\b{X}}^{(t)}\] at the respective non-zero column locations of \[\b{Z}_1^{(t)\top}\], we arrive at the estimate \[\hat{\b{S}}^{(t)}\] of \[\b{S}^{*(t)}\], which has the correct signed-support, we denote this estimate by \[\hat{\b{S}}^{(t)}\]. 
To recover the estimates \[\hat{\b{B}}^{(t)}\] and \[\hat{\b{C}}^{(t)}\], we use a SVD-based procedure. Specifically, we note that, 
\begin{align*}
\b{S}_{i,:}^{*(t)\top} = \b{C}_i^{*(t)}\otimes\b{B}_i^{*(t)}  = vec({\b{B}_i^{*(t)}}{\b{C}_i}^{*(t)\top})
\end{align*}
As a result, the left and right singular vectors of the rank-1 matrix \[\b{B}_i^{*(t)}\b{C}_i^{*(t)\top}\] are the columns \[\b{B}_i^{*(t)}\] and \[\b{C}_i^{*(t)}\], respectively (up to scaling). 

Let \[\b{M}^{(i)}\] denote the \[J \times K\] matrix formed by reshaping the vector \[\hat{\b{S}}_{i,:}^{(t)\top}\]. We choose the appropriately scaled left and right singular vectors corresponding to the largest singular value of \[\b{M}^{(i)}\] as our estimates \[\hat{\b{B}}_i^{(t)}\] and \[\hat{\b{C}}_i^{(t)}\], respectively.

First, notice that since \[\hat{\b{S}}_{i,:}^{(t)\top}\] has the correct sign and support (due to Lemma~\ref{our:signed_supp}), the support of matrix \[\b{M}^{(i)}\] is the same as \[\b{B}_i^{*(t)}\b{C}_i^{*(t)\top}\]. As a result, the estimates \[\hat{\b{B}}_i^{(t)}\] and \[\hat{\b{C}}_i^{(t)}\] have the correct support, and the error is only due to the scaling ambiguity on the support. This is due to the fact that the principal singular vectors (\[\b{u}\] and \[\b{v}\]) align with the sparsity structure of \[\b{M}^{(i)}\] as they solve the following  maximization problem also known as variational characterization of svd,
\begin{align*}
\sigma_1^2 = \underset{\|\b{u}\|=1}{\max} \b{u}^\top \b{M}^{(i)}{\b{M}^{(i)}}^\top \b{u}  = \underset{\|\b{v}\|=1}{\max} \b{v}^\top {\b{M}^{(i)}}^\top\b{M}^{(i)} \b{v},
\end{align*}
where \[\sigma_1\] denotes the principal singular value. Therefore, since \[\b{M}^{(i)}\] has the correct sparsity structure as \[{\b{B}_i^{*(t)}}{\b{C}_i}^{*(t)\top}\] the resulting \[\b{u}\] and \[\b{v}\] have the correct supports as well.  Here, \[\b{u}\] and \[\b{v}\] can be viewed as the normalized versions of \[\hat{\b{B}}_i^{(t)}\] and \[\hat{\b{C}}_i^{(t)}\], respectively, i.e., \[\b{u} = \hat{\b{B}}_i^{(t)}/\|\hat{\b{B}}_i^{(t)}\|\]  and \[\b{v} = \hat{\b{C}}_i^{(t)}/\|\hat{\b{C}}_i^{(t)}\|\].

Let \[\b{E} = \b{M}^{(i)}- \b{B}_i^{*(t)}\b{C}_i^{*(t)\top}\], now since \[|\hat{\b{S}}_{ij}^{(t)}-\b{S}_{ij}^{*(t)}|\leq \zeta\] and, from Lemma~\ref{our:signed_supp})  \[\hat{\b{S}}_{i,:}^{(t)}\] has the correct signed-support with probability \[(1 - \delta_{\rm IHT}^{(t)} )\], where \[\delta_{\rm IHT}^{(t)}  =  2m~{\exp}({-\tfrac{C^2}{\mathcal{O}^*(\epsilon_t^2)}}) + 2s~{\exp}(-\tfrac{1}{\mathcal{O}(\epsilon_t)})\], and further using Claim~\ref{lem:nonzero_in_a_row}, we have that the expected number of non-zeros in  \[\hat{\b{S}}_{i,:}^{(t)}\] are \[JK\alpha\beta\], with probability atleast \[(1 - \delta_{\b{B}_i}^{(t)})\], where \[ \delta_{\b{B}_i}^{(t)} = \exp ({-\frac{\epsilon^2 J\alpha}{2(1 + \epsilon/3)}}) \] for some \[\epsilon>0\], we have
	\begin{align*}
	\|\b{E} \|\leq \|\b{E} \|_{\rm F} \leq \sqrt{JK\alpha\beta}\zeta,
	\end{align*}
Then, using the result in \cite{Yu14}, and noting that \[\sigma_1(\b{B}_i^{(t)}\b{C}_i^{(t)\top}) = \|\b{B}_i^{(t)}\|\|\b{C}_i^{(t)}\|\] and letting \[\pi_i \in \{-1, 1\}\] (to resolve the sign ambiguity), we have that 
	\begin{align*}
	\left\| \tfrac{\b{B}_i^{*(t)}}{\|\b{B}_i^{*(t)}\|}- \pi_i \b{u}\right\| = \left\| \tfrac{\b{B}_i^{*(t)}}{\|\b{B}_i^{*(t)}\|} - \pi_i \tfrac{\hat{\b{B}}_i^{(t)}}{\|\hat{\b{B}}_i^{(t)}\|}\right\|\leq  \tfrac{2^{3/2}(2 \|\b{B}_i^{(t)}\|\|\b{C}_i^{(t)}\|+\sqrt{JK\alpha\beta}\zeta)\sqrt{JK\alpha\beta}\zeta}{ \|\b{B}_i^{(t)}\|^2\|\b{C}_i^{(t)}\|^2}.
	\end{align*}
	Next, since \[\mathbf{E}[(\b{B}_{ij}^{(t)})^2 |(i, j) \in \supp(\b{B}^{(t)}) ] = 1\] as per our distributional assumptions \textbf{Def.\ref{def:dist_bc}}, we have 
	\begin{align*}
	\mathbf{E}[\|\b{B}_{ji}^{*(t)}\|^2] = \mathbf{E}[(\b{B}_{ji}^{*(t)})^{2}|(j,i) \in \supp(\b{B}^{*(t)})]\mathbf{Pr}[(j, i) \in \supp(\b{B}^{*(t)})] + 0.\mathbf{Pr}[( j, i) \notin \supp(\b{B}^{*(t)})] = \alpha
	\end{align*}
	Similarly, \[\mathbf{E}[\|\b{C}_{ji}^{*(t)}\|^2] = \beta\]. Substituting,
	\begin{align*}
		\left\|\tfrac{\b{B}_i^{*(t)}}{\|\b{B}_i^{*(t)}\|} - \pi_i \tfrac{\hat{\b{B}}_i^{(t)}}{\|\hat{\b{B}}_i^{(t)}\|} \right\|\leq  \tfrac{2^{3/2}(2 \sqrt{JK\alpha \beta }+\sqrt{JK\alpha\beta}\zeta)\sqrt{JK\alpha\beta}\zeta}{JK\alpha\beta}  = \c{O}(\zeta^2).
	\end{align*}
%	Now, since \[{\rm sin}~\Theta\] 
\end{proof}
\vspace{-10pt}
\begin{claim}\label{lem:nonzero_in_a_row}
Suppose \[ J = \Omega(\tfrac{1}{\alpha}))\], then with probability at least \[(1-\delta_{\b{B}_i}^{(t)})\],
\begin{align*}
\textstyle\sum_{j=1}^{JK}\supp(\b{S}^*(i,j))  = JK\alpha\beta,
\end{align*}
where \[\delta_{\b{B}_i}^{(t)} = \exp ({-\frac{\epsilon^2 J\alpha}{2(1 + \epsilon/3)}}) \] for any \[\epsilon>0\].
\end{claim}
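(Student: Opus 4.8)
The plan is to exploit the transposed Khatri--Rao structure of $\b{S}^{*(t)}$ (see \eqref{eq:dep_str} and Fig.~\ref{fig:krp_row}) to reduce the count to a product of two independent binomials. First I would note that the $i$-th row of $\b{S}^{*(t)}$ is made up of $K$ length-$J$ blocks, the $k$-th of which equals the scalar multiple $\b{C}^{*(t)}_{k,i}\,\b{B}^{*(t)}_i$; hence the entry in position $l$ of block $k$ is non-zero precisely when $\b{C}^{*(t)}_{k,i}\neq 0$ \emph{and} $\b{B}^{*(t)}_{l,i}\neq 0$. Consequently
\begin{align*}
\textstyle\sum_{j=1}^{JK}\mathbbm{1}\{\b{S}^{*(t)}(i,j)\neq 0\} \;=\; \|\b{C}^{*(t)}_i\|_0\,\|\b{B}^{*(t)}_i\|_0 ,
\end{align*}
and by Definition~\ref{def:dist_bc} together with Assumption~\ref{assumption:dist}, $\|\b{B}^{*(t)}_i\|_0\sim\mathrm{Binomial}(J,\alpha)$ and $\|\b{C}^{*(t)}_i\|_0\sim\mathrm{Binomial}(K,\beta)$ are independent, with means $J\alpha$ and $K\beta$, so the expected row sparsity is $JK\alpha\beta$.

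Next I would apply the relative Chernoff bound (Lemma~\ref{theorem:rel_chern}) separately to each of the two binomial factors, exactly as was done for the column sparsity in Lemma~\ref{our sparsity}. Since $J=\Omega(1/\alpha)$ forces $J\alpha=\Omega(1)$, part~(a) gives, for any $\epsilon>0$,
\begin{align*}
\mathbf{Pr}\big[\,\|\b{B}^{*(t)}_i\|_0 \geq (1+\epsilon)J\alpha\,\big] \;\leq\; \exp\!\big(-\tfrac{\epsilon^2 J\alpha}{2(1+\epsilon/3)}\big)\;=\;\delta_{\b{B}_i}^{(t)},
\end{align*}
and the lower tail (part~(b)) is no larger; the identical argument for $\|\b{C}^{*(t)}_i\|_0$ with $K\beta$ in place of $J\alpha$ gives failure probability $\delta_{\b{C}_i}^{(t)}=\exp(-\epsilon^2 K\beta/(2(1+\epsilon/3)))$. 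On the intersection of the two good events the product lies within a $(1\pm\epsilon)^2$ band of $JK\alpha\beta$, i.e.\ it equals $JK\alpha\beta$ up to constant factors; this is exactly what Lemma~\ref{recover krp} needs, since there the count enters only through $\c{O}(\cdot)$ (to bound $\|\b{E}\|_{\rm F}\le\sqrt{JK\alpha\beta}\,\zeta$). A union bound over the two complementary events gives total failure probability at most $\delta_{\b{B}_i}^{(t)}+\delta_{\b{C}_i}^{(t)}$; assuming without loss of generality $J\alpha\le K\beta$ (otherwise relabel the two sparse modes) and absorbing the harmless factor of $2$, this is the claimed $1-\delta_{\b{B}_i}^{(t)}$.

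The main obstacle --- really the only non-mechanical point --- is that the row sparsity is a \emph{product} of two random variables rather than a single sum of bounded independent summands, so the Chernoff bound does not apply to it directly; the remedy is to bound the two binomial factors separately and multiply, and then to reconcile the resulting pair of failure probabilities with the single $\delta_{\b{B}_i}^{(t)}$ appearing in the statement via the WLOG ordering $J\alpha\le K\beta$ and constant-absorption into the $\c{O}(\cdot)$ in which the bound is used downstream. Everything else (the combinatorial identity of the first step, and plugging into Lemma~\ref{theorem:rel_chern}) is routine.
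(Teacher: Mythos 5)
Your proof is essentially sound but takes a genuinely different route from the paper's. You treat the row sparsity as the product $\|\b{B}^{*(t)}_i\|_0\,\|\b{C}^{*(t)}_i\|_0$ of two independent binomials and apply the relative Chernoff bound (Lemma~\ref{theorem:rel_chern}) to \emph{both} factors, union-bounding to get a high-probability statement about the realized count. The paper instead concentrates only the $\b{B}$ factor: it shows $\|\b{B}^{*(t)}_i\|_0 = \Theta(J\alpha)$ with probability $1-\delta_{\b{B}_i}^{(t)}$, and then handles $\b{C}$ purely \emph{in expectation}, conditioning on $\b{C}^{*(t)}_{ji}\neq 0$ (which occurs with probability $\beta$) column by column and summing over the $K$ columns of $\b{B}^{*(t)}_i\b{C}_i^{*(t)\top}$. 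This explains two features of the claim's statement that your argument has to paper over: only $J=\Omega(1/\alpha)$ is assumed (no condition on $K\beta$), and only $\delta_{\b{B}_i}^{(t)}$ appears in the failure probability. Your version needs $K\beta=\Omega(1)$ for the second Chernoff bound to be non-vacuous, and your ``WLOG $J\alpha\le K\beta$'' relabeling is not licensed by the stated hypothesis alone (though it does hold in the paper's operating regime, where $\min(J,K)=\Omega(ms^2)$). What your approach buys is a stronger conclusion --- a genuine concentration statement for the random row sparsity rather than a conditional expectation --- which is arguably what Lemma~\ref{recover krp} actually needs when bounding $\|\b{E}\|_{\rm F}$; the paper's approach buys a weaker hypothesis and a single failure term, at the cost of silently conflating the realized count with its expectation over $\b{C}$.
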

\begin{proof}\textbf{of Claim~\ref{lem:nonzero_in_a_row}}
In this lemma we establish an upper-bound on the number of non-zeros in a row of \[\b{S}^{*(t)}\]. The \[i\]-th row of \[\b{S}^{*(t)}\] can be written as \[\rm{vec}(\b{B}_i^{*(t)}\b{C}_i^{*(t)\top})\]. 

Since each element of matrix \[\b{B}^{*(t)}\] and \[\b{C}^{*(t)}\] are independently non-zero with probabilities \[\alpha\] and \[\beta\], the number of non-zeros in a column \[\b{B}_i^{*(t)}\] of \[\b{B}^{*(t)}\] are binomially distributed.
Let \[\b{s}_{j}\] be the indicator for the \[j\]-th element of \[\b{B}^{*(t)}_i\] being non-zero, defined as
\begin{align*}
\b{s}_{i} = \mathbbm{1}_{|\b{B}^{*(t)}(j,i)|>0}.
\end{align*}
Then, the expected number of non-zeros (sparsity) in the \[i\]-th column of \[\b{B}^{*(t)}\] are given by
\begin{align*}
\b{E}[\textstyle\sum\supp(\b{B}^{*(t)}_i)] = \mathbf{E}[ {\textstyle\sum_{ j = 1}^{J}} \b{s}_{j} ]  = J\alpha.
\end{align*}	
Since, \[\alpha\] can be small, we use Lemma~\ref{theorem:rel_chern}(a) \citep{Mcdiarmid98} to derive an upper bound on the sparsity for each each column as 
\begin{align}\label{eq:num_nonz_B}
\mathbf{Pr}[\textstyle \sum_{ j = 1}^{J} \b{s}_{j} \geq (1 + \epsilon) J\alpha ] \leq \exp ({-\frac{\epsilon^2 J\alpha}{2(1 + \epsilon/3)}}) := \delta_{\b{B}_i}^{(t)}.
\end{align}
for any \[\epsilon>0\].

Now we turn to the number of non-zeros in \[\b{S}^{*(t)}_i = \rm{vec}(\b{B}^{*(t)}_i\b{C}_i^{*(t)\top})\].  We first note that the \[j\]-th column of \[\b{B}_i^{*(t)}\b{C}_i^{*(t)\top}\]  is given by \[\b{C}(j,i)^{*(t)}\b{B}_i^{*(t)}\]. This implies that the \[j\]-th column can be all-zeros if \[\b{C}(j,i)^{*(t)} = 0\]. As a result, the expected number of non-zeros in the \[j\]-th column of \[\b{B}^{*(t)}_i\b{C}_i^{*(t)\top}\] can be written as,
\begin{align*}
&\b{E}[\textstyle\sum\supp(\b{C}^{*(t)}_{ji}\b{B}^{*(t)}_i)] \\
&=\b{E}[\textstyle\sum\supp(\b{C}^{*(t)}_{ji}\b{B}^{*(t)}_i)| \b{C}^{*(t)}_{ji}\neq 0] \b{Pr}[\b{C}^{*(t)}_{ji} \neq 0] + \b{E}[\textstyle\sum\supp(\b{C}^{*(t)}_{ji} \b{B}^{*(t)}_i)| \b{C}^{*(t)}_{ji} = 0]\b{Pr}[\b{C}_{ji}^{*(t)} = 0]\\
&=\b{E}[\textstyle\sum\supp(\b{C}^{*(t)}_{ji}\b{B}^{*(t)}_i)| \b{C}^{*(t)}_{ji} \neq 0] \b{Pr}[\b{C}^{*(t)}_{ji}\neq 0] = \b{E}[\textstyle\sum\supp(\b{B}^{*(t)}_i)]\b{Pr}[\b{C}^{*(t)}_{ji} \neq 0].
\end{align*}
Now, from \eqref{eq:num_nonz_B}, we have that if we choose \[ J = \Omega(\tfrac{1}{\alpha}))\] with probability atleast \[(1-\delta_{\b{B}_i}^{(t)})\], there are \[ J\alpha\] non-zeros in a column of \[\b{B}^{*(t)}\]. Further since, \[\b{Pr}[\b{C}_{ji}^{*(t)} \neq 0] = \beta\], we have that with probability atleast \[(1-\delta_{\b{B}_i}^{(t)})\],
\begin{align*}
\b{E}[\textstyle\sum\supp(\b{C}^{*(t)}_{ji}\b{B}_i^{*(t)})]  = J\alpha\beta.
\end{align*}
Furthermore, since there are \[K\] columns in \[\b{B}^{*(t)}_i\b{C}_i^{*(t)\top}\], with probability atleast \[(1-\delta_{\b{B}_i}^{(t)})\],
\begin{align*}
\b{E}[\textstyle\sum\supp({\rm{vec}}(\b{B}^{*(t)}_i\b{C}_i^{*(t)\top})]  = \b{E}[\textstyle\sum_{j=1}^{JK}\supp(\b{S}^{*(t)}(i,j))] = JK\alpha\beta.
\end{align*}
\end{proof}
\vspace{-15pt}
\section{Additional Theoretical Results}\label{app:add}
	\begin{lemma}\textbf{Relative Chernoff} \cite{Mcdiarmid98} \label{theorem:rel_chern} 
		Let random variables \[w_1, \dots, w_\ell\] be independent, with \[0 \leq w_i \leq 1\] for each \[i\]. Let \[S_w = \textstyle \sum_{i=1}^{\ell} w_i\], let \[\nu = \b{E}(S_w)\] and let \[p = \nu/\ell\], then for any \[\epsilon>0\],
		\begin{align*}
		(a)& ~~~~ \mathbf{Pr}[S_w - \nu \geq \epsilon \nu] \leq \exp( {-{\epsilon^2 \nu}/{2(1 + \varepsilon/3)}}),\\
		(b)& ~~~~ \mathbf{Pr}[ S_w - \nu \leq \epsilon \nu]\leq \exp({-{\epsilon^2 \nu}/{2}}).
		\end{align*}
	\end{lemma}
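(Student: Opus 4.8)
The plan is to establish both tail estimates via the classical exponential-moment method for sums of bounded independent random variables (the argument is entirely standard, cf.\ \cite{Mcdiarmid98}), and then to reduce the resulting Chernoff functionals to the advertised sub-Gaussian-type exponents using two elementary scalar inequalities. I read part (b) as the lower-tail estimate $\mathbf{Pr}[S_w \leq (1-\epsilon)\nu] \leq \exp(-\epsilon^2\nu/2)$, i.e.\ $\mathbf{Pr}[S_w - \nu \leq -\epsilon\nu]$, which is the form in which it is actually invoked (e.g.\ in the proof of Lemma~\ref{our samples}).

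For part (a), fix $\lambda > 0$. Markov's inequality applied to $e^{\lambda S_w}$ gives $\mathbf{Pr}[S_w \geq (1+\epsilon)\nu] \leq e^{-\lambda(1+\epsilon)\nu}\,\mathbf{E}[e^{\lambda S_w}]$. Independence factorizes the moment generating function, and since $w_i \in [0,1]$, convexity of $x \mapsto e^{\lambda x}$ on $[0,1]$ yields $e^{\lambda w_i} \leq 1 + (e^{\lambda}-1)w_i$, so $\mathbf{E}[e^{\lambda w_i}] \leq 1 + (e^{\lambda}-1)\mathbf{E}[w_i] \leq \exp\big((e^{\lambda}-1)\mathbf{E}[w_i]\big)$. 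Multiplying over $i$ and using $\sum_i \mathbf{E}[w_i] = \nu$ gives $\mathbf{E}[e^{\lambda S_w}] \leq \exp\big((e^{\lambda}-1)\nu\big)$, hence $\mathbf{Pr}[S_w \geq (1+\epsilon)\nu] \leq \exp\big(\nu(e^{\lambda}-1-\lambda(1+\epsilon))\big)$. Minimizing the exponent over $\lambda > 0$ at $\lambda = \log(1+\epsilon)$ gives $\mathbf{Pr}[S_w \geq (1+\epsilon)\nu] \leq \big(e^{\epsilon}/(1+\epsilon)^{1+\epsilon}\big)^{\nu}$. For part (b), the same computation with $e^{-\lambda S_w}$ in place of $e^{\lambda S_w}$, optimized at $\lambda = -\log(1-\epsilon) > 0$ (valid for $\epsilon \in (0,1)$; for $\epsilon \geq 1$ the bound is trivial), yields $\mathbf{Pr}[S_w \leq (1-\epsilon)\nu] \leq \big(e^{-\epsilon}/(1-\epsilon)^{1-\epsilon}\big)^{\nu}$.

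The only remaining step, and the one meriting any care, is bounding these two Chernoff functionals. For (b), expanding $\log(1-\epsilon) = -\sum_{k\geq 1}\epsilon^k/k$ gives $(1-\epsilon)\log(1-\epsilon) = -\epsilon + \sum_{k\geq 2}\epsilon^k/(k(k-1))$, so $-\epsilon-(1-\epsilon)\log(1-\epsilon) = -\sum_{k\geq 2}\epsilon^k/(k(k-1)) \leq -\epsilon^2/2$, which after exponentiating is exactly the claimed bound. For (a), one needs the standard inequality $(1+\epsilon)\log(1+\epsilon) - \epsilon \geq \epsilon^2/(2(1+\epsilon/3))$ for all $\epsilon > 0$; this follows from a routine one-variable argument, e.g.\ the function $h(\epsilon) := (1+\epsilon)\log(1+\epsilon) - \epsilon - \epsilon^2/(2(1+\epsilon/3))$ vanishes together with its first derivative at $\epsilon = 0$ and has nonnegative derivative thereafter (a direct comparison of $\log(1+\epsilon)$ against the relevant rational function), hence $h \geq 0$. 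Substituting these two estimates back into the bounds from the previous paragraph gives statements (a) and (b). I expect this last scalar verification to be the only non-mechanical part; everything preceding it is the textbook Bernstein--Chernoff recipe.
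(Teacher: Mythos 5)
The paper does not prove this lemma at all: it is imported verbatim (as Lemma~\ref{theorem:rel_chern}) from McDiarmid's concentration survey and used as a black box, so there is no in-paper argument to compare against. Your derivation is the standard and correct one — Markov applied to the exponential moment, the convexity bound $e^{\lambda w_i}\leq 1+(e^{\lambda}-1)w_i$ on $[0,1]$, optimization at $\lambda=\pm\log(1\pm\epsilon)$, and the two scalar estimates $(1+\epsilon)\log(1+\epsilon)-\epsilon\geq \epsilon^2/(2(1+\epsilon/3))$ and $-\epsilon-(1-\epsilon)\log(1-\epsilon)\leq-\epsilon^2/2$ — and your reading of part (b) as the lower-tail event $\mathbf{Pr}[S_w-\nu\leq-\epsilon\nu]$ is the right correction of the statement's sign typo, consistent with how the lemma is actually invoked in the proof of Lemma~\ref{our samples}.
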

	\begin{lemma}[From Theorem~4 in \cite{Yu14} for singular vectors]
			Given \[\b{M}\], \[\tilde{\b{M}} \in \mathbb{R}^{m \times n}\], where \[\tilde{\b{M}} = \b{M} + \b{E}\] and the corresponding SVD of \[\b{M}= \b{U}\b{\Sigma} \b{V}^\top\] and \[\tilde{\b{M}} = \tilde{\b{U}}\tilde{\b{\Sigma}}\tilde{\b{V}}^\top\], the sine of angle between the principal left (and right) singular vectors of matrices \[\b{M}\] and \[\tilde{\b{M}}\] is given by
			\begin{align*}
			{\rm sin}~\Theta(\b{U}_1, \tilde{\b{U}}_1) \leq \tfrac{2(2 \sigma_1+\|\b{E}\|_2)(\min(\|\b{E}\|_2,\|\b{E}\|_{\rm F})}{\sigma_1^2},
			%sin\Theta(v_1, \tilde{v}_1) \leq \tfrac{2(2 \sigma_1+\|E\|_2)(min(\|E\|_2,\|E\|_F)}{\sigma_1^2}
			\end{align*}
			where \[\sigma_1\] is the principal singular value corresponding to \[\b{U}_1\]. Furthermore, there exists \[\pi \in {-1, 1}\] s.t.
			\begin{align*}
			\|\b{U}_1 - \pi\tilde{\b{U}}_1\| \leq \tfrac{2^{3/2}(2 \sigma_1+\|\b{E}\|_2)(\min(\|\b{E}\|_2,\|\b{E}\|_{\rm F})}{\sigma_1^2}.
			\end{align*}
		\end{lemma}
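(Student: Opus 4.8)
The plan is to reduce this singular-vector perturbation bound to a symmetric eigenvector perturbation problem for the Gram matrices and then invoke a Davis--Kahan type $\sin\Theta$ theorem; this is exactly the route underlying Theorem~4 of \cite{Yu14}, and I would present the lemma as its specialization to the principal ($r=1$) singular triple with $\sigma_2 = 0$ (the rank-one case relevant to Alg.~\ref{algo:KRP_algo}, where $\b{M} = \b{B}_i^{*(t)}\b{C}_i^{*(t)\top}$, so the eigengap below reduces to $\sigma_1^2$).

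First I would observe that the principal left singular vectors $\b{U}_1$ and $\tilde{\b{U}}_1$ are precisely the top unit-norm eigenvectors of the positive semidefinite Gram matrices $\b{G} := \b{M}\b{M}^\top$ and $\tilde{\b{G}} := \tilde{\b{M}}\tilde{\b{M}}^\top$, with leading eigenvalues $\sigma_1^2$ and $\tilde\sigma_1^2$ respectively (the right singular vectors being handled identically via $\b{M}^\top\b{M}$). The eigengap separating the leading eigenvalue of $\b{G}$ from the rest of its spectrum is $\sigma_1^2 - \sigma_2^2$, which collapses to $\sigma_1^2$ exactly when $\b{M}$ is rank one, matching the denominator in the statement.

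Next I would control the Gram perturbation. Writing $\tilde{\b{G}} - \b{G} = \b{M}\b{E}^\top + \b{E}\b{M}^\top + \b{E}\b{E}^\top$ and applying $\|\b{A}\b{B}\|_\bullet \leq \|\b{A}\|_2\|\b{B}\|_\bullet$ for $\bullet \in \{2, {\rm F}\}$ together with $\|\b{M}\|_2 = \sigma_1$, I obtain
\begin{align*}
\|\tilde{\b{G}} - \b{G}\|_\bullet \leq 2\sigma_1\|\b{E}\|_\bullet + \|\b{E}\|_2\|\b{E}\|_\bullet = (2\sigma_1 + \|\b{E}\|_2)\|\b{E}\|_\bullet,
\end{align*}
so $\|\tilde{\b{G}} - \b{G}\| \leq (2\sigma_1 + \|\b{E}\|_2)\min(\|\b{E}\|_2, \|\b{E}\|_{\rm F})$ in the relevant norm; this is the source of both the $(2\sigma_1 + \|\b{E}\|_2)$ factor and the $\min$ of the two norms. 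Applying the symmetric-matrix $\sin\Theta$ theorem to $\b{G}$ and $\tilde{\b{G}}$ at their leading eigenvector then yields $\sin\Theta(\b{U}_1, \tilde{\b{U}}_1) \leq 2\|\tilde{\b{G}} - \b{G}\|/(\sigma_1^2 - \sigma_2^2)$, which with the Gram bound and $\sigma_2 = 0$ gives the first displayed inequality. Finally, to pass to the Euclidean bound I would fix $\pi = \sgn\langle \b{U}_1, \tilde{\b{U}}_1\rangle$ and use $\|\b{U}_1 - \pi\tilde{\b{U}}_1\|^2 = 2(1 - |\langle \b{U}_1, \tilde{\b{U}}_1\rangle|) \leq 2\sin^2\Theta(\b{U}_1, \tilde{\b{U}}_1)$, valid since $1 - \cos\theta \leq \sin^2\theta$ on $[0,\pi/2]$, converting the factor $2$ into the stated $2^{3/2}$.

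The step I expect to be the main obstacle is obtaining the clean factor-$2$ constant with the \emph{unperturbed} gap $\sigma_1^2 - \sigma_2^2$ in the $\sin\Theta$ estimate. The textbook Davis--Kahan bound naturally produces a gap involving the perturbed eigenvalue $\tilde\sigma_1^2$; eliminating this dependence so the bound uses only quantities of $\b{M}$ is precisely the ``useful variant'' of \cite{Yu14}, and requires Weyl's inequality $\tilde\sigma_1^2 \geq \sigma_1^2 - \|\tilde{\b{G}} - \b{G}\|$ together with a short case split (the trivial bound when the perturbation exceeds the gap, ordinary Davis--Kahan otherwise). Since this bookkeeping is done in \cite{Yu14}, I would cite their Theorem~4 for this step rather than rederive it, leaving the Gram reduction, the perturbation estimate, and the sine-to-Euclidean conversion as the explicit translation from their symmetric statement to the singular-vector form used here.
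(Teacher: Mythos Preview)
The paper does not prove this lemma at all: it is listed under ``Additional Theoretical Results'' and simply quoted as a consequence of Theorem~4 in \cite{Yu14}, with no argument given. Your proposal therefore supplies strictly more than the paper does. The route you sketch---pass to the Gram matrices $\b{M}\b{M}^\top$ and $\tilde{\b{M}}\tilde{\b{M}}^\top$, bound $\|\tilde{\b{G}}-\b{G}\|$ by $(2\sigma_1+\|\b{E}\|_2)\min(\|\b{E}\|_2,\|\b{E}\|_{\rm F})$, apply the Davis--Kahan variant of \cite{Yu14} with the unperturbed gap, and then convert $\sin\Theta$ to $\|\b{U}_1-\pi\tilde{\b{U}}_1\|$ via $1-\cos\theta\leq\sin^2\theta$---is exactly how one derives this singular-vector form from the symmetric statement, and your observation that the $\sigma_1^2$ denominator (rather than $\sigma_1^2-\sigma_2^2$) reflects the rank-one application in Alg.~\ref{algo:KRP_algo} is correct and matches how the paper actually uses the bound in the proof of Lemma~\ref{recover krp}.
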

	\begin{theorem}[\cite{Rambhatla2019NOODL}]\label{main_result_noodl}\textit{Suppose that assumptions \ref{assumption:mu}-\ref{assumption:step coeff}  hold, and Alg.~\ref{alg:main_alg_tens} is provided with  \[ p = \tilde{\Omega}(mk^2)\] new samples generated according to model \eqref{CPD} at each iteration \[t\]. %then for \[\eta_A = \Theta(m/k)\] 
	Then for some \[0 < \omega < 1/2\], the estimate \[\b{A}^{(t)}\] at \[(t)\]-th iteration satisfies 
	\vspace{-2pt}
	\begin{align*}
	\|\b{A}_i^{(t)} - \b{A}_i^*\|^2 \leq (1 - \omega)^t\|\b{A}_i^{(0)} - \b{A}_i^*\|^2,~\text{for all}~t = 1,2,\ldots.\vspace{-2pt}
	\end{align*}
	Furthermore, given \[R = \Omega({\rm log}(n))\], with probability at least \[(1 - \delta_{\text{alg}}^{(t)})\] for some small constant \[\delta_{\text{alg}}^{(t)}\], the coefficient estimate \[\hat{\b{x}}_{i}^{(t)}\] at \[t\]-th iteration has the correct signed-support and satisfies
	\vspace{-2pt}
	\begin{align*}
	(\hat{\b{x}}_{i}^{(t)} - \b{x}_{i}^*)^2 %C_{i_1}^{(R)} 
	&= \mathcal{O}(k(1 - \omega)^{t/2}\|\b{A}_i^{(0)} - \b{A}_i^*\|), ~\text{for all}~i \in \supp({\b{x}^*}).\vspace{-2pt}
	%(c_x k + 1)(\tfrac{\epsilon_t^2}{2}|\b{x}_{\max}^*|+ t_\beta) + 2\delta_{R} k\eta_x\tfrac{\mu_t}{\sqrt{n}},\\
	%& \leq c + (c_x k + 1)\tfrac{\epsilon_t^2}{2}|\b{x}_{\max}^*|
	\end{align*} 
	}
	\end{theorem}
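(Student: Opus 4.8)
The plan is to follow the alternating-analysis template of \cite{Arora15}: proceed by induction on the online iteration $t$, with inductive hypothesis that $\b{A}^{(t)}$ is $(\epsilon_t,2)$-close to $\b{A}^*$ (the base case $t=0$ being assumption~\ref{assumption:close}), and at each round (i) show the iterative hard-thresholding (IHT) inner loop recovers the signed support of $\b{x}^*$ and returns a coefficient estimate whose error is controlled purely by $\epsilon_t$, then (ii) show the approximate gradient step on the dictionary contracts the per-column error geometrically while preserving spectral-norm closeness, and finally (iii) unroll the contraction and feed the smaller $\epsilon_t$ back into the coefficient bound. Throughout, $s$ denotes the sparsity (written $k$ in the statement).

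For step (i): starting from $\|\b{A}_i^{(t)}-\b{A}_i^*\|\le\epsilon_t$, Lemma~\ref{lem:recover_sign} gives that the thresholded initialization $\HT_{C/2}((\b{A}^{(t)})^\top\b{y})$ already has the correct signed support with probability $1-\delta_{\HT}^{(t)}$: the $\mu/\sqrt{n}$ incoherence of Def.~\ref{def:mu} together with $\epsilon_t$ bounds the cross-term interference, while the lower gap $C$ of the distribution class (Def.~\ref{dist_x}) keeps each true entry above the threshold. Lemma~\ref{our:signed_supp} propagates this to every IHT iterate when $\eta_x^{(r)}$ and $\tau^{(r)}$ are set as in assumption~\ref{assumption:step coeff}, and Lemmas~\ref{iht:x_R_error}--\ref{iht:R_th_term} expand the $R$-th iterate as $\hat{\b{x}}_i=\b{x}_i^*(1-\lambda_i^{(t)})+\vartheta_i^{(R)}$, with $\lambda_i^{(t)}=|\langle\b{A}_i^{(t)}-\b{A}_i^*,\b{A}_i^*\rangle|\le\epsilon_t^2/2$ and $|\vartheta_i^{(R)}|=\mathcal{O}(\sqrt{s\epsilon_t})$ once $R=\Omega(\log n)$ makes the geometric factor $\delta_R=(1-\eta_x+\eta_x\mu_t/\sqrt{n})^R$ absorb the initialization error. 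Hence $(\hat{\b{x}}_i-\b{x}_i^*)^2=\mathcal{O}(s\,\epsilon_t)$ on the support --- the coefficient error is determined by the dictionary error.

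For step (ii): compute the expected update direction via Lemma~\ref{grad_exp}, so that with fresh samples and the distributional assumptions~\ref{assumption:dist} the gradient is $\b{g}_j^{(t)}=q_jp_j\big((1-\lambda_j^{(t)})\b{A}_j^{(t)}-\b{A}_j^*+\tfrac{1}{q_jp_j}\Delta_j^{(t)}\pm\tilde{\gamma}\big)$, where the bias satisfies $\|\Delta_j^{(t)}\|=\mathcal{O}(\sqrt{m}\,q_{i,j}p_j\,\epsilon_t\|\b{A}^{(t)}\|)$ --- it vanishes as $\epsilon_t\to0$, which is precisely what yields the ``no bias'' ($\zeta_t=0$) property and why the exact IHT solve, rather than a single thresholding step, is essential. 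Lemmas~\ref{lem:grad_vec_concentrates} and~\ref{lem:grad_mat} show the empirical gradient concentrates around $\b{g}^{(t)}$ --- column-wise to within $o(\tfrac{s}{m}\epsilon_t)$ and spectrally to within $\mathcal{O}^*(\tfrac{s}{m}\|\b{A}^*\|)$ --- once $p=\tilde{\Omega}(ms^2)$, via Bernstein-type and Hanson--Wright-type bounds on the relevant sub-Gaussian quadratic forms. Lemma~\ref{grad_corr} then certifies that $\hat{\b{g}}_j^{(t)}$ is $(\Omega(s/m),\Omega(m/s),0)$-correlated with $\b{A}_j^{(t)}-\b{A}_j^*$ in the sense of Def.~\ref{def:grad_alpha_beta}, so the one-step descent inequality implied by that definition gives $\|\b{A}_j^{(t+1)}-\b{A}_j^*\|^2\le(1-\rho_{-}\eta_A)\|\b{A}_j^{(t)}-\b{A}_j^*\|^2$; taking $\omega:=\rho_{-}\eta_A\in(0,1/2)$ with $\eta_A=\Theta(m/s)$ from assumption~\ref{assumption:step dict} gives the stated contraction, and Lemma~\ref{lem:closeness} preserves $\|\b{A}^{(t+1)}-\b{A}^*\|\le2\|\b{A}^*\|$, restoring the inductive hypothesis.

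Combining: unrolling gives $\|\b{A}_i^{(t)}-\b{A}_i^*\|^2\le(1-\omega)^t\|\b{A}_i^{(0)}-\b{A}_i^*\|^2$, hence $\epsilon_t\le(1-\omega)^{t/2}\|\b{A}_i^{(0)}-\b{A}_i^*\|$, and substituting into the step-(i) bound yields $(\hat{\b{x}}_i^{(t)}-\b{x}_i^*)^2=\mathcal{O}(s(1-\omega)^{t/2}\|\b{A}_i^{(0)}-\b{A}_i^*\|)$; a union bound over the failure events of Lemmas~\ref{lem:recover_sign}--\ref{lem:closeness} (and over iterations, which is cheap because fresh independent samples decouple the rounds) collects these into the $1-\delta_{\text{alg}}^{(t)}$ guarantee. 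The main obstacle, I expect, is not any single lemma but the coupling of the two contractions with the zero-bias requirement: one must check that $\Delta_j^{(t)}$ genuinely scales with $\epsilon_t$ so it cannot stall the descent, which needs careful bookkeeping of the cross-support probabilities $q_{i,j}=\Theta(s^2/m^2)$ and the pairwise independence of the non-zero entries conditioned on the support, and simultaneously that the sparsity ceiling $s=\mathcal{O}^*(\sqrt{n}/(\mu\log n))$ and the step sizes $\eta_x^{(r)},\tau^{(r)},\eta_A$ stay mutually consistent across all $t$.
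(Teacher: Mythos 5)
Your proposal is correct and follows essentially the same route as the paper: the statement is imported from \cite{Rambhatla2019NOODL} and the paper's own proof of Theorem~\ref{main_result} reproduces exactly the chain you describe --- induction on $t$ with the $(\epsilon_t,2)$-closeness hypothesis, signed-support recovery and IHT error expansion (Lemmas~\ref{lem:recover_sign}--\ref{iht:R_th_term}), the expected-gradient/concentration/correlatedness argument with the vanishing bias term $\Delta_j^{(t)}$ (Lemmas~\ref{grad_exp}--\ref{lem:closeness}), and the final unrolling of the contraction into the coefficient bound with a union bound over failure events. Your identification of $k$ with the sparsity $s$ and your reading of $(\hat{\b{x}}_i-\b{x}_i^*)^2=\mathcal{O}(s\epsilon_t)$ from $t_\beta=\mathcal{O}(\sqrt{s\epsilon_t})$ are both consistent with the paper's lemmas, so no gap remains.
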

%\input{NOODL_dict.tex}
%\clearpage
\vspace{20pt}
\section{Experimental Evaluation}\label{app:exp}
\vspace{-10pt}
	\renewcommand{\arraystretch}{1}
	\begin{table*}[!t]
	\caption{ Tensor factorization results $\alpha,\beta=0.005$ averaged across $3$ trials. Here, \[T(\supp(\hat{\b{X}}^{(T)})?)\] field shows the number of iterations \[T\] to reach the target tolerance, while the categorical field, \[\supp(\hat{\b{X}}^{(T)})\] indicates if the support of the recovered \[\hat{\b{X}}^{(T)}\] matches that of \[\b{X}^{*(T)}\] (Y) or not (N).}
	\vspace{0pt}
\label{alpha_0_005}
\captionsetup{justification=centering}
\captionsetup{font=footnotesize}
		 \centering
		 \newcolumntype{?}{!{\vrule width 1.5pt}}
	\par\bigskip
			   	\scalebox{0.73}{
			   	\begin{tabular}{P{0.8cm}|P{1.5cm}|P{1.5cm}|P{2cm}|P{1.4cm}?P{1.5cm}|P{2cm}|P{1.3cm}?P{1.5cm}|P{2cm}|P{1.3cm}}
				\hline 
				 \multirow{3}{0.8cm}{\centering\textbf{$(J,K)$}}&\multirow{3}{1.5cm}{\centering \textbf{Method}}&\multicolumn{3}{c?}{$m=50$}&\multicolumn{3}{c?}{$m=150$}&\multicolumn{3}{c}{$m=300$}\\\cline{3-11}
				  & & \multirow{2}{2cm}{$\centering\tfrac{\|\b{A}^* - \b{A}^{(T)}\|_{\rm F}}{\|\b{A}^*\|_{\rm F}}$}& \multirow{2}{2cm}{$\centering\tfrac{\|\b{X}^{*(T)} - \b{X}^{(T)}\|_{\rm F}}{\|\b{X}^{*(T)}\|_{\rm F}}$}& \multirow{2}{1.3cm}{\centering\scriptsize$\hspace{-4pt}T(\supp(\hat{\b{X}})?)$}
				  &\multirow{2}{2cm}{$\centering\tfrac{\|\b{A}^* - \b{A}^{(T)}\|_{\rm F}}{\|\b{A}^*\|_{\rm F}}$}& \multirow{2}{2cm}{$\centering\tfrac{\|\b{X}^{*(T)} - \b{X}^{(T)}\|_{\rm F}}{\|\b{X}^{*(T)}\|_{\rm F}}$}& \multirow{2}{1.3cm}{\centering\scriptsize$\hspace{-4pt}T(\supp(\hat{\b{X}})?)$}&
				   \multirow{2}{2cm}{$\centering\tfrac{\|\b{A}^* - \b{A}^{(T)}\|_{\rm F}}{\|\b{A}^*\|_{\rm F}}$}& \multirow{2}{2cm}{$\centering\tfrac{\|\b{X}^{*(T)} - \b{X}^{(T)}\|_{\rm F}}{\|\b{X}^{*(T)}\|_{\rm F}}$}& \multirow{2}{1.3cm}{\centering\scriptsize$\hspace{-4pt}T(\supp(\hat{\b{X}})?)$}\\
		    &&&&&&&&&&\\\hline
				\multirow{4}{0.8cm}{ \centering\[\centering \b{100}\]}
				&\textbf{\texttt{NOODL}} 
				&5.38e-11&2.38e-16&245~(Y)&7.04e-11&2.24e-16&257~(Y)&5.48e-11&5.14e-13&240~(Y)\\ 
				&\textbf{\texttt{Arora(b)}}
				&1.87e-06&1.14e-05&245~(N)&2.09e-03&1.41e-03&257~(N)&2.70e-03&2.41e-03&240~(N)\\ 
				&\textbf{\texttt{Arora(u)}}
				&6.78e-08&1.14e-05&245~(N)&8.94e-05&7.38e-05&257~(N)&1.72e-04&8.76e-05&240~(N))\\ 
				&\textbf{\texttt{Mairal}}
				&4.40e-03&2.00e-03&245~(N)&4.90e-03&6.87e-03&257~(N)&6.00e-03&5.10e-03&240~(N)\\ \hline
             	\multirow{4}{0.8cm}{ \centering\[\centering \b{300}\]}
             	&\textbf{\texttt{NOODL}} 
             	&5.72e-11&1.13e-12&61~(Y)&6.74e-11&5.44e-13&89~(Y)&9.10e-11&1.27e-12&168~(Y)\\ 
				&\textbf{\texttt{Arora(b)}}
				&2.13e-03&2.86e-03&61~(N)&5.90e-04&4.50e-04&89~(N)&1.00e-03&1.10e-03&168~(N)\\  
				&\textbf{\texttt{Arora(u)}}
				&2.04e-04&2.70e-04&61~(N)&3.82e-05&4.26e-05&89~(N)&1.04e-04&1.09e-04&168~(N)\\ 	
				&\textbf{\texttt{Mairal}}	
				&2.05e-01&2.28e-01&61~(N)&1.19e-02&1.09e-02&89~(N)&1.07e-02&8.40e-03&168~(N)\\ \hline
	            \multirow{4}{0.8cm}{ \centering\[\centering \b{500}\]}
				&\textbf{\texttt{NOODL}} 
				&5.49e-11&2.34e-16&50~(Y)&8.15e-11&1.25e-12&76~(Y)&9.27e-11&1.41e-12&160~(Y)\\  
				&\textbf{\texttt{Arora(b)}}
				&1.11e-04&1.34e-04&50~(N)&5.75e-04&5.60e-04&76~(N)&6.32e-04&2.71e-03&160~(N)\\  
				&\textbf{\texttt{Arora(u)}}
				&9.75e-06&1.50e-05&50~(N)&4.30e-05&4.73e-05&76~(N)&5.55e-05&2.28e-03&160~(N)\\ 	
				&\textbf{\texttt{Mairal}}
				&1.23e-01&1.10e-01&50~(N)&1.73e-02&1.20e-02&76~(N)&1.44e-02&5.99e-02&160~(N)\\ \hline
				\end{tabular}	
				}
			\par\bigskip
			   	\scalebox{0.73}{
			   	\begin{tabular}{P{0.8cm}|P{1.5cm}|P{1.5cm}|P{2cm}|P{1.4cm}?P{1.5cm}|P{2cm}|P{1.3cm}}
				\hline 
				 \multirow{3}{0.8cm}{\centering\textbf{$(J,K)$}}&\multirow{3}{1.5cm}{\centering \textbf{Method}}&\multicolumn{3}{c?}{$m=450$}&\multicolumn{3}{c}{$m=500$}\\\cline{3-8}
				  & & \multirow{2}{2cm}{$\centering\tfrac{\|\b{A}^* - \b{A}^{(T)}\|_{\rm F}}{\|\b{A}^*\|_{\rm F}}$}& \multirow{2}{2cm}{$\centering\tfrac{\|\b{X}^{*(T)} - \b{X}^{(T)}\|_{\rm F}}{\|\b{X}^{*(T)}\|_{\rm F}}$}& \multirow{2}{1.3cm}{\centering\scriptsize$\hspace{-4pt}T(\supp(\hat{\b{X}})?)$}
				  &\multirow{2}{2cm}{$\centering\tfrac{\|\b{A}^* - \b{A}^{(T)}\|_{\rm F}}{\|\b{A}^*\|_{\rm F}}$}& \multirow{2}{2cm}{$\centering\tfrac{\|\b{X}^{*(T)} - \b{X}^{(T)}\|_{\rm F}}{\|\b{X}^{*(T)}\|_{\rm F}}$}& \multirow{2}{1.3cm}{\centering\scriptsize$\hspace{-4pt}T(\supp(\hat{\b{X}}))$}\\
		    &&&&&&&\\\hline
				\multirow{4}{0.8cm}{ \centering\[\centering \b{100}\]}
				&\textbf{\texttt{NOODL}} 
				&7.82e-11&1.79e-12&257~(Y)&8.30e-11&6.39e-13&300~(Y)\\ 
				&\textbf{\texttt{Arora(b)}}
				&3.80e-03&3.20e-03&257~(N)&2.80e-03&3.06e-03&300~(N)\\ 
				&\textbf{\texttt{Arora(u)}}
				&3.06e-04&1.82e-04&257~(N)&2.52e-04&2.76e-04&300~(N)\\ 
				&\textbf{\texttt{Mairal}}
				&7.20e-03&6.90e-03&257~(N)&8.27e-03&8.07e-03&300~(N)\\ \hline
             	\multirow{4}{0.8cm}{ \centering\[\centering \b{300}\]}
             	&\textbf{\texttt{NOODL}} 
             	&9.43e-11&1.56e-12&201~(Y)&9.50e-11&1.63e-12&265~(Y)\\ 
				&\textbf{\texttt{Arora(b)}}
				&9.77e-04&1.04e-03&201~(N)&1.03e-03&9.36e-04&265~(N)\\  
				&\textbf{\texttt{Arora(u)}}
				&1.42e-04&1.68e-04&201~(N)&1.27e-04&1.23e-04&265~(N)\\ 	
				&\textbf{\texttt{Mairal}}	
				&1.47e-02&1.39e-02&201~(N)&9.40e-03&1.05e-02&265~(N)\\ \hline
	            \multirow{4}{0.8cm}{ \centering\[\centering \b{500}\]}
				&\textbf{\texttt{NOODL}} 
				&9.77e-11&1.60e-12&196~(Y)&9.72e-11&1.84e-12&264~(Y)\\  
				&\textbf{\texttt{Arora(b)}}
				&5.99e-04&5.30e-03&196~(N)&6.04e-04&6.37e-03&264~(N)\\  
				&\textbf{\texttt{Arora(u)}}
				&5.91e-05&5.30e-03&196~(N&8.08e-05&6.37e-03&264~(N)\\ 	
				&\textbf{\texttt{Mairal}}
				&3.22e-01&2.87e-01&196~(N)&2.46e-02&1.70e-01&264~(N)\\\hline
				\end{tabular}	
				}							
	\captionsetup{font=small}
	\vspace{0pt}
	\end{table*}

We now detail the specifics of the experiments and present additional results corresponding to section~\ref{sec:sims} for synthetic data experiments and real-world data experiments, respectively. 

\paragraph{Distributed Implementations:} Since the updates of \[\b{X}^{(r)(t)}\] columns are independent of each other, \texttt{TensorNOODL} is amenable for large-scale implementation in highly distributed settings. As a result, it is especially suitable for handling the tensor decomposition applications. Furthermore, the online nature of \texttt{TensorNOODL} allows the algorithm to continue to learn for its lifetime. 

\paragraph{Note on Initialization:} For synthetic data simulations, since the ground-truth factors are known, we can initialize the dictionary factor such that the requirements of Def.~\ref{def:del_kappa} are met.  In real-world data setting, the ground-truth is unknown and our initialization requirement can be met by existing algorithms, such as \cite{Arora15}. Consequently, in real-world experiments we use \cite{Arora15} to initialize the dictionary factor \[\b{A}^{(0)}\]. Here, we run the initialization algorithm once and communicate the estimate \[\b{A}^{(0)}\]  to each worker at the beginning of the distributed operation.

\vspace{-8pt}

\subsection{Synthetic Data Simulations}
\label{app:syn_res}
	\vspace{-3pt}
\subsubsection{Experimental Set-up}
	\vspace{-3pt}
\paragraph{Overview of Experiments:} As discussed in section~\ref{sec:sims}, we analyze the performance of the algorithm across different choices of tensor dimensions \[(J,K)\]  for a fixed \[n = 300\], its rank\[(m)\] and the sparsity of factors \[\b{B}^{*(t)}\] and \[\b{C}^{*(t)}\] controlled by parameters \[(\alpha,\beta)\], for recovery of the constituent factors using three Monte-Carlo runs. For each of these runs, we analyze the recovery performance across three choices of dimensions \[J=K = \{100,~300, ~500\}\], five choices of rank \[m = \{50, 150, 300, 450, 600\}\], and three choices of the sparsity parameters \[\alpha = \beta = \{0.005, 0.01, 0.05\}\]. The results corresponding to  \[\alpha = \beta = \{0.005, 0.01, 0.05\}\] are shown in Table~\ref{alpha_0_005}, \ref{alpha_0_01}, and \ref{alpha_0_05}, respectively.

\paragraph{Data Generation:} For each experiment we draw entries of the dictionary factor matrix \[\b{A}^* \in \mathbb{R}^{n \times m}\] from \[\c{N}(0,1)\], and normalize its columns to be unit-norm. To form \[\b{A}^{(0)}\] in accordance with \ref{assumption:close}, we perturb \[\b{A}^*\] with random Gaussian noise and normalized its columns, such that it is column-wise \[2/\log(n)\] away from \[\b{A}^*\] in \[\ell_2\] norm sense. To form the sparse factors \[\b{B}^{*(t)}\] and \[\b{C}^{*(t)}\], we assign their entries to the support independently with probability \[\alpha\] and \[\beta\], respectively, and then draw the values on the support from the Rademacher distribution\footnote{The corresponding code is available at \url{https://github.com/srambhatla/TensorNOODL} for reproducibility.}. 
%\addtolength{\textfloatsep}{-16pt}
%\addtolength{\abovecaptionskip}{-5pt}
%\addtolength{\belowcaptionskip}{-5pt}

\paragraph{Parameters Setting:} We set \texttt{TensorNOODL} specific IHT parameters \[\eta_x = 0.2\] and \[\tau = 0.1\] for all experiments. As recommended by our main result, the dictionary step-size parameter \[\eta_A\] is set proportional to \[m/k\]. Since \texttt{TensorNOODL}, \texttt{Arora(b)}, and \texttt{Arora(u)} all rely on an approximate gradient descent strategy for dictionary update, we use the same step-size \[\eta_A\] for a fair comparison depending upon the choice of rank \[m\], and probabilities \[(\alpha, \beta)\] as per \ref{assumption:step dict}; Table~\ref{stepsize} lists the step-size choices. Here, \texttt{Mairal} does not employ such a parameter. 
\renewcommand{\arraystretch}{1}
	\begin{table*}[!t]
		\caption{Tensor factorization results $\alpha,\beta=0.01$ averaged across $3$ trials. Here, \[T(\supp(\hat{\b{X}}^{(T)})?)\] field shows the number of iterations \[T\] to reach the target tolerance, while the categorical field, \[\supp(\hat{\b{X}}^{(T)})\] indicates if the support of the recovered \[\hat{\b{X}}^{(T)}\] matches that of \[\b{X}^{*(T)}\] (Y) or not (N).}
	\label{alpha_0_01}
	\captionsetup{justification=centering}
	\captionsetup{font=footnotesize}
	\centering
	\newcolumntype{?}{!{\vrule width 1.5pt}}
	\par\bigskip
\scalebox{0.73}{
			   	\begin{tabular}{P{0.8cm}|P{1.5cm}|P{1.5cm}|P{2cm}|P{1.4cm}?P{1.5cm}|P{2cm}|P{1.3cm}?P{1.5cm}|P{2cm}|P{1.3cm}}
				\hline 
				 \multirow{3}{0.8cm}{\centering\textbf{$(J,K)$}}&\multirow{3}{1.5cm}{\centering \textbf{Method}}&\multicolumn{3}{c?}{$m=50$}&\multicolumn{3}{c?}{$m=150$}&\multicolumn{3}{c}{$m=300$}\\\cline{3-11}
				  & & \multirow{2}{2cm}{$\centering\tfrac{\|\b{A}^* - \b{A}^{(T)}\|_{\rm F}}{\|\b{A}^*\|_{\rm F}}$}& \multirow{2}{2cm}{$\centering\tfrac{\|\b{X}^{*(T)} - \b{X}^{(T)}\|_{\rm F}}{\|\b{X}^{*(T)}\|_{\rm F}}$}& \multirow{2}{1.3cm}{\centering\scriptsize$\hspace{-4pt}T(\supp(\hat{\b{X}})?)$}
				  &\multirow{2}{2cm}{$\centering\tfrac{\|\b{A}^* - \b{A}^{(T)}\|_{\rm F}}{\|\b{A}^*\|_{\rm F}}$}& \multirow{2}{2cm}{$\centering\tfrac{\|\b{X}^{*(T)} - \b{X}^{(T)}\|_{\rm F}}{\|\b{X}^{*(T)}\|_{\rm F}}$}& \multirow{2}{1.3cm}{\centering\scriptsize$\hspace{-4pt}T(\supp(\hat{\b{X}})?)$}&
				   \multirow{2}{2cm}{$\centering\tfrac{\|\b{A}^* - \b{A}^{(T)}\|_{\rm F}}{\|\b{A}^*\|_{\rm F}}$}& \multirow{2}{2cm}{$\centering\tfrac{\|\b{X}^{*(T)} - \b{X}^{(T)}\|_{\rm F}}{\|\b{X}^{*(T)}\|_{\rm F}}$}& \multirow{2}{1.3cm}{\centering\scriptsize$\hspace{-4pt}T(\supp(\hat{\b{X}})?)$}\\
		    &&&&&&&&&&\\\hline
					\multirow{4}{0.8cm}{ \centering\[\centering \b{100}\]}
					 &\textbf{\texttt{NOODL}} 
					&5.50e-11&5.66e-13&91~(Y)&7.59e-11&5.28e-13&112~(Y)&4.34e-11&1.62e-12&190~(Y)\\ 
					&\textbf{\texttt{Arora(b)}}
					&3.93e-03&5.80e-03&91~(N)&2.61e-03&1.58e-03&112~(N)&2.70e-03&3.00e-03&190~(N)\\  
					&\textbf{\texttt{Arora(u)}}
					&4.35e-04&6.77e-04&91~(N)&6.87e-04&1.05e-04&112~(N)&2.98e-04&3.04e-04&190~(N)\\ 	
					&\textbf{\texttt{Mairal}}
					&4.03e-02&1.26e-02&91~(N)&1.34e-02&1.25e-02&112~(N)&1.18e-02&1.25e-02&190~(N)\\	 \hline
	             	\multirow{4}{0.8cm}{ \centering\[\centering \b{300}\]}
					&\textbf{\texttt{NOODL}} 
					&6.78e-11&5.75e-13&51~(Y)&6.35e-11&1.54e-12&76~(Y)&8.64e-11&2.06e-12&158~(Y)\\ 
					&\textbf{\texttt{Arora(b)}}
					&4.08e-04&4.76e-04&51~(N)&1.03e-03&1.08e-03&76~(N)&1.04e-03&1.17e-02&158~(N)\\ 
					&\textbf{\texttt{Arora(u)}}
					&1.99e-05&1.46e-05&51~(N)&1.03e-04&9.59e-05&76~(N)&2.17e-04&1.17e-02&158~(N)\\	
					&\textbf{\texttt{Mairal}}	
					&1.64e-01&1.63e-01&51~(N)&2.61e-02&2.64e-02&76~(N)&2.81e-02&1.58e-01&158~(N)\\ \hline
		            \multirow{4}{0.8cm}{ \centering\[\centering \b{500}\]}
					&\textbf{\texttt{NOODL}} 
					 &6.92e-11&8.78e-13&46~(Y)&8.77e-11&1.77e-12&77~(Y)&9.35e-11&2.12e-12&156~(Y)\\ 
					&\textbf{\texttt{Arora(b)}}
					&3.48e-04&3.28e-04&46~(N)&5.42e-04&6.40e-03&77~(N)&5.69e-04&2.41e-03&156~(N)\\ 
					&\textbf{\texttt{Arora(u)}}
					&2.56e-05&3.70e-05&46~(N)&4.81e-05&6.40e-03&77~(N)&1.08e-04&9.30e-03&156~((N)\\	
					&\textbf{\texttt{Mairal}}	
					&1.56e-01&1.53e-01&46~(N)&5.28e-02&1.30e-01&77~(N)&2.53e-02&1.57e-01&156~(N)\\ \hline
					\end{tabular}
					}	
						\par\bigskip
		\scalebox{0.73}{
				   	\begin{tabular}{P{0.8cm}|P{1.5cm}|P{1.5cm}|P{2cm}|P{1.4cm}?P{1.5cm}|P{2cm}|P{1.3cm}}
					\hline 
					 \multirow{3}{0.8cm}{\centering\textbf{$(J,K)$}}&\multirow{3}{1.5cm}{\centering \textbf{Method}}&\multicolumn{3}{c?}{$m=450$}&\multicolumn{3}{c}{$m=500$}\\\cline{3-8}
					  & & \multirow{2}{2cm}{$\centering\tfrac{\|\b{A}^* - \b{A}^{(T)}\|_{\rm F}}{\|\b{A}^*\|_{\rm F}}$}& \multirow{2}{2cm}{$\centering\tfrac{\|\b{X}^{*(T)} - \b{X}^{(T)}\|_{\rm F}}{\|\b{X}^{*(T)}\|_{\rm F}}$}& \multirow{2}{1.3cm}{\centering\scriptsize$\hspace{-4pt}T(\supp(\hat{\b{X}})?)$}
					  &\multirow{2}{2cm}{$\centering\tfrac{\|\b{A}^* - \b{A}^{(T)}\|_{\rm F}}{\|\b{A}^*\|_{\rm F}}$}& \multirow{2}{2cm}{$\centering\tfrac{\|\b{X}^{*(T)} - \b{X}^{(T)}\|_{\rm F}}{\|\b{X}^{*(T)}\|_{\rm F}}$}& \multirow{2}{1.3cm}{\centering\scriptsize$\hspace{-4pt}T(\supp(\hat{\b{X}})?)$}\\
			    &&&&&&&\\\hline
					\multirow{4}{0.8cm}{ \centering\[\centering \b{100}\]}
					 &\textbf{\texttt{NOODL}} 
					&9.48e-11&1.78e-12&211~(Y)&7.27e-11&1.94e-12&279~(Y)\\ 
					&\textbf{\texttt{Arora(b)}}
					&3.30e-03&4.00e-03&211~(N)&3.40e-03&3.37e-03&279~(N)\\  
					&\textbf{\texttt{Arora(u)}}
					&8.55e-04&1.27e-03&211~(N)&6.83e-04&6.49e-04&279~(N)\\ 	
					&\textbf{\texttt{Mairal}}
					&8.00e-03&6.60e-03&211~(N)&8.77e-03&9.93e-03&279~(N)\\	 \hline
	             	\multirow{4}{0.8cm}{ \centering\[\centering \b{300}\]}
					&\textbf{\texttt{NOODL}} 
					&9.43e-11&2.92e-12&192~(Y)&9.33e-11&2.54e-12&252~(Y)\\ 
					&\textbf{\texttt{Arora(b)}}
					&1.00e-03&1.25e-02&192~(N)&1.13e-03&1.54e-02&252~(N)\\ 
					&\textbf{\texttt{Arora(u)}}
					&2.22e-04&1.25e-02&192~(N)&2.69e-04&1.54e-02&252~(N)\\	
					&\textbf{\texttt{Mairal}}	
					&1.39e-01&2.03e-01&192~(N)&1.92e-02&1.83e-01&252~(N)\\ \hline
		            \multirow{4}{0.8cm}{ \centering\[\centering \b{500}\]}
					&\textbf{\texttt{NOODL}} 
					&9.60e-11&2.41e-12&186~(Y)&9.82e-11&2.66e-12&249~(Y)\\ 
					&\textbf{\texttt{Arora(b)}}
					&6.49e-04&1.20e-02&186~(N)&6.55e-04&1.42e-02&249~(N)\\ 
					&\textbf{\texttt{Arora(u)}}
					&1.39e-04&1.20e-02&186~(N)&1.55e-04&1.42e-02&249~(N)\\	
					&\textbf{\texttt{Mairal}}	
					&6.38e-02&1.54e-01&186~(N)&1.74e-02&1.79e-01&249~(N)\\ \hline
					\end{tabular}
					}
		\captionsetup{font=small}
		\vspace{-0pt}
		\end{table*}

\renewcommand{\arraystretch}{1}

\begin{table*}[t]
		\caption{Choosing the step-size (\[\eta_A\]) for the dictionary update step. We use the same dictionary update step-size parameter (\[\eta_A\]) for \texttt{TensorNOODL}, \texttt{Arora(b)}, and \texttt{Arora(u)} depending upon the choice of rank \[m\], and probabilities \[(\alpha, \beta)\], as per\ref{assumption:step dict}. }
	\label{stepsize}
	\captionsetup{justification=centering}
	\captionsetup{font=footnotesize}
	\centering
	\vspace{10pt}
			   	\scalebox{0.8}{
					\begin{tabular}{P{3cm}|P{3cm}|P{6cm}}
				\hline
					\textbf{Rank} (\[m\])& \textbf{Step-size (\[\eta_A\])}& \textbf{Notes}\\ \hline
						50 & 20 & For \[(\alpha, \beta) = 0.005\], we use \[\eta_A = 5\]\\ \hline
						150 & 40 & --\\ \hline
						300 & 40 & --\\ \hline
						450 & 50 & --\\ \hline
						600 & 50 & --\\ \hline
					\end{tabular}
					}
		\captionsetup{font=small}
	%	\vspace{-12pt}
		\end{table*}
\captionsetup{justification=justified}
 
%\vspace{-10pt}
\paragraph{Evaluation Metrics:}
We run all algorithms till one of them achieves target tolerance (error in the factor \[\b{A}\], \[\epsilon_T\]) of \[10^{-10}\], and report the number of iterations \[T\] for each experiment. Note that, in all cases \texttt{TensorNOODL} achieves the tolerance first, and in some cases with the algorithms considered in the analysis. Next, since recovery of \[\b{A}^*\] and \[\b{X}^{*(t)}\] is vital for the success of the tensor factorization task, we report the relative Frobenius error for each of these matrices, i.e., for a recovered matrix \[\hat{\b{M}}\], we report \[{\|\hat{\b{M}} - \b{M}^*\|_{\Fr}}/{\|\b{M}^*\|_{\Fr}}\]. In addition, since the dictionary learning task focuses on recovering the sparse matrix \[\b{X}^{*(t)}\], it is agnostic to the transposed Khatri-Rao structure \[\b{S}^{*(t)}\]. As a result, for recovering the sparse factors \[\b{B}^{*(t)}\] and \[\b{C}^{*(t)}\] is crucial for exact support recovery of \[\b{X}^{*(t)}\]. Therefore, we report if the support has been exactly recovered or not. 

\subsubsection{Other Considerations}

\paragraph{Reproducible Results:} The code employed is made available as part of the supplementary material. We fix the random seeds (to \[42, 26,\] and \[91\]) for each Monte Carlo run to ensure reproducibility of the results shown in this work. The experiments were run on a HP Haswell Linux Cluster. The processing of data samples for the sparse coefficients (\[\hat{\b{X}}^{*(t)}\]) was split across \[20\] workers (cores), allocated a total of \[200\] GB RAM. For \textbf{\texttt{Arora(b)}}, \textbf{\texttt{Arora(u)}}, and \textbf{\texttt{Mairal}},  the coefficient recovery was switched between Fast Iterative Shrinkage-Thresholding Algorithm (FISTA) \citep{Beck09}, an accelerated proximal gradient descent algorithm, or  a stochastic-version of Iterative Shrinkage-Thresholding Algorithm (ISTA) \citep{Chambolle1998, Daubechies2004} depending upon the size of the data samples available for learning (see the discussion of the coefficient update step below); see also \cite{Beck09} for details. 

\renewcommand{\arraystretch}{1}
	\begin{table*}[t]
		\caption{Tensor factorization results $\alpha,\beta=0.05$ averaged across $3$ trials.  Here, \[T(\supp(\hat{\b{X}}^{(T)})?)\] field shows the number of iterations \[T\] to reach the target tolerance, while the categorical field, \[\supp(\hat{\b{X}}^{(T)})\] indicates if the support of the recovered \[\hat{\b{X}}^{(T)}\] matches that of \[\b{X}^{*(T)}\] (Y) or not (N).}
	\label{alpha_0_05}
	\captionsetup{justification=centering}
	\captionsetup{font=footnotesize}
	\centering
	\newcolumntype{?}{!{\vrule width 1.5pt}}
	\par\bigskip	
\scalebox{0.73}{
			   	\begin{tabular}{P{0.8cm}|P{1.5cm}|P{1.5cm}|P{2cm}|P{1.4cm}?P{1.5cm}|P{2cm}|P{1.3cm}?P{1.5cm}|P{2cm}|P{1.3cm}}
				\hline 
				 \multirow{3}{0.8cm}{\centering\textbf{$(J,K)$}}&\multirow{3}{1.5cm}{\centering \textbf{Method}}&\multicolumn{3}{c?}{$m=50$}&\multicolumn{3}{c?}{$m=150$}&\multicolumn{3}{c}{$m=300$}\\\cline{3-11}
				  & & \multirow{2}{2cm}{$\centering\tfrac{\|\b{A}^* - \b{A}^{(T)}\|_{\rm F}}{\|\b{A}^*\|_{\rm F}}$}& \multirow{2}{2cm}{$\centering\tfrac{\|\b{X}^{*(T)} - \b{X}^{(T)}\|_{\rm F}}{\|\b{X}^{*(T)}\|_{\rm F}}$}& \multirow{2}{1.3cm}{\centering\scriptsize$\hspace{-4pt}T(\supp(\hat{\b{X}})?)$}
				  &\multirow{2}{2cm}{$\centering\tfrac{\|\b{A}^* - \b{A}^{(T)}\|_{\rm F}}{\|\b{A}^*\|_{\rm F}}$}& \multirow{2}{2cm}{$\centering\tfrac{\|\b{X}^{*(T)} - \b{X}^{(T)}\|_{\rm F}}{\|\b{X}^{*(T)}\|_{\rm F}}$}& \multirow{2}{1.3cm}{\centering\scriptsize$\hspace{-4pt}T(\supp(\hat{\b{X}})?)$}&
				   \multirow{2}{2cm}{$\centering\tfrac{\|\b{A}^* - \b{A}^{(T)}\|_{\rm F}}{\|\b{A}^*\|_{\rm F}}$}& \multirow{2}{2cm}{$\centering\tfrac{\|\b{X}^{*(T)} - \b{X}^{(T)}\|_{\rm F}}{\|\b{X}^{*(T)}\|_{\rm F}}$}& \multirow{2}{1.3cm}{\centering\scriptsize$\hspace{-4pt}T(\supp(\hat{\b{X}})?)$}\\
		    &&&&&&&&&&\\\hline
					\multirow{4}{0.8cm}{ \centering\[\centering \b{100}\]}
					 &\textbf{\texttt{NOODL}} 
					&8.03e-11&3.17e-12&46~(Y)&7.71e-11&4.92e-12&63~(Y)&9.66e-11&6.01e-12&110~(Y)\\ 
					&\textbf{\texttt{Arora(b)}}
					&2.90e-03&3.00e-03&46~(N)&4.60e-03&3.39e-02&63~(N)&5.50e-03&4.89e-02&110~(N)\\  
					&\textbf{\texttt{Arora(u)}}
					&8.97e-04&8.48e-04&46~(N)&1.90e-03&3.40e-02&63~(N)&2.80e-03&4.90e-02&110~(N)\\ 	
					&\textbf{\texttt{Mairal}}
					&1.57e-01&1.67e-01&46~(N)&3.63e-02&1.54e-01&63~(N)&2.32e-02&1.99e-01&110~(N)\\	 \hline
	             	\multirow{4}{0.8cm}{ \centering\[\centering \b{300}\]}
					&\textbf{\texttt{NOODL}} 
					&6.51e-11&3.27e-12&42~(Y)&9.05e-11&5.61e-12&60~(Y)&9.10e-11&7.01e-12&107~(Y)\\ 
					&\textbf{\texttt{Arora(b)}}
					&1.40e-03&1.95e-02&42~(N)&2.50e-03&3.55e-02&60~(N)&3.20e-03&5.04e-02&107~(N)\\ 
					&\textbf{\texttt{Arora(u)}}
					&2.48e-04&1.95e-02&42~(N)&6.35e-04&3.56e-02&60~(N)&9.48e-04&5.05e-02&107~(N)\\	
					&\textbf{\texttt{Mairal}}	
					&6.24e-02&1.11e-01&42~(N)&3.05e-02&1.59e-01&60(N)&1.91e-02&2.09e-01&107~(N)\\ \hline
		            \multirow{4}{0.8cm}{ \centering\[\centering \b{500}\]}
					&\textbf{\texttt{NOODL}} 
					&7.72e-11&3.86e-12&42~(Y)&8.44e-11&5.63e-12&59~(Y)&9.64e-11&7.34e-12&106~(Y)\\ 
					&\textbf{\texttt{Arora(b)}}
					&1.30e-03&2.02e-02&42~(N)&2.10e-03&3.55e-02&59~(N)&2.80e-03&5.03e-02&106~(N)\\ 
					&\textbf{\texttt{Arora(u)}}
					&1.39e-04&2.02e-02&42~(N)&3.82e-04&3.56e-02&59~(N)&5.66e-04&5.05e-02&106~(N)\\	
					&\textbf{\texttt{Mairal}}	
					&6.12e-02&1.10e-01&42~(N)&2.93e-02&1.58e-01&59~(N)&1.80e-02&2.11e-01&106~(N)\\ \hline
					\end{tabular}
					}
			\par\bigskip			
		\scalebox{0.73}{
				   	\begin{tabular}{P{0.8cm}|P{1.5cm}|P{1.5cm}|P{2cm}|P{1.4cm}?P{1.5cm}|P{2cm}|P{1.3cm}}
					\hline 
					 \multirow{3}{0.8cm}{\centering\textbf{$(J,K)$}}&\multirow{3}{1.5cm}{\centering \textbf{Method}}&\multicolumn{3}{c?}{$m=450$}&\multicolumn{3}{c}{$m=500$}\\\cline{3-8}
					  & & \multirow{2}{2cm}{$\centering\tfrac{\|\b{A}^* - \b{A}^{(T)}\|_{\rm F}}{\|\b{A}^*\|_{\rm F}}$}& \multirow{2}{2cm}{$\centering\tfrac{\|\b{X}^{*(T)} - \b{X}^{(T)}\|_{\rm F}}{\|\b{X}^{*(T)}\|_{\rm F}}$}& \multirow{2}{1.3cm}{\centering\scriptsize$\hspace{-4pt}T(\supp(\hat{\b{X}})?)$}
					  &\multirow{2}{2cm}{$\centering\tfrac{\|\b{A}^* - \b{A}^{(T)}\|_{\rm F}}{\|\b{A}^*\|_{\rm F}}$}& \multirow{2}{2cm}{$\centering\tfrac{\|\b{X}^{*(T)} - \b{X}^{(T)}\|_{\rm F}}{\|\b{X}^{*(T)}\|_{\rm F}}$}& \multirow{2}{1.3cm}{\centering\scriptsize$\hspace{-4pt}T(\supp(\hat{\b{X}})?)$}\\
			    &&&&&&&\\\hline
					\multirow{4}{0.8cm}{ \centering\[\centering \b{100}\]}
					 &\textbf{\texttt{NOODL}} 
					&8.92e-11&7.29e-12&115~(Y)&8.71e-11&1.06e-11&131~(Y)\\ 
					&\textbf{\texttt{Arora(b)}}
					&7.50e-03&6.17e-02&115~(N)&9.16e-03&7.36e-02&131~(N)\\  
					&\textbf{\texttt{Arora(u)}}
					&4.40e-03&6.19e-02&115~(N)&5.70e-03&7.40e-02&131~(N)\\ 	
					&\textbf{\texttt{Mairal}}
					&8.79e-02&2.27e-01&115~(N)&2.81e-02&2.56e-01&131~(N)\\	 \hline
	             	\multirow{4}{0.8cm}{ \centering\[\centering \b{300}\]}
					&\textbf{\texttt{NOODL}} 
					&9.20e-11&8.41-12&110~(Y)&8.49e-11&9.03e-12&128~(Y)\\ 
					&\textbf{\texttt{Arora(b)}}
					&4.00e-03&6.16e-02&110~(N)&4.90e-03&7.39e-02&128~(N)\\ 
					&\textbf{\texttt{Arora(u)}}
					&1.40e-03&6.18e-02&110~(N)&1.83e-03&7.42e-02&128~(N)\\	
					&\textbf{\texttt{Mairal}}	
					&4.85e-02&2.19e-01&110~(N)&2.32e-02&2.63e-01&128~(N)\\ \hline
		            \multirow{4}{0.8cm}{ \centering\[\centering \b{500}\]}
					&\textbf{\texttt{NOODL}} 
					&8.95e-11&8.21e-12&109~(Y)&9.06e-11&9.29e-12&127~(Y)\\ 
					&\textbf{\texttt{Arora(b)}}
					&3.60e-03&6.21e-02&109~(N)&4.40e-03&7.40e-02&127~(N)\\ 
					&\textbf{\texttt{Arora(u)}}
					&8.54e-04&6.23e-02&109~(N)&1.10e-03&7.44e-02&127~(N)\\	
					&\textbf{\texttt{Mairal}}	
					&4.62e-02&2.20e-01&109~(N)&4.05e-02&2.56e-01&127~(N)\\ \hline
					\end{tabular}
					}						
		\captionsetup{font=small}
		\vspace{-0pt}
		\end{table*}
\captionsetup{justification=justified}
\setlength\extrarowheight{5pt}

\paragraph{Sparse Factor Recovery Considerations:}
In \cite{Arora15}, the authors present two algorithms -- a simple algorithm with a sample complexity of \[\tilde{\Omega}(ms)\] which incurs an estimation bias (\texttt{Arora(b)}), and a more involved variant for unbiased estimation of the dictionary whose sample complexity was not established \texttt{Arora(u)}. However, these algorithms do not provide guarantees on, or recover the sparse coefficients. As a result, we need to adopt an additional \[\ell_1\] minimization based coefficient recovery step. Further, the algorithm proposed by \cite{Mairal09} can be viewed as a variant of regularized alternating least squares algorithm which employs \[\ell_1\] regularization for the recovery of the transposed Khatri-Rao structured matrix.

To form the coefficient estimates for \texttt{Arora(b)}, \texttt{Arora(u)}, and {\texttt{Mairal} `\texttt{09}} we solve the Lasso \citep{Tibshirani1996} program using a stochastic-version of Iterative Shrinkage-Thresholding Algorithm (ISTA) \citep{Chambolle1998, Daubechies2004}  (or Fast Iterative Shrinkage-Thresholding Algorithm (FISTA) \citep{Beck09} if \[p\] is small) and report the best estimate (in terms of relative Frobenius error) across $10$ values of the regularization parameter. The stochastic projected gradient descent is necessary to make coefficient recovery tractable since size of \[\b{X}^{*(t)}\] grows quickly with \[(\alpha, \beta)\]. For these algorithms, coefficient estimation step the slowest step since it has to scan through different values of the regularization parameters to arrive at an estimate. In contrast, \texttt{TensorNOODL} does not require such an expensive tuning procedure, while providing recovery guarantees on the recovered coefficients. 

Note that in practice ISTA and FISTA can be parallelized as well, but tuning of the regularization parameters still involves (an expensive) grid search. Arguably even if each step of these algorithms (ISTA and FISTA) take the same amount of time as that of \texttt{TensorNOODL}, the search over, say \[10\], values of the regularization parameters will still be take \[10\] times the time. As a result, \texttt{TensorNOODL} is an attractive choice as it does not involve an expensive tuning procedure. 

\paragraph{Additional Discussion:}  Table~\ref{alpha_0_005}, \ref{alpha_0_01}, and \ref{alpha_0_05} show the results of the analysis averaged across the three Monte Carlo runs, for \[\alpha=\beta= \{0.005, 0.01, 0.05\}\], respectively. We note that for every choice of \[(J,K)\], \[m\], and \[(\alpha, \beta)\], \texttt{TensorNOODL} is orders of magnitude superior to related techniques. In addition, it also recovers the support correctly in all of the cases, ensuring that the sparse factors can be recovered correctly. Specifically,  the sparse factors \[\b{B}^{*(t)}\] and \[\b{C}^{*(t)}\] can be recovered (upto permutation and scaling) via Alg.~\ref{algo:KRP_algo}.

\vspace{-5pt}
\subsection{Real-world Data Simulations}

\subsubsection{Analysis of the Enron Dataset}\label{app:enron}
\paragraph{Enron Email Dataset:} Sparsity-regularized ALS-based tensor factorization techniques, albeit possessing limited convergence guarantees, have been a popular choice to analyze the Enron Email Dataset ($184\times184\times44$) \cite{Fu2015, Bader2006}. We now use \texttt{TensorNOODL} to analyze the email activity of $184$ Enron employees over $44$ weeks (Nov. `98 --Jan. '02) during the period before and after the financial irregularities were uncovered.
	
The Enron Email Dataset ($184\times184\times 44$) consists of email exchanges between $184$ employees over $44$ weeks (Nov. `98 --Jan. '02) which includes the period before and after the financial irregularities were uncovered. In general, every person in an organization (like Enron) communicates with only a subset of employees, as a result the tensor of email activity (Employees vs. Employees vs. Time) naturally has the model analyzed in this work. Moreover, as pointed out by  \cite{Diesner2005} ``\emph{\dots in   $2000$   Enron   had   a   segmented  culture  with  directives  being  sent  from  on-high  and  sporadic  feedback}''. Meaning that different units within the organization exhibited clustered communication structure. This motivates us to analyze the dataset for the presence charateristic ways of communications between different business units. 

We run \texttt{TensorNOODL} in batch setting here, this is to showcase that in practice \texttt{TensorNOODL} also works in batch settings, and also to overcome the limited size of the Enron Dataset. 

\paragraph{Data Preparation and Parameters:}
For \texttt{TensorNOODL} and \texttt{Mairal `09}, we use the initialization algorithm of \cite{Arora15}, which yielded $4$ dictionary elements. Following this, we use these techniques in batch setting to simultaneously identify email activity patterns and cluster employees. We also compare our results to \cite{Fu2015}, which just aims to cluster the employees by imposing sparsity constraint on one of the factors, and does not learn the patterns. As opposed to \cite{Fu2015}, \texttt{TensorNOODL}  did not require us to guess the number of dictionary elements to be used. We use Alg.~\ref{algo:KRP_algo} to identify the employees corresponding to email activity patterns from the recovered sparse factors. As in case of \cite{Fu2015}, we transform each non-zero element \[\underline{\b{Z}}(i,j,k)^{(t)}\] of the dataset as follows to compress its dynamic range,
\begin{align*}
\underline{\b{Z}}(i,j,k)^{(t)}= \log_2(\underline{\b{Z}}(i,j,k)) + 1.
\end{align*}
 We also scale all elements by the largest element magnitude and subtract the mean (over the temporal aspect) from the non-zero fibers. We initialization the dictionary using the algorithm presented in \cite{Arora15} for \texttt{TensorNOODL} and \texttt{Mairal `09}, resulting in $4$ dictionary elements. As with synthetic data experiments, we set \[\eta_x = 0.2\], \[\tau = 0.1\] and \[C=1\]. We set the dictionary update step-size \[\eta_A = 10\], and run \texttt{TensorNOODL} in batch setting for \[100\] iterations. We recover the sparse factors \[\b{B}^{*(t)}\] and \[\b{C}^{*(t)}\] using our untangling Alg.~	\ref{algo:KRP_algo}.  To compile the results, we ignore the entries with magnitude smaller than $5\%$ of the largest entry in that sparse factor column.

\paragraph{Evaluation Specifics:}
As in \cite{Fu2015}, we use cluster purity (\texttt{False Positives/Cluster Size}) as the measure of the clustering performance. To this end, we also compare our results with \cite{Fu2015}. Note that \cite{Fu2015} solves a regularized least squares-based formulation for low-rank non-negative tensor factorization, wherein one of factor is sparse (corresponds to employees) and the others have controlled Frobenius norms. Here, the non-zero entries of the sparse factor gives insights into the employees who exhibit similar behaviour. Unlike \texttt{TensorNOODL} and \texttt{Mairal `09}, this procedure however does not recover the email patterns of interest.

\paragraph{Discussion:}
The results of the decomposition are shown in Fig.~\ref{fig:enron}. The Enron organizational structure has four main units, namely, `Legal', `Traders', `Executives', and `Pipeline', which coincides with the number of dictionary elements recovered by \texttt{TensorNOODL}. Specifically, as opposed to \cite{Fu2015}, which take the number of clusters to be found as an input, \texttt{TensorNOODL} leverages the model selection performed by initialization algorithms. Furthermore, along with recovering the email activity patterns, \texttt{TensorNOODL} is also superior in terms of the clustering purity as compared to other techniques as inferred from the False Positives to Cluster-size ratio (Fig.~\ref{fig:enron}). The email activity patterns show how different group activities changed as time unfolded. In line with \citet{Diesner2005}, we observe that during the crisis the employees of different divisions indeed exhibited cliquish behavior. These results illustrate that our model (and algorithm) can be used to study organizational behavior via their communication activity. Note that here we use \texttt{TensorNOODL} in the batch setting, i.e., we reuse samples. This shows that empirically our algorithm can be used in the batch setting also, although our analysis applies to the online setting. We leave the analysis of the batch setting to future work. 

\vspace{-5pt}
\subsubsection{Analysis of the NBA Dataset}\label{app:nba}

\begin{figure}[!t]
\centering
    \resizebox{\textwidth}{!}{\begin{tabular}{p{0.01cm}ccp{0.01cm}cc}
    & {\large Shot Patterns}& {\large Clustering of Teams}&& {\large Shot Patterns}& {\large Clustering of Teams} \vspace{-3pt}\\
     &{\small (\[{\b{A}}^{(T)}\] Elements)} \hspace{-5pt}&  {\small (\[\hat{\b{B}}\] Elements)}&&{\small (\[{\b{A}}^{(T)}\] Elements)} \hspace{-5pt}&  {\small (\[\hat{\b{B}}\] Elements)}\\
    \rotatebox{90}{\hspace{10pt}\parbox{2cm}{\centering Element 1}}\hspace{-10pt}&
    \includegraphics[width=0.2\textwidth]{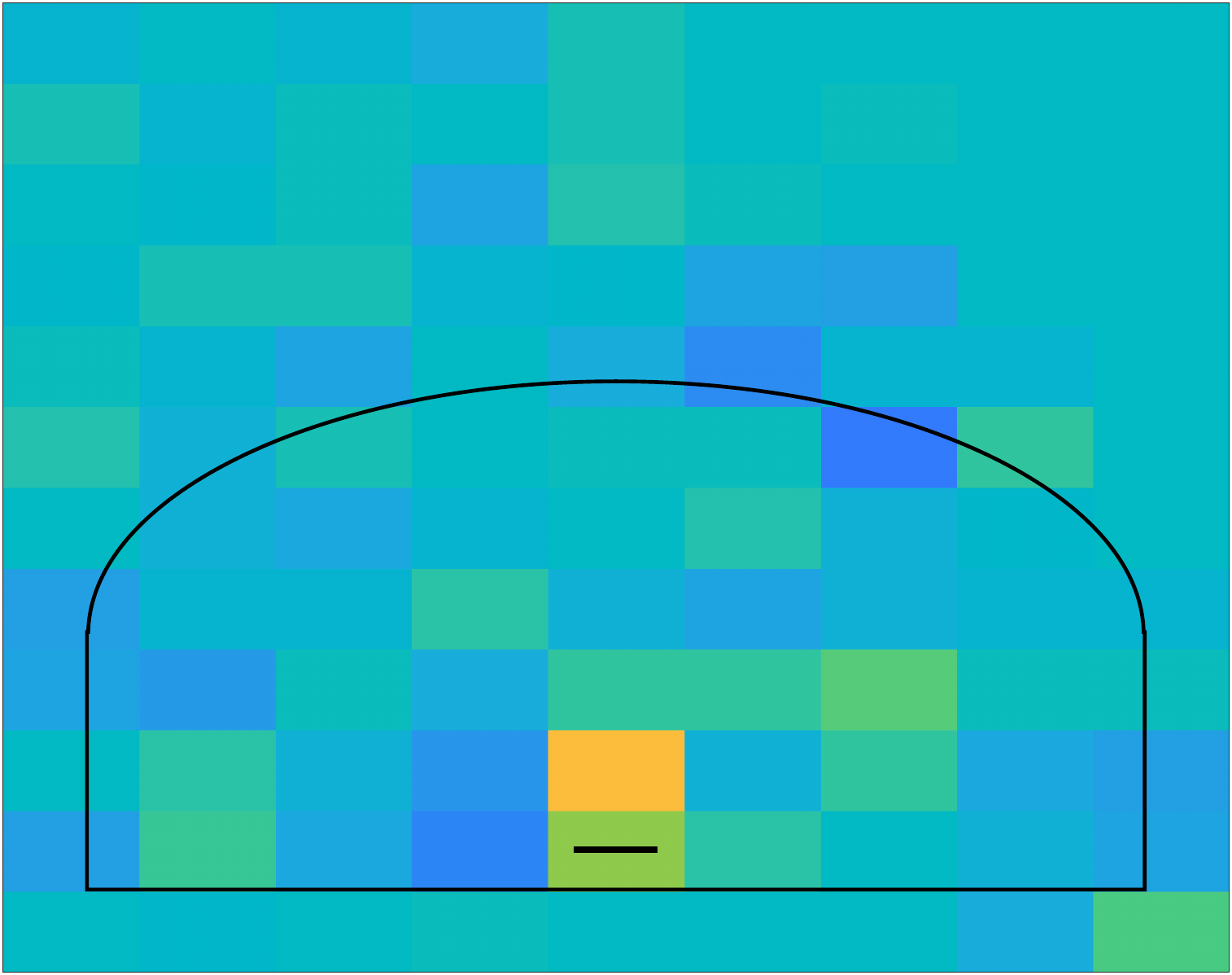}\hspace{2pt}
    \includegraphics[width=0.028\textwidth]{colormap7-crop-crop.pdf}& \includegraphics[width=0.34\textwidth]{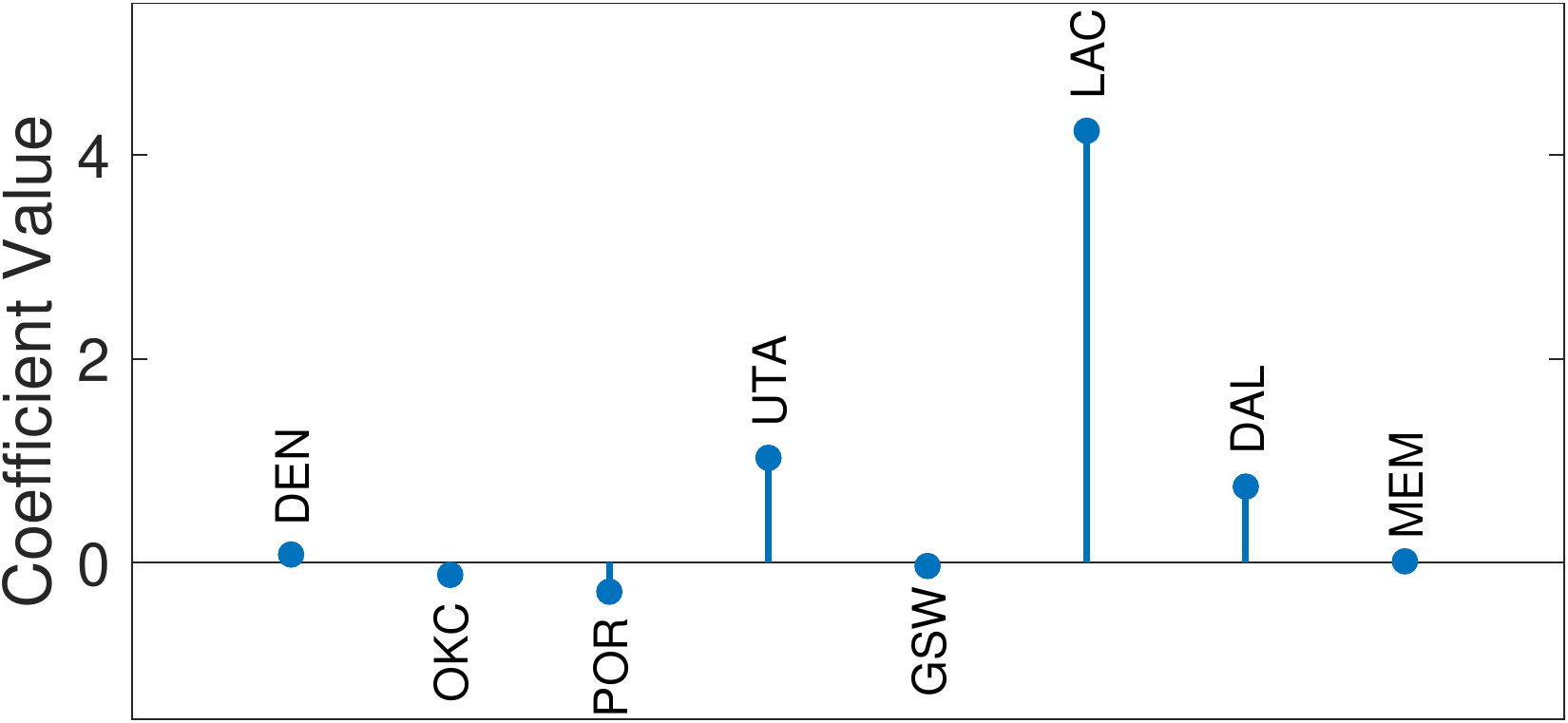} \hspace{5pt}&
       \rotatebox{90}{\hspace{10pt}\parbox{2cm}{\centering Element 2}}\hspace{-5pt}&
      \includegraphics[width=0.2\textwidth]{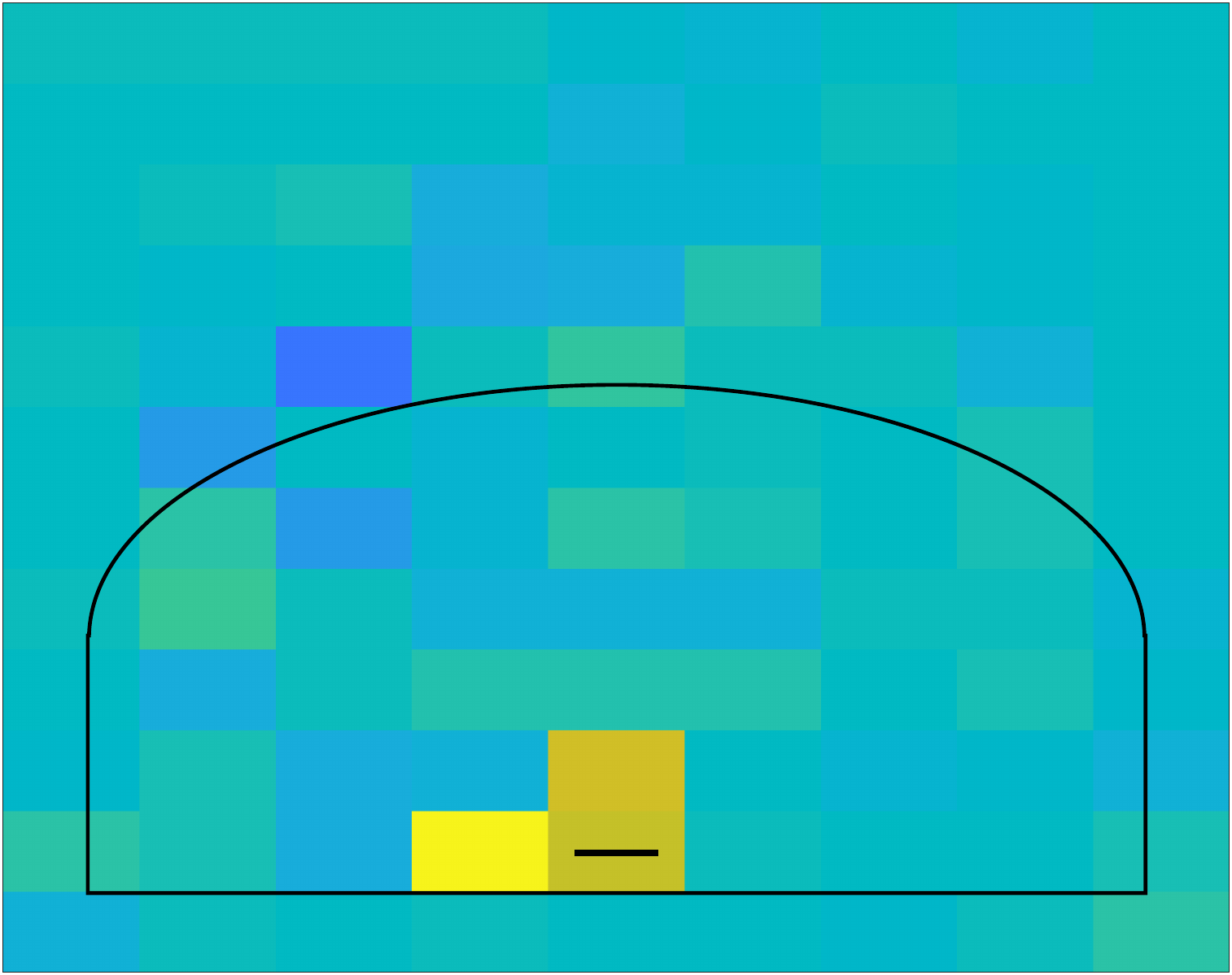}\hspace{2pt}
       \includegraphics[width=0.028\textwidth]{colormap7-crop-crop.pdf}&  \includegraphics[width=0.34\textwidth]{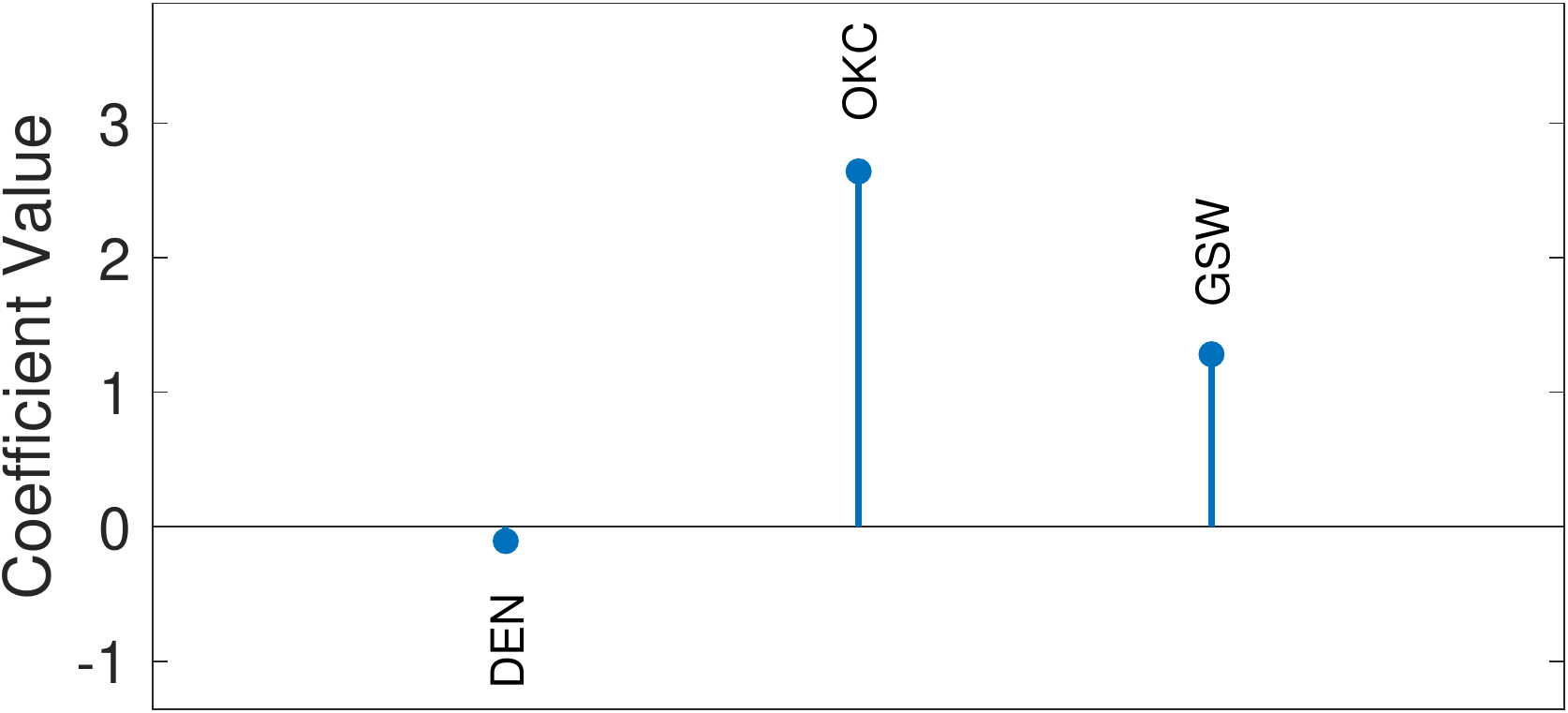}\\
          & (a-i) & (a-ii)&   & (b-i) & (b-ii)\\
       \rotatebox{90}{\hspace{10pt}\parbox{2cm}{\centering Element 3}}\hspace{-5pt}& 
      \includegraphics[width=0.2\textwidth]{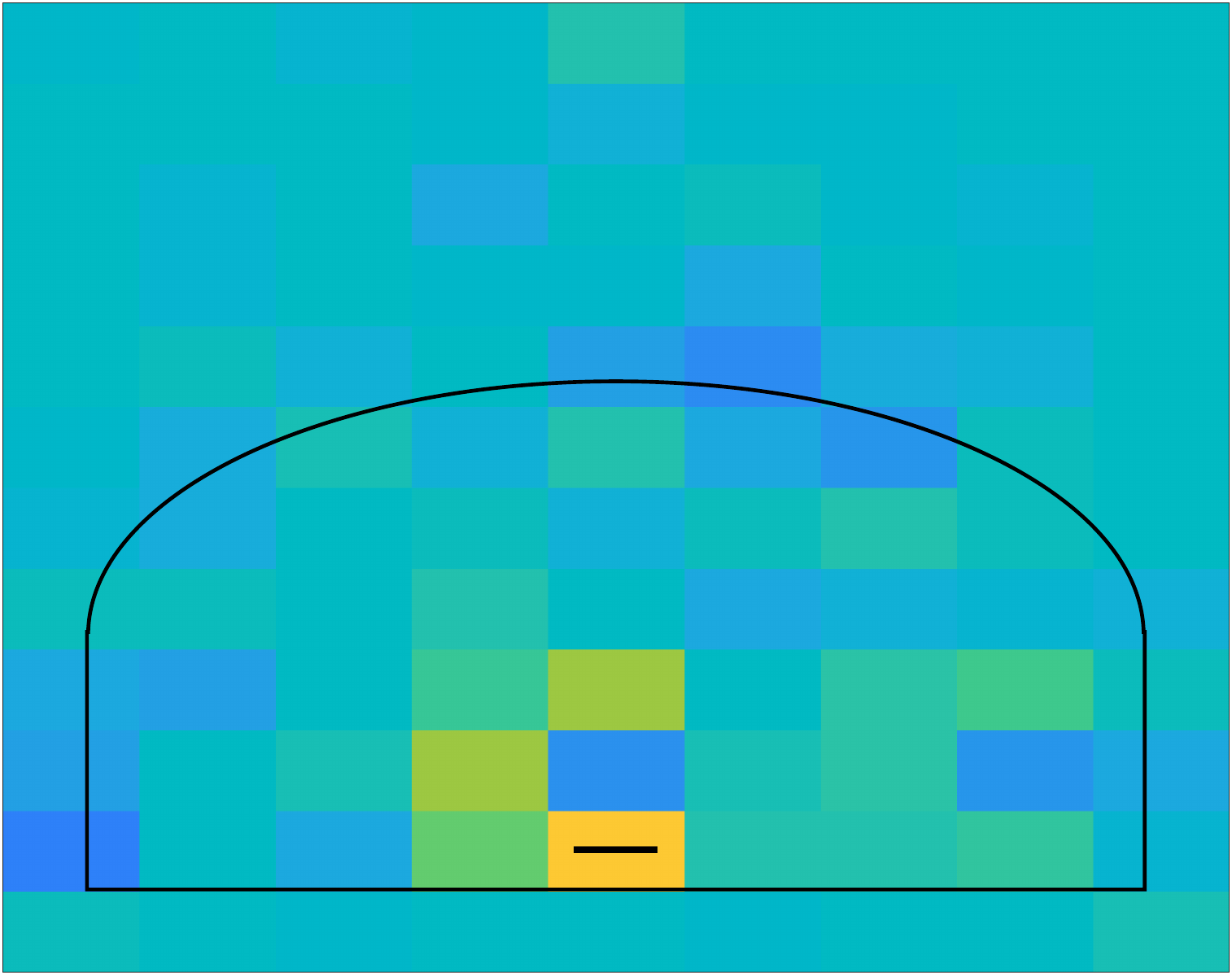}\hspace{2pt}
        \includegraphics[width=0.028\textwidth]{colormap7-crop-crop.pdf}& \includegraphics[width=0.34\textwidth]{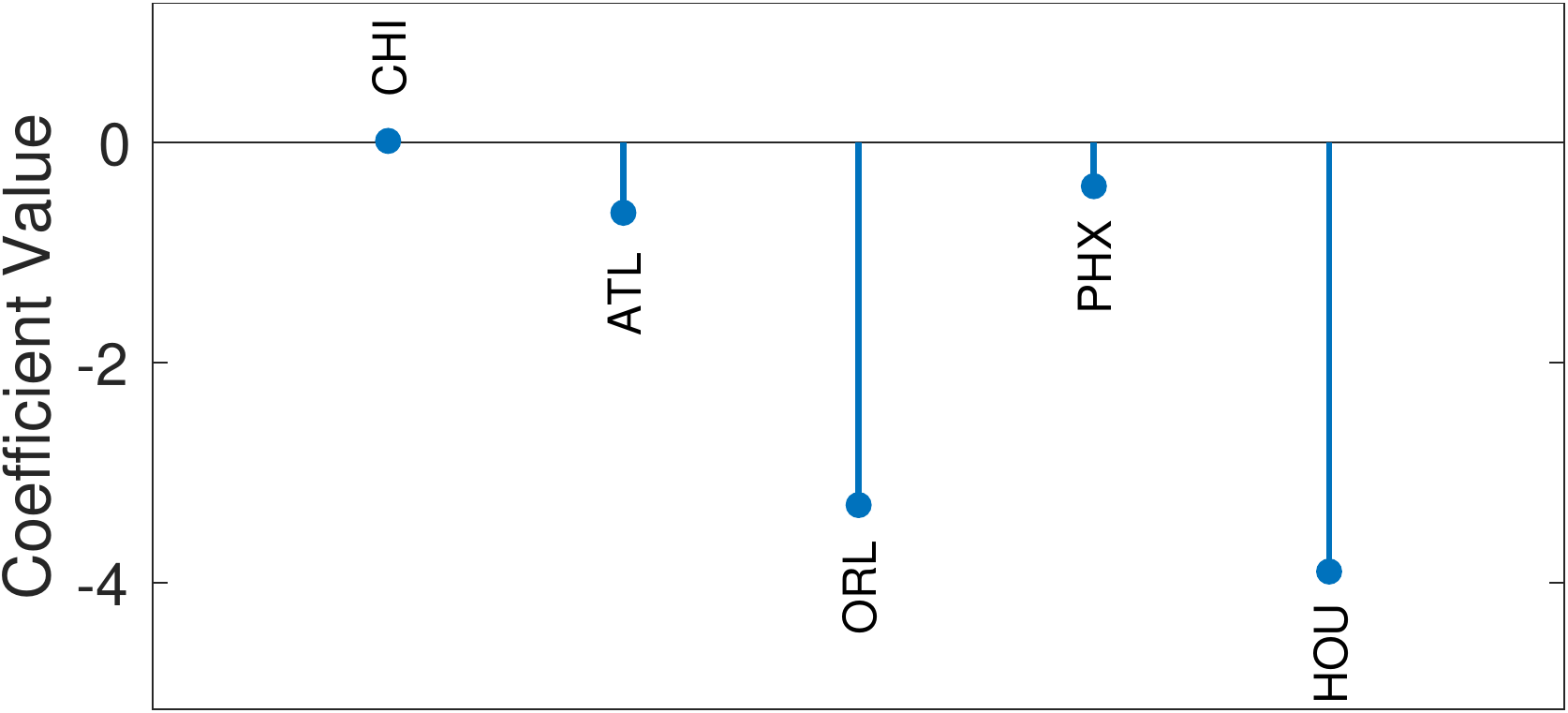} \hspace{5pt}&
    
      \rotatebox{90}{\hspace{10pt}\parbox{2cm}{\centering Element 4}}\hspace{-5pt}& 
     \includegraphics[width=0.2\textwidth]{dict_ele_4-crop.pdf}\hspace{2pt}
        \includegraphics[width=0.028\textwidth]{colormap7-crop-crop.pdf}&
        \includegraphics[width=0.34\textwidth]{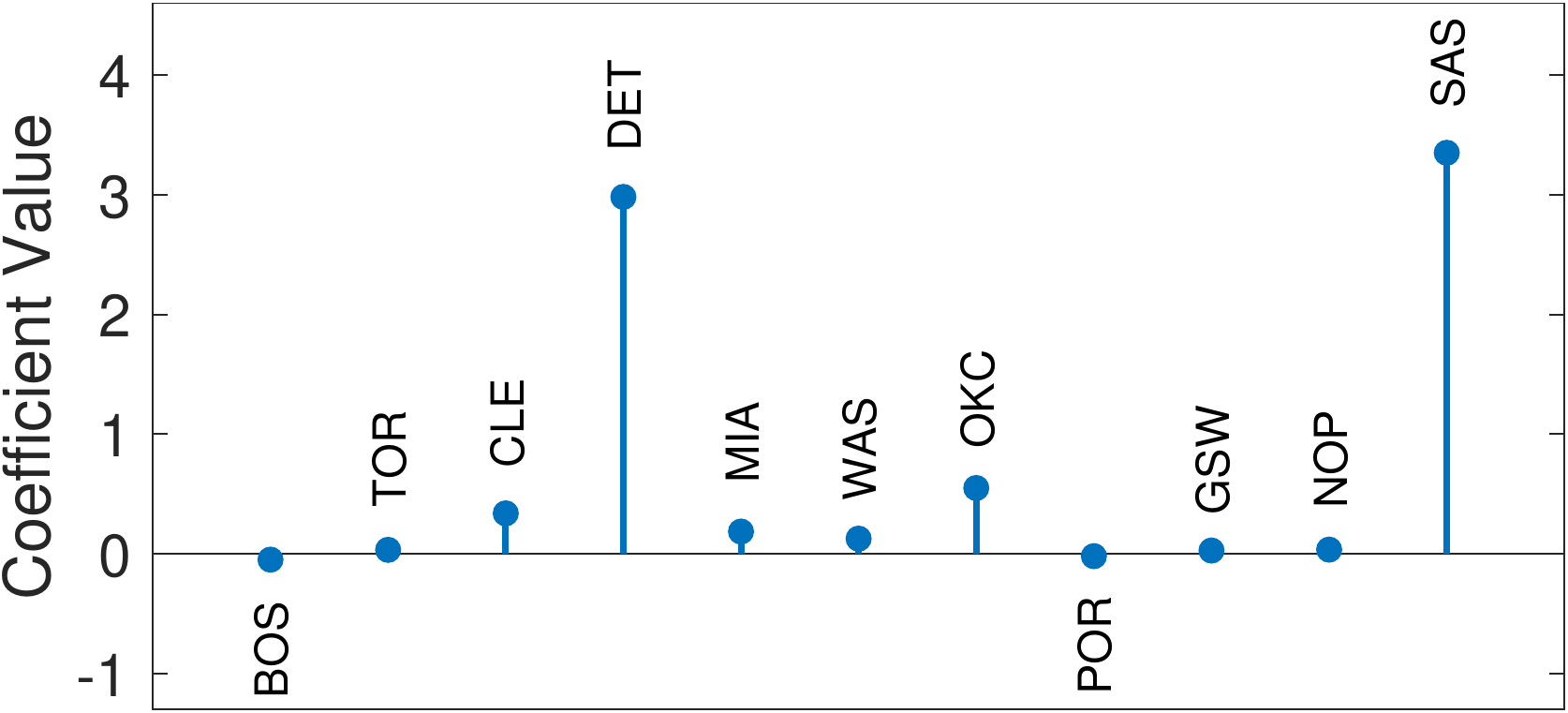}
    \\
      & (c-i) & (c-ii)&& (d-i) & (d-ii)\\
   %     {(a) Dictionary element $1$ }& {(b) Dictionary element $2$ }& 
    %    {(c) Dictionary element $3$ }&  (d) Dictionary element $4$ \\ \\ \vspace{12pt}
  \rotatebox{90}{\hspace{10pt}\parbox{2cm}{\centering Element 5}}\hspace{-5pt}&   
 \includegraphics[width=0.2\textwidth]{dict_ele_5-crop.pdf}\hspace{2pt}
    \includegraphics[width=0.028\textwidth]{colormap7-crop-crop.pdf}&
    \includegraphics[width=0.34\textwidth]{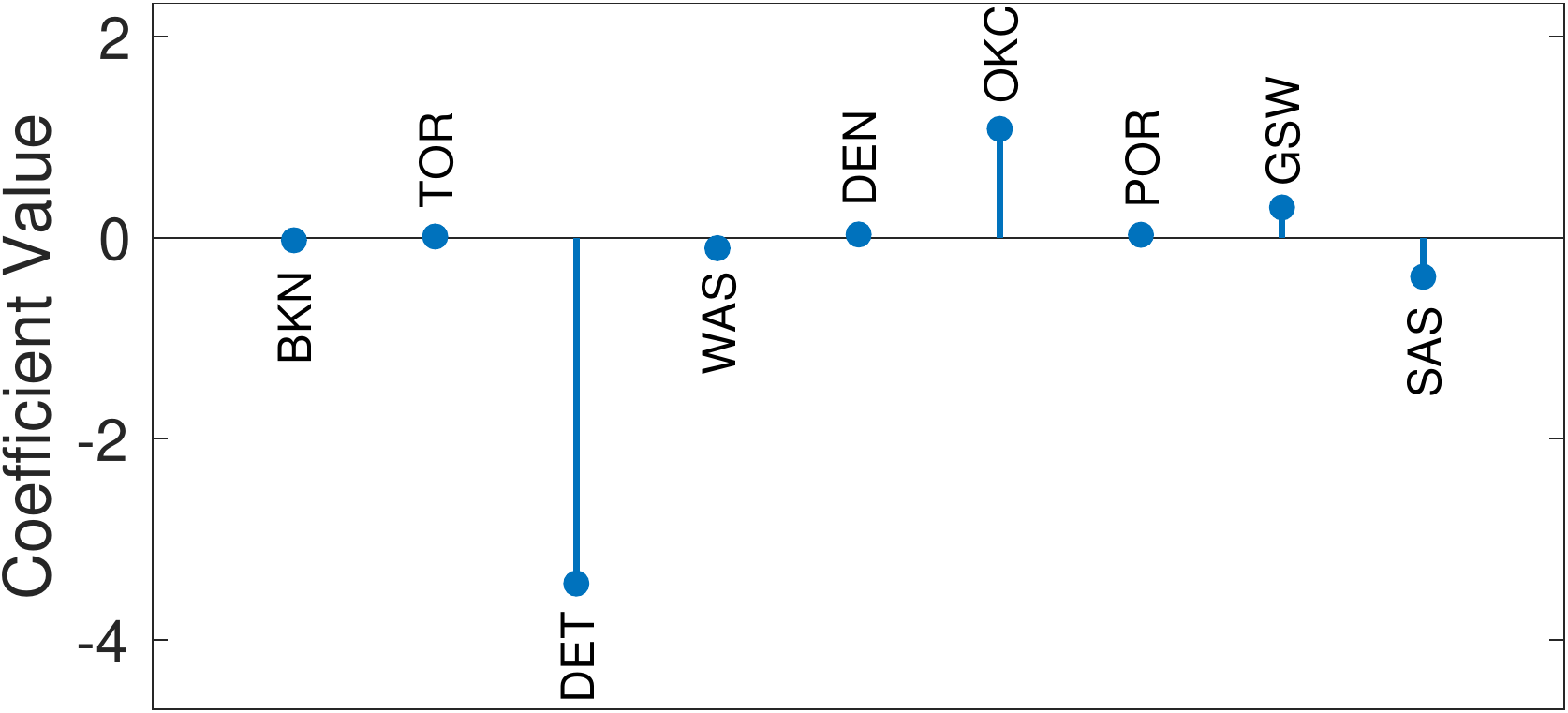} \hspace{5pt}&
     
  \rotatebox{90}{\hspace{10pt}\parbox{2cm}{\centering Element 6}}\hspace{-5pt}&  
  \includegraphics[width=0.2\textwidth]{dict_ele_6-crop.pdf}\hspace{2pt}
     \includegraphics[width=0.028\textwidth]{colormap7-crop-crop.pdf}&
     \includegraphics[width=0.34\textwidth]{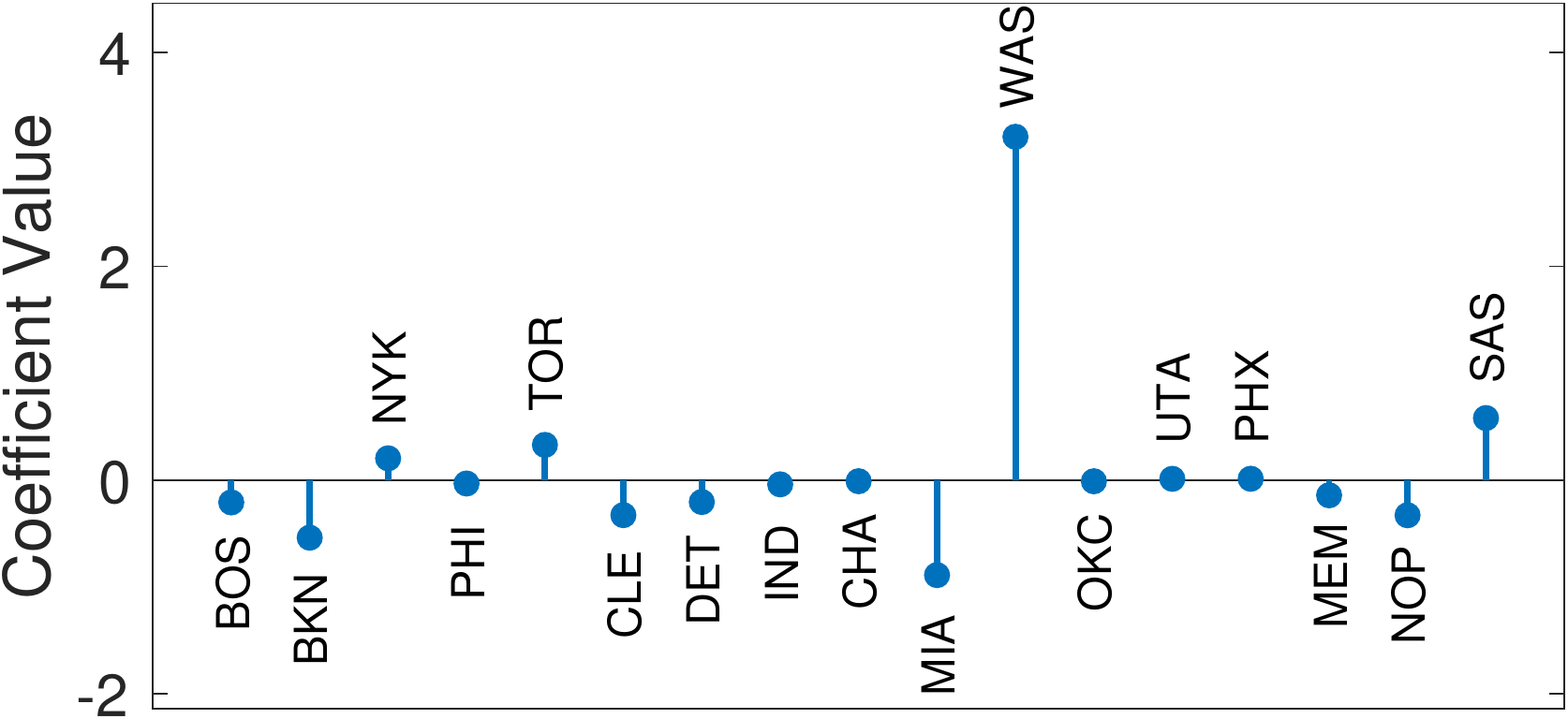}\\
     & (e-i) & (e-ii)&   & (f-i) & (f-ii)\\
      \rotatebox{90}{\hspace{10pt}\parbox{2cm}{\centering Element 7}}\hspace{-5pt}&  
      \includegraphics[width=0.2\textwidth]{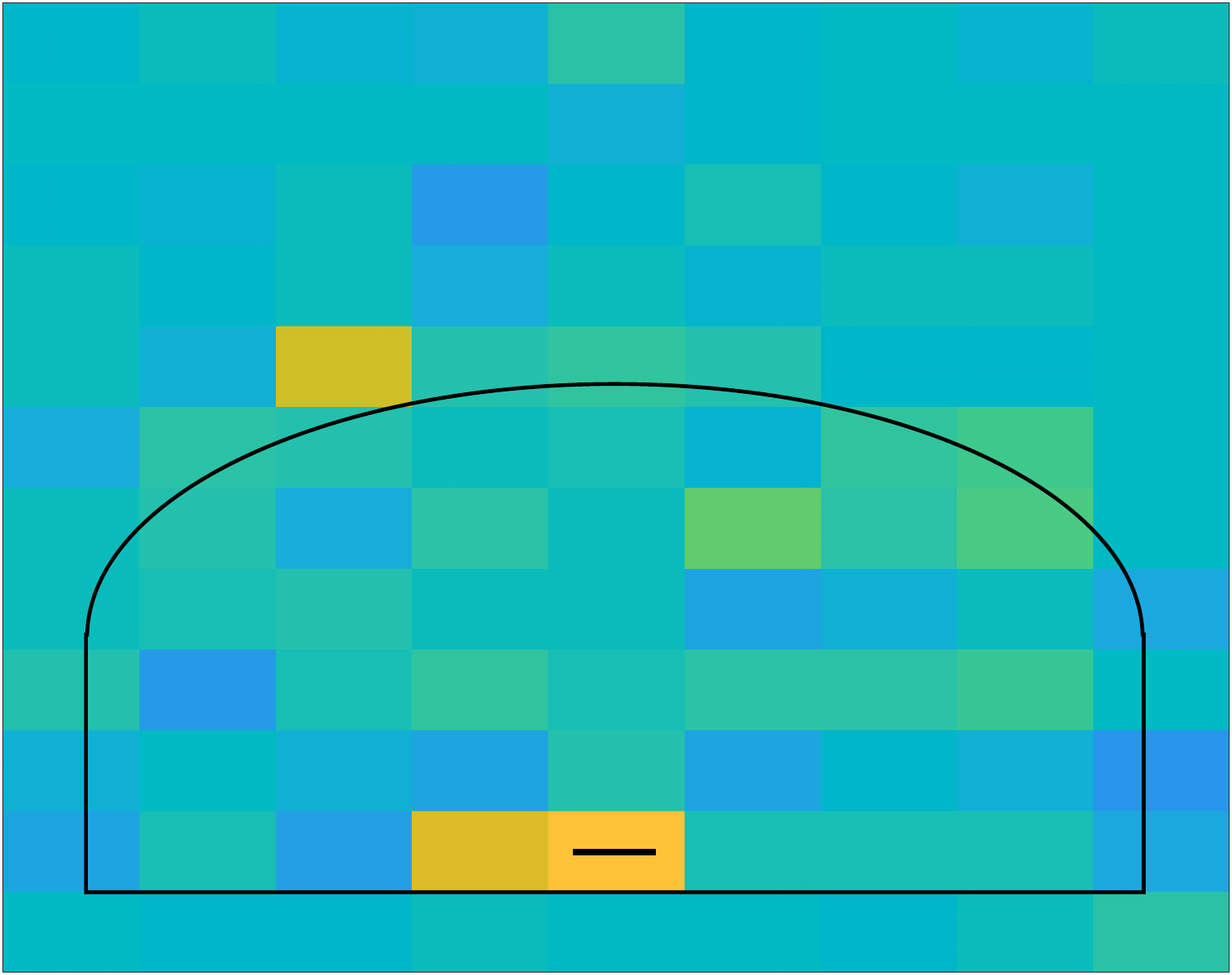}\hspace{2pt}
      \includegraphics[width=0.028\textwidth]{colormap7-crop-crop.pdf}&
      \includegraphics[width=0.34\textwidth]{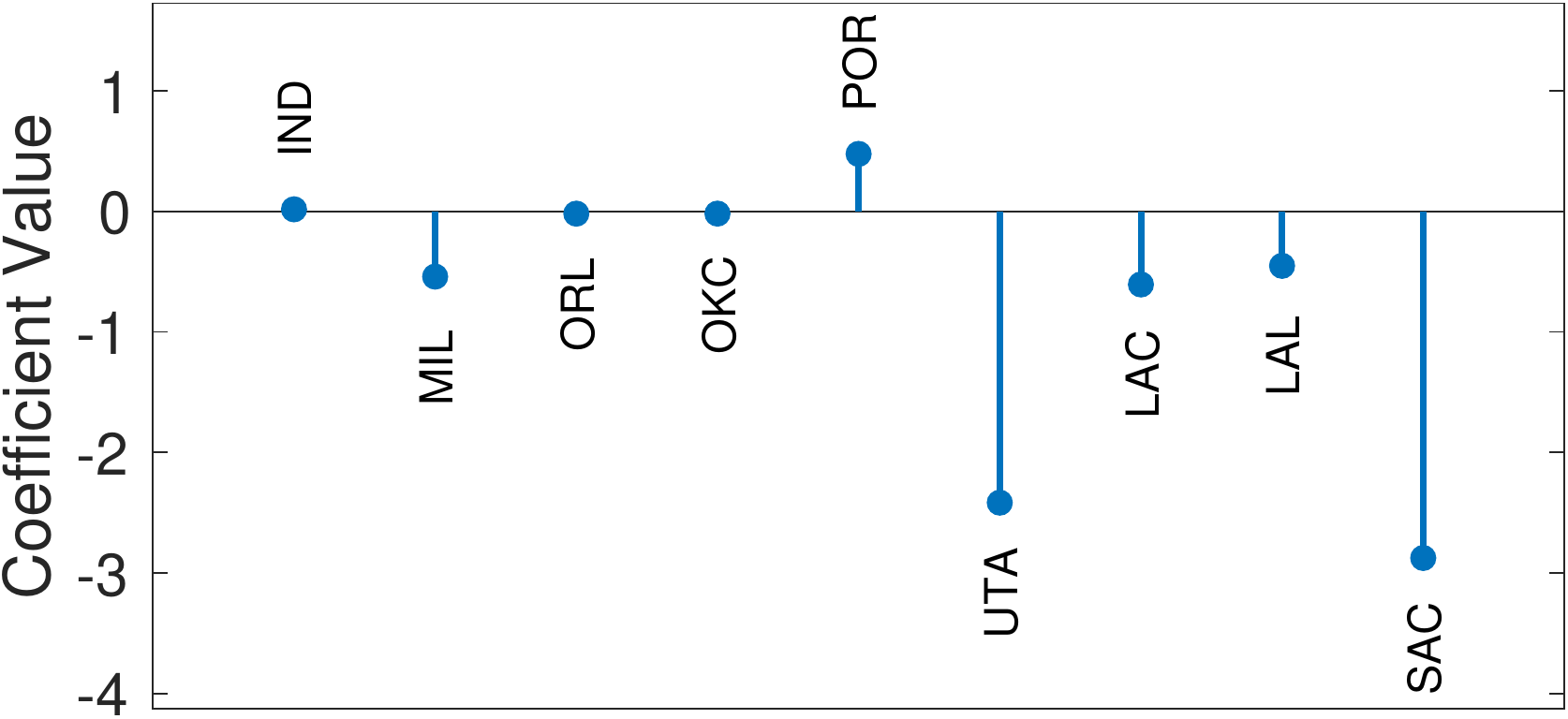} & \\ %\vspace{-8pt}\\ 
        & (g-i) & (g-ii) & & & \\
       % {(e) Dictionary element $5$ }& {(f) Dictionary element $6$}  & {(g) Dictionary element $7$}&\\
   
       \end{tabular}
     }
     \vspace{5pt}
     \caption{ Shot Patterns and Teams in the NBA dataset. Panels (a-g)-i show dictionary factor ( \[{\b{A}}^{(T)}\]) columns (elements) reshaped into a matrix to show different recovered shot patterns. Here, the $3$-point line and the rim is indicated in black. 
     Corresponding sparse factor (\[\hat{\b{B}}\]) representing Teams are shown in panels (a-g)-ii. }
    \label{fig:nba_combined}
     \vspace{-4pt}
    \end{figure}

 \begin{table}[!h]
 \centering
  \caption{Analysis of Sparse factor corresponding to Players (\[\hat{\b{C}}\]) }
  \vspace{5pt}
  \begin{minipage}{0.5\textwidth}
   \centering
   \resizebox{0.78\textwidth}{!}{
 \begin{tabular}{lcc}
 \multicolumn{3}{c}{\textbf{Players corresponding to element 1}}\\
  \textbf{Players} & \textbf{Position} & \textbf{Coefficient Value}\\\hline
  Harrison Barnes & Small forward / Power forward &	-0.2770\\
  Stephen Curry&Point guard&	-0.7620\\
   Kevin Durant & Small forward&-0.0707\\
  Nikola Jokic & Center&	0.5040\\
  CJ McCollum& Shooting guard&	-0.0771\\
   Donovan Mitchell&Shooting guard &	0.0414\\
  Jamal Murray&Point guard / Shooting guard&	-0.1677\\
   Jusuf Nurkic&Center&	0.0352\\
   Ricky Rubio&Point guard&	0.0191\\
   Klay Thompson&Shooting guard&	-0.2128\\
   Russell Westbrook&Point guard&	-0.0208\\
  Lou Williams&Shooting guard / Point guard&	-0.0198\\\vspace{25pt}
     \end{tabular}}
    \end{minipage}%
      \begin{minipage}{0.5\textwidth}
       \centering
       \resizebox{0.78\textwidth}{!}{
     \begin{tabular}{lcc}
     \multicolumn{3}{c}{\textbf{Players corresponding to element 2}}\\
      \textbf{Players} & \textbf{Position} & \textbf{Coefficient Value}\\\hline
     Harrison Barnes &  	 Small forward / Power forward &	-0.0187\\
     Danilo Gallinari & 	Power forward / Small forward &	-0.0515\\
     Tobias Harris &	Small forward / Power forward &	-0.2729\\
     Donovan Mitchell  &	Shooting guard&	0.6536\\
     Karl-Anthony Towns	&Center&	0.5449\\
     Andrew Wiggins	&Shooting guard / Small forward	&0.4454\\
     \multicolumn{3}{c}{\textbf{Players corresponding to element 3}}\\
           \textbf{Players} & \textbf{Position} & \textbf{Coefficient Value}\\\hline
         LaMarcus Aldridge	&Power forward / Center&	-0.2248\\
         Trevor Ariza 	&Small forward / Shooting guard&	0.3195\\
         DeMar DeRozan	&Small forward / Shooting guard	&-0.6716\\
         Bryn Forbes  &	Shooting guard / Point guard & 	0.1241\\
         Justin Holiday   &	Shooting guard / Small forward	&0.1074\\
         Josh Richardson  &	Shooting guard / Small forward&0.6049\\
         Justise Winslow  &	Point guard &  	-0.0580\\
         \end{tabular}}
        \end{minipage}
         \begin{minipage}{0.5\textwidth}
          \centering
           \resizebox{0.78\textwidth}{!}{
            \centering
         \begin{tabular}{lcc}
         \multicolumn{3}{c}{\textbf{Players corresponding to element 4}}\\
          \textbf{Players} & \textbf{Position} & \textbf{Coefficient Value}\\\hline
         Bojan Bogdanovic   &	Small forward &	-0.0275\\
        Devin Booker &  	Shooting guard / Point guard & 	0.0114\\
        Clint Capela &  	Center&	-0.2256\\
        Willie Cauley-Stein	&Center / Power forward	&-0.0150\\
        Evan Fournier  &	Shooting guard / Small forward	&0.2032\\
        James Harden  &	Shooting guard / Point guard  &	0.1992\\
        Buddy Hield	&Shooting guard	&-0.0198\\
        Jeremy Lamb	&Shooting guard / Small forward	&-0.1468\\
        Derrick Rose   &	Point guard  & 	0.4961\\
        Ricky Rubio&	Point guard   &	0.0198\\
        Pascal Siakam  &	Power forward 	&-0.0244\\
        Karl-Anthony Towns &	Center&	0.7711\\
        Kemba Walker & 	Point guard   &	0.0331\\
        Andrew Wiggins &	Shooting guard / Small forward	&-0.0119\\
        Thaddeus Young &	Power forward &	-0.0148\\
        Trae Young &	Point guard  &	0.0415\\
             \end{tabular}}
            \end{minipage}%
              \begin{minipage}{0.5\textwidth}
               \centering
               \resizebox{0.78\textwidth}{!}{
             \begin{tabular}{lcc}
             \multicolumn{3}{c}{\textbf{Players corresponding to element 5}}\\
              \textbf{Players} & \textbf{Position} & \textbf{Coefficient Value}\\\hline
           Devin Booker  &	 Shooting guard / Point guard  &0.0104\\
           Clint Capela  &	Center	&0.0210\\
           Luka Doncic   &	Guard / Small forward &	-0.0162\\
           Eric Gordon   &	Shooting guard / Small forward&	0.0150\\
           James Harden  &	Shooting guard / Point guard  &	0.0678\\
           Tobias Harris &	Small forward / Power forward 	&-0.0247\\
           Joe Ingles&	Small forward &	0.1005\\
           Josh Jackson  &	Small forward / Shooting guard&	-0.0100\\
           Donovan Mitchell  &	Shooting guard&	0.0984\\
           Kelly Oubre Jr.  &	Small forward / Shooting guard&	-0.0143\\
           Derrick Rose & 	Point guard   &	0.6507\\
           Ricky Rubio   &	Point guard  & 	0.0488\\
           Karl-Anthony Towns	&Center&	0.6924\\  
           Kemba Walker  &	Point guard   &	0.1670\\
           Andrew Wiggins&	Shooting guard / Small forward	&0.2000\\
           Lou Williams  &	Shooting guard / Point guard  &	0.0196\\
                 \end{tabular}}
                \end{minipage}
               \begin{minipage}{0.5\textwidth}
                \centering
                      \resizebox{0.78\textwidth}{!}{
                    \begin{tabular}{lcc}
                    \multicolumn{3}{c}{\textbf{Players corresponding to element 6}}\\
                     \textbf{Players} & \textbf{Position} & \textbf{Coefficient Value}\\\hline
                    Deandre Ayton   & 	Center / Power forward& 	0.0640 \\
                   Eric Bledsoe	& Point guard   & 	0.0527\\
                   Bojan Bogdanovic& 	Small forward & 	-0.1353\\
                   Devin Booker	& Shooting guard / Point guard  & 	0.4668\\
                   Jimmy Butler	 & Shooting guard / Small forward & 	-0.0157\\
                   Kentavious Caldwell-Pope	 & Shooting guard & 	0.0507\\
                   Clint Capela	 & Center & 	0.6348\\
                   Willie Cauley-Stein  & 	Center / Power forward & 	-0.0303\\
                   Jordan Clarkson  & 	Point guard / Shooting guard   & 	-0.0141\\
                   John Collins	 & Power forward  & 	0.0948\\
                   DeAaron Fox & 	Point guard   &  	0.0148\\
                   Aaron Gordon	 & Power forward / Small forward  & 	0.0978\\
                   Eric Gordon  & 	Shooting guard / Small forward & 	0.1861\\
                   James Harden	 & Shooting guard / Point guard   & 	0.2834\\
                   Buddy Hield 	 & Shooting guard & 	-0.0135\\
                   Justin Holiday  	 & Shooting guard / Small forward & 	0.0756\\
                    Josh Jackson	 & Small forward / Shooting guard & 	0.0339\\
                                                        LeBron James	 & Small forward / Power forward  & 	-0.1362\\
                                                        Kyle Kuzma   & 	Power forward  & 	-0.0272\\
                        \end{tabular}}
                       \end{minipage}%
                         \begin{minipage}{0.5\textwidth}
                          \centering
                          \resizebox{0.78\textwidth}{!}{
                        \begin{tabular}{lcc}
                        \multicolumn{3}{c}{\textbf{Players corresponding to element 6 continued ... }}\\
                         \textbf{Players} & \textbf{Position} & \textbf{Coefficient Value}\\\hline
                  
                                      Jeremy Lamb 	 & Shooting guard / Small forward & 	-0.0229\\
                                      Kawhi Leonard    & 	Small forward  & 	-0.0384\\
                                      Brook Lopez 	 & Center & 	0.0194\\
                                      Lauri Markkanen  & 	Power forward / Center & 	0.0186\\
                                      CJ McCollum  & 	Shooting guard & 	0.0148\\
                                      Khris Middleton  & Shooting guard / Small forward & 	0.0617\\
                                      Jusuf Nurkic	 & Center & 	0.0121\\
                                      Cedi Osman  	 & Small forward / Shooting guard & 	-0.0260\\
                                      Kelly Oubre Jr. 	 & Small forward / Shooting guard & 	-0.1673\\
                                      JJ Redick  &   	Shooting guard	 & -0.0474\\
                                      Terrence Ross   &  	Small forward / Shooting guard & 	0.0216\\
                                      Pascal Siakam    & 	Power forward 	 & -0.0512\\
                                      Ben Simmons 	 & Point guard / Forward  & 	-0.0166\\
                                      Myles Turner	 & Center	 & -0.3469\\
                                      Nikola Vucevic   & 	Center	 & 0.0827\\
                                      Thaddeus Young  &  	Power forward  & 	-0.0494\\
                                      Trae Young  	 & Point guard    & 	-0.1377\\
                            \end{tabular}}
                           \end{minipage}
             \begin{minipage}{0.5\textwidth}
              \centering
                          \resizebox{0.78\textwidth}{!}{
                        \begin{tabular}{lcc}
                        \multicolumn{3}{c}{\textbf{Players corresponding to element 7}}\\
                         \textbf{Players} & \textbf{Position} & \textbf{Coefficient Value}\\\hline
        Harrison Barnes 	 & 	Small forward / Power forward 	 & 	0.0330\\
        Mike Conley 	 & 	Point guard   	 & 	0.2633\\
        Jae Crowder 	 & 	Small forward 	 & 	0.0454\\
        Stephen Curry   	 & 	Point guard   	 & 	0.0429\\
        Anthony Davis   	 & 	Power forward / Center		 & -0.3173\\
        Luka Doncic 	 & 	Guard / Small forward 	 & 	-0.0239\\
        Kevin Durant	 & 	Small forward 	 & 	-0.5214\\
        Marc Gasol  	 & 	Center	&0.0655\\
        Paul George 	 & 	Small forward 	 & 	-0.6895\\
      
                            \end{tabular}}
                           \end{minipage}%
 \begin{minipage}{0.5\textwidth}
  \centering
                          \resizebox{0.78\textwidth}{!}{
                        \begin{tabular}{lcc}
                        \multicolumn{3}{c}{\textbf{Players corresponding to element 7 continued...}}\\
                         \textbf{Players} & \textbf{Position} & \textbf{Coefficient Value}\\\hline
       Jerami Grant	 & 	Forward   	 & 	-0.0767\\
             Joe Harris  	 & 	Shooting guard / Small forward	 & 	-0.0120\\
             Jrue Holiday	 & 	Point guard / Shooting guard  	 & 	-0.2258\\
             Kyrie Irving	 & 	Point guard   	 & 	-0.0128\\
             Julius Randle   	 & 	Power forward / Center	 & 	-0.0266\\
             DAngelo Russell		 & Point guard   	 & 	-0.0365\\
             Dennis Schroder 	 & 	Point guard / Shooting guard  	 & 	0.1013\\
             Klay Thompson  	 &  	Shooting guard	 & 	0.0322\\
             Dwyane Wade 	 & 	Shooting guard		 & 0.0208\\
             Justise Winslow 	 & 	Point guard	 &   	0.0431\\
                            \end{tabular}}
                           \end{minipage}
                           \vspace{3pt}
 
  \label{fig:nba_players}
   \vspace{5pt}
  \end{table}

%\noindent\begin{minipage}{0.54\textwidth}

\noindent The online nature of \texttt{TensorNOODL} makes it suitable for learning tasks where data arrives in a streaming fashion. In this application, we analyze the National Basketball Association (NBA) weekly shot patterns of high scoring players against different teams. In this online mining application, our aim is to tease apart the relationships between shot selection of different players against different teams. Here, our model enables us to cluster the players and the teams, in addition to recovering the shot patterns shared by them.
%\end{minipage}\hspace{4pt}
%\begin{minipage}{0.45\textwidth}
%\vspace{-20pt}
%\begin{figure}[H]
%	\centering
%\resizebox{0.43\textwidth}{!}{
%              			\begin{tikzpicture} 
%              			\node[anchor=south west,inner sep=0] (image) at (0.2,0) {\includegraphics[width=0.4\textwidth]{Tensor_only.pdf}};
%              			\node[align=center]  at (-0.25,1.4) {\rotatebox[]{90}{\scriptsize$n = 120$}};
%              			\node[align=center]  at (0,1.4) {\rotatebox[]{90}{\scriptsize shot patterns}};
%              			\node[align=center]  at (0.2,0.2) {\scriptsize\rotatebox[]{-45}{$K = 30$}};
%              			\node[align=center]  at (-0.0,-0.0) {\rotatebox[]{-45}{\scriptsize Teams}};
%              			\node[align=center]  at (1.7,-0.2) {\scriptsize$J= 100$};
%              			\node[align=center]  at (1.7,-0.45) {\scriptsize Players};
%              			\node[align=center] at (1.8,1) {\large$\underline{\b{Z}}^{(t)}$};
%              			\end{tikzpicture}}
%              			
%              			\caption{\footnotesize Structured tensor of interest \protect{\[\underline{\b{Z}}^{(t)} \hspace{-1pt}\in \hspace{-1pt}\mathbb{R}^{n\times J \times K}\]} for the shot pattern analysis of NBA data. There are $27$ such tensors arriving every week of the season. }\label{fig:Tensor image nba}
%\end{figure}
%\end{minipage}

We form the NBA shot pattern dataset by collecting weekly shot patterns of players for each week ($27$ weeks) of the $2018-19$ regular season of the NBA league. Each of these tensors consists of the locations of all shots attempted by players (above 80$^{\text{th}}$ percentile of the $497$ active players, which gives us $100$ high-scorers) against ($30$) opponent teams in a week of the $2018-19$ regular season of the NBA league. To form the tensor we divide the half court into $10\times12$ blocks, and sum all the shots from a block to compile the shot pattern. We then vectorize this 2-D shot pattern, which constitutes a fiber of the tensor. Since players don't play every other team in a week, the resulting weekly shot pattern tensor \[\underline{\b{Z}}^{(t)} \in \mathbb{R}^{100\times30\times120}\] has only a few non-zero fibers, and fits the model of interest shown in Fig.~\ref{fig:Tensor image}. In case a player plays against a team more than once a week, we average the shot patterns to form the weekly shot pattern tensor.

\paragraph{Data Preparation and Parameters:}
To prepare the data, we element-wise transform each non-zero element of the weekly shot pattern tensor (\[\underline{\b{Z}}^{(t)}(i,j,k)\]) as \[\underline{\b{Z}}^{(t)}(i,j,k) = \log_2(\underline{\b{Z}}^{(t)}(i,j,k)) + 1\] to reduce its dynamic range. We then substract the mean along the shot pattern axis to reduce the effect of any dominant shot locations.  We form the initial estimate of the incoherent dictionary factor (\[\b{A}^*\]) from the $2017-18$ regular season data of the top $80^{\text{th}}$ percentile players using the initialization algorithm presented in \cite{Arora15}. We use \[\eta_x = 0.1\], \[\tau = 0.2\], \[C=1\] and \[\eta_A = 10\] as the \texttt{TensorNOODL} parameters to analyze the data. 

\vspace{-5pt}
  
\paragraph{Evaluation Specifics:}
We focus on the games in the week $10$ of the $2018-19$ regular season to illustrate the application of \texttt{TensorNOODL} for this sports analytics task. Our analysis yields the shared shot selection structure of different players and teams. 
\vspace{-5pt}

\paragraph{Discussion:}
In the main paper, we analyze the similarity between two players -- James Harden and Devin Booker -- who incidentally at that time were seen as having similar styles \cite{Rafferty18,Uggetti18}. In this case, our results corroborate that the shot selection patterns of these two players is indeed similar. This is indicated by sparse factor corresponding to the players. In  Fig.~\ref{fig:nba_combined}, and Table.~\ref{fig:nba_players} we show the recovered dictionary elements($\b{A}^{(T)}$) or the shot patterns and the corresponding clustering of teams ($\hat{\b{B}}^{(T)}$), and the players ($\hat{\b{C}}^{(T)}$), respectively, for week $10$. For both $\hat{\b{B}}^{(T)}$ and $\hat{\b{C}}^{(T)}$ we show the elements whose corresponding magnitude is greater than $10^{-2}$. These preliminary results motivate further exploration of \texttt{TensorNOODL} for sports analytics applications. The theoretical guarantees coupled with its amenability in highly distributed online processing, makes \texttt{TensorNOODL} especially suitable for such application, where we can learn and make decisions on-the-fly.

\end{document}